\setlist[itemize]{leftmargin=5mm}
\newcommand{\mini}{\,\wedge\,}
\newcommand{\maxi}{\,\vee\,}
\newcommand{\Wt}{\widetilde{\W}}
\newcommand{\Ubt}{\widetilde{\Ub}}
\newcommand{\mubp}{\boldsymbol{\mu}_{+}}
\newcommand{\mubn}{\boldsymbol{\mu}_{-}}
\newcommand{\zmu}{Z_\mu}
\newcommand{\znu}{Z_{\nu}}
\newcommand{\zbar}{\bar{Z}}
\newcommand{\attn}{\text{ATTN}\xspace}
\newcommand{\concat}{\operatorname{concat}\xspace}
\newcommand{\iid}{\text{IID}\xspace}
\newcommand{\vb}{{\vct{v}}}
\newcommand{\pb}{{\vct{p}}}
\newcommand{\qb}{{\vct{q}}}
\newcommand{\sft}[1]{\bm{\varphi}(#1)}
\newcommand{\sftt}[2]{\bm{\varphi}_{#1}(#2)}
\newcommand{\sftd}[1]{{\bm{\varphi}}^{\prime}(#1)}
\newcommand{\X}{{\mtx{X}}}
\newcommand{\vct}[1]{\bm{#1}}
\newcommand{\mtx}[1]{\bm{#1}}
\newcommand{\tsub}[1]{\|#1\|_{\psi_2}}
\newcommand{\Deltab}{\mathbf{\Delta}}
\newcommand{\Rcc}{\mathcal{R}^c}
\newcommand{\pare}[1]{{(#1)}}
\newcommand{\tsn}[1]{{\left\vert\kern-0.25ex\left\vert\kern-0.25ex\left\vert #1 
    \right\vert\kern-0.25ex\right\vert\kern-0.25ex\right\vert}}
\definecolor{darkred}{RGB}{150,0,0}
\definecolor{darkgreen}{RGB}{0,150,0}
\definecolor{darkblue}{RGB}{0,0,200}
\newtheorem{assumption}{Assumption}
\newtheorem{lemma}{Lemma}
\newtheorem{coro}{Corollary}
\newtheorem{fact}{Fact}
\newtheorem{propo}{Proposition}
\newtheorem{theo}{Theorem}
\newtheorem{define}{Definition}
\newtheorem{rem}{Remark}
\newcommand{\Rb}{\mathbf{R}}
\newcommand{\diag}[1]{\operatorname{diag}(#1)}
\DeclareMathOperator{\tr}{tr}
\newcommand{\cut}[1]{\textcolor{red}{}}
\newcommand{\W}{\vct{W}}
\newcommand{\sign}[1]{\texttt{sign}(#1)}
\newcommand{\Ub}{\vct{U}}
\newcommand{\G}{\vct{G}}
\newcommand{\A}{\vct{A}}
\newcommand{\unitvector}[2]{\mathbf{e}^{(#1)}_{#2}}
\newcommand{\mub}{\boldsymbol{\mu}}
\newcommand{\nub}{\boldsymbol{\nu}}
\newcommand{\thetab}{\boldsymbol{\theta}}
\newcommand{\rb}{\vct{r}}
\newcommand{\x}{\vct{x}}
\newcommand{\ub}{\vct{u}}
\newcommand{\w}{{\vct{w}}}
\newcommand{\Wv}{\vec{\W}}
\newcommand{\Ubv}{\vec{\Ub}}
\newcommand{\g}{{\vct{g}}}
\newcommand{\bb}{\vct{b}}
\newcommand{\z}{\vct{z}}
\newcommand{\Iden}{\vct{I}}
\newcommand{\cb}{\vct{c}}  
\newcommand{\ab}{\vct{a}}
\newcommand{\db}{\vct{d}}
\newcommand{\Dc}{\mathcal{D}}
\newcommand{\order}[1]{\mathcal{O}\left(#1\right)}
\newcommand{\ordert}[1]{{\tilde{\mathcal{O}}}\left(#1\right)}
\newcommand{\Nc}{\mathcal{N}}
\newcommand{\Rc}{\mathcal{R}}
\newcommand{\Oc}{\mathcal{O}}
\newcommand{\Lhat}{\widehat{L}}
\newcommand{\beq}{\begin{equation}}
\newcommand{\eeq}{\end{equation}}
\newcommand{\bea}{\begin{align}}
\newcommand{\eea}{\end{align}}
\newcommand{\vp}{\vspace{5pt}}
\newcommand{\R}{\mathbb{R}}
\newcommand{\E}{\mathbb{E}}
\newcommand{\nn}{\notag}
\newcommand{\tn}[1]{\|#1\|}
\newcommand{\term}[1]{\text{Term}_{\rm{#1}}}
  \newcommand{\eps}{\epsilon}
 \newcommand{\deltab}{\boldsymbol\delta}
\newcommand{\thetabt}{\widetilde{\thetab}}
\newcommand{\Phit}{\widetilde{\Phi}}
\newcommand{\Ubstar}{\Ub_\star}
\newcommand{\Wstar}{\W_\star}
\newcommand{\thetabstar}{\thetab_\star}
\newcommand{\gammastar}{\gamma_\star}
\newcommand{\gammalin}{\gamma_{\text{lin}}}
\DeclarePairedDelimiterX{\inp}[2]{\langle}{\rangle}{#1, #2}
\newcommand{\inpb}[2]{\left\langle #1, #2 \right\rangle}
\newcommand{\wt}{\widetilde}
\newcommand{\tsc}[1]{{\fontfamily{pcr}\selectfont #1}}
\newcommand{\ones}{\mathds{1}}
\providecommand{\norm}[1]{\lVert#1\rVert}
\providecommand{\abs}[1]{\left\lvert#1\right\rvert}
\providecommand{\maxnorm}[1]{\lVert#1\rVert_{2,\infty}}
\providecommand{\pare}[1]{(#1)}
\providecommand{\Rad}[1]{\operatorname{Rad}(#1)}
\providecommand{\Unif}[1]{\operatorname{Unif}(#1)}
\newcommand{\ellp}{\ell^\prime}
\newcommand{\Philin}{\Phi_\text{lin}}
\newcommand{\Bnorm}{B_2}
\newcommand{\Bphi}{B_\Phi}
\newcommand{\good}{\texttt{good}\xspace}
\title
{On the Optimization and Generalization 
of \\ Multi-head Attention}
\author{\name Puneesh Deora\footnotemark[1] \email puneeshdeora@ece.ubc.ca \\
      \addr University of British Columbia
      \AND
      \name Rouzbeh Ghaderi\footnotemark[1] \email  rghaderi@ece.ubc.ca \\
      \addr University of British Columbia
      \AND
      \name Hossein Taheri\footnotemark[1] \email hossein@ucsb.edu \\
      \addr University of California, Santa Barbara
      \AND
      \name Christos Thrampoulidis \email cthrampo@ece.ubc.ca \\
      \addr University of British Columbia} 
\begin{document}

\maketitle
\footnotetext{\footnotemark[1]These authors contributed equally. Alphabetical ordering used.}

\begin{abstract}
The training and generalization dynamics of the Transformer's core mechanism, namely the Attention mechanism, remain under-explored. Besides, existing analyses primarily focus on single-head attention. Inspired by the demonstrated benefits of overparameterization when training fully-connected networks, we investigate the potential optimization and generalization advantages of using multiple attention heads. Towards this goal, we derive convergence and generalization guarantees for gradient-descent training of a single-layer multi-head self-attention model, under a suitable realizability condition on the data. We then establish primitive conditions on the initialization that ensure realizability holds. Finally, we demonstrate that these conditions are satisfied for a simple tokenized-mixture model. We expect the analysis can be extended to various data-model and architecture variations.
\end{abstract}

\addtocontents{toc}{\protect\setcounter{tocdepth}{0}}
\section{Introduction}\label{sec:intro}
Transformers have emerged as a promising paradigm in deep learning, primarily attributable to their distinctive self-attention mechanism.
Motivated by the model's state-of-the-art performance in natural language processing \citep{bert, fewshotlearners, raffel2020transferlearning} and computer vision \citep{dosovitskiy2021image, radford21visualtransfer, touvron21distillation}, 
the theoretical study of the attention mechanism has seen a notable surge in interest recently. 
Numerous studies have already explored the expressivity of Attention, e.g. \citep{baldi2022quarks,dong2021attention,yun2020transformers, yun2020on,sanford2023representational,bietti2023birth}, and initial findings regarding memory capacity have been very recently studied in \citep{baldi2022quarks,dong2021attention,yun2020transformers, yun2020on, mahdavi2023memorization}. In an attempt to comprehend optimization aspects of training attention models, \cite{sahiner2022unraveling,ergen2022convexifying} have  investigated convex-relaxations, while \cite{tarzanagh2023transformers} investigates  the model's implicit bias. 
Additionally, \cite{edelman2021inductive} have presented capacity and Rademacher complexity-based generalization bounds for Self-Attention. 
However, the exploration of the \emph{finite-time} optimization and generalization dynamics of gradient-descent (GD) for training attention models largely remains an open question.

Recent  contributions in this direction, which serve as motivation for our work, include the studies by \cite{jelassi2022vision,li2023theoretical,prompt-attention}. These works concentrate on single-layer attention models with a \emph{single attention head}. Furthermore, despite necessary simplifying assumptions made for the data, the analyses are rather intricate and appear highly specialized on the individual attention and data model. 
These direct and highly specialized analyses present certain challenges. First, it remains uncertain whether they can be encompassed within a broader framework that can potentially be extended to more complex attention architectures and diverse data models. Second, they  appear disconnected from existing frameworks that have been flourishing in recent years for conventional fully-connected and convolutional neural networks e.g., \citep{jacot2018neural,Ji2020Polylogarithmic,richards2021learning,liu2020linearity,taheri2023generalization}. Consequently, 
  it is also unclear how the introduction of attention alters the analysis landscape.

In this work, we study the optimization and generalization properties of multi-head attention mechanism trained by gradient methods.
Our approach
specifically leverages the use of \emph{multiple attention heads}. Despite the operational differences between attention heads in an attention model and hidden nodes in an MLP, we demonstrate, from an analysis perspective, that this parallelism enables the exploitation of frameworks developed for the latter to study the former. 
Particularly for the generalization analysis, we leverage recent advancements in the application of the algorithmic-stability framework to  overparameterized  MLPs \citep{richards2021stability,taheri2023generalization}.

\noindent\textbf{Contributions.}~
We study training and generalization of gradient descent optimization for a multi-head attention (MHA) layer with $H$ heads in a binary classification setting. For this setting, detailed in Section \ref{sec:setup}, we analyze training with logistic loss both the attention weights (parameterizing the softmax logits), as well as, the linear decoder that turns output tokens to label prediction. 

In Section \ref{sec:3}, we characterize key properties of the empirical loss $\Lhat$, specifically establishing that it is self-bounded and satisfies a key self-bounded weak-convexity property, i.e.  $\lambda_{\min}(\nabla^2\Lhat(\thetab))\gtrsim -\frac{\kappa}{\sqrt{H}}\Lhat(\thetab)$ for a parameter $\kappa$ that depends only mildly on the parameter vector $\thetab$. Establishing these properties (and also quantifying $\kappa$) involves carefully computing and bounding the gradient and Hessian of the MHA layer, calculations that can be useful beyond the context of our paper. 

In Sections \ref{sec: train gen}-\ref{sec: gen general}, we present our training and generalization bounds in their most general form. The bounds are given in terms of the empirical loss $\Lhat(\thetab)$ and the distance $\|\thetab-\thetab_0\|$ to initialization $\thetab_0$ of an appropriately chosen target vector $\thetab$. The distance to initialization also controls the minimum number of heads $H\gtrsim\|\thetab-\thetab_0\|^6$ required for the bounds to hold. The choice of an appropriate parameter $\thetab$ that makes the bounds tight is generically specific to the data setting and the chosen initialization. To guide such a choice, in Section \ref{sec:realizability}, we formalize primitive and straightforward-to-check conditions on the initialization $\thetab_0$ that ensure it is possible to find an appropriate $\thetab$. In short, provided the model output at initialization is logarithmic on the train-set size $n$ and the data are separable with respect to the neural-tangent kernel (NTK) features of the MHA model with constant margin $\gamma$, then Corollary \ref{cor:general with good} shows that with step-size $\eta = \widetilde O(1)$  and  $\Theta(n)$ gradient descent steps, the train loss and generalization gap is bounded by $\widetilde{\Oc}(1/n)$ provided only a polylogarithmic number of heads  $H=\Omega(\log^6(n))$. We remark that the aforementioned NTK separability assumption, although related to, differs from the standard NTK analysis. Besides, while this assumption is sufficient to apply  our general bounds, it is not a necessary condition.

In Section \ref{sec:main dm1}, we investigate a tokenized-mixture data model with label-(ir)relevant tokens. We show that after one randomized gradient step from zero initialization, the NTK features of the MHA model separate the data with margin $\gammastar$. Thus, applying our general analysis from Section \ref{sec: train gen}, we establish training and generalization bounds as described above, for a logarithmic number of heads. Towards assessing the optimality of these bounds, we demonstrate that MHA is expressive enough to achieve margin $\gamma_\text{attn}$ that is superior to $\gammastar$. The mechanism to reach $\gamma_\text{attn}$ involves selecting key-query weights of sufficiently large norm, which saturates the softmax nonlinearity by suppressing label-irrelevant tokens. We identify the large-norm requirement as a potential bottleneck in selecting those weights as target parameters in our theory framework and discuss open questions regarding extending the analytical framework into this specific regime.

The remaining parts are organised as follows. Proof sketches of our main training/generalization bounds are given in Section \ref{sec: proof sketch of train gen general}.  The paper concludes in Section \ref{sec: conclusion} with remarks on our findings' implications and  open questions. Detailed proofs are in the appendix, where we also present synthetic numerical experiments.

\noindent \textbf{Related work.} We give a brief overview of the most relevant works on understanding optimization/generalization of self-Attention or its variants. Please see Section \ref{sec:rel} for more detailed exposition. 
\cite{prompt-attention} diverges from traditional self-Attention by focusing on a variant called prompt-Attention, aiming to gain understanding of prompt-tuning. \cite{jelassi2022vision} shed light on how ViTs learn spatially localized patterns using gradient-based methods. \cite{li2023theoretical} provides sample complexity bounds for achieving zero generalization error on training three-layer ViTs for classification tasks for a similar tokenized mixture data model as ours. 
Contemporaneous work \cite{tian2023scan} presents SGD-dynamics of single-layer attention for next-token prediction by re-parameterizing the original problem in terms of the softmax and classification logit matrices, while \cite{tarzanagh2023maxmargin,tarzanagh2023transformers} study the implicit bias of training the softmax weights $\W$ with a fixed decoder $\Ub$. 
All these works focus on a single attention head; instead, we leverage the use of multiple heads to establish connections to the literature on GD training of overparameterized MLPs. Conceptually, \cite{hron2020infinite} drew similar connections, linking multi-head attention to a Gaussian process in the limit as the number of heads approaches infinity.
In contrast, we study the more practical regime of finite heads
and obtain \emph{finite-time} optimization and generalization bounds.

Among the extensive studies on training/generalization of overparameterized MLPs, our work closely aligns with \cite{nitanda2019gradient, Ji2020Polylogarithmic, cao2019generalization, chen2020much, telgarsky2022feature, taheri2023generalization} focusing on classification with logistic loss. Conceptually, our findings extend this research to attention models. The use of algorithmic-stability tools towards order-optimal generalization bounds for overparameterized MLPs has been exploited recently by \cite{richards2021stability, richards2021learning, taheri2023generalization, leistabilitynn}.
To adapt these tools to the MHA layer, we critically utilize the smoothness of the softmax function and derive bounds  on the growth of the model's gradient/Hessian, which establish a self-bounded weak convexity property for the empirical risk (see Corollary \ref{coro:objective gradient/hessian_mainbody}). Our approach  also involves training both the classifier and attention weights, necessitating several technical adjustments detailed in Section \ref{sec: proof sketch of train gen general} and Appendix \ref{sec:app train prep}.

\vspace{-0.1in}
\section{Preliminaries}\label{sec:setup}
\vspace{-0.1in}
\noindent\textbf{Notation.}~$\sft{\cdot}:\R^T\rightarrow\R^T$ denotes the softmax map and $\sftd{\vb} := \nabla\sft{\vb}=\diag{\sft{\vb}}-\sft{\vb}\sft{\vb}^\top$ its gradient at $\vb\in\R^T$. For $t\in[T]$, $\sftt{t}{\vb}$ is the $t$-th entry of $\sft{\vb}\in\R^T$. For $\A \in \R^{n \times m}$, $\A_{i,:}$ is its $i$-th row  and $\A_{:,j}$ is its $j$-th column.
Recall the induced matrix norm $\norm{\A}_{p,q} = \max_{\|\vb\|_p=1} {\norm{\A \vb}_{q}}$ and particularly the following: $\norm{\A}_{2, \infty} = \max_{i \in [n]} \norm{\A_{i,:}} , \, \norm{\A}_{1, 2} = \max_{j \in [m]} \norm{\A_{:,j}},$ and $ \norm{\A}_{1, \infty} = \max_{j \in [m]} \norm{\A_{:,j}}_{\infty}$.
For simplicity, $\|\A\|, \|\vb\|$ denote Euclidean norms and $\lambda_{\min} (\A)$  the minimum eigenvalue. We let $a\mini b = \min\{a,b\}$ and $a\maxi b = \max\{a,b\}$. $\concat$ denotes vector concatenation. 
All logarithms are natural logarithms (base $e$). We represent the line segment between $\w_1,\w_2\in\mathbb{R}^{d'}$ as $[\w_1, \w_2] = \{\w : \w = \alpha \w_1 + (1-\alpha) \w_2, \alpha \in [0, 1]\}$. Finally, to simplify the exposition we use ``$\gtrsim$'' or ``$\lesssim$'' notation to hide absolute constants. We also occasionally use standard notations $\Oc, \Omega$ and $\widetilde\Oc,\widetilde\Omega$ to hide poly-log factors. Unless otherwise stated these order-wise notations are with respect to the training-set size $n$. Whenever used, exact constants are specified in the appendix.


\vp
\noindent\textbf{Single-head Self-attention.}~A single-layer self-attention head $\attn:\R^{T\times d}\rightarrow\R^{T\times d_v}$ with context size $T$ and dimension $d$ parameterized by 
key, query and value matrices $\W_Q,\W_K\in\R^{d\times d_h}, \W_V\in\R^{d\times d_v}$
is given by:
\begin{align*}
\attn(\X;\W_Q,\W_K,\W_V):= \sft{\X\W_Q\W_K^\top\X^\top}\X\W_V\,.
\end{align*}
Here, $\X=[\x_1,\x_2,\ldots,\x_T]^\top\in\R^{T\times d}$ is the input token matrix and  $\sft{\X\W_Q\W_K^\top\X^\top}\in\R^{T\times T}$ is the attention matrix. (Softmax applied to a matrix acts row-wise.) To turn the Attention output in a prediction label, we compose $\attn$ with a linear projection head (aka decoder). Thus, the model's  output is\footnote{While we focus on (i) Full-projection: trainable matrix $\Ub\in\R^{T\times d}$, our results also apply to (ii) Pooling: $\Ub=\ub\ones_T^\top$ with trainable $\ub\in\R^d$, and (iii) Last-token output: $\Ub=\begin{bmatrix} 0_{d\times (T-1)} & \ub\end{bmatrix}^\top$ with trainable $\ub\in\R^d$.}
\begin{align}\label{eq:single-head attn}
    \Phi(\X;\W,\Ub) := \inp{\Ub}{\sft{\X\W\X^\top}\X}\,.
\end{align}
Note that we absorb the value weight matrix $\W_V$ into the projector $\Ub=[\ub_1,\ldots,\ub_T]^\top\in\R^{T\times d}$. Also, we parameterize throughout the key-query product matrix as $\W:=\W_Q\W_K^\top$.
\\
\noindent\textbf{Multi-head Self-attention.} Our focus is on the multi-head attention (MHA) model with $H$ heads:
\[
\sum_{h\in[H]}\attn(\X;{\W_Q}_h,{\W_K}_h,{\W_V}_h){\W_{O}}_h,
\]
for output matrices ${\W_{O}}_h\in\R^{d_v\times d}$. 
Absorbing ${\W_V}_h{\W_{O}}_h$ into a projection layer (similar to the single-head attention) and parameterizing $\W_h:={\W_Q}_h{\W_K}_h^\top$ 
we arrive at the following MHA model:
\begin{align}
    \Phit(\X;\Wt,\Ubt) := \frac{1}{\sqrt{H}}\operatorname{\sum}_{h\in[H]}\Phi(\X;\W_h,\Ub_h)=\frac{1}{\sqrt{H}}\operatorname{\sum}_{h\in[H]}\inp{\Ub_h}{\sft{\X\W_h\X^\top}\X} \, , \label{eq:SA model}
\end{align}
parameterized by $\Wt:=\concat\big(\{\W_h\}_{h\in[H]}\big)$ and $\Ubt:=\concat\big(\{\Ub_h\}_{h\in[H]}\big).$
The $1/\sqrt{H}$ scaling is analogous to the normalization in MLP literature 
e.g. \citep{du2019gradient,ji2021characterizing,richards2021stability}, ensuring the model variance is of constant order when $\Ub_h$ is initialized $\mathcal{O}_H (1)$. Note that these relaxations sacrifice some generality since it is common practice to set $d_h$ and $d_v$  such that $d_v=d/H<d$, thus imposing low-rank restrictions on matrices ${\W_Q}_h{\W_K}_h^\top$, ${\W_V}_h{\W_{O}}_h$. We defer a treatment of these to future work.

Throughout, we will use 
$
\thetab_h := \concat(\Ub_h,\W_h) \in \R^{dT+d^2}\,,
$
to denote the trainable parameters of the $h$-attention head and $
\thetabt:=\concat(\{\thetab_h\}_{h\in[H]}) \in\R^{H(dT+d^2)} \,
$ for the trainable parameters of the overall model. More generally, 
we use the convention of applying ``$\,\widetilde{\,\,\cdot\,\,}\,$'' notation for quantities relating to the multi-head model. Finally, with some slight abuse of 
notation, we  define:  $\maxnorm{\wt\thetab}:=\max_{h\in[H]}\|\thetab_h\|.$



\noindent\textbf{Training.}~
Given training set $(\X_i, y_i)_{i\in[n]}$, with $n$ \iid samples, we  minimize logistic-loss based empirical risk 
    $$\Lhat(\thetabt):= \frac{1}{n}\sum_{i \in [n]} \ell(y_i\Phit(\X_i;\thetabt)) := \frac{1}{n}\sum_{i \in [n]} \log(1+e^{-y_i\Phit(\X_i;\thetabt)}) \, .
    $$
{Our analysis extends to any convex, smooth, Lipschitz and self-bounded loss.  \footnote{A  function $\ell:\R\rightarrow\R$ is self-bounded if $\exists$ $C>0$ such that $|\ell'(t)|\leq C \ell(t).$} The empirical risk is minimized as an approximation of the \emph{test loss} defined as
$
L(\thetabt):= \E_{(\X,y)}[\ell(y\Phit(\X;\thetabt))]\,.
$
We consider standard gradient-descent (GD) applied to empirical risk $\Lhat$. Formally, initialized at $\thetabt^\pare{0}$ and equipped with step-size $\eta >0$, at each iteration $k\geq0$, GD performs the following update:
\begin{align*}
\thetabt^\pare{k+1} =  \thetabt^\pare{k} - \eta \nabla\Lhat(\thetabt^\pare{k})\,.
\end{align*}

\section{Gradient and Hessian bounds of soft-max attention} \label{sec:3}
This section establishes bounds on the gradient and Hessian of the logistic empirical risk $\Lhat(\cdot)$ evaluated on the multi-head attention model. 
To do this, we first derive bounds on the Euclidean norm and spectral-norm for the gradient and Hessian of the self-attention model. In order to simplify notations, we state here the bounds for the single-head model (see App. \ref{app: Gradient/Hessian calculations for multihead-attention} for multi-head model):
$
    \Phi(\X;\thetab) :=\Phi(\X;\W,\Ub)= \inp{\Ub}{\sft{\X\W\X^\top}\X}\,.
$

\begin{lemma}[Gradient/Hessian formulas] \label{lem:grad and hess singlehead}
For all $\ab\in\R^T$,  $\bb,\cb\in\R^d$ the model's gradients satisfy:

    \noindent$\bullet$~ $\begin{aligned}[t]
    \nabla_{\Ub} \Phi(\X; \thetab) = \sft{\X \W \X^\top}\X \,,  \end{aligned}\quad$ and $\qquad \begin{aligned}[t]
    \nabla_{\W} \Phi(\X; \thetab) = \sum_{t=1}^T \x_{t} \ub_{t}^\top \X^\top \sftd{\X \W^\top \x_t} \X \, .\end{aligned}$
    \\
    \noindent$\bullet$~ 
    $\begin{aligned}[t]
    \nabla_{\W} \inp{\ab}{\nabla_{\Ub} \Phi(\X; \thetab)\, \bb} = \sum_{t=1}^T \x_{t} a_t \bb^\top \X^\top \sftd{\X \W^\top \x_t} \X \, , \end{aligned}\quad$ and
    \\
    $ \begin{aligned}[t]
    \nabla_{\W}\inp{\cb}{\nabla_{\W} \Phi(\X; \thetab) \,\bb} &= \sum_{t=1}^T (\cb^\top \x_t) \, \x_t  \mathbf{d}^\top \, \sftd{\X \W^\top \x_t} \X \,\\
    \text{where  } \mathbf{d}&:=\operatorname{diag}(\X \bb)\X\ub_t  - \X\ub_t\bb^\top\X^\top\sft{\X \W^\top \x_t}  - \X\bb\ub_t^\top\X^\top \sft{\X \W^\top \x_t}\,.\end{aligned}$
\end{lemma}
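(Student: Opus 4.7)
The plan is to reduce every calculation to one basic chain rule. First, I would rewrite $\Phi$ row-by-row: since the $t$-th row of $\X\W\X^\top$ equals $\x_t^\top\W\X^\top$, and softmax acts row-wise,
\[
\Phi(\X;\thetab) \;=\; \sum_{t=1}^{T}\ub_t^\top\bigl[\sft{\X\W\X^\top}\X\bigr]_{t,:}^{\top} \;=\; \sum_{t=1}^{T}\ub_t^\top\X^\top\sft{\vb_t},\qquad \vb_t := \X\W^\top\x_t.
\]
The formula $\nabla_{\Ub}\Phi = \sft{\X\W\X^\top}\X$ follows immediately from linearity in $\Ub$.

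For $\nabla_{\W}\Phi$ I would set up the chain rule once and reuse it. Viewing $\Phi$ as a function of the vectors $\vb_t$, we have $\nabla_{\vb_t}\Phi = \sftd{\vb_t}\X\ub_t$ (symmetry of $\sftd{\cdot}$). Since $(\vb_t)_s = \x_t^\top\W\x_s$, we get the elementary identity $\nabla_{\W}(\vb_t)_s = \x_t\x_s^\top$, so for any scalar function $f$ of $\vb_t$,
\[
\nabla_{\W} f(\vb_t) \;=\; \sum_{s=1}^{T}(\nabla_{\vb_t}f)_s\,\x_t\x_s^\top \;=\; \x_t\,(\nabla_{\vb_t}f)^\top\X.
\]
Applying this with $f(\vb_t)=\ub_t^\top\X^\top\sft{\vb_t}$ and summing in $t$ yields the claimed formula for $\nabla_{\W}\Phi$. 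The mixed Hessian $\nabla_{\W}\inp{\ab}{\nabla_{\Ub}\Phi\,\bb}$ is then free: the scalar $\sum_t a_t\,\bb^\top\X^\top\sft{\vb_t}$ has the same functional form as $\Phi$ with $\ub_t$ replaced by $a_t\bb$, so the same calculation gives the stated expression.

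The main work is the $\W\W$-Hessian. I would compute $\cb^\top(\nabla_{\W}\Phi)\bb = \sum_t (\cb^\top\x_t)\,g_t(\W)$, where $g_t(\W):=(\X\ub_t)^\top\sftd{\vb_t}(\X\bb)$, and then differentiate each $g_t$. Writing $\pb:=\X\ub_t$, $\qb:=\X\bb$ and using $\sftd{\vb}=\diag{\sft{\vb}}-\sft{\vb}\sft{\vb}^\top$,
\[
g_t \;=\; (\pb\odot\qb)^\top\sft{\vb_t}\;-\;(\pb^\top\sft{\vb_t})(\qb^\top\sft{\vb_t}),
\]
and differentiating in $\vb_t$ (using $\nabla_{\vb}(\rb^\top\sft{\vb})=\sftd{\vb}\rb$ together with the product rule on the second term) gives
\[
\nabla_{\vb_t}g_t \;=\; \sftd{\vb_t}\Bigl[\,\diag{\qb}\pb \;-\; \pb(\qb^\top\sft{\vb_t}) \;-\; \qb(\pb^\top\sft{\vb_t})\,\Bigr] \;=\; \sftd{\vb_t}\mathbf{d},
\]
with $\mathbf{d}$ exactly the vector in the statement. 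Plugging into the chain-rule template $\nabla_{\W}g_t = \x_t(\nabla_{\vb_t}g_t)^\top\X$ and multiplying by $\cb^\top\x_t$ and summing in $t$ produces the stated formula.

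The only delicate step is the softmax-Hessian calculation in the last part: one must carefully track which quantities depend on $\vb_t$ and apply the product rule symmetrically so that the result factors cleanly as $\sftd{\vb_t}\mathbf{d}$. Everything else is bookkeeping once the single chain-rule identity $\nabla_{\W}(\vb_t)_s=\x_t\x_s^\top$ is in hand.
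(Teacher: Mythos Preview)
Your proposal is correct and follows essentially the same direct chain-rule computation as the paper, just packaged more cleanly: the paper reads off each gradient by explicitly perturbing $\W\mapsto\W+\Deltab$, expanding $\sft{\cdot}$ to first order, and identifying the coefficient of $\Deltab$ in the trace, whereas you isolate once the reusable identity $\nabla_{\W} f(\vb_t)=\x_t(\nabla_{\vb_t}f)^\top\X$ and apply it uniformly. Both arrive at the same $\mathbf{d}$ after splitting $\sftd{\vb_t}=\diag{\sft{\vb_t}}-\sft{\vb_t}\sft{\vb_t}^\top$; your organization simply avoids rewriting the perturbation argument three separate times.
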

These calculations imply the following useful bounds. 
\vp
\begin{propo}[Model Gradient/Hessian bounds] \label{propo:model bounds general}
    The Euclidean norm of the gradient and the spectral norm of the Hessian of the single-head Attention model \eqref{eq:single-head attn} are bounded as follows:

     \noindent$\bullet$~$\begin{aligned}[t]
    \|\nabla_{\thetab}\Phi(\X;\thetab)\| \leq 2 \, \norm{\X}^2_{2, \infty} \, \sum_{t=1}^T \, \norm{\X \ub_{t}}_\infty + \sqrt{T} \, \norm{\X}_{2, \infty} \, . \end{aligned}$
    
     \noindent$\bullet$~$ \begin{aligned}[t]
    \|\nabla^2_{\thetab}\Phi(\X;\thetab)\| \leq 6 \, d^2 \, \norm{\X}_{2, \infty} \, \norm{\X}_{1, \infty}^3 \, \sum_{t=1}^T \, \norm{\X \ub_t}_\infty + 2 \, d \, \sqrt{T \, d} \, \norm{\X}_{2, \infty} \, \norm{\X}_{1, \infty}^2 \, .
    \end{aligned}$

\end{propo}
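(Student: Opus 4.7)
The plan is to split the parameter vector $\thetab=(\Ub,\W)$ and leverage the closed-form expressions of Lemma~\ref{lem:grad and hess singlehead} to bound the gradient and Hessian block-wise.

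For the gradient I would use $\|\nabla_\thetab\Phi\|\le \|\nabla_\Ub\Phi\|_F + \|\nabla_\W\Phi\|_F$. The $\Ub$-block equals $\sft{\X\W\X^\top}\X$, whose rows are convex combinations of the rows of $\X$ (since each softmax row is a probability vector), so every row has Euclidean norm at most $\|\X\|_{2,\infty}$, giving the $\sqrt{T}\,\|\X\|_{2,\infty}$ contribution. The $\W$-block is $\sum_t \x_t\ub_t^\top \X^\top \sftd{\X\W^\top\x_t}\X$, where the rank-one outer factor $\x_t$ pulls out $\|\x_t\|\le \|\X\|_{2,\infty}$; expanding $\sftd{\vb}=\diag{\sft{\vb}}-\sft{\vb}\sft{\vb}^\top$ and pairing the probability vector $\sft{\vb}$ against an $\ell_\infty$-bounded vector via H\"older then yields a per-summand bound of $2\|\X\|_{2,\infty}\|\X\ub_t\|_\infty$. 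Summing over $t$ produces the stated gradient bound.

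For the Hessian, the crucial structural observation is that $\Phi$ is linear in $\Ub$, so the $\Ub\Ub$-block of $\nabla^2\Phi$ vanishes and only the mixed block $H_{\Ub\W}$ and the $\W\W$-block $H_{\W\W}$ remain. Evaluating the quadratic form on a unit vector $(\delta_\Ub,\delta_\W)$ and applying AM--GM to the cross term $2\delta_\Ub^\top H_{\Ub\W}\delta_\W$ shows $\|\nabla^2\Phi\|_{\mathrm{op}}\le \|H_{\Ub\W}\|_{\mathrm{op}}+\|H_{\W\W}\|_{\mathrm{op}}$. For the mixed block, Lemma~\ref{lem:grad and hess singlehead} gives $[H_{\Ub\W}]_{(t,i),:}=\x_t\,\e_i^\top \X^\top \sftd{\X\W^\top\x_t}\X$, a rank-one matrix whose Frobenius norm is bounded via the same expansion of $\sftd{\vb}$ as in the gradient step; summing Frobenius norms over the $(t,i)$ indices and cashing in the slack $\|\X\|_{2,\infty}\le\sqrt{d}\,\|\X\|_{1,\infty}$ produces the $2d\sqrt{Td}\,\|\X\|_{2,\infty}\|\X\|_{1,\infty}^2$ term. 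For $H_{\W\W}$, the formula expresses it as a sum over $t$ with three pieces inside $\mathbf{d}$, each again factoring rank-one through $(\cb^\top\x_t)\x_t$; a triangle inequality across the three summands combined with the same H\"older/softmax-Jacobian bookkeeping yields the $6d^2\|\X\|_{2,\infty}\|\X\|_{1,\infty}^3\sum_t\|\X\ub_t\|_\infty$ term, with $\sum_t\|\X\ub_t\|_\infty$ tracking the $\ub_t$ dependence in $\mathbf{d}$.

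The main obstacle is the careful bookkeeping of norms: each softmax-Jacobian contraction forces a choice of H\"older pairing (typically $\|\cdot\|_1$ against the probability vector $\sft{\vb}$ and $\|\cdot\|_\infty$ against products involving $\X$) and a decision about when to cash in dimensional slack via $\|\vb\|_1\le\sqrt{d}\,\|\vb\|_2$, in order to surface the mixed-norm dependencies $\|\X\|_{2,\infty}$ and $\|\X\|_{1,\infty}$ of the stated bound. A secondary subtlety is that although $\sftd{\vb}$ is PSD with operator norm at most $1$ (being the covariance matrix of a categorical distribution), this clean spectral estimate must be recast as an entrywise bound in certain contractions to match the mixed-norm form. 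Once this bookkeeping is fixed, the remaining steps are routine triangle-inequality and Cauchy--Schwarz estimates.
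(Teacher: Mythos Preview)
Your proposal is correct and matches the paper's proof in all essential respects: the gradient bound via the convex-combination argument for $\nabla_\Ub\Phi$ and the H\"older pairing on the two pieces of $\sftd{\cdot}$ for $\nabla_\W\Phi$, and the Hessian bound via the vanishing $\Ub\Ub$-block plus a three-way split of $\mathbf{d}$ for $H_{\W\W}$ and the same H\"older bookkeeping for the mixed block. The only cosmetic difference is organizational: you first reduce $\|\nabla^2\Phi\|$ to $\|H_{\Ub\W}\|+\|H_{\W\W}\|$ by AM--GM on the cross term and then bound each block, whereas the paper works directly with the full quadratic form on a unit vector split as $(\pb_1,\ldots,\pb_T,\qb_1,\ldots,\qb_d)$ and applies Cauchy--Schwarz together with $\|\pb\|\,\|\qb\|\le 1/2$; both routes invoke the same AM--GM step and arrive at the same constants.
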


Next, we focus on the empirical loss $\Lhat$.
To derive bounds on its gradient and Hessian, we leverage the model's bounds from Proposition \ref{propo:model bounds general} and the fact that logistic loss is self-bounded, i.e., $|\ell'(t)| \leq \ell(t)$. To provide concrete statements, we introduce first a mild boundedness assumption. 
\vp
\begin{assumption}[Bounded data]\label{ass:features} Data  $(\X,y)\in\R^{T\times d}\times\R$ satisfy the following conditions almost surely: $y \in \{\pm 1\}$, and for some $R \geq 1$, it holds for all $t \in [T]$ that $\|\x_t\| \leq R$. 
\end{assumption}


\begin{coro}[Loss properties]
    \label{coro:objective gradient/hessian_mainbody}
    Under Assumption \ref{ass:features}, the objective's gradient and Hessian satisfy the  bounds:\footnote{In all the bounds in this paper involving $\|\thetabt\|_{2,\infty}$, it is possible to substitute this term with $\max_{h\in[H]}\|\Ub_h\|$. However, for the sake of notation simplicity, we opt for a slightly looser bound $\max_{h\in[H]}\|\Ub_h\|\leq \max_{h\in[H]}\|\thetab_h\|=: \|\thetabt\|_{2,\infty}$.} 
    
    \noindent\textbf{(1)}~ $\norm{\nabla \Lhat(\wt\thetab)} \leq  \beta_1(\wt\thetab)\,\Lhat(\wt\thetab) \,,\qquad\qquad~
    \beta_1(\wt\thetab) := \sqrt{T} \, R \, \left(2 \, R^2 \, \maxnorm{\wt\thetab} + 1 \right) \, .$
    
  \noindent\textbf{(2)}~ $\norm{\nabla^2 \Lhat(\wt\thetab)} \leq \beta_2(\wt\thetab) \,, \qquad\qquad~~~~~~~ \beta_2(\wt\thetab) := \frac{1}{\sqrt{H}} \, \beta_3(\wt\thetab) + \, \frac{1}{4} \, \beta_1(\wt\thetab)^2 \, .$
  
  \noindent\textbf{(3)} $\lambda_{\min}(\nabla^2 \Lhat(\wt\thetab)) \geq - \frac{\beta_3(\wt\thetab)}{\sqrt{H}} \Lhat(\wt\thetab) \,\qquad  \beta_3(\wt\thetab) := 2 \, d \, \sqrt{T \, d} \, R^3 \, \left(3 \, \sqrt{d} \, R^2 \, \maxnorm{\wt\thetab} + 1 \right) \, $.
\end{coro}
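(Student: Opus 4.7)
The starting point is the chain and product rules applied to $\Lhat(\wt\thetab) = \frac{1}{n}\sum_i \ell(y_i\Phit(\X_i;\wt\thetab))$, giving
\[
\nabla\Lhat(\wt\thetab) = \tfrac{1}{n}\sum_i \ell'(y_i\Phit)\,y_i\,\nabla\Phit(\X_i;\wt\thetab),
\]
and (using $y_i^2=1$)
\[
\nabla^2\Lhat(\wt\thetab) = \tfrac{1}{n}\sum_i\Big[\ell''(y_i\Phit)\,\nabla\Phit\,\nabla\Phit^\top + \ell'(y_i\Phit)\,y_i\,\nabla^2\Phit\Big].
\]
The three claimed inequalities will all reduce to (i) multi-head versions of the model gradient/Hessian bounds in Proposition \ref{propo:model bounds general}, combined with (ii) the standard properties of the logistic loss $|\ell'|\le 1$, $|\ell'|\le \ell$ (self-bounded), $0\le \ell''\le 1/4$, and $\ell'\le 0$.

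The first sub-task is to lift Proposition \ref{propo:model bounds general} to the MHA model $\Phit = \tfrac{1}{\sqrt H}\sum_h \Phi(\X;\thetab_h)$. Since the heads are parameterised by disjoint blocks $\thetab_h$, the gradient $\nabla_{\wt\thetab}\Phit$ is the concatenation of the per-head gradients scaled by $1/\sqrt H$, so
\[
\|\nabla\Phit\|^2 = \tfrac{1}{H}\sum_h \|\nabla_{\thetab_h}\Phi(\X;\thetab_h)\|^2 \;\le\; \max_{h\in[H]} \|\nabla_{\thetab_h}\Phi(\X;\thetab_h)\|^2,
\]
while $\nabla^2_{\wt\thetab}\Phit$ is block-diagonal with $1/\sqrt H$ in front of each block, so $\|\nabla^2\Phit\| = \tfrac{1}{\sqrt H}\max_h \|\nabla^2_{\thetab_h}\Phi(\X;\thetab_h)\|$. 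This block-diagonal $1/\sqrt H$ factor is the source of the $1/\sqrt H$ improvement in parts (2)-(3). Plugging Assumption \ref{ass:features} into Proposition \ref{propo:model bounds general} gives $\|\X\|_{2,\infty}\le R$ and $\|\X\|_{1,\infty}\le R$; the sum $\sum_t \|\X\ub_t^{(h)}\|_\infty$ is controlled by $\|\X\|_{2,\infty}\sum_t \|\ub_t^{(h)}\|\le R\sqrt T\,\|\Ub_h\|_F\le R\sqrt T\,\maxnorm{\wt\thetab}$ via Cauchy--Schwarz. A routine algebraic tidying then delivers, for each head, $\|\nabla_{\thetab_h}\Phi\|\le \beta_1(\wt\thetab)$ and $\|\nabla^2_{\thetab_h}\Phi\|\le \beta_3(\wt\thetab)$, hence $\|\nabla\Phit\|\le \beta_1(\wt\thetab)$ and $\|\nabla^2\Phit\|\le \beta_3(\wt\thetab)/\sqrt H$.

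With these model-level bounds in hand, the three parts follow quickly. For (1), the triangle inequality and self-boundedness give $\|\nabla\Lhat\|\le \tfrac{1}{n}\sum_i |\ell'(\cdot)|\,\|\nabla\Phit\|\le \beta_1(\wt\thetab)\cdot \tfrac{1}{n}\sum_i \ell(\cdot) = \beta_1(\wt\thetab)\Lhat(\wt\thetab)$. For (2), applying $\|\cdot\|$ to the Hessian expansion and using $\ell''\le 1/4$ on the rank-one term and $|\ell'|\le 1$ on the second term yields $\|\nabla^2\Lhat\|\le \tfrac{1}{4}\beta_1(\wt\thetab)^2 + \beta_3(\wt\thetab)/\sqrt H = \beta_2(\wt\thetab)$. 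For (3), the key observation is that $\ell''\ge 0$ makes the outer-product term PSD, so it can be discarded from below; then for any unit $\vb$, the second term is bounded using $\ell'\le 0$ together with self-boundedness $|\ell'|\le \ell$ and the block-diagonal Hessian bound:
\[
\vb^\top\nabla^2\Lhat\,\vb \;\ge\; -\tfrac{1}{n}\sum_i \ell(y_i\Phit)\,\|\nabla^2\Phit\| \;\ge\; -\tfrac{\beta_3(\wt\thetab)}{\sqrt H}\,\Lhat(\wt\thetab),
\]
which is the claim.

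The only non-routine step is the multi-head lift: one must correctly exploit the disjoint-block structure to get the sharp $1/\sqrt H$ scaling on the Hessian (rather than the trivial $\sqrt H$ that would come from naive summation), as this is precisely the self-bounded weak-convexity ingredient that drives the later optimisation analysis. Everything else is bookkeeping: pushing Assumption \ref{ass:features} through the mixed matrix norms in Proposition \ref{propo:model bounds general} and simplifying the resulting expressions into the compact forms of $\beta_1,\beta_2,\beta_3$.
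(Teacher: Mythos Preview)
Your proposal is correct and follows essentially the same approach as the paper: the paper likewise lifts Proposition~\ref{propo:model bounds general} to the multi-head model by exploiting the disjoint-block structure (yielding $\|\nabla\Phit\|\le\beta_1(\wt\thetab)$ and $\|\nabla^2\Phit\|\le\beta_3(\wt\thetab)/\sqrt H$), and then combines these with $|\ell'|\le\ell$, $|\ell'|\le1$, $0\le\ell''\le1/4$ exactly as you outline to obtain parts (1)--(3). The algebraic simplifications under Assumption~\ref{ass:features} (in particular $\sum_t\|\X\ub_t^{(h)}\|_\infty\le R\sqrt T\,\|\Ub_h\|_F$ via Cauchy--Schwarz) are also the same.
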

The loss properties above are crucial for the training and generalization analysis. Property (1) establishes self-boundedness of the empirical loss, which is used to analyze stability of GD updates for generalization.  Property (2) is used to establish descent of gradient updates for appropriate choice of step-size $\eta$. Note that the smoothness upper bound is $\thetabt$-dependent, hence to show descent we need to also guarantee boundedness of the updates. Finally, 
 property (3) establishes a self-bounded weak-convexity property of the loss, which is crucial to both the training and generalization analysis. Specifically, as the number of heads $H$ increases, the minimum eigenvalue becomes less negative, indicating an approach towards convex-like behavior. 

\section{Main results}\label{sec:train gen general}

In this section, we present our training and generalization bounds for multi-head attention.

\subsection{Training bounds}\label{sec: train gen}

We state our main result on  train loss convergence in the following theorem. 
See App. \ref{sec:train} for exact constants and the detailed proofs.

\begin{theo}[Training loss]\label{thm:train}
   Fix iteration horizon $K\geq 1$ and any $\thetabt\in\R^{H(dT+d^2)}$ and $H$ satisfying
\begin{align}
\label{eq: order number of heads train}
{\sqrt{H}} \gtrsim d^2  \, T^{1/2}  R^5 \maxnorm{\thetabt}\norm{\thetabt-\thetabt^{(0)}}^3.
\end{align}
Fix step-size 
$
    \eta \leq 1 \mini 1/\rho(\wt\thetab) \mini \frac{\|\wt\thetab-\wt\thetab^{(0)}\|^2}{K\Lhat(\wt\thetab)} \mini \frac{\|\wt\thetab-\wt\thetab^{(0)}\|^2}{\Lhat(\wt\thetab^{(0)})},$ with 
    $
    \rho(\thetabt) \gtrsim d^{9/2} \, T^{3/2} \, R^{13}\,\maxnorm{\thetabt}^2\,\norm{\thetabt-\thetabt^{(0)}}^2.
$
 Then, the following bounds hold for the training loss and the weights' norm  at iteration $K$ of GD:
\begin{align}\label{eq:train_thm main body}
      &\Lhat(\wt\thetab^{(K)})\leq\frac{1}{K}\sum_{k=1}^K\Lhat(\thetabt_k)\leq 2 \Lhat(\wt\thetab) + \frac{5\norm {\wt\thetab - \wt\thetab^{(0)}}^2}{4\eta K}, \\ \nn
      &\|\wt\thetab^{(K)}-\wt\thetab^{(0)}\|\le 4\|\wt\thetab-\wt\thetab^{(0)}\|.
\end{align}
\end{theo}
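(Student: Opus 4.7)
The argument is an induction on the iteration index that simultaneously maintains the distance bound $\|\thetabt^{(k)}-\thetabt^{(0)}\|\leq 4\|\wt\thetab-\wt\thetab^{(0)}\|$ for all $k\leq K$ and the averaged training-loss bound. The three properties supplied by Corollary~\ref{coro:objective gradient/hessian_mainbody} play distinct roles: self-boundedness of the gradient (1) controls $\|\nabla\Lhat\|$ in terms of $\Lhat$, the smoothness bound (2) yields monotone descent, and the self-bounded weak convexity (3) serves as an approximate-convexity inequality along the segment $[\thetabt^{(k)},\wt\thetab]$. Under the inductive distance bound, every point on this segment has $\|\cdot\|_{2,\infty}$ controlled by $\maxnorm{\wt\thetab}+O(\|\wt\thetab-\wt\thetab^{(0)}\|)$, so $\beta_1,\beta_2,\beta_3$ along the segment are uniformly bounded by constants that depend only on the quantities in \eqref{eq: order number of heads train} and in the definition of $\rho$.

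\textbf{Descent and approximate convexity.} A Taylor expansion of $\Lhat$ at $\thetabt^{(k)}$ using property (2) together with the step-size choice $\eta\leq 1/\rho$ yields the monotone descent
\begin{equation*}
\Lhat(\thetabt^{(k+1)})\leq \Lhat(\thetabt^{(k)}) - \tfrac{\eta}{2}\|\nabla\Lhat(\thetabt^{(k)})\|^2,
\end{equation*}
from which the leftmost inequality $\Lhat(\thetabt^{(K)})\leq \tfrac{1}{K}\sum_{k=1}^K \Lhat(\thetabt^{(k)})$ in \eqref{eq:train_thm main body} follows immediately. Separately, Taylor expanding along $[\thetabt^{(k)},\wt\thetab]$ and invoking property (3) at an intermediate point $\xi$ on the segment gives
\begin{equation*}
\Lhat(\thetabt^{(k)})-\Lhat(\wt\thetab) \leq \langle \nabla\Lhat(\thetabt^{(k)}),\thetabt^{(k)}-\wt\thetab\rangle + \tfrac{\beta_3(\xi)}{2\sqrt{H}}\,\Lhat(\xi)\,\|\thetabt^{(k)}-\wt\thetab\|^2.
\end{equation*}
The condition \eqref{eq: order number of heads train} on $H$ is calibrated precisely so that $\tfrac{\beta_3(\xi)}{\sqrt{H}}\|\thetabt^{(k)}-\wt\thetab\|^2$ is of constant order, which is what lets the weak-convexity correction be absorbed into the descent.

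\textbf{Telescoping the distance recursion.} Expanding the GD update,
\begin{equation*}
\|\thetabt^{(k+1)}-\wt\thetab\|^2 = \|\thetabt^{(k)}-\wt\thetab\|^2 - 2\eta\langle \nabla\Lhat(\thetabt^{(k)}),\thetabt^{(k)}-\wt\thetab\rangle + \eta^2\|\nabla\Lhat(\thetabt^{(k)})\|^2,
\end{equation*}
then substituting the inner-product lower bound from the approximate-convexity inequality and replacing $\eta^2\|\nabla\Lhat\|^2$ by $2\eta(\Lhat(\thetabt^{(k)})-\Lhat(\thetabt^{(k+1)}))$ via the descent inequality produces a recursion of the form
\begin{equation*}
\|\thetabt^{(k+1)}-\wt\thetab\|^2 \leq \|\thetabt^{(k)}-\wt\thetab\|^2 - 2\eta(\Lhat(\thetabt^{(k)})-\Lhat(\wt\thetab)) + 2\eta(\Lhat(\thetabt^{(k)})-\Lhat(\thetabt^{(k+1)})) + \mathrm{err}_k,
\end{equation*}
with $\mathrm{err}_k = O\!\left(\eta\tfrac{\beta_3}{\sqrt{H}}\Lhat(\xi_k)\|\wt\thetab-\wt\thetab^{(0)}\|^2\right)$. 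Summing over $k=0,\ldots,K-1$ telescopes both loss-difference sums; after rearranging, one obtains the averaged loss bound with the stated constants $2$ and $5/4$, with the extra step-size condition $\eta\leq \|\wt\thetab-\wt\thetab^{(0)}\|^2/(K\Lhat(\wt\thetab))$ ensuring the $2\Lhat(\wt\thetab)$ term dominates the accumulated error. The distance bound $\|\thetabt^{(K)}-\thetabt^{(0)}\|\leq 4\|\wt\thetab-\wt\thetab^{(0)}\|$ follows from the resulting bound $\|\thetabt^{(K)}-\wt\thetab\|^2 \lesssim \|\wt\thetab-\wt\thetab^{(0)}\|^2$ together with the triangle inequality, closing the induction.

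\textbf{Main obstacle.} The delicate point is the coupling between (i) the distance bound, which is needed in order to apply the $\thetabt$-dependent bounds of Corollary~\ref{coro:objective gradient/hessian_mainbody} uniformly along the segment and along the trajectory, and (ii) the descent-plus-approximate-convexity analysis, which is precisely what produces the distance bound. Closing this loop requires choosing the inductive radius comfortably loose (the factor $4$ rather than a tight constant), ensuring the $\sum_k \mathrm{err}_k$ telescopes cleanly against the descent, and calibrating $H$ large enough so that the accumulated error remains sub-dominant over $K$ steps. Tracking how $\maxnorm{\xi}$ enters $\beta_3$ and how $\|\wt\thetab-\thetabt^{(k)}\|$ enters the correction produces exactly the cubic factor $\|\wt\thetab-\wt\thetab^{(0)}\|^3$ and the linear factor $\maxnorm{\wt\thetab}$ in the lower bound \eqref{eq: order number of heads train} on $\sqrt{H}$.
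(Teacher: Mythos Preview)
Your overall architecture matches the paper's proof closely: descent lemma via property~(2), approximate convexity via property~(3) along the segment $[\thetabt^{(k)},\wt\thetab]$, a distance recursion obtained by expanding $\|\thetabt^{(k+1)}-\wt\thetab\|^2$, telescoping, and an induction to maintain the iterate bound so that the $\thetabt$-dependent constants $\beta_1,\beta_2,\beta_3$ remain controlled. The scaling you identify for $\sqrt{H}$ (linear in $\maxnorm{\wt\thetab}$ times cubic in $\|\wt\thetab-\wt\thetab^{(0)}\|$) is exactly what the paper derives.

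There is, however, one genuine gap. In your error term $\mathrm{err}_k = O\!\left(\eta\tfrac{\beta_3}{\sqrt{H}}\,\Lhat(\xi_k)\,\|\wt\thetab-\wt\thetab^{(0)}\|^2\right)$, the factor $\Lhat(\xi_k)$ is the loss at an \emph{intermediate} point $\xi_k\in[\thetabt^{(k)},\wt\thetab]$, and you never say how to control it. Your sentences ``telescopes cleanly against the descent'' and ``the $2\Lhat(\wt\thetab)$ term dominates the accumulated error'' both implicitly assume $\Lhat(\xi_k)$ is bounded by something like $\Lhat(\thetabt^{(k)})+\Lhat(\wt\thetab)$, but this is not automatic: the loss is non-convex, so it can in principle be much larger in the interior of the segment than at either endpoint. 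The paper handles exactly this point with a separate lemma, the \emph{generalized local quasi-convexity} (GLQC) property (Proposition~\ref{prop:GLQCapp}): if $\sqrt{H}\geq 2(\beta_3(\thetab_1)\vee\beta_3(\thetab_2))\|\thetab_1-\thetab_2\|^2$, then $\max_{\thetab\in[\thetab_1,\thetab_2]}\Lhat(\thetab)\leq \tfrac{4}{3}(\Lhat(\thetab_1)\vee\Lhat(\thetab_2))$. This is what converts $\sum_k\mathrm{err}_k$ into a small multiple of $\sum_k\Lhat(\thetabt^{(k)})+K\Lhat(\wt\thetab)$, after which the $\sum_k\Lhat(\thetabt^{(k)})$ piece is absorbed into the left-hand side (producing the coefficient $2$ in front of $\Lhat(\wt\thetab)$), and the $K\Lhat(\wt\thetab)$ piece is handled by the step-size condition in the iterate-norm induction. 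Without GLQC (or an equivalent argument), the accumulated error $\sum_k\Lhat(\xi_k)$ is uncontrolled and the telescoping does not close. Once you supply this step, your proof is essentially the paper's.
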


Yielding a concrete train loss bound requires an appropriate set of target parameters $\thetabt$ in the sense of minimizing the bound in \eqref{eq:train_thm main body}. Hence, $\thetabt$ should simultaneously attain small loss ($\Lhat(\thetabt)$) and distance to initialization ($\|\thetabt-\thetabt^{(0)}\|$). This desiderata is formalized in Assumption \ref{ass:NNR}  below. The distance to initialization, as well as $\|\thetabt\|_{2,\infty}$, determine how many heads are required for our bounds to hold. Also, in view of the bound in \eqref{eq:train_thm main body}, it is reasonable that an appropriate choice for $\thetabt$ attains $\Lhat(\thetabt)$  of same order as $\|\thetabt-\thetabt^{(0)}\|^2/K.$
 Hence, the theorem's restriction on the step-size is governed by the inverse local-smoothness of the loss: $\eta\lesssim 1/\rho(\thetabt)$.

\subsection{Generalization bounds}\label{sec: gen general}
Next we bound the expected generalization gap. Expectations  are with respect to (w.r.t) randomness of the train set. See App. \ref{sec:generalization} for the detailed proof, which is based on algorithmic-stability. 

\begin{theo}[Generalization loss]\label{thm:gen}
Fix any $K\geq 1$, any $\wt\thetab$ and $H$ satisfying \eqref{eq: order number of heads train}, and any 
step-size $\eta$ satisfying the conditions of Thm. \ref{thm:train}.
Then the expected generalization gap at iteration $K$ satisfies,
\begin{align}
    \E\big[L(\wt\thetab^{(K)})-\Lhat(\wt\thetab^{(K)})\big] \le \frac{4}{n} \, \E\big[2 \, K \, \Lhat (\wt\thetab) + \frac{9\|\wt\thetab-\wt\thetab^{(0)}\|^2}{4\eta}\big] \, .
\end{align}
\end{theo}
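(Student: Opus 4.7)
The plan is to adapt the algorithmic-stability framework developed for overparameterized MLPs (e.g., \cite{richards2021stability, taheri2023generalization}) to the MHA setting, using the three structural properties of $\Lhat$ established in Corollary \ref{coro:objective gradient/hessian_mainbody}: gradient self-boundedness, local smoothness, and the self-bounded weak-convexity bound $\lambda_{\min}(\nabla^2\Lhat)\gtrsim -\beta_3/\sqrt{H}\cdot \Lhat$. The number-of-heads condition \eqref{eq: order number of heads train} plays the role that network width plays in the MLP case, namely it makes the "negative curvature" term small enough that the stability recursion stays under control.

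First, I would set up the standard leave-one-out coupling. Let $S$ be the training set and $S^{(j)}$ the set obtained from $S$ by replacing the $j$-th sample with an independent copy; run GD from the common initialization $\thetabt^{(0)}$ on both, producing $\{\thetabt^{(k)}\}$ and $\{\thetabt'^{(k)}\}$. A standard symmetrization step (combined with self-boundedness $|\ell'|\le \ell$ and the prediction-Lipschitz estimate from Proposition \ref{propo:model bounds general}) reduces the expected generalization gap to bounding the on-average parameter stability $\E[\|\thetabt^{(K)}-\thetabt'^{(K)}\|]$ weighted by $\beta_1$ and the current loss value.

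Second, I would derive a one-step recursion for $\Del_k:=\|\thetabt^{(k)}-\thetabt'^{(k)}\|$ by writing
\begin{align*}
\thetabt^{(k+1)}-\thetabt'^{(k+1)}
&= \big(\thetabt^{(k)}-\thetabt'^{(k)}\big) - \eta\big(\nabla\Lhat_S(\thetabt^{(k)})-\nabla\Lhat_S(\thetabt'^{(k)})\big) \\
&\quad - \eta\big(\nabla\Lhat_S(\thetabt'^{(k)})-\nabla\Lhat_{S^{(j)}}(\thetabt'^{(k)})\big).
\end{align*}
For the middle ``same-loss'' difference, I integrate the Hessian of $\Lhat_S$ along the segment $[\thetabt^{(k)},\thetabt'^{(k)}]$; Corollary \ref{coro:objective gradient/hessian_mainbody}(3) then gives $\|I-\eta\,\nabla^2\Lhat_S(\zeta)\| \le 1 + \eta\,\beta_3(\zeta)\,\Lhat_S(\zeta)/\sqrt{H}$ along the segment. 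For the last ``differing-sample'' term, self-boundedness (Corollary \ref{coro:objective gradient/hessian_mainbody}(1)) gives a bound of $\tfrac{2\eta}{n}\beta_1\cdot \ell(\thetabt'^{(k)})$. Combining yields
\[
\Del_{k+1} \le \Big(1 + \tfrac{\eta\,\beta_3\,\bar\Lhat_k}{\sqrt{H}}\Big)\Del_k + \tfrac{2\eta\,\beta_1}{n}\,\ell\big(y_j\Phit(\X_j;\thetabt'^{(k)})\big),
\]
where $\bar\Lhat_k$ is the max of $\Lhat_S$ on the segment.

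Third, I would unroll this recursion. The critical step is showing the amplification factor $\prod_{k=0}^{K-1}\big(1+\eta\beta_3 \bar\Lhat_k/\sqrt{H}\big)$ is bounded by a constant. This follows by bounding the log-product by $\tfrac{\eta\,\beta_3}{\sqrt{H}}\sum_k \bar\Lhat_k$ and invoking the training result of Theorem \ref{thm:train}, which gives $\sum_{k=1}^K \Lhat(\thetabt^{(k)})\le 2K\Lhat(\thetabt) + \tfrac{5}{4\eta}\|\thetabt-\thetabt^{(0)}\|^2$ together with the iterate bound $\|\thetabt^{(k)}-\thetabt^{(0)}\|\le 4\|\thetabt-\thetabt^{(0)}\|$ — the latter controls $\maxnorm{\thetabt^{(k)}}$ and hence keeps $\beta_1,\beta_3$ bounded in terms of $\maxnorm{\thetabt}$ and $\|\thetabt-\thetabt^{(0)}\|$. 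The condition \eqref{eq: order number of heads train} is precisely what makes $\eta\beta_3/\sqrt{H}\cdot\sum_k \Lhat_k = O(1)$. Summing the inhomogeneous term and applying the same train-loss bound to it yields $\E[\Del_K]\lesssim \tfrac{\eta}{n}\E\big[K\Lhat(\thetabt)+\|\thetabt-\thetabt^{(0)}\|^2/\eta\big]$. Feeding this into the stability-to-generalization reduction from the first step produces the stated bound with the explicit constants $4$, $2K$, and $9/4$.

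The main obstacle is the coupling between (a) keeping the trajectory bounded so that $\beta_1,\beta_3$ remain controlled, and (b) controlling the accumulated-loss factor $\sum_k \Lhat_k$ that drives the weak-convexity blow-up. These two must be handled simultaneously, and it is the interplay with the conditions on $\eta$ and $H$ in Theorem \ref{thm:train} — together with the $1/\sqrt{H}$ scaling of the negative-curvature bound in Corollary \ref{coro:objective gradient/hessian_mainbody}(3) — that prevents the stability recursion from exploding. A secondary technical point is that the prediction map $\Phit$ is not globally Lipschitz, so the reduction from parameter stability to loss stability must use the local Lipschitz constant from Proposition \ref{propo:model bounds general} evaluated on the (bounded) trajectory, rather than any universal constant.
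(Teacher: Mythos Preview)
Your approach is essentially the paper's: leave-one-out/replace-one stability, a one-step expansion bound driven by Corollary \ref{coro:objective gradient/hessian_mainbody}(3), unrolling, and then closing with the training bound of Theorem \ref{thm:train}. Two points to flag.

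First, there is a real (though fixable) gap in your third step. In your recursion the amplification factor involves $\bar\Lhat_k=\max_{\zeta\in[\thetabt^{(k)},\thetabt'^{(k)}]}\Lhat_S(\zeta)$, and you then ``invoke Theorem \ref{thm:train}, which gives $\sum_k \Lhat(\thetabt^{(k)})\le\ldots$''. But Theorem \ref{thm:train} only controls the loss \emph{at the iterates}, not the segment maximum. The paper closes this via the GLQC property (Proposition \ref{prop:GLQCapp}): under the closeness condition $2(\beta_3(\thetab_1)\maxi\beta_3(\thetab_2))\|\thetab_1-\thetab_2\|^2\le\sqrt{H}$ one has $\max_{\thetab\in[\thetab_1,\thetab_2]}\Lhat(\thetab)\le \tfrac{4}{3}(\Lhat(\thetab_1)\maxi\Lhat(\thetab_2))$. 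This is itself derived from the self-bounded weak-convexity bound you cite, but it is a separate lemma you must invoke on the segment $[\thetabt^{(k)},\thetabt'^{(k)}]$ (and verifying its hypothesis is exactly where the bounded-iterates estimate $\|\thetabt^{(k)}-\thetabt^{(0)}\|\le 4\|\thetabt-\thetabt^{(0)}\|$ for \emph{both} trajectories and the head-count condition \eqref{eq: order number of heads train} are used). Without it, $\sum_k\bar\Lhat_k$ is not bounded by anything Theorem \ref{thm:train} supplies.

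Second, a minor difference in setup: the paper couples via \emph{leave-one-out} ($\Lhat^{\neg i}$ drops sample $i$) rather than replace-one, and applies expansiveness to $\Lhat^{\neg i}$. This makes the inhomogeneous term exactly $\tfrac{\eta}{n}\|\nabla\ell_i(\thetab_k)\|\le\tfrac{\eta\beta_1}{n}\ell_i(\thetab_k)$, matches the on-average stability statement of Lemma \ref{lem:on-average}, and lets the final step $\tfrac{1}{n}\sum_i\sum_k\ell_i(\thetab_k)=\sum_k\Lhat(\thetab_k)$ plug directly into the training regret bound. Your replace-one version works too, but you have to be careful that the extra sample loss $\ell'_j$ evaluated on the perturbed trajectory is also controlled.
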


The condition on the number of heads is same up to constants to the corresponding condition in Theorem \ref{thm:train}. Also, the generalization-gap bound translates to test-loss bound by combining with Thm. \ref{thm:train}. Finally, similar to Thm. \ref{thm:train}, we can get concrete bounds under the realizability assumption; see Cor. \ref{coro:gen interpol} in App. \ref{app:D2}. 
For the generalization analysis, we require that the realizability assumption holds almost surely over all training sets sampled from the data distribution.


The bounds on optimization and generalization are up to constants same as analogous bounds for logistic regression \citep{soudry2018implicit,ji2018risk,shamir2021gradient}. Yet, for these bounds to be valid,  we require sufficiently large number of heads as well as the existence of an appropriate set of target parameters $\thetabt$, as stated in the conditions of theorem. 
Namely, these conditions are related to the realizability condition, which guarantees small training error near initialization.  The next assumption formalizes these conditions. 


\begin{assumption}[Realizability]\label{ass:NNR}
There exist non-increasing functions $g \; : \; \R_+ \to \R_+$ and $g_0 \; : \; \R_+ \to \R_+$ such that $\forall \eps>0$, there exists model parameters $\thetabt_\pare{\eps} \in \R^{H(dT+d^2)}$ for which: (i) the empirical loss over $n$ data samples satisfies $\Lhat(\thetabt_{(\varepsilon)}) \leq \varepsilon$, (ii)
    $\| \thetabt_{(\varepsilon)} - \thetabt^\pare{0} \|\leq g_0(\varepsilon)$, and, (iii) $ \maxnorm{\thetabt_{(\varepsilon)}} \leq g(\varepsilon).$ 
\end{assumption}

With this assumption, we can specialize the result of Thms. above to specific data settings; see Cor. \ref{coro:train interpol} and \ref{coro:gen interpol} in App. \ref{app:C2} and \ref{app:D2}. In the next section 
we will further show how the realizability assumption is satisfied.  


\subsection{Primitive conditions for checking realizability}\label{sec:realizability}

Here, we introduce a set of more primitive and straightforward-to-check conditions on the data and initialization that ensure the realizability Assumption \ref{ass:NNR} holds.

\begin{define}[Good initialization]\label{def:good init} We say   $\thetabt^\pare{0}=\concat(\thetab_1^\pare{0},\ldots,\thetab_H^\pare{0})$ is a \good initialization with respect to training data $(\X_i,y_i)_{i\in[n]}$ provided the following three properties hold. 
    \\
    P1. \textbf{Parameter $L_{2,\infty}$-bound:} There exists parameter $\Bnorm\geq 1$ such that $\forall h\in[H]$ it holds $\|\thetab^\pare{0}_h\|_2\leq \Bnorm\,.$ 
    \\
    P2. \textbf{Model bound:}  There exists parameter $\Bphi \geq 1$ such that $\forall i \in [n]$ it holds $\abs{\Phit(\X_i; \thetabt^\pare{0})} \leq \Bphi$.
    \\
    P3. \textbf{NTK separability:} There exists $ \thetabt_\star \in \R^{H(dT+d^2)}$ and $\gamma > 0$ such that $\| \thetabt_\star \| = \sqrt{2}$ and $\forall i \in [n]$, it holds
    $y_i\left\langle\nabla\Phit\left(\X_i; \thetabt^\pare{0} \right), \thetabt_\star \right\rangle \geq \gamma.$     
%
%
\end{define}

Prop. \ref{propo:init to real} in the appendix shows that starting from a \good initialization we can always find $\thetabt_\pare{\eps}$ satisfying the realizability Assumption \ref{ass:NNR} provided large enough number of heads. Thus, given \good initialization, we can immediately apply Theorems \ref{thm:train} and \ref{thm:gen} to get the following concrete bounds.

\begin{coro}[General bounds under \good initialization]\label{cor:general with good}
Suppose \good initialization $\thetabt^\pare{0}$ and let 

\begin{align*}
\sqrt{H} \gtrsim d^2 \, T^{1/2} \, R^5\, \Bnorm^2 \, \big(g_0(1/K)\big)^3,   \qquad\text{where}~~ g_0(\frac{1}{K}) = \frac{2\Bphi + \log(K)}{\gamma}\,.
\end{align*}

Further fix  step-size $\eta \leq 1 \mini 1 / \rho(K) 
\mini \frac{4\Bphi^2}{\gamma^2\log(1+e^{\Bphi})}$ with 
$
    \rho(K) \gtrsim d^{9/2}\, T^{3/2} \, R^{13} \, g_0(\frac{1}{K})^4.$
    Then, it holds that
\begin{align*} 
    &\Lhat(\wt\thetab^{(K)})\leq \frac{2}{K} + \frac{5 \left( 2\Bphi + \log(K) \right)^2}{4 \gamma^2 \, \eta \, K},  \quad\text{and}\quad
    \E\big[L(\wt\thetab^{(K)})-\Lhat(\wt\thetab^{(K)})\big] \le \frac{17 \left( 2\Bphi + \log(K) \right)^2}{\gamma^2 \, \eta \, n} \,.
\end{align*}
\end{coro}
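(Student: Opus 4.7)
The plan is to invoke Proposition \ref{propo:init to real}, which converts the \good initialization into the realizability Assumption \ref{ass:NNR} with explicit functions $g_0(\eps) \asymp (B_\Phi + \log(1/\eps))/\gamma$ and $g(\eps) \asymp B_2 + g_0(\eps)$, and then specialize Theorems \ref{thm:train} and \ref{thm:gen} to this target with $\eps = 1/K$. The corollary is essentially Theorems \ref{thm:train}-\ref{thm:gen} re-expressed with the concrete quantities $B_2, B_\Phi, \gamma, K$.

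The substantive step is the construction in Proposition \ref{propo:init to real}. The natural choice is $\thetabt_{(\eps)} := \thetabt^{(0)} + \lambda\,\thetabt_\star$ with $\lambda = (2B_\Phi + \log(1/\eps))/\gamma$. Then $\|\thetabt_{(\eps)} - \thetabt^{(0)}\| = \sqrt{2}\,\lambda$ since $\|\thetabt_\star\| = \sqrt{2}$ by P3, and $\|\thetabt_{(\eps)}\|_{2,\infty} \leq B_2 + \sqrt{2}\,\lambda$ by P1 and the head-wise triangle inequality. To verify $\Lhat(\thetabt_{(\eps)}) \leq \eps$, I would Taylor-expand $y_i\Phit(\X_i;\cdot)$ around $\thetabt^{(0)}$: the zeroth-order term is $\geq -B_\Phi$ by P2, the linear term is $\geq \lambda\gamma$ by NTK separability P3, and the quadratic remainder is bounded via the Hessian of $\Phit$ from Proposition \ref{propo:model bounds general} together with the $1/\sqrt{H}$ factor in the MHA normalization \eqref{eq:SA model}. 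Under the stated lower bound on $H$, the remainder is absorbed and the per-sample margin is $\gtrsim \log(1/\eps)$, which gives $\Lhat(\thetabt_{(\eps)}) \leq \log(1+e^{-\log(1/\eps)}) \leq \eps$.

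Once the target is in hand I would take $\eps = 1/K$ and check the hypotheses of Theorems \ref{thm:train}-\ref{thm:gen}: the head-count condition \eqref{eq: order number of heads train} reduces (with $B_2$ as the dominant part of $g(1/K)$) to the stated $\sqrt{H} \gtrsim d^2 T^{1/2} R^5 B_2^2\, g_0(1/K)^3$; the step-size constraint $\eta\leq\|\thetabt_{(\eps)}-\thetabt^{(0)}\|^2/\Lhat(\thetabt^{(0)})$ combined with $\Lhat(\thetabt^{(0)}) \leq \log(1+e^{B_\Phi})$ (from P2) yields $\eta \leq 4B_\Phi^2/(\gamma^2\log(1+e^{B_\Phi}))$; the constraint $\eta \leq \|\thetabt-\thetabt^{(0)}\|^2/(K\Lhat(\thetabt))$ is automatic since $K\Lhat(\thetabt_{(\eps)}) \leq 1$; and $\eta \leq 1/\rho(\thetabt_{(\eps)})$ becomes $\eta \leq 1/\rho(K)$ after plugging in $g_0(1/K)$. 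Substituting into \eqref{eq:train_thm main body} gives $\Lhat(\thetabt^{(K)}) \leq 2/K + 5 g_0(1/K)^2/(4\eta K)$, and substituting into Theorem \ref{thm:gen} gives $\E[L - \Lhat] \leq (8 + 18 g_0(1/K)^2/\eta)/n$; absorbing the $O(1/n)$ additive term into the dominant $g_0(1/K)^2/(\eta n)$ term and rounding constants produces the stated bounds.

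The real obstacle is packaged inside Proposition \ref{propo:init to real}: controlling the second-order Taylor remainder of the softmax-based model uniformly over the training set. This is precisely where the multi-head overparameterization pays off, since the $1/\sqrt{H}$ scaling of $\Phit$ -- the same mechanism behind the self-bounded weak convexity in Corollary \ref{coro:objective gradient/hessian_mainbody} -- is what drives the remainder below the linear gain $\lambda\gamma$ and effectively linearizes the model around initialization in the NTK sense invoked by P3. Once that ingredient is available, the proof of Corollary \ref{cor:general with good} is a routine bookkeeping step: substitute $g_0(1/K), g(1/K)$ and $\eps = 1/K$ into the general bounds of Theorems \ref{thm:train}-\ref{thm:gen} and track constants.
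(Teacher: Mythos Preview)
Your proposal is correct and follows essentially the same route as the paper: invoke Proposition \ref{propo:init to real} (whose proof you accurately sketch via second-order Taylor expansion with the $1/\sqrt{H}$-controlled remainder) to obtain realizability with $g_0(\eps)=(2B_\Phi+\log(1/\eps))/\gamma$ and $g(\eps)=B_2+g_0(\eps)$, then specialize Theorems \ref{thm:train}--\ref{thm:gen} at $\eps=1/K$ and verify the step-size and head-count constraints exactly as you outline. The only cosmetic difference is that the paper routes through the intermediate Corollaries \ref{coro:train interpol} and \ref{coro:gen interpol} and absorbs the $8/n$ term via $K\Lhat(\thetabt_{(1/K)})\leq g_0(1/K)^2/\eta$ to land on the constant $17$, whereas you absorb it directly into the dominant term---both are valid.
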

Consider training loss after $K$ GD steps: Assuming $\Bphi = \widetilde\Oc_K(1)$ and $\gamma = \Oc_K(1)$, then choosing $\eta=\widetilde\Oc_K({1})$, the corollary guarantees train loss is $\widetilde\Oc_K(\frac{1}{K})$ provided polylogarithmic number of heads $H = \Omega(\log^6(K))$. Moreover, after $K\approx n$ GD steps the expected test loss is $\Oc(1/n)$.


\begin{rem}
    The last two conditions (P2 and P3) for \good initialization are similar to the conditions needed in \citep{taheri2023generalization,Ji2020Polylogarithmic,nitanda2019gradient} for analysis of two-layer MLPs. Compared to \citep{Ji2020Polylogarithmic,nitanda2019gradient} which assume random Gaussian initialization $\thetabt^{(0)}$,  and similar to \citep{taheri2023generalization}  
    the NTK separability assumption (P3) can potentially accommodate deterministic $\thetabt^{(0)}$.
    Condition (P1) appears because we allow training both layers of the model. Specifically the $L_{2,\infty}$ norm originates from the Hessian bounds in Corollary \ref{coro:objective gradient/hessian_mainbody}.   
\end{rem}


\section{Application to tokenized-mixture model }\label{sec:main dm1}
We now demonstrate through an example how our results apply to specific data models.

\paragraph{Data model: An example.}
Consider $M+2$ distinct patterns $\{\mubp,\mubn,\nub_1, \nub_2, ..., \nub_M\}$, where discriminative patterns $\mub_\pm$ correspond to labels $y=\pm1$. 
The tokens are split into (i) a label-relevant set ($\Rc$) and  (ii) a label-irrelevant set ($\Rc^c:= [T]\backslash\Rc$).
 Conditioned on the label and $\Rc$, the tokens $\x_t, t \in [T]$ are \iid as follows
\begin{align}\label{model1}\tag{DM1}
\x_t|y = \begin{cases}
\mub_y &\text{,\,$t \in \Rc$}\\
\nub_{j_t} + \z_t &\text{,\,$j_t \sim$ Unif$(1, ..., M)$ and $t \in \Rc^c$}, \\
\end{cases}
\end{align}

where $\z_t$ are noise vectors. Let $\mathcal{D}$ denote the joint distribution induced by the described $(\mathbf{X}, y)$ pairs. 


\begin{assumption}\label{ass:data} The labels are equi-probable and we further assume the following:
\vspace{2pt}
\\
 $\bullet$ \textbf{Orthogonal, equal-energy means:} All patterns are orthogonal to each other, i.e. $\mubp\perp\mubn\perp\nub_\ell\perp\nub_{\ell'}, \,\, \forall \,\ell,\ell'\in[M].$ 
    Also,  for all $y\in\{\pm1\}, \ell\in[M]$ that $\tn{\mub_y}=\tn{\nub_\ell}=S$, where $S$ denotes the \emph{signal strength}. 
    \vspace{2pt}
\\
 $\bullet$ \textbf{Sparsity level:} The number of label-relevant tokens is $|\Rc|=\zeta T$;  for sparsity level $\zeta\in(0,1)$.
 \vspace{2pt}
    \\
     $\bullet$ \textbf{Noise distribution:} The noise tokens $\z_t$ are sampled from a distribution $\Dc_z$, such that 
  it holds  almost surely for $\z_t\sim\Dc_z$ that
 $
            \abs{\inp{\z_t}{\mub_y}} 
            \leq \zmu, \,\,y\in\{\pm1\}
            $
            and
            $
            \abs{\inp{\z_t}{\nub_\ell}} \leq \znu/M, \,\, \forall \, \ell \in [M]\,.
 $
        Moreover, 
$
        \|\z_t\|\leq Z\,. 
        $
        Overall, Assumption \ref{ass:features} is satisfied with $R=\sqrt{S^2+Z^2+2Z_\nu/M}$.
\end{assumption}
The above assumptions can be relaxed, but without contributing new insights. We have chosen to present a model that is representative and transparent in its analysis. 


\subsection{Finding a \good initialization}
To apply the general Corollary \ref{cor:general with good} to the specific data model \ref{model1}, it suffices to find \good initialization.
While we cannot directly show that $\thetabt^{(0)}=\mathbf{0}$ is \texttt{good}, we can show this is the case for first step of gradient descent $\thetabt^{(1)}$. Thus, we consider training in two phases as follows.

\paragraph{First phase: One step of GD as initialization.}
We use $n_1$ training samples to update the model parameters by running one-step of gradient descent starting from zero initialization. Specifically, 
\begin{align*}
    (\Ub_h^\pare{1},\W_h^\pare{1}) = \thetab_{h}^\pare{1} = \thetab_{h}^\pare{0} - \alpha_h\sqrt{H}\cdot\nabla_{\thetab_h}\Lhat_{n_1}(\thetab_{h}^\pare{0}), \, \, \text{where} \quad \thetab^\pare{0}_h=\mathbf{0} \, \,  \forall \, \, h\in[H].
\end{align*}    
Here, $\alpha_h$ denotes the step-size for head $h\in[H]$ and the scaling by $\sqrt{H}$ guarantees the update of each head is $\order{1}$. The lemma below shows that at the end of this phase, we have
$
    \norm{\bm{U}_{h}^{(1)} - \frac{\zeta \alpha_h}{2} \bm{1}_T \bm{u}_{\star}\top}_F = \mathcal{O}\left(1 / \sqrt{n_1} \right)
$
where $\ub_\star$ is the oracle classifier $\ub_\star=\mubp-\mubn$. On the other hand, the attention weight-matrix does \emph{not} get updated; the interesting aspect of the training  lies in the second phase, which involves updating $\W$. 

\begin{lemma}[First phase]\label{lem:first phase}
After the first-gradient step as described above, we have $
  \Ub_h^\pare{1} = \alpha_h\, \ones_T \,\big(\frac{\zeta}{2}\ub_\star^\top + \pb^\top \big) $ and $ \W_h^\pare{1} = \mathbf{0}$.
where with probability at least $1-\delta\in(0,1)$ over the randomness of labels there exists positive universal constant $C>0$ such that
\begin{align}
\label{eq:p bound}
    \|\pb\| \leq C\,\left(2S+Z\right)\,\big(\sqrt{\frac{d}{n_1}} + \sqrt{\frac{\log(1/\delta)}{n_1}}\big)=:P\,.
\end{align}
\end{lemma}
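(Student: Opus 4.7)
The plan is a direct three-step calculation: evaluate the gradient of $\Lhat_{n_1}$ at the zero initialization, read off the update, and control the sample-mean residual by concentration.

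\textbf{Steps 1--2 (computing the update).} By Lemma~\ref{lem:grad and hess singlehead}, the model gradient $\nabla_\W\Phi(\X;\thetab) = \sum_t \x_t \ub_t^\top \X^\top \sftd{\X\W^\top\x_t}\X$ vanishes identically whenever $\Ub = \mathbf{0}$, since every summand contains the factor $\ub_t = \mathbf{0}$; hence $\nabla_{\W_h}\Lhat_{n_1}(\thetabt^{(0)}) = \mathbf{0}$ and $\W_h^{(1)} = \mathbf{0}$. For $\Ub_h$, at $\W=\mathbf{0}$ each softmax row is uniform, $\sft{\mathbf{0}} = \tfrac{1}{T}\ones_T\ones_T^\top$, so $\nabla_\Ub\Phi(\X_i;\thetab^{(0)}) = \ones_T\bar{\x}_i^\top$ where $\bar{\x}_i := \tfrac{1}{T}\X_i^\top\ones_T$. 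Combining with $\Phit(\X_i;\thetabt^{(0)}) = 0$ (so the logistic prefactor $\ell'(y_i\Phit_i)$ is an $i$-independent constant) and the $\sqrt{H}$-scaled update rule, the $\sqrt{H}$ factors cancel, and the update takes the all-rows-equal form $\Ub_h^{(1)} = \alpha_h\ones_T(\tfrac{\zeta}{2}\ub_\star^\top + \pb^\top)$ with
\begin{align*}
\pb \;=\; \tfrac{1}{n_1}\textstyle\sum_i y_i\bar{\x}_i \;-\; \tfrac{\zeta}{2}\ub_\star
\end{align*}
(the multiplicative constant $|\ell'(0)|$ is absorbed into the paper's scaling convention for $\alpha_h$).

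\textbf{Step 3 (identify the signal).} Using $\bar{\x}_i = \zeta\mub_{y_i} + \tfrac{1}{T}\sum_{t\in\Rc_i^c}(\nub_{j_t}+\z_t)$ from \ref{model1}, together with the algebraic identity $y\mub_y = y\cdot\tfrac{\mubp+\mubn}{2} + \tfrac{\ub_\star}{2}$ (valid for $y\in\{\pm1\}$), the independence of $y_i$ from $(\Rc_i,j_t,\z_t)$, and $\E[y_i]=0$, one computes $\E[y_i\bar{\x}_i] = \tfrac{\zeta}{2}\ub_\star$. Thus $\pb$ is a mean-zero sample average of $n_1$ i.i.d.\ vectors $\xib_i := y_i\bar{\x}_i - \tfrac{\zeta}{2}\ub_\star$, each bounded in Euclidean norm by $\lesssim 2S+Z$, using $\|\bar{\x}_i\|\leq \zeta S + (1-\zeta)(S+Z)\leq 2S+Z$ under Assumption~\ref{ass:data}.

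\textbf{Main technical point (concentration).} To extract the stated dimension scaling $\sqrt{d/n_1}$, I would use a standard $\epsilon$-net argument: take a $\tfrac{1}{2}$-net $\Nc$ of $S^{d-1}$ with $|\Nc|\leq 6^d$, apply scalar Hoeffding to the bounded (hence sub-Gaussian with parameter $\lesssim 2S+Z$) scalars $\langle\vb,\xib_i\rangle$ for each $\vb\in\Nc$, union-bound over $\Nc$ at level $\delta/|\Nc|$, and absorb the net approximation error via $\|\pb\|\leq 2\max_{\vb\in\Nc}\langle\vb,\pb\rangle$. This yields, with probability at least $1-\delta$,
\begin{align*}
\|\pb\| \;\lesssim\; (2S+Z)\bigl(\sqrt{d/n_1} + \sqrt{\log(1/\delta)/n_1}\bigr),
\end{align*}
which is the claimed bound. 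No step is conceptually difficult; the only bookkeeping to track is the joint randomness of labels, relevance masks $\Rc_i$, noise choices $j_t$, and noise tokens $\z_t$, all of which are absorbed into the i.i.d.\ sampling of $(\X_i,y_i)$.
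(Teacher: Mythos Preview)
Your proposal is correct and follows essentially the same route as the paper's proof. The paper splits $\pb$ into a signal residual $\pb_1$ (from the $y_i\mub_{y_i}$ terms) and a noise residual $\pb_2$ (from the irrelevant tokens), bounds each separately via subgaussian vector-norm concentration (their Fact~1), and combines; you keep $\pb$ as a single centered empirical average and unroll the concentration directly via an $\epsilon$-net plus scalar Hoeffding, which is precisely how Fact~1 is proved. Two small remarks: (i) the factor $|\ell'(0)|=\tfrac12$ is kept explicitly in the paper rather than absorbed into $\alpha_h$, so your $\pb$ differs from theirs by a harmless constant; (ii) the lemma is stated ``over the randomness of labels,'' and your argument in fact works conditionally on $(\Rc_i,j_t,\z_t)$ as well, since $\E_{y_i}[\xib_i\mid\text{rest}]=0$ and the $\xib_i$ remain independent and bounded---worth stating this conditional version explicitly to match the lemma.
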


\paragraph{Second phase: GD with constant step size.}
During the second phase, $K$ gradient steps are performed on $n$ new samples (distinct from those used in the first phase). Concretely,
$
\thetabt^\pare{k+1}=\thetabt^\pare{k}-\eta\cdot \nabla_{\thetabt}\Lhat_n(\thetabt^\pare{k}), \quad k=1,\ldots,K\,,
$
with $\thetabt^\pare{1}=\concat\big(\{\thetab_h^\pare{1}\}_{h\in[H]}\big)$ the step obtained by the first-phase update and $\eta$  the  step-size of the second phase. 
In order to analyze the second phase, during which both $\Wt$ and $\Ubt$ get updated, we employ the general results of Section \ref{sec:train gen general}. To do so, we show that $\thetabt^{(1)}$ serves as \good initialization as per Definition \ref{def:good init}.

\begin{propo}\label{propo:good init dm1}
    Consider the first-phase iterate $\{\thetab_h^\pare{1}\}_{h\in[H]}$ and condition on the event $\|\pb\|\leq P$ (depending only on the data randomness in the first phase) of  Lemma \ref{lem:first phase}. Suppose the step-size of the first phase is chosen \iid $\alpha_h\sim\Unif{\pm1}, \, h\in[H]$. Then, the initialization $\thetabt^\pare{1}=\concat\big(\thetab_1^\pare{1},\ldots,\thetab_H^\pare{1}\big)$ is \good with respect to data sampled from \ref{model1} and satisfying Assumption \ref{ass:data}. Specifically,  the three desired properties hold as follows. 
    \\
    $\bullet$  Almost surely, \textbf{P1} holds with $\Bnorm=\sqrt{T}(S+P)\,.$
    \\
        $\bullet$ With probability $1-\delta\in(0,1)$, \textbf{P2} holds  with $\Bphi= T R (S + P)\sqrt{2\log(n/\delta)}\,.$ 
        \\
   $\bullet$
          Suppose 
        $
        \sqrt{H}  \gtrsim  \frac{R^4T(S+P)}{\gammastar}\cdot\sqrt{2\log(n/\delta)}\,.
        $
        Then, with probability $1-\delta\in(0,1)$,
        \textbf{P3} holds with $\gamma=\gammastar/2$ where
        \begin{align}\label{eq:gammastar}
        \gammastar := \frac{T(1-\zeta)\zeta\big(\zeta S^4-7\zbar S^2-12\zbar^2 -16\frac{\zbar^3}{S^2}\big)}{4\sqrt{2(M+1)}} -PT^{5/2}(S+Z)^3 + \frac{S\,\sqrt{T}\Big(\zeta- 2(1-\zeta) \frac{\zmu}{S^2}\Big)}{\sqrt{2}}\,,
        \end{align}
        and $\zbar := \zmu \maxi \znu$.   The randomness is with respect to the sampling of $\alpha_h , \, h\in[H]$.

\end{propo}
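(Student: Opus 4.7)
The approach is to verify the three properties of Definition \ref{def:good init} in turn, leveraging the explicit form of the first-phase iterate from Lemma \ref{lem:first phase}: $\W_h^{(1)}=\mathbf{0}$ and $\Ub_h^{(1)}=\alpha_h\,\ones_T\vb^\top$ where $\vb:=\tfrac{\zeta}{2}\ub_\star+\pb$, $\ub_\star:=\mubp-\mubn$, while throughout conditioning on the event $\|\pb\|\leq P$. For \textbf{P1}, since $\W_h^{(1)}=\mathbf{0}$ contributes nothing, $\|\thetab_h^{(1)}\|^2=\|\Ub_h^{(1)}\|_F^2=T\|\vb\|^2\leq T(S+P)^2$ follows from $\|\ub_\star\|=\sqrt{2}\,S$ (orthogonality of $\mubp,\mubn$) and the triangle inequality. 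For \textbf{P2}, the vanishing of $\W_h^{(1)}$ forces $\sft{\mathbf{0}}=\tfrac{1}{T}\ones_T\ones_T^\top$, which collapses the model output to $\Phit(\X_i;\thetabt^{(1)})=\tfrac{T}{\sqrt{H}}\bigl(\sum_h\alpha_h\bigr)\vb^\top\bar{\x}_i$ with $\bar{\x}_i:=\tfrac{1}{T}\sum_t\x_{i,t}$; applying Hoeffding to the Rademacher sum and union-bounding over $i\in[n]$ yields $|\sum_h\alpha_h|\leq\sqrt{2H\log(n/\delta)}$ w.p.\ $\geq 1-\delta$, and combined with $|\vb^\top\bar{\x}_i|\leq R(S+P)$ this gives the claimed $\Bphi$.

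Property \textbf{P3} is the main technical step. First I compute $\nabla\Phit$ at $\thetabt^{(1)}$ via Lemma \ref{lem:grad and hess singlehead} and $\sftd{\mathbf{0}}=\tfrac{1}{T}\Iden_T-\tfrac{1}{T^2}\ones_T\ones_T^\top$, obtaining $\nabla_{\Ub_h}\Phit=\tfrac{1}{\sqrt{H}}\ones_T\bar{\x}^\top$ and $\nabla_{\W_h}\Phit=\tfrac{T\alpha_h}{\sqrt{H}}\bar{\x}\vb^\top(\widehat{\Sigma}-\bar{\x}\bar{\x}^\top)$ with $\widehat{\Sigma}:=\tfrac{1}{T}\X^\top\X$. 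Then I construct the NTK target via
\[
\Ub_{\star,h}\;=\;\tfrac{c_1}{\sqrt{HT}}\,\ones_T\ub_{\star,0}^\top, \qquad \W_{\star,h}\;=\;\tfrac{c_2\,\alpha_h}{\sqrt{H}}\,\mathbf{V}_\star,
\]
with $\ub_{\star,0}:=\ub_\star/(\sqrt{2}\,S)$, a unit-Frobenius symmetric matrix $\mathbf{V}_\star$ built from $\{\mub_\pm\mub_\pm^\top,\,\nub_\ell\nub_\ell^\top\}_{\ell\in[M]}$ (its normalization is the source of the $1/\sqrt{M+1}$-type factor in $\gammastar$), and $c_1,c_2$ chosen so that $\|\thetabt_\star\|^2=c_1^2+c_2^2=2$. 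The crucial structural choice is coupling $\W_{\star,h}$ to $\alpha_h$: this matches the $\alpha_h$ prefactor in $\nabla_{\W_h}\Phit$ so that $\alpha_h^2=1$ in every head and the head-sum accumulates coherently rather than cancelling.

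With this target, the margin $y_i\langle\nabla\Phit(\X_i;\thetabt^{(1)}),\thetabt_\star\rangle$ splits as $c_1\sqrt{T}\,y_i\bar{\x}_i^\top\ub_{\star,0}+c_2 T\,y_i\bar{\x}_i^\top\mathbf{V}_\star(\widehat{\Sigma}_i-\bar{\x}_i\bar{\x}_i^\top)\vb$. Expanding the first piece via $y_i\bar{\x}_i^\top\ub_\star\geq\zeta S^2-2(1-\zeta)\zmu$ recovers the $\sqrt{T}$-term of $\gammastar$. The second is the main calculation: after writing $\widehat{\Sigma}_i-\bar{\x}_i\bar{\x}_i^\top=\tfrac{1}{T}\sum_t(\x_t-\bar{\x}_i)(\x_t-\bar{\x}_i)^\top$ and substituting the token structure of \ref{model1} together with $\vb=\tfrac{\zeta}{2}\ub_\star+\pb$, the leading $\zeta S^4$ signal comes from label-relevant tokens interacting with $\mathbf{V}_\star$; the $\bar{Z}S^2$, $\bar{Z}^2$, and $\bar{Z}^3/S^2$ corrections arise from $\z_t$-cross-terms at successive orders of the sample-covariance expansion; and the $-PT^{5/2}(S+Z)^3$ correction absorbs all $\pb$-dependent contributions via Cauchy--Schwarz. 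The probability $1-\delta$ and the $\sqrt{H}$ condition enter to control residual Rademacher cross-terms, jointly with the $\sum_h\alpha_h$-bound reused from P2, uniformly over the $n$ samples, so that all stochastic deviations together remain below $\gammastar/2$ and the effective margin equals $\gamma=\gammastar/2$.

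The main obstacle will be the bookkeeping inside the second margin piece: splitting the sample-covariance expansion between tokens in $\Rc$ (signal) and $\Rc^c$ (noise), exploiting the orthogonality of $\{\mub_\pm,\nub_\ell\}$ to collapse most cross-products, and simultaneously balancing the normalization of $\mathbf{V}_\star$ (which must align $\mathbf{V}_\star\mub_y$ with $\mub_y$ for both $y=\pm 1$ while minimizing $\|\mathbf{V}_\star\|_F$) with the $\sqrt{2}$-budget split $c_1$ vs.\ $c_2$, so that both margin contributions attain the exact scalings advertised in the stated $\gammastar$.
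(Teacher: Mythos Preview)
Your treatment of \textbf{P1} and \textbf{P2} is correct and essentially identical to the paper's. For \textbf{P3} your high-level strategy also matches the paper's: construct a per-head target whose $\W$-component is coupled to $\alpha_h$ so that $\alpha_h^2=1$ cancels and the head-sum accumulates coherently. There are, however, two genuine gaps.

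First, your proposed $\mathbf{V}_\star$ is \emph{not} the matrix the paper uses, and the difference is structural rather than cosmetic. The paper takes
\[
\W_\star \;=\; \mub_+\mub_+^\top + \mub_-\mub_-^\top + \sum_{\ell\in[M]}\nub_\ell(\mub_++\mub_-)^\top,
\]
which is \emph{asymmetric}: the cross-terms $\nub_\ell(\mub_++\mub_-)^\top$ are precisely what force $\W_\star^\top \nub_{j_t} = S^2(\mub_++\mub_-)$, i.e.\ they send irrelevant-token queries into the span of the relevant patterns. This is what produces $[\rb_t]_{t'}\approx S^4$ for $t'\in\Rc$ \emph{even when} $t\in\Rc^c$ (cf.\ Eq.~\eqref{eq:rttmain}), and hence the $(1-\zeta)$-fraction of the leading signal in the paper's Term~I computation. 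A symmetric $\mathbf{V}_\star$ built only from the diagonal blocks $\{\mub_\pm\mub_\pm^\top,\nub_\ell\nub_\ell^\top\}$ gives $\mathbf{V}_\star\nub_{j_t}\propto\nub_{j_t}$ instead, so for $t\in\Rc^c$, $t'\in\Rc$ the score $[\rb_t]_{t'}$ drops from order $S^4$ to order $\zmu S^2$. The resulting margin will not equal the stated $\gammastar$; in particular the $\sqrt{2(M+1)}$ normalization and the exact polynomial $\zeta S^4-7\zbar S^2-12\zbar^2-16\zbar^3/S^2$ come from the specific $\W_\star$ above. To recover the claimed constants you need those cross-terms.

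Second, once the coupling $\W_{\star,h}\propto\alpha_h$ is in place, the per-head margin $X_h:=y\langle\nabla_{\thetab_h}\Phi,\thetab_{\star,h}\rangle$ is \emph{fully deterministic} in $\alpha_h$ (it takes the same value for every $h$), so there are no ``residual Rademacher cross-terms'' to control. The paper first establishes $\E_{\alpha}[X_h]\geq\gammastar$ for a single head (Lemma~\ref{lem: singlehead ntk}) and then applies Hoeffding's inequality to $\tfrac{1}{H}\sum_h X_h$ (Lemma~\ref{lem:empirical margin general}); this is where the $1-\delta$ probability and the $\sqrt{H}\gtrsim\cdot$ requirement in the proposition formally originate. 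With identical summands this concentration step is essentially vacuous but nonetheless yields the proposition in its stated form. Your account of where the randomness enters is therefore off, though this point does not threaten the validity of the bound itself.
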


The parameter $\gammastar$ in \eqref{eq:gammastar} represents the NTK margin of the model at initialization $\thetabt^{(1)}$. By Corollary \ref{cor:general with good}, larger  margin translates to better train/generalization bounds and smaller requirements on the number of heads. For a concrete example, suppose $T \maxi M=\order{1}$ and $Z \maxi \zbar=\order{S}$. Then, provided  first-phase sample size $n_1\gtrsim S^2d$ so that $P=\order{1}$, it holds $\gammastar=\gammalin + \Omega(\zeta^2(1-\zeta)S^4)$, where $\gammalin = \Omega({\zeta S})$ is the margin of a linear model for the same dataset (see App. \ref{app:linear}). Overall, applying Cor. \ref{cor:general with good} for $K=n$ and a polylogarithmic $\operatorname{polylog}(n)$ number of heads  leads to $\ordert{\frac{1}{\eta \gammastar^2 n}}$ train loss and expected generalization gap.

\subsection{Proof sketch of \textbf{P3}: NTK separability}
It is instructive to see how \textbf{P3} follows as it sheds light on the choice of an appropriate target parameter $\thetabt$ as per Thms. \ref{thm:train} and \ref{thm:gen}. We choose 
\[
\W_\star = \mub_+\mub_+^{\top} + \mub_{-}\mub_{-}^{\top} + \sum_{\ell\in[M]}\nub_\ell(\mub_++\mub_{-})^{\top} \, \quad\text{and}\quad
\Ub_\star = \ones_T \ub_\star^{\top} = \ones_T (\mub_+-\mub_-)^{\top} \, ,
\]
and normalize parameters such that
$\thetab_\star := (\overline{\Ub}_\star = \frac{1}{\norm{\Ub_\star}_F} \Ub_\star \, , \,\, \sign{\alpha} \overline{\W}_\star = \sign{\alpha} \frac{1}{\norm{\W_\star}_F} \W_\star) \,$. It is easy to see that $\Ub_\star$ is the optimal classifier for the label-relevant tokens. To gain intuition on the choice of $\W_\star$, note that $
\Wstar=\W_{K,\star}\W_{Q,\star}^\top$, with key-query matrices chosen as $\W_{K,\star}=\begin{bmatrix}\mub_+ & \mub_- & \nub_1 & \cdots & \nub_M\end{bmatrix}\in\R^{d\times(M+2)}$ and $\W_{Q,\star}=\begin{bmatrix}\mub_+ & \mub_- & \mub_++\mub_- & \cdots & \mub_++\mub_-\end{bmatrix}\in\R^{d\times(M+2)}$. With these choices, the relevance scores (aka softmax logits) of relevant tokens turn out to be strictly larger compared to the irrelevant tokens. Concretely, we show in App. \ref{sec:ntk separability} that the $t$-th row $\rb_t(\X;\Wstar) =  \X \W_\star^\top \x_t$  of the softmax-logit matrix satisfies the following:
\begin{align}\label{eq:rttmain}
\forall t: [\rb_t]_{t'} =
\begin{cases}
    \order{S^4} & \text{,} \, t' \in \Rc\,, \\
    \order{S^2} & \text{,} \, t' \in \Rc^c\,. \\
\end{cases}
\end{align}
Thus, under this parameter choice, softmax can attend  to label-relevant tokens and supresses noisy irrelevant tokens. In turn, this increases the signal-to-noise ratio for classification using $\Ub_\star$. 

We now show how to compute $\E_{\thetab^\pare{1}} \,y\inp{\nabla_{\thetab}\Phi(\X; \thetab^\pare{1})}{\thetab_\star}$ for a single head. Recall $\thetab_\star$ consists of $\overline{\Ub}_\star$, $\overline{\W}_\star$. First, since $\W^{\pare{1}} = \textbf{0}$, using Assumption \ref{ass:data}, a simple calculation shows 
    $y\,\inp{\nabla_{\Ub} \Phi(\X; \thetab^\pare{1})} {\overline{\Ub}_\star} \geq \frac{S\,\sqrt{T}}{\sqrt{2}}\, \Big(\zeta- 2(1-\zeta) \frac{\zmu}{S^2}\Big).$
Second, to compute $\E_{\alpha \sim \Unif{\pm1}} y \inp{\nabla_{\W} \Phi\left(\X;\thetab^\pare{1} \right)}{\sign{\alpha} \overline{\W}_\star}$ it follows from  
 Lemma \ref{lem:grad and hess singlehead}  that 
\begin{align*}
\nabla_{\W}\Phi(\X; \thetab^\pare{1}) = \frac{\alpha\zeta}{2} \sum_{t \in [T]} \x_t \ub_\star^\top \X^{\top} \sftd{\textbf{0}} \X + \alpha \sum_{t \in [T]} \x_t \pb^\top \X^{\top} \sftd{\textbf{0}} \X \, .
\end{align*}
Note the first term is dominant here since the second term can be controlled by making $\|\pb\|_2$ small as per Lemma \ref{lem:first phase}. Thus, ignoring here the second term (see Appendix \ref{sec:ntk separability} for full calculation) $y \inp{\nabla_{\W}\Phi(\X; \thetab^\pare{1})}{\Wstar}$ is governed by the following term: $
  \frac{\alpha\zeta}{2}\sum_{t \in [T]} y\, \ub_\star^\top \X^{\top} \sftd{\textbf{0}} \X \W_\star^\top\x_t = \frac{\alpha\zeta}{2}\sum_{t \in [T]} y\, \ub_\star^\top \X^{\top} \sftd{\textbf{0}} \, \rb_t\,. 
$
Note that $\sftd{\textbf{0}} = \Iden_T-\frac{1}{T} \ones_T \ones_T^\top$. To simplify the exposition here, let us focus on the identity component and leave treatment of the  the rank-one term to the detailed proof. The corresponding term then becomes
\begin{align*}
\frac{\alpha\zeta}{2}\sum_{t\in[T]}\sum_{t'\in[T]} \underbrace{\left(y\, \ub_\star^\top \x_{t'} \right)}_\text{class. logits}\cdot\underbrace{\left([\rb_t]_{t'}\right)}_\text{softmax logits},
\end{align*}
which involves for each output token $t$, the sum of products over all tokens $t'\in[T]$ of softmax logits (i.e. relevant scores $[\rb_t]_{t'}$) and corresponding classification logits (i.e. $y\, \ub_\star^\top\x_{t'}$). Note that by choice of $\ub_\star$ and $\W_\star$, both the classification and softmax logits are large from label-relevant tokens, while being small for noise tokens. Intuitively, this allows for a positive margin $\gamma_\star$ as stated in Proposition \ref{propo:good init dm1}. We defer the detailed calculations to Appendix \ref{sec:ntk separability}.

In the appendix, we also detail how to yield the computation for the MHA, which builds on the calculations for the single-head attention model above. In short, 
we simply choose multi-head parameter $\thetabt_\star$ as $
    \thetabt_\star=\frac{1}{\sqrt{H}}\concat\left(\thetab_\star\big(\thetab_1^\pare{1}\big),\ldots,\thetab_\star\big(\thetab_H^\pare{1}\big)\right) \, .
$
This guarantees that $\|\thetabt_\star\| = \sqrt{2}$ and maintains the multi-head NTK margin be at least $\gamma_\star$ in expectation. To complete the proof, it remains to get a high-probability version of this bound. To do this, notice that $\thetab_h^{(1)}$ are \iid, hence we can apply Hoeffding's inequality, which finally gives the desired bound on the NTK margin provided sufficient number of heads $H$, which controls the degree of concentration when applying Hoeffding's inequality. See Lemmas \ref{lem:empirical margin general} and \ref{lem: margin whp} for details.



\subsection{Is the NTK margin optimal?} \label{sec: Is the NTK margin optimal?}
Below, we discuss the optimality of the NTK margin $\gammastar$. First, define set of parameters $\thetab_\text{opt}  := (\Ub_\text{opt}, \W_\text{opt})$:
\begin{align}
        \Ub_\text{opt}:= \frac{1}{S\sqrt{2T}} \, \Ub_\star \,  \qquad \text{and} \qquad
        \W_\text{opt} := \frac{1}{S^2\sqrt{2(M+1)}} \W_\star \, , \label{eq:Wstar}
\end{align}
    normalized so that $\|\thetab_\text{opt}\|_F = \sqrt{2}$. Recall here the definitions of $\Ub_\star,\W_\star$ in the section above. As we already explained above, this choice of parameters guarantees that relevant tokens are assigned larger relevance and classification scores compared to irrelevant ones. Specifically about $\W_\star$, we saw in Eq. \eqref{eq:rttmain} that it ensures a gap of $\Oc(S^2)$ between relevance scores of label-relevant and label-irrelevant tokens. Thanks to this gap, it is possible for softmax to fully attend to the label-relevant tokens by saturating the softmax. To do this, it suffices to scale-up $\Wstar$ by an amount $\propto 1/S^2$. This is formalized in the proposition below.
\begin{propo}[Attention expressivity for tokenized-mixture model]\label{propo:single head good theta}
    Consider single-head attention model. Suppose the noise level is such that $\zmu=\znu\leq S^2/8$. For any $\eps>0$, consider $\Gamma_\eps$ satisfying
    $
        \Gamma_\eps \geq 
        \frac{8\sqrt{2(M+1)}}{3S^2}\log\left(\frac{\zeta^{-1}-1}{\eps}\right)\,.
    $
    Then, the attention scores corresponding to weights $\Gamma_\eps\cdot\W_\text{opt}$ satisfy 
    \begin{align}\label{eq: attn bound for W statement}
    \forall t\in[T]\,:\, 0 \leq 1-\sum_{t'\in\Rc} \sftt{t'}{\x_t^\top \, \Gamma_\eps\W_\text{opt}\,\X^T} = \sum_{t'\in\Rcc} \sftt{t'}{\x_t^\top \, \Gamma_\eps\W_\text{opt}\,\X^T} \leq \eps\,.
\end{align}
    Thus, almost surely over data $(\X,y)$ generated from data model \ref{model1} the margin of single-head attention with parameters $(\Ub_\text{opt},\Gamma_\eps \cdot \W_\text{opt})$ satisfies
    \begin{align}\label{eq:margin single} 
    y\Phi(\X; \Ub_\text{opt}, \Gamma_\eps \cdot \W_\text{opt}) \geq \gamma_\text{attn}:=\gamma_\text{attn}(\eps):=\frac{\sqrt{T}}{\sqrt{2}S}\left(S^2(1-\eps)
-2\eps\zmu\right)\,.
    \end{align}
\end{propo}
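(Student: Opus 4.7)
The plan is to reduce the statement to two ingredients: an entrywise computation of the softmax logits $\rb_t:=\X(\Gamma_\eps\W_\text{opt})^\top\x_t$ exhibiting an $\Oc(S^4)$ gap between coordinates indexed by $\Rc$ and $\Rcc$, followed by a Frobenius-inner-product expansion of $\Phi$ that exploits the oracle-classifier structure $\ub_\star=\mubp-\mubn$. Once the gap is in hand, the softmax claim \eqref{eq: attn bound for W statement} reduces to a log-sum-exp estimate, and the margin claim \eqref{eq:margin single} follows by splitting the sum over $t'\in[T]$ according to whether $t'\in\Rc$ or $t'\in\Rcc$.

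For the logit computation, I would write $\W_\star^\top=\mubp\mubp^\top+\mubn\mubn^\top+(\mubp+\mubn)\sum_\ell\nub_\ell^\top$ and set $\bar{C}:=\Gamma_\eps/(S^2\sqrt{2(M+1)})$, so $[\rb_t]_{t'}=\bar{C}\,\x_{t'}^\top\W_\star^\top\x_t$, then evaluate the four cases using the orthogonality relations of Assumption~\ref{ass:data}. For $t\in\Rc$ (so $\x_t=\mub_y$), one has $\W_\star^\top\x_t=S^2\mub_y$, hence $[\rb_t]_{t'}=\bar{C}S^4$ when $t'\in\Rc$ and $|[\rb_t]_{t'}|\leq \bar{C}S^2\zmu$ when $t'\in\Rcc$. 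For $t\in\Rcc$, a direct expansion yields $\W_\star^\top\x_t=\mubp(\mubp^\top\z_t)+\mubn(\mubn^\top\z_t)+(\mubp+\mubn)(S^2+\sum_\ell\nub_\ell^\top\z_t)$, giving $[\rb_t]_{t'}\geq \bar{C}(S^4-S^2(\zmu+\znu))$ for $t'\in\Rc$ and $|[\rb_t]_{t'}|\leq \bar{C}(2\zmu^2+2\zmu(S^2+\znu))$ for $t'\in\Rcc$. Substituting the hypothesis $\zmu=\znu\leq S^2/8$ in each case produces a uniform gap
\[
\Delta:=\min_{t'\in\Rc}[\rb_t]_{t'}-\max_{t'\in\Rcc}[\rb_t]_{t'}\geq \bar{C}\bigl(\tfrac{3S^4}{4}-\tfrac{5S^4}{16}\bigr)=\tfrac{7\bar{C}S^4}{16}\geq \tfrac{3\bar{C}S^4}{8},
\]
with the binding case being $t\in\Rcc$. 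The standard log-sum-exp estimate then gives $\sum_{t'\in\Rcc}\sftt{t'}{\rb_t}\leq \tfrac{|\Rcc|}{|\Rc|}e^{-\Delta}=\tfrac{1-\zeta}{\zeta}\exp\bigl(-3\Gamma_\eps S^2/(8\sqrt{2(M+1)})\bigr)$, which is $\leq \eps$ precisely under the stated lower bound on $\Gamma_\eps$. The middle equality in \eqref{eq: attn bound for W statement} is immediate from the row-wise softmax summing to one over $[T]$, and nonnegativity is trivial.

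For the margin, since each row of $\Ub_\text{opt}$ equals $\ub_\star/(S\sqrt{2T})$, expanding the Frobenius inner product gives
\[
y\,\Phi(\X;\Ub_\text{opt},\Gamma_\eps\W_\text{opt})=\frac{1}{S\sqrt{2T}}\sum_{t\in[T]}\sum_{t'\in[T]}\sftt{t'}{\rb_t}\,y\ub_\star^\top\x_{t'}.
\]
Orthogonality yields $y\ub_\star^\top\x_{t'}=S^2$ when $t'\in\Rc$ (using $\ub_\star^\top\mub_y=yS^2$) and $|y\ub_\star^\top\x_{t'}|=|\ub_\star^\top\z_{t'}|\leq 2\zmu$ when $t'\in\Rcc$. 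Splitting the inner sum according to this dichotomy and plugging in the just-proved bound $\sum_{t'\in\Rcc}\sftt{t'}{\rb_t}\leq \eps$ yields $y\sum_{t'}\sftt{t'}{\rb_t}\ub_\star^\top\x_{t'}\geq (1-\eps)S^2-2\eps\zmu$ uniformly in $t$; summing the $T$ equal contributions and dividing by $S\sqrt{2T}$ produces exactly $\gamma_\text{attn}$.

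The main obstacle is bookkeeping the $(t,t')\in\Rcc\times\Rcc$ case, whose logits decompose into four bilinear noise--mean forms of the schematic shapes $(\z_{t'}^\top\mub)(\mub^\top\z_t)$ and $(\z_{t'}^\top\mub)(S^2+\sum_\ell\nub_\ell^\top\z_t)$; the hypothesis $\zmu=\znu\leq S^2/8$ must be invoked twice to control both $\max_{t'\in\Rcc}[\rb_t]_{t'}\leq 5\bar{C}S^4/16$ and $\min_{t'\in\Rc}[\rb_t]_{t'}\geq 3\bar{C}S^4/4$ simultaneously, so as to match the sharp constant $3/8$ appearing in the stated $\Gamma_\eps$ lower bound.
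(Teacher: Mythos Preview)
Your proposal is correct and follows essentially the same route as the paper: compute the softmax logits case-by-case to expose an $\Oc(S^4)$ gap between $\Rc$- and $\Rcc$-indexed entries, convert this gap into the attention bound \eqref{eq: attn bound for W statement} via a log-sum-exp estimate, and then split the Frobenius expansion of $y\Phi$ over $t'\in\Rc$ versus $t'\in\Rcc$ to obtain \eqref{eq:margin single}. The one cosmetic difference is that the paper packages the softmax-saturation step as a standalone lemma (Lemma~\ref{lem:softmax saturation}) phrased in terms of the minimum $\Rc$-logit $A$ alone---using $A>2B$ to write $A-B\geq A/2$---whereas you compute the full gap $\Delta\geq 7\bar{C}S^4/16$ directly and then relax to $3\bar{C}S^4/8$; both arrive at the same constant $8\sqrt{2(M+1)}/(3S^2)$ in the $\Gamma_\eps$ condition.
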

From Eq. \eqref{eq: attn bound for W statement}, note that as $\eps\rightarrow 0$ and $\Gamma_\eps \rightarrow \infty$, the softmax map saturates, i.e. it approaches a hard-max map that attends only to the label-relevant tokens ($\Rc$) and suppress the rest ($\Rc^c$). As a consequence of this, Eq. \eqref{eq:margin single} shows that the achieved margin approaches $\gamma_\text{attn}:=S\sqrt{T}/\sqrt{2}$. Note this is independent of the sparsity level $\zeta$. In particular, $\gamma_\text{attn}\geq \gammastar \geq \gammalin$ and the gap increases with decreasing sparsity. See appendix for experiments and discussion regarding the margin achieved by GD for data model \ref{model1}. 

{Following Proposition \ref{propo:single head good theta}, a natural question arises: Is it possible to choose ``good'' parameters $\thetabt=(\widetilde\Ub,\widetilde\W)$ based on the set of optimal parameters $\thetab_\text{opt}$? This would then yield train-loss and expected generalization-gap bounds $\ordert{1/({\eta\gamma_\text{attn}^2 n})}$ after $\Theta(n)$ steps of GD starting at $\thetabt^\pare{0} = \mathbf{0}$. To investigate this question, define the following parameters  for each head,  aligning with the aforementioned ``good'' directions of Proposition \ref{propo:single head good theta}:
\begin{align*}
\Ub_h := \frac{\log(n)}{\gamma_\text{attn}}\frac{1}{H^{1/2}} \, \Ub_\text{opt}, \hspace{2mm} \W_h := \frac{C}{H^p} \, \W_\text{opt} \, ,
\end{align*}
for some $C>0, \, p>0, \, \text{and} \, \forall h \in [H]$. To yield the margin $\gamma_\text{attn}$ of \eqref{eq: attn bound for W statement}, we need that each $\W_h$ has norm at least $\Gamma_\eps\propto1/S^2$. Thus, we need 
$
    \norm{\W_h} \gtrsim \frac{1}{S^2} \implies S^2 \gtrsim \frac{1}{C} \cdot H^p \, .
$
Now, in order to apply Thms. \ref{thm:train} and \ref{thm:gen}, the requirement on the number of heads $H$ in terms of distance of $\thetabt$ to  $\thetabt^\pare{0}=\mathbf{0}$ yields the following condition:
\begin{align} \label{eq: condition on scaling}
    H^{1/2} \gtrsim S^5 \norm{\thetabt}^3 \, .
\end{align}
Note that $\norm{\Ubt} = \frac{\log(n)}{\gamma_\text{attn}} \approx \frac{\log(n)}{S} , \, \norm{\Wt} = C \cdot H^{1/2-p}$. Hence, in computing $\norm{\thetabt}$, we distinguish 
 two  cases. 
 \\
 First, assume that $\norm{\Wt} \geq \norm{\Ubt}$ which implies that $S \gtrsim \frac{\log(n)}{C} \cdot H^{p-1/2}$ and 
 $\norm{\thetabt} \gtrsim \norm{\Wt} \maxi \norm{\Ubt} = C \cdot H^{1/2 - p}$.
 Since
 \begin{align*}
     S \gtrsim \frac{1}{C^{1/2}} \cdot H^{p/2} \maxi \frac{\log(n)}{C} \cdot H^{p-1/2} \, ,
 \end{align*}
 by using Eq. \eqref{eq: condition on scaling}, we get the following conditions on $H$: 
\begin{align*}
    H^{1/2} \gtrsim S^5 \cdot C^3 \cdot H^{3/2 - 3p} \gtrsim C^{1/2} \cdot H^{3/2 - p/2} \maxi \frac{\log^5(n)}{C^2} \cdot H^{2p - 1} \implies H^{p-2} \gtrsim C \,\,\,\, \text{and} \,\,\,\, H^{p-3/4} \lesssim \frac{C}{\log^{5/2}(n)} \, .
\end{align*}
Combining these two gives
$
    C \lesssim \frac{C}{\log^{5/2}(n)} \implies \log(n) \lesssim 1 \, ,
$
a contradiction since $n > 1$. Thus, there are no possible choices for $p$ and $C$ that satisfy both conditions.} The case $\norm{\Wt} \leq \norm{\Ubt}$ can be treated similarly leading to the same conclusion; thus, is omitted for brevity.



Intuitively, this contradiction arises because of the large $\norm{\W_h}$ requirement to achieve margin $\gamma_\text{attn}$. Finally, one can ask if it is possible to resolve the contradiction by changing the scaling of normalization with respect to $H$ in the MHA model Eq. \eqref{eq:SA model}, from $1/H^{1/2}$ to $1/H^c$ for $c>0$. It can be shown via the same argument that no such value of $c$ exists for which $\thetabt$ constructed above satisfies the overparameterization requirement $H^c\gtrsim S^5\|\thetabt\|^3$. We thus conclude that the construction of weights in Proposition \ref{propo:single head good theta} does not yield a target parameter that simultaneously achieves low empirical loss and allows choosing $H$ large enough as per \eqref{eq: order number of heads train}. This triggers interesting questions for future research: Does GD converge to weights attaining margin $\gamma_\text{attn}$ as in Proposition \ref{propo:single head good theta}? If so, under what conditions on initialization? See also the remarks in Section \ref{sec: conclusion}.
\section{Proof Sketch of Section \ref{sec:train gen general}} \label{sec: proof sketch of train gen general}
Throughout this section we drop the $\widetilde{\cdot}$ in $\thetabt$ and $\Phit(\X_i; \thetabt)$ as everything refers to the full model. Moreover, $\thetabt^{(K)}$ and $\thetabt^{(0)}$ are denoted by $\thetab_{K}$ and $\thetab_{0}$. Refer to Figure \ref{fig:train-pf-schema} in the App. for a summary of the sketch.


\subsection{Training analysis}
The proof begins by showing step-wise descent for any iteration $k \geq 0$ of GD (see Lemma \ref{lem:descent}), where step-size at each iteration $\eta_k \leq \frac{1}{\rho_k}$ depends on the objective's local smoothness parameters $\rho_k = \beta_2(\thetab_k) \maxi \beta_2(\thetab_{k+1})$: 
\begin{align}\label{eq:descent main}
\Lhat(\thetab_{k+1}) \leq \Lhat(\thetab_k) - \frac{\eta_k}{2}\left\|\nabla \Lhat \left( \thetab_k \right) \right\|^2 \, .
\end{align}
Now, using Taylor's theorem we can link $\Lhat(\thetab_k)$ to $\Lhat(\thetab)$ for any $\thetab$ as follows:
\begin{align}\label{eq:taylor sketch}
\Lhat(\thetab) \geq \Lhat(\thetab_k)  + \langle \nabla \Lhat(\thetab_k), \thetab - \thetab_k \rangle +\frac{1}{2} \min_{\thetab_{k_{\alpha}}}\lambda_{\min} \left(\nabla^2 \Lhat(\thetab_{k_{\alpha}})\right) \norm{\thetab - \thetab_k}^2,
\end{align}
where $\thetab_{k_{\alpha}}:= \alpha \thetab_k + (1-\alpha)\thetab, \, \, \alpha \in [0, 1]$. We can plug this into \eqref{eq:descent main} to relate the loss at iterates $\thetab_{k}$ and $\thetab_{k+1}$. 
To continue, we need to lower bound $\min_{\thetab_{k_{\alpha}}}\lambda_{\min} \left(\nabla^2 \Lhat(\thetab_{k_{\alpha}})\right)$. For this, we use the following property of the loss objective from Corollary \ref{coro:objective gradient/hessian_mainbody}:
$
\forall\thetab\,:\,\,\,\lambda_{\min}\left(\nabla^2 \Lhat(\thetab)\right)\geq - \kappa(\thetab)\cdot \Lhat(\thetab),$ where $\kappa(\thetab):=\frac{\beta_3(\thetab)}{\sqrt{H}}.
 $
 Note from the definition of $\beta_3(\cdot)$ that 
 $
 \forall \thetab_1,\thetab_2\,:\,\,\, \max_{\thetab\in[\thetab_1,\thetab_2]}\beta_3(\thetab) = \beta_3(\thetab_1) \maxi \beta_3(\thetab_2)\,.
 $
 Thus, the above property of the loss implies the following \emph{local self-bounded weak convexity} property on the line $[\thetab_1,\thetab_2]$ for arbitrary points $\thetab_1,\thetab_2$:
 \begin{align}\label{eq:local weak convexity main}
     \forall\thetab_1,\thetab_2\,:\,\,\, \min_{\thetab\in[\thetab_1,\thetab_2]} \lambda_{\min}\left(\nabla^2 \Lhat(\thetab)\right)\geq - \frac{\beta_3(\thetab_1)\maxi \beta_3(\thetab_2)}{\sqrt{H}}\cdot \max_{\thetab\in[\thetab_1,\thetab_2]} \Lhat(\thetab) \, .
 \end{align}
Therefore, using Eq. \eqref{eq:local weak convexity main} in Eq. \eqref{eq:taylor sketch}, we can get:
\begin{align}\label{eq:taylor sketch prime}
\Lhat(\thetab) \geq \Lhat(\thetab_k) + \langle \nabla \Lhat(\thetab_k), \thetab - \thetab_k \rangle - \frac{1}{2} \frac{\beta_3(\thetab_1)\maxi \beta_3(\thetab_2)}{\sqrt{H}}\cdot \max_{\alpha \in [0,1]} \Lhat(\thetab_{k_{\alpha}}) \, \norm{\thetab - \thetab_k}^2 \, .
\end{align}
To apply the Descent Lemma in \eqref{eq:descent main}, we need to fix a step-size such that satisfies the condition of the Lemma at each iteration $\eta \leq \eta_k$ for all $k<K$. Then, combining with Eq. \eqref{eq:taylor sketch prime} and applying standard telescope summation, we arrive at the following:
\begin{align} \label{eq: regret step 0}
\frac{1}{K}\sum_{k=1}^K\Lhat(\thetab_k) \leq \Lhat(\thetab) + \frac{\norm {\thetab - \thetab_0}^2}{2\eta K} + \frac{1}{2K}\sum_{k=0}^{K-1} \frac{\beta_3(\thetab) \maxi \beta_3(\thetab_k)}{\sqrt{H}} \cdot \max_{\alpha \in [0,1]} \Lhat(\thetab_{k_{\alpha}}) \norm{\thetab - \thetab_k}^2 \, .
\end{align}

Next, we use the following generalized local quasi-convexity (GLQC) of the loss function.

\begin{propo}[GLQC property: Slight variation of Prop. 8 of \cite{taheri2023generalization}]\label{prop:GLQCapp}
Let $\thetab_1$ and $\thetab_2$ be two points that are sufficiently close to each other, such that
\begin{align}\label{eq:condition to GLQC}
{2\,\left(\beta_3(\thetab_1)\maxi \beta_3(\thetab_2)\right)}\,\|\thetab_1-\thetab_2\|^2 \leq \sqrt{H}     \,.
\end{align}
Then, 
$
\max_{\thetab\in[\thetab_1,\thetab_2]} \widehat L(\thetab)\le \frac{4}{3}\,\left(\widehat L(\thetab_1)\maxi \widehat L(\thetab_2)\right).
$
\end{propo}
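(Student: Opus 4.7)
The plan is to reduce the statement to a one-dimensional problem by parametrizing the segment as $\thetab_\alpha := (1-\alpha)\thetab_1+\alpha\thetab_2$ for $\alpha\in[0,1]$, and studying the scalar function $\phi(\alpha) := \widehat L(\thetab_\alpha)$. The chain rule gives $\phi''(\alpha) = \langle\thetab_2-\thetab_1,\nabla^2\widehat L(\thetab_\alpha)(\thetab_2-\thetab_1)\rangle$, and combining this with the self-bounded weak-convexity estimate $\lambda_{\min}(\nabla^2\widehat L(\thetab))\geq -(\beta_3(\thetab)/\sqrt H)\,\widehat L(\thetab)$ from Corollary~\ref{coro:objective gradient/hessian_mainbody}(3) yields the pointwise lower bound
\[
\phi''(\alpha) \;\geq\; -\frac{\beta_3(\thetab_\alpha)}{\sqrt H}\,\widehat L(\thetab_\alpha)\,\|\thetab_1-\thetab_2\|^2.
\]

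To turn this into a useful estimate, I would first control $\beta_3(\thetab_\alpha)$ uniformly along the segment. Reading off the explicit form in Corollary~\ref{coro:objective gradient/hessian_mainbody}, $\beta_3(\thetab)$ is nondecreasing in $\|\thetab\|_{2,\infty}$; since $\alpha\mapsto\|\thetab_\alpha\|_{2,\infty}$ is convex on $[0,1]$, its maximum is attained at an endpoint, so $\beta_3(\thetab_\alpha)\leq \beta_3(\thetab_1)\maxi\beta_3(\thetab_2)=:\bar\beta_3$. Writing $\bar M := \max_{\alpha\in[0,1]}\phi(\alpha)$ and upper-bounding $\widehat L(\thetab_\alpha)\leq \bar M$ in the previous display yields the constant curvature bound $\phi''(\alpha)\geq -C$ with $C := \bar\beta_3\,\bar M\,\|\thetab_1-\thetab_2\|^2/\sqrt H$; by the closeness hypothesis $2\bar\beta_3\|\thetab_1-\thetab_2\|^2\leq \sqrt H$, we have $C\leq \bar M/2$.

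The final step is a standard Jensen-type lemma for functions with almost-convex second derivative. Consider $h(\alpha) := \phi(\alpha)-(1-\alpha)\phi(0)-\alpha\phi(1)-\tfrac{C}{2}\alpha(1-\alpha)$; it satisfies $h(0)=h(1)=0$ and $h''(\alpha)=\phi''(\alpha)+C\geq 0$, so $h$ is convex with zero boundary values, hence nonpositive on $[0,1]$. Rearranging and using $\alpha(1-\alpha)\leq 1/4$,
\[
\phi(\alpha)\;\leq\;\phi(0)\maxi\phi(1)+\tfrac{C}{8}\qquad\text{for all }\alpha\in[0,1].
\]
Maximizing over $\alpha$ and substituting $C\leq \bar M/2$ gives $\bar M\leq(\phi(0)\maxi\phi(1))+\bar M/16$, i.e., $\bar M\leq \tfrac{16}{15}(\phi(0)\maxi\phi(1))\leq \tfrac{4}{3}\bigl(\widehat L(\thetab_1)\maxi\widehat L(\thetab_2)\bigr)$, completing the proof.

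The key conceptual ingredient is the self-improvement structure: the pointwise curvature lower bound depends on $\widehat L$ itself (and hence on $\bar M$ after the crude bound), but the smallness of $\|\thetab_1-\thetab_2\|^2$ relative to $\sqrt H$ makes the coefficient of $\bar M$ strictly less than one, so the implicit inequality for $\bar M$ can be solved. The only technical check beyond this one-dimensional ODE argument is the monotonicity of $\beta_3$ along the segment, which is routine from its explicit form in Corollary~\ref{coro:objective gradient/hessian_mainbody}; I do not foresee a substantive obstacle.
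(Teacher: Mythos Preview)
Your proposal is correct. The paper's own proof is a one-line appeal to Prop.~8 of \cite{taheri2023generalization}, noting only that the \emph{local} self-bounded weak-convexity property along $[\thetab_1,\thetab_2]$ (their eq.~\eqref{eq:local weak convexity}) suffices to run that cited argument with $\kappa\leftarrow(\beta_3(\thetab_1)\maxi\beta_3(\thetab_2))/\sqrt H$. Your write-up is a self-contained version of what that citation presumably contains: the one-dimensional reduction $\phi(\alpha)=\Lhat(\thetab_\alpha)$, the endpoint control $\beta_3(\thetab_\alpha)\le\beta_3(\thetab_1)\maxi\beta_3(\thetab_2)$ (exactly what the paper records just before eq.~\eqref{eq:local weak convexity}), and the quadratic-interpolation/maximum-principle step to close the implicit inequality for $\bar M$. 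The only difference worth noting is that your bookkeeping yields the sharper constant $16/15$ rather than $4/3$; the paper carries the looser constant from the cited reference.
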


Using Proposition \ref{prop:GLQCapp} in Eq. \eqref{eq: regret step 0} and assuming sufficiently large heads $H$ such that $\sqrt{H} \geq {2\,\left(\beta_3(\thetab)\maxi \beta_3(\thetab_k)\right)}\,\|\thetab-\thetab_k\|^2$, we can get the advertised regret bound in \eqref{eq:train_thm main body}.

In order to remove the dependence of $H$ on iteration $k$, by an induction argument we can show bounded iterates-norm i.e. $\|\thetab_k-\thetab\| \leq 3\| \thetab-\thetab_0\|$ (see Lemma \ref{lem:iteratenormbound}). Using this and the definition of $\beta_3(\cdot)$ we can control  $\beta_3(\thetab)\maxi \beta_3(\thetab_k)$ as $(\beta_3(\thetab)\maxi \beta_3(\thetab_k)) \lesssim \norm{\thetab-\thetab_0} + \maxnorm{\thetab}$ to get the desired requirement of heads ${\sqrt{H}} \gtrsim \maxnorm{\thetab}\norm{\thetab-\thetab_{0}}^3$ stated in Eq. \eqref{eq: order number of heads train}. 

The remaining piece to guarantee descent at each step is establishing a $\rho(\thetab)$ such that $\rho_k \leq \rho(\thetab)$ for all $k < K$. 
To do this, we recall that $\rho_k = \beta_2(\thetab_k) \maxi \beta_2(\thetab_{k+1})$. By definition of $\beta_2(\cdot)$ in Corollary \ref{coro:objective gradient/hessian_mainbody}, we can control $\beta_2(\thetab_k) \maxi \beta_2(\thetab_{k+1})$ with controlling $\maxnorm{\thetab_k} \maxi \maxnorm{\thetab_{k+1}}$ as $\left( \maxnorm{\thetab_k} \maxi \maxnorm{\thetab_{k+1}} \right) \lesssim \norm{\thetab - \thetab_k} + \maxnorm{\thetab} + 1$. Using iterates-norm bound and setting 
$\rho(\thetab) = \Big(\frac{2 \, d \, \sqrt{T \, d} \, R^3}{\sqrt{H}} +  \frac{T \, R^2}{4}\Big)\, \alpha(\thetab)^2$ {with} 
    $\alpha(\thetab) := 3 \, d \, \sqrt{d} \, R^2 \, \left[3 \, \sqrt{T} \, R^3 \left(3 \, \norm{\thetab - \thetab_0} + \maxnorm{\thetab} \right) +  2\, \sqrt{T} \, R \right]$,  
satisfies the desired condition for the Descent Lemma completing the proof.

\subsection{Generalization analysis}
In order to bound the expected generalization gap, we leverage the  algorithmic stability framework. To begin, consider the leave-one-out (loo) training loss $\Lhat^{\neg i}(\thetab):=\frac{1}{n}\sum_{j\neq i} \ell_j(\thetab)$ for $i\in[n]$, where $\ell_j(\thetab) := \ell(y_j \Phi(\X_j;\thetab))$ denotes the $j$-th sample loss. With these, define the loo model updates of GD on the loo loss for $\eta >0$:
\[
\thetab_{k+1}^{\neg i}:=\thetab_{k}^{\neg i} -\eta \nabla \Lhat ^{\neg i}(\thetab_k^{\neg i}),~k\geq 0,\qquad \thetab_0^{\neg i}=\thetab_0 \, .
\]
The following lemma relates expected generalization loss to average model stability for any $G$-Lipschitz loss.
\begin{lemma}[\cite{lei2020fine}, Thm. 2]\label{lem:on-average}
For $G$-Lipschitz loss and for all iterates $K$, it holds that $\E\Big[L(\thetab_K)-\Lhat(\thetab_K)\Big] \leq  2G \cdot\E\Big[\frac{1}{n}\sum_{i=1}^n\|\thetab_K-\thetab_K^{\neg i}\|\Big] \,.$
\end{lemma}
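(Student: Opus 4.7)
The lemma is a standard on-average algorithmic-stability bound, so the plan is to (i) rewrite the expected generalization gap using a ``replace-one-sample'' symmetry argument, (ii) invoke $G$-Lipschitzness of the loss in $\thetab$, and (iii) introduce the leave-one-out iterate $\thetab_K^{\neg i}$ via triangle inequality together with a second use of symmetry. Throughout, all expectations are over the iid training set $S=(z_1,\ldots,z_n)$ with $z_i=(\X_i,y_i)$, and over any auxiliary randomness of GD.

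First, I would set up the ``ghost sample'' couplings. Let $S^{(i)}=(z_1,\ldots,z_{i-1},z_i',z_{i+1},\ldots,z_n)$ where $z_i'$ is an independent fresh copy from the data distribution, and let $\thetab_K(S), \thetab_K(S^{(i)})$ denote the GD iterates after $K$ steps trained on the respective datasets with the same step-size $\eta$ and same initialization $\thetab_0$. Crucially, the leave-one-out iterate satisfies $\thetab_K^{\neg i}(S)=\thetab_K^{\neg i}(S^{(i)})$, since neither uses the $i$-th coordinate. By relabeling (the pair $(S,z_i')$ has the same joint law as $(S^{(i)},z_i)$, viewing $z_i'$ as the ``fresh'' test point used to define $L$), we get the identity
\begin{align*}
\E\big[L(\thetab_K(S))\big] = \E\big[\ell(\thetab_K(S); z_i')\big] = \E\big[\ell(\thetab_K(S^{(i)}); z_i)\big],
\end{align*}
and averaging over $i\in[n]$ yields
\begin{align*}
\E\big[L(\thetab_K)-\Lhat(\thetab_K)\big] = \E\Big[\tfrac{1}{n}\textstyle\sum_{i=1}^n\big(\ell(\thetab_K(S^{(i)}); z_i)-\ell(\thetab_K(S); z_i)\big)\Big].
\end{align*}

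Next, I would apply the $G$-Lipschitz assumption on $\thetab\mapsto\ell(\thetab;z_i)$ to bound each summand by $G\,\|\thetab_K(S^{(i)})-\thetab_K(S)\|$. Inserting the loo iterate and using the triangle inequality,
\begin{align*}
\|\thetab_K(S^{(i)})-\thetab_K(S)\| \leq \|\thetab_K(S^{(i)})-\thetab_K^{\neg i}(S^{(i)})\| + \|\thetab_K^{\neg i}(S)-\thetab_K(S)\|.
\end{align*}
Since $S$ and $S^{(i)}$ have the same distribution and the two norms on the right are the same functional applied to $S$ and $S^{(i)}$ respectively, taking expectations gives $\E\|\thetab_K(S^{(i)})-\thetab_K^{\neg i}(S^{(i)})\|=\E\|\thetab_K(S)-\thetab_K^{\neg i}(S)\|$, so the combined bound is $2\E\|\thetab_K-\thetab_K^{\neg i}\|$.

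Assembling these pieces yields the claim
\begin{align*}
\E\big[L(\thetab_K)-\Lhat(\thetab_K)\big] \leq 2G\cdot \E\Big[\tfrac{1}{n}\textstyle\sum_{i=1}^n\|\thetab_K-\thetab_K^{\neg i}\|\Big].
\end{align*}
The only non-mechanical step is justifying the symmetry identity in the first display — this is routine given iid sampling but must be stated carefully because $\thetab_K$ depends on $S$ through the entire GD trajectory, and one must verify that the loo iterate is truly a measurable function of $S_{\setminus i}$ alone (which it is by construction, since $\Lhat^{\neg i}$ drops the $i$-th sample and $\thetab_0$ is fixed or independent of $S$). No further obstacle is anticipated; in particular, the Lipschitz constant $G$ can be taken uniform in $\thetab$ over the relevant compact set traced by the GD iterates (which is controlled via the iterate-norm bounds from the training analysis), so the argument is self-contained.
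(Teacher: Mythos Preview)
Your argument is correct and is precisely the standard replace-one/leave-one-out symmetry proof of this on-average stability bound. The paper does not supply its own proof of this lemma; it simply cites it as Theorem~2 of \cite{lei2020fine}, so your write-up matches the intended content.
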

To bound the average model-stability on the r.h.s of the lemma's inequality, we use GD expansiveness. Specifically applying \cite[Lemma B.1.]{taheri2023generalization} to our setting, gives $\forall \thetab,\, \thetab'$: 
\begin{align}\label{eq:stab1}
    \norm{(\thetab-\eta\nabla \Lhat(\thetab)) - (\thetab'-\eta\nabla \Lhat(\thetab'))} \leq \max_{\alpha\in[0,1]}\Big\{\Big(1+\frac{\eta \beta_3(\thetab_\alpha)}{\sqrt{H}} \Lhat (\thetab_\alpha) \Big) \, \maxi \, \eta \beta_2(\thetab_\alpha)  \Big\} \norm{\thetab-\thetab'} \, ,
\end{align}
where, $\thetab_\alpha = \alpha \thetab+ (1-\alpha)\thetab'$, $\alpha\in[0,1]$. Using this and gradient self-boundedness from Corollary \ref{coro:objective gradient/hessian_mainbody}, we get:
\begin{align}\label{eq:expansive at k}
\left\|\thetab_{k+1}-\thetab_{k+1}^{\neg i}\right\| \leq \max_{\alpha\in[0,1]} \Big\{ \Big(1 + \frac{\eta \beta_3(\thetab_{k_\alpha}^{\neg i})}{\sqrt{H}} \Lhat^{\neg i} (\thetab_{k_\alpha}^{\neg i}) \Big) \, \maxi \, \eta \beta_2(\thetab_{k_\alpha}^{\neg i}) \Big\}\cdot \left\|\thetab_{k}-\thetab_{k}^{\neg i}\right\|+\frac{\eta \beta_1(\thetab_k)}{n} \ell_i\left(\thetab_{k}\right),
\end{align}
where $\thetab_{k_\alpha}^{\neg i}: = \alpha \thetab_k + (1-\alpha)\thetab_k^{\neg i}$ for $\alpha\in[0,1]$. 
Further using the bounded iterates-norm property from the training analysis, we can control $\beta_2(\thetab_{k_\alpha}^{\neg i}) \leq \tilde\beta_2(\thetab)$ and $\beta_3(\thetab_{k_\alpha}^{\neg i}) \leq \tilde\beta_3(\thetab)$ making them independent of $k$ (See Lemma \ref{lem:gen_stab} for the  definitions of $\tilde\beta_2(\cdot), \tilde\beta_3(\cdot)$). In order to invoke the Descent Lemma, we set the step-size same as in the training analysis. Thus, \eqref{eq:expansive at k} becomes:
\begin{align}\label{eq:above}
        \left\|\thetab_{k+1}-\thetab_{k+1}^{\neg i}\right\| \le \Big((1+\frac{\eta\tilde\beta_3(\thetab)}{\sqrt{H}})\max_{\alpha\in[0,1]} \Lhat^{\neg i} (\thetab_{k_\alpha}^{\neg i}) \Big) \left\|\thetab_{k}-\thetab_{k}^{\neg i}\right\|+\frac{\eta \beta_1(\thetab_k)}{n} \ell_i\left(\thetab_{k}\right) \, .
\end{align}
As in the training analysis, we can control the loo empirical loss $\Lhat^{\neg i}$ for any point on the line $[\thetab_k, \thetab_k^{\neg i}]$ of two sufficiently close points satisfying $\sqrt{H} \geq 2 \, \left(\beta_3(\thetab_k) \maxi \beta_3(\thetab_k^{\neg i}) \right) \, \|\thetab_k-\thetab_k^{\neg i}\|^2$. Using Prop. \ref{prop:GLQCapp}, Eq. \eqref{eq:above} becomes
\begin{align}\label{eq:stability simplify}
    \left\|\thetab_{k+1}-\thetab_{k+1}^{\neg i}\right\| \leq (1+\alpha_{k,i})\left\|\thetab_{k}-\thetab_{k}^{\neg i}\right\|+\frac{\eta\tilde\beta_1(\thetab)}{n}\ell_i\left(\thetab_{k}\right),
\end{align}
where $\alpha_{k,i}:=\frac{4\eta\tilde\beta_3(\thetab)}{3\sqrt{H}} \left(\Lhat^{\neg i}(\thetab_k) + \Lhat^{\neg i} (\thetab_k^{\neg i})\right)$ and $\beta_1(\thetab_k) \leq \tilde\beta_1(\thetab)$ similar to $\beta_2(\cdot), \beta_3(\cdot)$ using bounded iterates-norm. Unrolling the iterates in \eqref{eq:stability simplify}, summing over $i \in [n]$ and using training regret bounds, we have the following average model stability bound for any iterate $K$:
$
    \frac{1}{n} \sum_{i=1}^n \left\|\thetab_{K}-\thetab_{K}^{\neg i}\right\|\le \frac{2 \eta \tilde\beta_1(\thetab)}{n} \left(2 \, K \Lhat (\thetab) + \frac{9\|\thetab-\thetab_0\|^2}{4\eta}\right) \, ,
$
Combining this with an application of Lemma \ref{lem:on-average} for our objective, which is $G \leq \tilde\beta_1(\thetab)$-Lipschitz from Corollary \ref{coro:objective gradient/hessian_mainbody}, and using $\eta \leq \frac{1}{\rho(\thetab)} \leq \frac{1}{(\tilde\beta_1(\thetab))^2}$, we get the desired generalization gap stated in Thm. \ref{thm:gen}.


\section{Concluding remarks} \label{sec: conclusion}
We studied convergence and generalization of GD for training a multi-head attention layer in a classification task. 
Our training and generalization bounds hold under an appropriate realizability condition asking for the existence of an a target model $\thetabt$ achieving good train loss while being sufficiently close to initialization. In particular, from the condition on the number of heads $H$ in \eqref{eq: order number of heads train}, we need $\thetabt$ at most 
$\ordert{d^{-2/3}T^{-1/6}R^{-5/3}H^{1/6}}$
far from initialization (provided $\maxnorm{\thetabt}=\order{1}$). In Sec. \ref{sec:realizability} we showed that such a model exists if the initialization is chosen appropriately. Specifically it suffices that $\maxnorm{\thetabt^{(0)}}=\order{1}$,  the model output at initialization is $\ordert{1}$-bounded  and that the data are linearly separable with margin $\gamma$ with respect to the NTK features of the model at initialization. Then, 
$\order{d^4TR^{10}\operatorname{polylog}(n)/\gamma^6}$
number of heads guarantee that $\Theta(n)$ GD steps result in train and test loss bounds $\ordert{1/({\eta\gamma^2 n})}$. 
In Sec. \ref{sec:main dm1} we applied our results to a tokenized-mixture model. We showed that after one randomized gradient step from $\mathbf{0}$, the model satisfies the above conditions for \good intialization. For this initialization, we computed the NTK margin $\gamma_\star$ which in turn governs the guaranteed rate of convergence and generalization based on our general bounds. This opens several interesting questions for future work.

First, does random initialization of attention weights satisfy NTK separability, and if so, what is the corresponding margin? Second, are there other initialization strategies that guarantee the realizability conditions are satisfied? Here, note that our conditions for \good initialization are only shown to be sufficient for realizability leaving room for improvements. 
Third,  how suboptimal is the best NTK margin (among other potential natural initializations) compared to the model's global margin $\arg\max_{\|\thetabt\|=\sqrt{2}}\min_{i\in[n]} y_i\Phit(\X_i;\thetabt)$? 
In Proposition \ref{propo:single head good theta} 
 we showed for the data model \ref{model1} that there exists single-head attention model $\thetab_\text{opt}=(\Ub_\text{opt},\W_\text{opt})$ with $\|\thetab_\text{opt}\|=\sqrt{2}$ such that $y\Phi(\X;\Ub_\text{opt},\Gamma_\eps\cdot\W_\text{opt})=\frac{S\sqrt{T}}{\sqrt{2}}\left((1-\eps)-2\eps Z_\mu/S\right)$ for all $\Gamma_\eps\gtrsim \frac{\log((\zeta^{-1}-1)/\eps)}{S}$  and any $\eps\in(0,1)$ (see App. \ref{app:F}). In particular, as $\eps\rightarrow 0$ and $\Gamma_\eps \rightarrow \infty$ (for which the softmax map gets saturated and attends to tokens with highest relevance score) the achieved margin approaches $\gamma_\text{attn}:=S\sqrt{T}/\sqrt{2}$, which is independent of the sparsity level $\zeta$. In particular, $\gamma_\text{attn}\geq \gammastar \geq \gammalin$ and the gap increases with decreasing sparsity. Is it possible to establish finite-time convergence bounds to models with margin $\approx \gamma_\text{attn}$ under appropriate initialization?  How is the answer affected by the fact that the optimal attention weights in this case are diverging in norm ($\Gamma_\eps\rightarrow\infty$)? Using our approach, we argued in Sec. \ref{sec: Is the NTK margin optimal?} that the key challenge is the saturation of norm of $\W_\text{opt}$ ($\Gamma_\eps$), which does not allow the appropriate realizability condition to hold (at least for $\mathbf{0}$ initialization). Finally, it is interesting to consider other data models for which multiple heads are necessary  to interpolate the data.

\section{Acknowledgements}

This work is supported by NSERC Discovery Grant RGPIN-2021-03677, NSF Grant CCF-2009030, and a CIFAR AI Catalyst grant. The authors also acknowledge use of the Sockeye cluster by UBC Advanced Research Computing and thank Paul S. Lintilhac for identifying a miscalculation in the Hessian upper bound in the initial version of the paper. PD thanks Bhavya Vasudeva for the helpful discussions.
 









\bibliographystyle{tmlr}
\bibliography{refs,transformers}

\begin{thebibliography}{69}
\providecommand{\natexlab}[1]{#1}
\providecommand{\url}[1]{\texttt{#1}}
\expandafter\ifx\csname urlstyle\endcsname\relax
  \providecommand{\doi}[1]{doi: #1}\else
  \providecommand{\doi}{doi: \begingroup \urlstyle{rm}\Url}\fi

\bibitem[Aky{\"u}rek et~al.(2023)Aky{\"u}rek, Schuurmans, Andreas, Ma, and Zhou]{akyrek2023what}
Ekin Aky{\"u}rek, Dale Schuurmans, Jacob Andreas, Tengyu Ma, and Denny Zhou.
\newblock What learning algorithm is in-context learning? investigations with linear models.
\newblock In \emph{The Eleventh International Conference on Learning Representations}, 2023.
\newblock URL \url{https://openreview.net/forum?id=0g0X4H8yN4I}.

\bibitem[Allen-Zhu et~al.(2019)Allen-Zhu, Li, and Song]{allen2019convergence}
Zeyuan Allen-Zhu, Yuanzhi Li, and Zhao Song.
\newblock A convergence theory for deep learning via over-parameterization.
\newblock In \emph{International Conference on Machine Learning}, pp.\  242--252. PMLR, 2019.

\bibitem[Arora et~al.(2019)Arora, Du, Hu, Li, and Wang]{arora2019fine}
Sanjeev Arora, Simon Du, Wei Hu, Zhiyuan Li, and Ruosong Wang.
\newblock Fine-grained analysis of optimization and generalization for overparameterized two-layer neural networks.
\newblock In \emph{International Conference on Machine Learning}, pp.\  322--332. PMLR, 2019.

\bibitem[Baldi \& Vershynin(2022)Baldi and Vershynin]{baldi2022quarks}
Pierre Baldi and Roman Vershynin.
\newblock The quarks of attention.
\newblock \emph{arXiv preprint arXiv:2202.08371}, 2022.

\bibitem[Banerjee et~al.(2022)Banerjee, Cisneros-Velarde, Zhu, and Belkin]{banerjee2022restricted}
Arindam Banerjee, Pedro Cisneros-Velarde, Libin Zhu, and Mikhail Belkin.
\newblock Restricted strong convexity of deep learning models with smooth activations.
\newblock \emph{arXiv preprint arXiv:2209.15106}, 2022.

\bibitem[Bietti et~al.(2023)Bietti, Cabannes, Bouchacourt, Jegou, and Bottou]{bietti2023birth}
Alberto Bietti, Vivien Cabannes, Diane Bouchacourt, Herve Jegou, and Leon Bottou.
\newblock Birth of a transformer: A memory viewpoint.
\newblock \emph{arXiv preprint arXiv:2306.00802}, 2023.

\bibitem[Brown et~al.(2020)Brown, Mann, Ryder, Subbiah, Kaplan, Dhariwal, Neelakantan, Shyam, Sastry, Askell, Agarwal, Herbert-Voss, Krueger, Henighan, Child, Ramesh, Ziegler, Wu, Winter, Hesse, Chen, Sigler, Litwin, Gray, Chess, Clark, Berner, McCandlish, Radford, Sutskever, and Amodei]{fewshotlearners}
Tom Brown, Benjamin Mann, Nick Ryder, Melanie Subbiah, Jared~D Kaplan, Prafulla Dhariwal, Arvind Neelakantan, Pranav Shyam, Girish Sastry, Amanda Askell, Sandhini Agarwal, Ariel Herbert-Voss, Gretchen Krueger, Tom Henighan, Rewon Child, Aditya Ramesh, Daniel Ziegler, Jeffrey Wu, Clemens Winter, Chris Hesse, Mark Chen, Eric Sigler, Mateusz Litwin, Scott Gray, Benjamin Chess, Jack Clark, Christopher Berner, Sam McCandlish, Alec Radford, Ilya Sutskever, and Dario Amodei.
\newblock Language models are few-shot learners.
\newblock In H.~Larochelle, M.~Ranzato, R.~Hadsell, M.F. Balcan, and H.~Lin (eds.), \emph{Advances in Neural Information Processing Systems}, volume~33, pp.\  1877--1901. Curran Associates, Inc., 2020.
\newblock URL \url{https://proceedings.neurips.cc/paper_files/paper/2020/file/1457c0d6bfcb4967418bfb8ac142f64a-Paper.pdf}.

\bibitem[Cao \& Gu(2019)Cao and Gu]{cao2019generalization}
Yuan Cao and Quanquan Gu.
\newblock Generalization bounds of stochastic gradient descent for wide and deep neural networks.
\newblock \emph{Advances in neural information processing systems}, 32, 2019.

\bibitem[Chen et~al.(2020)Chen, Cao, Zou, and Gu]{chen2020much}
Zixiang Chen, Yuan Cao, Difan Zou, and Quanquan Gu.
\newblock How much over-parameterization is sufficient to learn deep relu networks?
\newblock In \emph{International Conference on Learning Representations}, 2020.

\bibitem[Cheng et~al.(2016)Cheng, Dong, and Lapata]{cheng-etal-2016-long}
Jianpeng Cheng, Li~Dong, and Mirella Lapata.
\newblock Long short-term memory-networks for machine reading.
\newblock In \emph{Proceedings of the 2016 Conference on Empirical Methods in Natural Language Processing}, pp.\  551--561. Association for Computational Linguistics, November 2016.
\newblock \doi{10.18653/v1/D16-1053}.
\newblock URL \url{https://aclanthology.org/D16-1053}.

\bibitem[Devlin et~al.(2019)Devlin, Chang, Lee, and Toutanova]{bert}
Jacob Devlin, Ming-Wei Chang, Kenton Lee, and Kristina Toutanova.
\newblock {BERT}: Pre-training of deep bidirectional transformers for language understanding.
\newblock pp.\  4171--4186, Minneapolis, Minnesota, June 2019. Association for Computational Linguistics.
\newblock \doi{10.18653/v1/N19-1423}.
\newblock URL \url{https://aclanthology.org/N19-1423}.

\bibitem[Dong et~al.(2021)Dong, Cordonnier, and Loukas]{dong2021attention}
Yihe Dong, Jean-Baptiste Cordonnier, and Andreas Loukas.
\newblock Attention is not all you need: Pure attention loses rank doubly exponentially with depth.
\newblock In \emph{International Conference on Machine Learning}, pp.\  2793--2803. PMLR, 2021.

\bibitem[Dosovitskiy et~al.(2021)Dosovitskiy, Beyer, Kolesnikov, Weissenborn, Zhai, Unterthiner, Dehghani, Minderer, Heigold, Gelly, Uszkoreit, and Houlsby]{dosovitskiy2021image}
Alexey Dosovitskiy, Lucas Beyer, Alexander Kolesnikov, Dirk Weissenborn, Xiaohua Zhai, Thomas Unterthiner, Mostafa Dehghani, Matthias Minderer, Georg Heigold, Sylvain Gelly, Jakob Uszkoreit, and Neil Houlsby.
\newblock An image is worth 16x16 words: Transformers for image recognition at scale, 2021.

\bibitem[Du et~al.(2019)Du, Lee, Li, Wang, and Zhai]{du2019gradient}
Simon Du, Jason Lee, Haochuan Li, Liwei Wang, and Xiyu Zhai.
\newblock Gradient descent finds global minima of deep neural networks.
\newblock In \emph{International conference on machine learning}, pp.\  1675--1685. PMLR, 2019.

\bibitem[Edelman et~al.(2021)Edelman, Goel, Kakade, and Zhang]{edelman2021inductive}
Benjamin~L Edelman, Surbhi Goel, Sham Kakade, and Cyril Zhang.
\newblock Inductive biases and variable creation in self-attention mechanisms.
\newblock \emph{arXiv preprint arXiv:2110.10090}, 2021.

\bibitem[Ergen et~al.(2022)Ergen, Neyshabur, and Mehta]{ergen2022convexifying}
Tolga Ergen, Behnam Neyshabur, and Harsh Mehta.
\newblock Convexifying transformers: Improving optimization and understanding of transformer networks.
\newblock \emph{arXiv preprint arXiv:2211.11052}, 2022.

\bibitem[Hron et~al.(2020)Hron, Bahri, Sohl-Dickstein, and Novak]{hron2020infinite}
Jiri Hron, Yasaman Bahri, Jascha Sohl-Dickstein, and Roman Novak.
\newblock Infinite attention: Nngp and ntk for deep attention networks.
\newblock In \emph{International Conference on Machine Learning}, pp.\  4376--4386. PMLR, 2020.

\bibitem[Jacot et~al.(2018)Jacot, Gabriel, and Hongler]{jacot2018neural}
Arthur Jacot, Franck Gabriel, and Cl{\'e}ment Hongler.
\newblock Neural tangent kernel: Convergence and generalization in neural networks.
\newblock \emph{Advances in neural information processing systems}, 31, 2018.

\bibitem[Jelassi et~al.(2022)Jelassi, Sander, and Li]{jelassi2022vision}
Samy Jelassi, Michael~Eli Sander, and Yuanzhi Li.
\newblock Vision transformers provably learn spatial structure.
\newblock In Alice~H. Oh, Alekh Agarwal, Danielle Belgrave, and Kyunghyun Cho (eds.), \emph{Advances in Neural Information Processing Systems}, 2022.
\newblock URL \url{https://openreview.net/forum?id=eMW9AkXaREI}.

\bibitem[Ji \& Telgarsky(2018)Ji and Telgarsky]{ji2018risk}
Ziwei Ji and Matus Telgarsky.
\newblock Risk and parameter convergence of logistic regression.
\newblock \emph{arXiv preprint arXiv:1803.07300}, 2018.

\bibitem[Ji \& Telgarsky(2020)Ji and Telgarsky]{Ji2020Polylogarithmic}
Ziwei Ji and Matus Telgarsky.
\newblock Polylogarithmic width suffices for gradient descent to achieve arbitrarily small test error with shallow relu networks.
\newblock In \emph{International Conference on Learning Representations}, 2020.

\bibitem[Ji \& Telgarsky(2021)Ji and Telgarsky]{ji2021characterizing}
Ziwei Ji and Matus Telgarsky.
\newblock Characterizing the implicit bias via a primal-dual analysis.
\newblock In \emph{Algorithmic Learning Theory}, pp.\  772--804. PMLR, 2021.

\bibitem[Lei \& Ying(2020)Lei and Ying]{lei2020fine}
Yunwen Lei and Yiming Ying.
\newblock Fine-grained analysis of stability and generalization for stochastic gradient descent.
\newblock In \emph{International Conference on Machine Learning}, pp.\  5809--5819. PMLR, 2020.

\bibitem[Lei et~al.(2022)Lei, Jin, and Ying]{leistabilitynn}
Yunwen Lei, Rong Jin, and Yiming Ying.
\newblock Stability and generalization analysis of gradient methods for shallow neural networks.
\newblock In \emph{Advances in Neural Information Processing Systems}, 2022.

\bibitem[Li et~al.(2023{\natexlab{a}})Li, Weng, Liu, and Chen]{li2023theoretical}
Hongkang Li, Meng Weng, Sijia Liu, and Pin-Yu Chen.
\newblock A theoretical understanding of shallow vision transformers: Learning, generalization, and sample complexity.
\newblock In \emph{International Conference on Learning Representations}, 2023{\natexlab{a}}.

\bibitem[Li et~al.(2023{\natexlab{b}})Li, Ildiz, Papailiopoulos, and Oymak]{li2023transformers}
Yingcong Li, M.~Emrullah Ildiz, Dimitris Papailiopoulos, and Samet Oymak.
\newblock Transformers as algorithms: Generalization and stability in in-context learning, 2023{\natexlab{b}}.

\bibitem[Likhosherstov et~al.(2021)Likhosherstov, Choromanski, and Weller]{likhosherstov2021expressive}
Valerii Likhosherstov, Krzysztof Choromanski, and Adrian Weller.
\newblock On the expressive power of self-attention matrices, 2021.

\bibitem[Lin et~al.(2017)Lin, Feng, dos Santos, Yu, Xiang, Zhou, and Bengio]{lin2017structured}
Zhouhan Lin, Minwei Feng, Cicero~Nogueira dos Santos, Mo~Yu, Bing Xiang, Bowen Zhou, and Yoshua Bengio.
\newblock A structured self-attentive sentence embedding.
\newblock In \emph{International Conference on Learning Representations}, 2017.
\newblock URL \url{https://openreview.net/forum?id=BJC_jUqxe}.

\bibitem[Liu et~al.(2020)Liu, Zhu, and Belkin]{liu2020linearity}
Chaoyue Liu, Libin Zhu, and Misha Belkin.
\newblock On the linearity of large non-linear models: when and why the tangent kernel is constant.
\newblock \emph{Advances in Neural Information Processing Systems}, 33:\penalty0 15954--15964, 2020.

\bibitem[Liu et~al.(2019)Liu, Ott, Goyal, Du, Joshi, Chen, Levy, Lewis, Zettlemoyer, and Stoyanov]{roberta}
Yinhan Liu, Myle Ott, Naman Goyal, Jingfei Du, Mandar Joshi, Danqi Chen, Omer Levy, Mike Lewis, Luke Zettlemoyer, and Veselin Stoyanov.
\newblock Roberta: A robustly optimized bert pretraining approach, 2019.

\bibitem[Loshchilov \& Hutter(2019)Loshchilov and Hutter]{adamW}
Ilya Loshchilov and Frank Hutter.
\newblock Decoupled weight decay regularization, 2019.

\bibitem[Lu et~al.(2021)Lu, Mao, and Nayak]{diff-eqn-attn}
Haoye Lu, Yongyi Mao, and Amiya Nayak.
\newblock On the dynamics of training attention models.
\newblock In \emph{International Conference on Learning Representations}, 2021.
\newblock URL \url{https://openreview.net/forum?id=1OCTOShAmqB}.

\bibitem[Mahdavi et~al.(2023)Mahdavi, Liao, and Thrampoulidis]{mahdavi2023memorization}
Sadegh Mahdavi, Renjie Liao, and Christos Thrampoulidis.
\newblock Memorization capacity of multi-head attention in transformers.
\newblock \emph{arXiv preprint arXiv:2306.02010}, 2023.

\bibitem[Nguyen et~al.(2021)Nguyen, Mondelli, and Montufar]{nguyensmallest21}
Quynh Nguyen, Marco Mondelli, and Guido~F Montufar.
\newblock Tight bounds on the smallest eigenvalue of the neural tangent kernel for deep relu networks.
\newblock In Marina Meila and Tong Zhang (eds.), \emph{Proceedings of the 38th International Conference on Machine Learning}, volume 139 of \emph{Proceedings of Machine Learning Research}, pp.\  8119--8129. PMLR, 18--24 Jul 2021.

\bibitem[Nguyen \& Mondelli(2020)Nguyen and Mondelli]{nguyen2020global}
Quynh~N Nguyen and Marco Mondelli.
\newblock Global convergence of deep networks with one wide layer followed by pyramidal topology.
\newblock \emph{Advances in Neural Information Processing Systems}, 33:\penalty0 11961--11972, 2020.

\bibitem[Nikolakakis et~al.(2022)Nikolakakis, Haddadpour, Karbasi, and Kalogerias]{nikolakakis2022beyond}
Konstantinos~E Nikolakakis, Farzin Haddadpour, Amin Karbasi, and Dionysios~S Kalogerias.
\newblock Beyond {L}ipschitz: Sharp generalization and excess risk bounds for full-batch gd.
\newblock \emph{arXiv preprint arXiv:2204.12446}, 2022.

\bibitem[Nitanda et~al.(2019)Nitanda, Chinot, and Suzuki]{nitanda2019gradient}
Atsushi Nitanda, Geoffrey Chinot, and Taiji Suzuki.
\newblock Gradient descent can learn less over-parameterized two-layer neural networks on classification problems.
\newblock \emph{arXiv preprint arXiv:1905.09870}, 2019.

\bibitem[OpenAI(2022)]{chatgpt}
OpenAI.
\newblock Openai: Introducing chatgpt, 2022.
\newblock URL \url{https://openai.com/blog/chatgpt, 2022}.

\bibitem[OpenAI(2023)]{gpt4}
OpenAI.
\newblock Gpt-4 technical report, 2023.

\bibitem[Oymak \& Soltanolkotabi(2020)Oymak and Soltanolkotabi]{oymak2020toward}
Samet Oymak and Mahdi Soltanolkotabi.
\newblock Toward moderate overparameterization: Global convergence guarantees for training shallow neural networks.
\newblock \emph{IEEE Journal on Selected Areas in Information Theory}, 1\penalty0 (1):\penalty0 84--105, 2020.

\bibitem[Oymak et~al.(2023)Oymak, Rawat, Soltanolkotabi, and Thrampoulidis]{prompt-attention}
Samet Oymak, Ankit~Singh Rawat, Mahdi Soltanolkotabi, and Christos Thrampoulidis.
\newblock On the role of attention in prompt-tuning.
\newblock In \emph{ICLR 2023 Workshop on Mathematical and Empirical Understanding of Foundation Models}, 2023.

\bibitem[Parikh et~al.(2016)Parikh, T{\"a}ckstr{\"o}m, Das, and Uszkoreit]{parikh-etal-2016-decomposable}
Ankur Parikh, Oscar T{\"a}ckstr{\"o}m, Dipanjan Das, and Jakob Uszkoreit.
\newblock A decomposable attention model for natural language inference.
\newblock In \emph{Proceedings of the 2016 Conference on Empirical Methods in Natural Language Processing}, pp.\  2249--2255, Austin, Texas, November 2016. Association for Computational Linguistics.
\newblock \doi{10.18653/v1/D16-1244}.
\newblock URL \url{https://aclanthology.org/D16-1244}.

\bibitem[Radford et~al.(2021)Radford, Kim, Hallacy, Ramesh, Goh, Agarwal, Sastry, Askell, Mishkin, Clark, Krueger, and Sutskever]{radford21visualtransfer}
Alec Radford, Jong~Wook Kim, Chris Hallacy, Aditya Ramesh, Gabriel Goh, Sandhini Agarwal, Girish Sastry, Amanda Askell, Pamela Mishkin, Jack Clark, Gretchen Krueger, and Ilya Sutskever.
\newblock Learning transferable visual models from natural language supervision.
\newblock In Marina Meila and Tong Zhang (eds.), \emph{Proceedings of the 38th International Conference on Machine Learning}, volume 139 of \emph{Proceedings of Machine Learning Research}, pp.\  8748--8763. PMLR, 18--24 Jul 2021.
\newblock URL \url{https://proceedings.mlr.press/v139/radford21a.html}.

\bibitem[Raffel et~al.(2020)Raffel, Shazeer, Roberts, Lee, Narang, Matena, Zhou, Li, and Liu]{raffel2020transferlearning}
Colin Raffel, Noam Shazeer, Adam Roberts, Katherine Lee, Sharan Narang, Michael Matena, Yanqi Zhou, Wei Li, and Peter~J. Liu.
\newblock Exploring the limits of transfer learning with a unified text-to-text transformer.
\newblock 21\penalty0 (1), 2020.
\newblock ISSN 1532-4435.

\bibitem[Richards \& Kuzborskij(2021)Richards and Kuzborskij]{richards2021stability}
Dominic Richards and Ilja Kuzborskij.
\newblock Stability \& generalisation of gradient descent for shallow neural networks without the neural tangent kernel.
\newblock \emph{Advances in Neural Information Processing Systems}, 34:\penalty0 8609--8621, 2021.

\bibitem[Richards \& Rabbat(2021)Richards and Rabbat]{richards2021learning}
Dominic Richards and Mike Rabbat.
\newblock Learning with gradient descent and weakly convex losses.
\newblock In \emph{International Conference on Artificial Intelligence and Statistics}, pp.\  1990--1998. PMLR, 2021.

\bibitem[Safran et~al.(2021)Safran, Yehudai, and Shamir]{student-teacher-shamir}
Itay~M Safran, Gilad Yehudai, and Ohad Shamir.
\newblock The effects of mild over-parameterization on the optimization landscape of shallow relu neural networks.
\newblock In Mikhail Belkin and Samory Kpotufe (eds.), \emph{Proceedings of Thirty Fourth Conference on Learning Theory}, volume 134 of \emph{Proceedings of Machine Learning Research}, pp.\  3889--3934. PMLR, 15--19 Aug 2021.

\bibitem[Sahiner et~al.(2022)Sahiner, Ergen, Ozturkler, Pauly, Mardani, and Pilanci]{sahiner2022unraveling}
Arda Sahiner, Tolga Ergen, Batu Ozturkler, John Pauly, Morteza Mardani, and Mert Pilanci.
\newblock Unraveling attention via convex duality: Analysis and interpretations of vision transformers.
\newblock \emph{International Conference on Machine Learning}, 2022.

\bibitem[Sanford et~al.(2023)Sanford, Hsu, and Telgarsky]{sanford2023representational}
Clayton Sanford, Daniel Hsu, and Matus Telgarsky.
\newblock Representational strengths and limitations of transformers.
\newblock \emph{arXiv preprint arXiv:2306.02896}, 2023.

\bibitem[Schliserman \& Koren(2022)Schliserman and Koren]{pmlr-v178-schliserman22a}
Matan Schliserman and Tomer Koren.
\newblock Stability vs implicit bias of gradient methods on separable data and beyond.
\newblock In Po-Ling Loh and Maxim Raginsky (eds.), \emph{Proceedings of Thirty Fifth Conference on Learning Theory}, volume 178 of \emph{Proceedings of Machine Learning Research}, pp.\  3380--3394. PMLR, 02--05 Jul 2022.

\bibitem[Shamir(2021)]{shamir2021gradient}
Ohad Shamir.
\newblock Gradient methods never overfit on separable data.
\newblock \emph{Journal of Machine Learning Research}, 22\penalty0 (85):\penalty0 1--20, 2021.

\bibitem[Socher et~al.(2013)Socher, Perelygin, Wu, Chuang, Manning, Ng, and Potts]{sst2}
Richard Socher, Alex Perelygin, Jean Wu, Jason Chuang, Christopher~D. Manning, Andrew Ng, and Christopher Potts.
\newblock Recursive deep models for semantic compositionality over a sentiment treebank.
\newblock In \emph{Proceedings of the 2013 Conference on Empirical Methods in Natural Language Processing}, pp.\  1631--1642, Seattle, Washington, USA, October 2013. Association for Computational Linguistics.

\bibitem[Soudry et~al.(2018)Soudry, Hoffer, Nacson, Gunasekar, and Srebro]{soudry2018implicit}
Daniel Soudry, Elad Hoffer, Mor~Shpigel Nacson, Suriya Gunasekar, and Nathan Srebro.
\newblock The implicit bias of gradient descent on separable data.
\newblock \emph{The Journal of Machine Learning Research}, 19\penalty0 (1):\penalty0 2822--2878, 2018.

\bibitem[Taheri \& Thrampoulidis(2023)Taheri and Thrampoulidis]{taheri2023generalization}
Hossein Taheri and Christos Thrampoulidis.
\newblock Generalization and stability of interpolating neural networks with minimal width.
\newblock \emph{arXiv preprint arXiv:2302.09235}, 2023.

\bibitem[Tarzanagh et~al.(2023{\natexlab{a}})Tarzanagh, Li, Thrampoulidis, and Oymak]{tarzanagh2023transformers}
Davoud~Ataee Tarzanagh, Yingcong Li, Christos Thrampoulidis, and Samet Oymak.
\newblock Transformers as support vector machines, 2023{\natexlab{a}}.

\bibitem[Tarzanagh et~al.(2023{\natexlab{b}})Tarzanagh, Li, Zhang, and Oymak]{tarzanagh2023maxmargin}
Davoud~Ataee Tarzanagh, Yingcong Li, Xuechen Zhang, and Samet Oymak.
\newblock Max-margin token selection in attention mechanism, 2023{\natexlab{b}}.

\bibitem[Telgarsky(2013)]{telgarsky2013margins}
Matus Telgarsky.
\newblock Margins, shrinkage, and boosting.
\newblock In \emph{International Conference on Machine Learning}, pp.\  307--315. PMLR, 2013.

\bibitem[Telgarsky(2022)]{telgarsky2022feature}
Matus Telgarsky.
\newblock Feature selection and low test error in shallow low-rotation relu networks.
\newblock In \emph{The Eleventh International Conference on Learning Representations}, 2022.

\bibitem[Tian et~al.(2023)Tian, Wang, Chen, and Du]{tian2023scan}
Yuandong Tian, Yiping Wang, Beidi Chen, and Simon Du.
\newblock Scan and snap: Understanding training dynamics and token composition in 1-layer transformer, 2023.

\bibitem[Touvron et~al.(2021)Touvron, Cord, Douze, Massa, Sablayrolles, and Jegou]{touvron21distillation}
Hugo Touvron, Matthieu Cord, Matthijs Douze, Francisco Massa, Alexandre Sablayrolles, and Herve Jegou.
\newblock Training data-efficient image transformers \& distillation through attention.
\newblock In Marina Meila and Tong Zhang (eds.), \emph{Proceedings of the 38th International Conference on Machine Learning}, volume 139 of \emph{Proceedings of Machine Learning Research}, pp.\  10347--10357. PMLR, 18--24 Jul 2021.
\newblock URL \url{https://proceedings.mlr.press/v139/touvron21a.html}.

\bibitem[Touvron et~al.(2023)Touvron, Lavril, Izacard, Martinet, Lachaux, Lacroix, Rozière, Goyal, Hambro, Azhar, Rodriguez, Joulin, Grave, and Lample]{touvron2023llama}
Hugo Touvron, Thibaut Lavril, Gautier Izacard, Xavier Martinet, Marie-Anne Lachaux, Timothée Lacroix, Baptiste Rozière, Naman Goyal, Eric Hambro, Faisal Azhar, Aurelien Rodriguez, Armand Joulin, Edouard Grave, and Guillaume Lample.
\newblock Llama: Open and efficient foundation language models, 2023.

\bibitem[Vaswani et~al.(2017)Vaswani, Shazeer, Parmar, Uszkoreit, Jones, Gomez, Kaiser, and Polosukhin]{Vaswani2017AttentionIA}
Ashish Vaswani, Noam~M. Shazeer, Niki Parmar, Jakob Uszkoreit, Llion Jones, Aidan~N. Gomez, Lukasz Kaiser, and Illia Polosukhin.
\newblock Attention is all you need.
\newblock In \emph{NIPS}, 2017.

\bibitem[von Oswald et~al.(2022)von Oswald, Niklasson, Randazzo, Sacramento, Mordvintsev, Zhmoginov, and Vladymyrov]{Oswald2022TransformersLI}
Johannes von Oswald, Eyvind Niklasson, E.~Randazzo, Jo{\~a}o Sacramento, Alexander Mordvintsev, Andrey Zhmoginov, and Max Vladymyrov.
\newblock Transformers learn in-context by gradient descent.
\newblock \emph{ArXiv}, abs/2212.07677, 2022.

\bibitem[Xu \& Du(2023)Xu and Du]{simondu-gd-slow}
Weihang Xu and Simon Du.
\newblock Over-parameterization exponentially slows down gradient descent for learning a single neuron.
\newblock In Gergely Neu and Lorenzo Rosasco (eds.), \emph{Proceedings of Thirty Sixth Conference on Learning Theory}, volume 195 of \emph{Proceedings of Machine Learning Research}, pp.\  1155--1198. PMLR, 2023.

\bibitem[Yun et~al.(2020{\natexlab{a}})Yun, Bhojanapalli, Rawat, Reddi, and Kumar]{yun2020transformers}
Chulhee Yun, Srinadh Bhojanapalli, Ankit~Singh Rawat, Sashank~J. Reddi, and Sanjiv Kumar.
\newblock Are transformers universal approximators of sequence-to-sequence functions?, 2020{\natexlab{a}}.

\bibitem[Yun et~al.(2020{\natexlab{b}})Yun, Chang, Bhojanapalli, Rawat, Reddi, and Kumar]{yun2020on}
Chulhee Yun, Yin-Wen Chang, Srinadh Bhojanapalli, Ankit~Singh Rawat, Sashank~J. Reddi, and Sanjiv Kumar.
\newblock $o(n)$ connections are expressive enough: Universal approximability of sparse transformers, 2020{\natexlab{b}}.

\bibitem[Zhang et~al.(2023)Zhang, Frei, and Bartlett]{zhang2023trained}
Ruiqi Zhang, Spencer Frei, and Peter~L. Bartlett.
\newblock Trained transformers learn linear models in-context, 2023.

\bibitem[Zhou et~al.(2021)Zhou, Ge, and Jin]{student-teacher-local}
Mo~Zhou, Rong Ge, and Chi Jin.
\newblock A local convergence theory for mildly over-parameterized two-layer neural network.
\newblock In Mikhail Belkin and Samory Kpotufe (eds.), \emph{Proceedings of Thirty Fourth Conference on Learning Theory}, volume 134 of \emph{Proceedings of Machine Learning Research}, pp.\  4577--4632. PMLR, 15--19 Aug 2021.

\bibitem[Zhu et~al.(2023)Zhu, Liu, Chrysos, Locatello, and Cevher]{zhu2023benign}
Zhenyu Zhu, Fanghui Liu, Grigorios Chrysos, Francesco Locatello, and Volkan Cevher.
\newblock Benign overfitting in deep neural networks under lazy training.
\newblock In \emph{International Conference on Machine Learning}, pp.\  43105--43128. PMLR, 2023.

\end{thebibliography}

\newpage
\clearpage
\appendix
\onecolumn
\addtocontents{toc}{\protect\setcounter{tocdepth}{3}}
\tableofcontents



\section{Gradients and Hessian Calculations}

We define the following for convenience
\begin{subequations}
    \begin{align}
        \nn\Lhat'(\thetab) &= \frac{1}{n} \sum_{i \in [n]} \abs{\ell'(y_i \Phi(\X_i, \thetab))}\, , \qquad\text{and}\qquad
        \nn\Lhat''(\thetab) = \frac{1}{n} \sum_{i \in [n]} \abs{\ell''(y_i \Phi(\X_i,\thetab))} \, .
    \end{align}
\end{subequations}
For logistic loss $\ell(z):= \log(1+e^{-z})$:
\begin{align} \label{eq:logisticlossGandH}
    \Lhat'(\thetab) &\leq \Lhat(\thetab) \leq 1 \, , \qquad\text{and}\qquad \Lhat''(\thetab) \leq \frac{1}{4} \, .
\end{align}
We use $\vec{\A}$ and $\text{vec}(\A)$ to denote the the vectorization of a matrix $\A$ and
$\odot$ to denote the Hadamard product.
Finally, we define $\unitvector{n}{i}$ as the $i$-th standard basis vector in $\R^n$.

\subsection{Gradient/Hessian calculations for multihead-attention} \label{app: Gradient/Hessian calculations for multihead-attention}

\begin{lemma} \label{lem:grad and hess singlehead appendix}
Let the softmax attention model $\Phi(\X;\thetab)$ in Eq. \eqref{eq:single-head attn}. Then, for all vectors $\ab\in\R^T$ and $\bb,\cb\in\R^d$ it holds:
\begin{enumerate}
  \item $\begin{aligned}[t]
  \nabla_{\Ub} \Phi(\X; \thetab) = \sft{\X \W \X^\top}\X \, . \end{aligned}$
  \item $ \begin{aligned}[t]
  \nabla_{\W} \Phi(\X; \thetab) = \sum_{t=1}^T \x_{t} \ub_{t}^\top \X^\top \sftd{\X \W^\top \x_t} \X \, .\end{aligned}$
  \item
  $\begin{aligned}[t]
  \nabla_{\W} \inp{\ab}{\nabla_{\Ub} \Phi(\X; \thetab)\, \bb} = \sum_{t=1}^T \x_{t} a_t \bb^\top \X^\top \sftd{\X \W^\top \x_t} \X \, . \end{aligned}$
  \item 
  $ \begin{aligned}[t]
  \nabla_{\W}\inp{\cb}{\nabla_{\W} \Phi(\X; \thetab) \,\bb} = \sum_{t=1}^T \cb^\top \x_t \, \x_t &\Big( \ub_t^\top \X^\top \operatorname{diag}(\X \bb) - \sft{\X \W^\top \x_t}^\top \X \bb \, (\X \ub_t)^\top \\  &\qquad - \sft{\X \W^\top \x_t}^\top \X \ub_t \, (\X \bb)^\top \Big) \, \sftd{\X \W^\top \x_t} \X \, .\end{aligned}$
\end{enumerate}
\end{lemma}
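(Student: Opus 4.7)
The plan is to reduce everything to two elementary ingredients: (i) linearity of $\Phi$ in $\Ub$ and (ii) a chain-rule identity for functions of the form $\W\mapsto g(\X\W^\top\x_t)$. For the chain rule, fix $\x_t\in\R^d$ and a smooth $g:\R^T\to\R$. Writing $v_k=\sum_{j,l}\X_{kj}\W_{lj}x_{t,l}$ gives $\partial v_k/\partial W_{ij}=\X_{kj}x_{t,i}$, hence
\[
\nabla_\W\,g(\X\W^\top\x_t)\;=\;\x_t\,\bigl(\nabla g(\X\W^\top\x_t)\bigr)^\top\,\X.
\]
This identity is used three times and is the only matrix-calculus fact required beyond $\nabla\sft{\vb}=\sftd{\vb}$ (symmetric) and the product rule.

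For (1), write $\Phi=\langle\Ub,M\rangle$ with $M=\sft{\X\W\X^\top}\X$; since $\Phi$ is linear in $\Ub$, $\nabla_\Ub\Phi=M$. For (2), observe that the $t$-th row of $\sft{\X\W\X^\top}\X$ equals $\sft{\X\W^\top\x_t}^\top\X$, so $\Phi=\sum_t\ub_t^\top\X^\top\sft{\X\W^\top\x_t}$. Apply the chain-rule identity with $g(\vb)=\ub_t^\top\X^\top\sft{\vb}$, whose gradient is $\sftd{\vb}\X\ub_t$ (using symmetry of $\sftd{\vb}$), and sum over $t$. For (3), note $\langle\ab,\nabla_\Ub\Phi\,\bb\rangle=\sum_t a_t\,\bb^\top\X^\top\sft{\X\W^\top\x_t}$, which has the same structural form as the expression differentiated in (2) with $\ub_t$ replaced by $\bb$ and an extra scalar $a_t$; the same chain-rule invocation gives the claim.

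Step (4) is the main technical step. Writing $\vb_t:=\X\W^\top\x_t$ and $\phib:=\sft{\vb_t}$, the formula in (2) gives
\[
\langle\cb,\nabla_\W\Phi\,\bb\rangle\;=\;\sum_{t=1}^T(\cb^\top\x_t)\,g_t(\vb_t),\qquad g_t(\vb):=\ub_t^\top\X^\top\sftd{\vb}\X\bb.
\]
Using $\sftd{\vb}=\diag{\sft{\vb}}-\sft{\vb}\sft{\vb}^\top$, decompose $g_t(\vb)=\sft{\vb}^\top\!\bigl((\X\ub_t)\odot(\X\bb)\bigr)-\bigl(\sft{\vb}^\top\X\ub_t\bigr)\bigl(\sft{\vb}^\top\X\bb\bigr)$. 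Each factor is of the form $\sft{\vb}^\top\rb$ for a constant vector $\rb$, so its gradient is $\sftd{\vb}\rb$. Product rule then yields
\[
\nabla g_t(\vb_t)\;=\;\sftd{\vb_t}\Bigl(\diag{\X\bb}\X\ub_t\;-\;(\phib^\top\X\bb)\,\X\ub_t\;-\;(\phib^\top\X\ub_t)\,\X\bb\Bigr).
\]
One final application of the chain-rule identity to each summand, pulling $\cb^\top\x_t$ through, gives $\nabla_\W\langle\cb,\nabla_\W\Phi\,\bb\rangle=\sum_t(\cb^\top\x_t)\,\x_t\,(\nabla g_t(\vb_t))^\top\X$; transposing the bracketed vector above and using symmetry of $\sftd{\vb_t}$ produces exactly the row-vector in the statement.

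The main obstacle is bookkeeping in (4): one must correctly differentiate through $\sftd{\vb}$, which expands into three terms (one from the diagonal piece and two from the rank-one subtraction via the product rule), and then re-express the resulting gradient as $\mathbf{d}^\top\sftd{\vb_t}\X$ rather than the equivalent but less compact $\bigl(\sftd{\vb_t}\mathbf{d}\bigr)^\top\X$. All other steps are routine applications of the single chain-rule identity above together with the linearity observation in (1).
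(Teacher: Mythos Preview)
Your proof is correct and takes essentially the same approach as the paper: both decompose $\Phi=\sum_t\ub_t^\top\X^\top\sft{\X\W^\top\x_t}$, differentiate through the softmax using $\sftd{\vb}=\diag{\sft{\vb}}-\sft{\vb}\sft{\vb}^\top$, and for part (4) split $g_t(\vb)=\ub_t^\top\X^\top\sftd{\vb}\X\bb$ into the diagonal and rank-one pieces before differentiating. The only cosmetic difference is that the paper carries out each step via an explicit first-order Taylor expansion in a perturbation $\Deltab$ and reads off the linear term through a trace, whereas you package the same computation once as the chain-rule identity $\nabla_\W\,g(\X\W^\top\x_t)=\x_t(\nabla g)^\top\X$ and reuse it; the resulting expressions coincide term by term.
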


\begin{proof}
For simplicity, we denote $\G := \X\W\X^\top$ and the rows of $\bm{\varphi}(\G)$ as $\bm{\varphi}(\g_t) = \bm{\varphi}(\X \W^\top \x_t)$, $t \in [T]$. Recall, for any $\vb \in \R^d$,
\begin{align} \label{eq: grad softmax}
    \sftd{\db} = \nabla_{\vb} \bm{\varphi}(\vb) = \operatorname{diag}(\bm{\varphi}(\vb)) - \bm{\varphi}(\vb) \, \bm{\varphi}(\vb)^\top \, ,
\end{align}
i.e.,
\begin{align} \label{eq: approx softmax}
    \bm{\varphi}(\vb + \deltab) = \bm{\varphi}(\vb) + \left(\operatorname{diag}(\bm{\varphi}(\vb)) - \bm{\varphi}(\vb) \, \bm{\varphi}(\vb)^\top\right) \deltab + o(\norm{\deltab}^2) \, .
\end{align}
We start with the gradient with respect to $\Ub$,
\begin{align} \label{eq: grad U single PL}
    \nabla_{\Ub} \Phi(\X; \thetab) = \sft{\G}\X \, .
\end{align}
Next step is to compute the gradient with respect to $\W$, $\nabla_{\W} \Phi(\X; \thetab) = \sum_{t=1}^T \nabla_{\W} \left(\ub_t^\top \X^\top \bm{\varphi}(\g_t) \right) \,$. Using Eq. \eqref{eq: approx softmax},
\begin{align*}
    \ub_t^\top \X^\top \bm{\varphi}(\X (\W + \Deltab)^\top \x_t) &= \ub_t^\top \X^\top \left(\bm{\varphi}(\g_t) + \left(\operatorname{diag}(\bm{\varphi}(\g_t)) - \bm{\varphi}(\g_t) \, \bm{\varphi}(\g_t)^\top\right) \X \Deltab^\top \x_t \right) + o(\norm{\Deltab}^2) \nn \\ &= \ub_t^\top \X^\top \bm{\varphi}(\g_t) + \tr(\x_t \ub_t^\top \X^\top \sftd{\X \W^\top \x_t} \X \Deltab^\top) + o(\norm{\Deltab}^2) \, .
\end{align*}
Thus,
\begin{align} \label{eq: grad W PL raw}
    \nabla_{\W} \left(\ub_t^\top \X^\top \bm{\varphi}(\g_t) \right) = \x_t \ub_t^\top \X^\top \sftd{\X \W^\top \x_t} \X
\end{align}
and
\begin{align} \label{eq: gradW PL}
    \nabla_{\W} \Phi(\X; \thetab) = \sum_{t=1}^T \x_t \ub_t^\top \X^\top \sftd{\X \W^\top \x_t} \X \, .
\end{align}
For the third statement of the lemma, we have the following sequence of equalities,
\begin{align}
    \nabla_{\W}\inp{\ab}{\nabla_{\Ub} \Phi(\X; \thetab)\, \bb} &= \nabla_{\W} \inp{\ab}{\bm{\varphi}(\G) \X \bb} = \nabla_{\W} \left( \ab^\top \bm{\varphi}(\G) \, \X \bb \right) \nn \\ &= \nabla_{\W} \tr\left( \bb \ab^\top \bm{\varphi}(\G) \X \right) = \sum_{t=1}^T \nabla_{\W} \left( a_t \bb^\top \X^\top \sftt{t}{\g} \right)  \nn \\ &= \sum_{t=1}^T \x_{t} a_t \bb^\top \X^\top \sftd{\X \W^\top \x_t} \X \, ,
\end{align}
where in the last equality we used Eq. \eqref{eq: grad W PL raw}. \\
For the Hessian with respect to $\W$, we have:
\begin{align} \label{eq: hess W PL}
    \nabla_{\W} \inp{\cb}{\nabla_{\W} \Phi(\X; \thetab) \,\bb} &= \sum_{t=1}^T \nabla_{\W} \Big( \underbrace{\cb^\top \x_t \ub_t^\top \X^\top}_{\pb_{t}^\top} \sftd{\X \W^\top \x_t} \underbrace{\X \bb}_{\qb_t} \Big) \nn \\ &= \sum_{t=1}^T \Big( \underbrace{\nabla_{\W} \pb_{t}^\top \operatorname{diag}(\qb_t) \bm{\varphi}(\g_t)}_{\term{I}} - \underbrace{\nabla_{\W} \pb_{t}^\top \bm{\varphi}(\g_t) \, \qb_t^\top \bm{\varphi}(\g_t)}_{\term{II}} \Big) \, ,
\end{align}
where we used the property $\operatorname{diag}(\ab_1) \, \ab_2 = \operatorname{diag}(\ab_2) \, \ab_1$ for vectors $\ab_1 , \ab_2 \in \R^T$. \\
First, we compute the $\term{I}$ above. Using Eq. \eqref{eq: approx softmax}:
\begin{align}
    \pb_{t}^\top \operatorname{diag}(\qb_t) \bm{\varphi}(\X (\W + \Deltab)^\top \x_t) &= \pb_{t}^\top \operatorname{diag}(\qb_t) \left(\bm{\varphi}(\g_t) + \sftd{\X \W^\top \x_t} \X \Deltab^\top \x_t \right) + o(\norm{\Deltab}^2) \nn \\ &= \pb_{t}^\top \operatorname{diag}(\qb_t) \bm{\varphi}(\g_t) + \tr(\x_t \pb_{t}^\top \operatorname{diag}(\qb_t) \sftd{\X \W^\top \x_t} \X \Deltab^\top) + o(\norm{\Deltab}^2) \, .
\end{align}
Therefore,
\begin{align} \label{eq: hess T1 W PL}
    \nabla_{\W} \pb_{t}^\top \operatorname{diag}(\qb_t) \bm{\varphi}(\g_t) &= \x_t \pb_{t}^\top \operatorname{diag}(\qb_t) \sftd{\X \W^\top \x_t} \X \nn \\ &= \x_t \, \cb^\top \x_t \, \ub_t^\top \X^\top \operatorname{diag}(\X \bb) \sftd{\X \W^\top \x_t} \X \, .
\end{align}
Similarly for $\term{II}$ we have:
\begin{align}
    \pb_{t}^\top \bm{\varphi}(\X (\W + \Deltab)^\top \x_t) \, \qb_t^\top \bm{\varphi}(\X (\W + \Deltab)^\top \x_t) &= \left( \pb_t^\top \bm{\varphi}(\g_t) + \pb_t^\top \sftd{\X \W^\top \x_t} \X \Deltab^\top \x_t + o(\norm{\Deltab}^2) \right) \nn \\ &\quad \left( \qb_t^\top \bm{\varphi}(\g_t) + \qb_t^\top \sftd{\X \W^\top \x_t} \X \Deltab^\top \x_t + o(\norm{\Deltab}^2) \right) \nn \\ &= \pb_t^\top \bm{\varphi}(\g_t) \, \qb_t^\top \bm{\varphi}(\g_t) + \tr\Big(\x_t (\qb^\top \bm{\varphi}(\g_t) \pb^\top \nn \\
    &\quad + \pb^\top \bm{\varphi}(\g_t) \qb^\top) \, \sftd{\X \W^\top \x_t} \X \Deltab^\top\Big) + o(\norm{\Deltab}^2) \, .
\end{align}
Thus,
\begin{align} \label{eq: hess T2 W PL}
    \nabla_{\W} \pb_{t}^\top \bm{\varphi}(\g_t) \, \qb_t^\top \bm{\varphi}(\g_t) &= \x_t (\qb^\top \bm{\varphi}(\g_t) \pb^\top + \pb^\top \bm{\varphi}(\g_t) \qb^\top) \sftd{\X \W^\top \x_t} \X \nn \\ &= \cb^\top \x_t\, \x_t \, \bm{\varphi}(\g_t)^\top \, \left( \X \ub_t \, (\X \bb)^\top + \X \bb \, (\X \ub_t)^\top  \right) \sftd{\X \W^\top \x_t} \X \, .
\end{align}
Combining Eqs. \eqref{eq: hess T1 W PL} and \eqref{eq: hess T2 W PL} and plugging in Eq. \eqref{eq: hess W PL} we conclude that
\begin{align}
    \nabla_{\W} \inp{\cb}{\nabla_{\W} \Phi(\X; \thetab) \,\bb} = \sum_{t=1}^T \cb^\top \x_t \, \x_t &\Big( \ub_t^\top X^\top \operatorname{diag}(\X \bb) - \bm{\varphi}(\g_t)^\top \X \bb \, (\X \ub_t)^\top \nn \\  &\qquad - \bm{\varphi}(\g_t)^\top \X \ub_t \, (\X \bb)^\top \Big) \, \sftd{\X \W^\top \x_t} \X \, .
\end{align}
\end{proof}
We restate Proposition \ref{propo:model bounds general} here for convenience.
\begin{propo} [Restatement of Prop. \ref{propo:model bounds general}] \label{propo:singlehead}
Let the softmax attention model $\Phi(\X;\thetab)$ in Eq. \eqref{eq:single-head attn}. Then, it holds: 

\begin{enumerate}
  \item $\begin{aligned}[t]
  \norm{\nabla_{\Ubv} \Phi(\X; \thetab)} \leq \sqrt{T} \, \norm{\X}_{2, \infty} \, . \end{aligned}$
  \item $\begin{aligned}[t]
  \norm{\nabla_{\Wv} \Phi(\X; \thetab)} \leq 2 \, \norm{\X}^2_{2, \infty} \, \sum_{t=1}^T \, \norm{\X \ub_{t}}_\infty \, . \end{aligned}$
  \item $\begin{aligned}[t]
  \norm{\nabla_{\thetab} \Phi(\X; \thetab)} \leq 2 \, \norm{\X}^2_{2, \infty} \, \sum_{t=1}^T \, \norm{\X \ub_{t}}_\infty + \sqrt{T} \, \norm{\X}_{2, \infty} \, . \end{aligned}$
  \item $ \begin{aligned}[t]
  \norm{\nabla_{\thetab}^2 \Phi(\X; \thetab)} \leq 6 \, d^2 \, \norm{\X}_{2, \infty} \, \norm{\X}_{1, \infty}^3 \, \sum_{t=1}^T \norm{\X \ub_t}_\infty + 2 \, d \, \sqrt{T \, d} \, \norm{\X}_{2, \infty} \, \norm{\X}_{1, \infty}^2 \, .\end{aligned}$
\end{enumerate}
Moreover, if Assumption \ref{ass:features} holds,
\begin{enumerate}
  \item $\begin{aligned}[t]
  \norm{\nabla_{\thetab} \Phi(\X; \thetab)} \leq \sqrt{T} \, R \, \left(2 \, R^2 \, \norm{\Ub}_F + 1 \right) \, . \end{aligned}$
  \item $ \begin{aligned}[t]
  \norm{\nabla_{\thetab}^2 \Phi(\X; \thetab)} \leq 2 \, d \, \sqrt{T \, d} \, R^3 \, \left(3 \, \sqrt{d} \, R^2 \, \norm{\Ub}_F + 1 \right) \, .\end{aligned}$
\end{enumerate}
\end{propo}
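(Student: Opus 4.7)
The plan is to establish each of the four norm bounds directly from the gradient and Hessian formulas of Lemma~\ref{lem:grad and hess singlehead appendix}, relying on three workhorse inequalities applied repeatedly: (a) each row $\sft{\g_t}$ of the softmax matrix lies on the probability simplex, so $\|\sft{\g_t}\|_1=1$; (b) because $\sftd{\g_t}=\operatorname{diag}(\sft{\g_t})-\sft{\g_t}\sft{\g_t}^\top$, for any $\vb\in\R^T$ we have $\|\sftd{\g_t}\vb\|_1\maxi\|\sftd{\g_t}\vb\|_\infty\leq 2\|\vb\|_\infty$; and (c) the elementary inequalities $\|\X^\top\vb\|\leq\|\X\|_{2,\infty}\|\vb\|_1$ for $\vb\in\R^T$ and $\|\X\w\|_\infty\leq\|\X\|_{2,\infty}\|\w\|$ for $\w\in\R^d$. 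Once the general bounds are in hand, the specialized ones under Assumption~\ref{ass:features} follow by substituting $\|\X\|_{2,\infty}\maxi\|\X\|_{1,\infty}\leq R$ and using Cauchy--Schwarz to convert $\sum_t\|\X\ub_t\|_\infty\leq R\sum_t\|\ub_t\|\leq R\sqrt{T}\|\Ub\|_F$.

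For the gradient bounds, writing $\nabla_{\Ub}\Phi=\sft{\G}\X$ makes each row $\sft{\g_t}^\top\X$ a convex combination of rows of $\X$; Jensen then gives $\|\sft{\g_t}^\top\X\|\leq\|\X\|_{2,\infty}$, and summing over $t$ yields (1). For (2), I would apply the triangle inequality to the rank-one decomposition $\nabla_\W\Phi=\sum_t\x_t\ub_t^\top\X^\top\sftd{\g_t}\X$, bounding each summand's Frobenius norm by $\|\x_t\|\cdot\|\X^\top\sftd{\g_t}\X\ub_t\|$. Using (c) to peel off the outer $\X^\top$, then (b) to bound $\|\sftd{\g_t}\X\ub_t\|_1\leq 2\|\X\ub_t\|_\infty$, then (c) again on the inner $\X\ub_t$, converts each term into $\|\X\|_{2,\infty}\cdot 2\|\X\|_{2,\infty}\|\X\ub_t\|_\infty$. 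Bound (3) follows immediately from $\|\nabla_\thetab\Phi\|=\sqrt{\|\nabla_\Ub\Phi\|_F^2+\|\nabla_\W\Phi\|_F^2}\leq\|\nabla_\Ub\Phi\|_F+\|\nabla_\W\Phi\|_F$.

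For the Hessian bound (4), the crucial observation is that $\Phi$ is linear in $\Ub$, so the Hessian viewed as a symmetric matrix on the vectorized parameter has the block form $\nabla_\thetab^2\Phi=\bigl(\begin{smallmatrix}0 & H_{\Ub\W}\\ H_{\Ub\W}^\top & H_{\W\W}\end{smallmatrix}\bigr)$. Maximizing the quadratic form over unit $(\vb_\Ub,\vb_\W)$ and applying $2\|\vb_\Ub\|\|\vb_\W\|\leq 1$ gives the split $\|\nabla_\thetab^2\Phi\|_{\text{op}}\leq\|H_{\Ub\W}\|_{\text{op}}+\|H_{\W\W}\|_{\text{op}}$, so it suffices to bound each block. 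For the mixed block, I would test against $\A\in\R^{T\times d}$ and $\Bb\in\R^{d\times d}$; Lemma~\ref{lem:grad and hess singlehead appendix}(iii) rewrites the bilinear form as $\sum_t(\A_{t,:}\X^\top\sftd{\g_t}\X)(\Bb^\top\x_t)$, and combining $\|\Bb^\top\x_t\|\leq\|\X\|_{2,\infty}\|\Bb\|_F$ with the chain (c)-(b)-(c) giving $\|\X^\top\sftd{\g_t}\X\A_{t,:}^\top\|\leq 2\|\X\|_{2,\infty}^2\|\A_{t,:}\|$, followed by Cauchy--Schwarz $\sum_t\|\A_{t,:}\|\leq\sqrt{T}\|\A\|_F$ and $\|\X\|_{2,\infty}^2\leq d\|\X\|_{1,\infty}^2$, produces the second summand of the stated bound. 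For $H_{\W\W}$, I would expand the three terms of $\mathbf{d}$ inside the parentheses of Lemma~\ref{lem:grad and hess singlehead appendix}(iv); for each I bound the inner row-vector using (b)-(c) for $\sftd{\g_t}$ and $\|\sft{\g_t}\|_1=1$, bound the outer column via $\|\Bb^\top\x_t\|\leq\|\X\|_{2,\infty}\|\Bb\|_F$, and aggregate into $\sum_t\|\X\ub_t\|_\infty$ to reach the first summand $6d^2\|\X\|_{2,\infty}\|\X\|_{1,\infty}^3\sum_t\|\X\ub_t\|_\infty$, with the factor $6$ absorbing the three separate terms. I expect $H_{\W\W}$ to be the main technical obstacle: the four-tensor rank-one structure must be unpacked carefully across the three contributions, and obtaining the sharp exponents without a sloppy $\|\X\|_{2,\infty}^4$ loss requires balancing $\ell_1/\ell_\infty$ against $\ell_2$ inequalities at each step.

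Finally, substituting Assumption~\ref{ass:features} into (3) yields $\sqrt{T}R+2R^3\cdot\sqrt{T}\|\Ub\|_F=\sqrt{T}R(2R^2\|\Ub\|_F+1)$, and into (4) yields $6d^2R^5\sqrt{T}\|\Ub\|_F+2d\sqrt{Td}R^3=2d\sqrt{Td}R^3(3\sqrt{d}R^2\|\Ub\|_F+1)$, completing the specialized bounds.
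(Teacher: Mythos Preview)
Your proposal is correct and follows essentially the same strategy as the paper: both arguments extract the gradient/Hessian formulas from Lemma~\ref{lem:grad and hess singlehead appendix} and control them via the simplex property $\|\sft{\g_t}\|_1=1$, the bound $\|\sftd{\g_t}\vb\|_1\le 2\|\vb\|_\infty$, and H\"older-type inequalities for $\X$. The only notable difference is in the Hessian step: the paper expands the quadratic form entrywise in the coordinates $\pb_t,\qb_j$ and bounds the resulting sums, whereas you work directly with the bilinear form $D^2_\W\Phi[\Bb,\Bb]$ and the block split $\|\nabla^2_\thetab\Phi\|\le\|H_{\Ub\W}\|+\|H_{\W\W}\|$; your route is slightly cleaner and in fact produces intermediate constants that are a shade tighter (e.g.\ $\|\X\|_{2,\infty}^4$ rather than $d^2\|\X\|_{2,\infty}\|\X\|_{1,\infty}^3$ for the $\W\W$ block) before relaxing to the stated bound.
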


\begin{proof}
Using the first statement of Lemma \ref{lem:grad and hess singlehead appendix},
\begin{align} \label{eq:GradVbound}
    \norm{\nabla_{\Ubv} \Phi(\X; \thetab)} &= \sqrt{\sum_{t=1}^T \norm{\X^\top \bm{\varphi}(\g_t)}_2^2} \leq \sqrt{\sum_{t=1}^T \norm{\X^\top}^2_{1,2} \norm{\bm{\varphi}(\g_t)}^2_1} \nn \\ &= \sqrt{T} \, \max_{t \in [T]} \norm{\x_t} = \sqrt{T} \, \norm{\X}_{2, \infty} \, ,
\end{align}
where we used that $\norm{\bm{\varphi}(\g_t)}_1 = 1$. \\
Using Lemma \ref{lem:grad and hess singlehead appendix} for the gradient with respect to $\W$,
\begin{align} \label{eq:GradW}
    \nabla_{\W} \Phi(\X; \thetab) = \sum_{t=1}^T \x_{t} \ub_{t}^\top \X^\top \sftd{\X \W^\top \x_t} \X \, .
\end{align}
Therefore, by applying triangle inequality,
\begin{align*}
    \norm{\nabla_{\Wv} \Phi(\X; \thetab)} \leq \sum_{t=1}^T \norm{\x_{t}} \norm{\X^\top \sftd{\g_t} \X \ub_{t}} \, .
\end{align*}
In the next step we bound $\norm{\X^\top \sftd{\g_t} \X \ub_{t}}$,
\begin{align} \label{eq: bound terms gard w}
    \norm{\X^\top \sftd{\X \W^\top \x_t} \X \ub_{t}} \leq \underbrace{\norm{\X^\top \operatorname{diag}(\bm{\varphi}(\g_t))
    \X \ub_{t}}}_{\term{I}} + \underbrace{\norm{\X^\top  \bm{\varphi}(\g_t) \bm{\varphi}(\g_t)^\top
    \X \ub_{t}}}_{\term{II}} \, .
\end{align}
For $\term{I}$ we do as follows:
\begin{align} \label{eq: grad W term I}
    \norm{\X^\top \operatorname{diag}(\bm{\varphi}(\g_t)) \X \ub_{t}} &= \max_{\vb \in \R^d \text{ and } \norm{\vb} = 1} (\X \vb)^\top \operatorname{diag}(\bm{\varphi}(\g_t)) \X \ub_{t} \notag \\ &= \max_{\norm{\vb} = 1} \sum_{\tau=1}^T [\operatorname{diag}(\bm{\varphi}(\g_t))]_{\tau\tau} \, [\X \ub_{t}]_\tau \, [\X \vb]_\tau \notag \\ &= \max_{\norm{\vb} = 1} \bm{\varphi}(\g_t)^\top \, \left( (\X \ub_{t}) \odot (\X \vb) \right) \notag \\ &\leq \max_{\norm{\vb} = 1} \norm{\bm{\varphi}(\g_t)}_1 \, \norm{(\X \ub_{t}) \odot (\X \vb)}_\infty \notag \\ &\leq  \norm{\X \ub_{t}}_\infty \, \max_{\norm{\vb} = 1} \norm{\X \vb}_\infty \notag \\ &= \norm{\X \ub_{t}}_\infty \, \norm{\X}_{2, \infty} \, ,
\end{align}
where we used Hölder's inequality in the first inequality. \\
Then, we compute $\term{II}$:
\begin{align} \label{eq: grad W term II}
    \norm{\X^\top \bm{\varphi}(\g_t) \bm{\varphi}(\g_t)^\top \X \ub_{t}} &= \max_{\vb \in \R^d \text{ and } \norm{\vb} = 1} (\X \vb)^\top \bm{\varphi}(\g_t) \bm{\varphi}(\g_t)^\top \X \ub_{t} \notag \\ &\leq \max_{\norm{\vb} = 1} \norm{\bm{\varphi}(\g_t)}^2_1 \, \norm{\X \ub_{t}}_\infty \, \norm{\X \vb}_\infty \notag \\ &=  \norm{\X \ub_{t}}_\infty \, \max_{\norm{\vb} = 1} \norm{\X \vb}_\infty \notag \\ &= \norm{\X \ub_{t}}_\infty \, \norm{\X}_{2, \infty} \, ,
\end{align}
where we used Hölder's inequality in the first inequality. Combining Eqs. \eqref{eq: grad W term I} and \eqref{eq: grad W term II} and plugging in Eq. \eqref{eq: bound terms gard w} yields
\begin{align} \label{eq: useful inequality in grad}
    \norm{(\X \ub_{t})^\top \sftd{\g_t} \X} \leq 2 \norm{\X \ub_{t}}_\infty \, \norm{\X}_{2, \infty} \, .
\end{align}
Therefore,
\begin{align} \label{eq:GradWbound}
    \norm{\nabla_{\Wv} \Phi(\X; \thetab)} \leq 2 \, \norm{\X}_{2, \infty} \, \sum_{t=1}^T \norm{\x_{t}} \, \norm{\X \ub_{t}}_\infty \, .
\end{align}
Using Eqs. \eqref{eq:GradVbound} and \eqref{eq:GradWbound} we conclude that
\begin{align} \label{eq:Gradboundraw}
    \norm{\nabla \Phi(\X; \thetab)} \leq 2 \, \norm{\X}^2_{2, \infty} \, \sum_{t=1}^T \, \norm{\X \ub_{t}}_\infty + \sqrt{T} \, \norm{\X}_{2, \infty} \, .
\end{align}
By using Assumption \ref{ass:features} to simplify Eq. \eqref{eq:Gradboundraw},
\begin{align} \label{eq:Gradbound}
    \norm{\nabla_{\thetab} \Phi(\X; \thetab)} \leq 2 \, R^2 \, \sum_{t=1}^T \, \norm{\X \ub_{t}}_\infty + \sqrt{T} \, R \, ,
\end{align}
or
\begin{align} \label{eq:Gradbound2}
    \norm{\nabla_{\thetab} \Phi(\X; \thetab)} \leq 2 \, \sqrt{T} \, R^3 \, \norm{\Ub}_F + \sqrt{T} \, R \, .
\end{align}
We need to derive the Hessian to bound the greatest eigenvalue of it.
\begin{align*}
    \nabla_{\thetab}^2 \Phi(\X; \thetab) =
    \begin{bmatrix}
        \nabla_{\vec{\Ub}}^2 \Phi(\X; \thetab) & \nabla_{\vec{\W} \vec{\Ub}}^2 \Phi(\X; \thetab) \\
        \nabla_{\vec{\W} \vec{\Ub}}^2 \Phi(\X; \thetab)^\top & \nabla_{\vec{\W}}^2 \Phi(\X; \thetab)
    \end{bmatrix} \, .
\end{align*}
First, we compute the Hessian with respect to $\Ub$,
\begin{align} \label{eq:hessV}
    \nabla_{\Ubv}^2 \Phi(\X; \thetab) = \nabla_{\Ub} (\bm{\varphi}(\G) \X) = \textbf{0}_{Td \times Td} \, .
\end{align}
In the next step, we use the third statement of Lemma \ref{lem:grad and hess singlehead appendix} and set $\ab = \unitvector{T}{t}$ and $\bb = \unitvector{d}{j}$ to compute the Hessian with respect to $\W$ and $\Ub$,
\begin{align}
    \nabla_{\W} ([\nabla_{\Ub} \Phi(\X; \thetab)]_{tj}) = \x_{t} \X_{:,j}^\top \sftd{\g_t} \X \, .
\end{align}
recall, $\X_{:,j}$ is the $j$-th column of $\X$. Therefore,
\begin{align} \label{eq:hessWV}
\NiceMatrixOptions{xdots/shorten=0.5em}
    \nabla_{\Wv \Ubv}^2 \Phi(\X; \thetab) =
    \begin{bNiceMatrix}
        \text{vec}(\nabla_{\W} ([\nabla_{\Ub} \Phi(\X; \thetab)]_{11}))^\top \\ \vdots \\ \text{vec}(\nabla_{\W} ([\nabla_{\Ub} \Phi(\X; \thetab)]_{1d}))^\top \\ \vdots \\ \text{vec}(\nabla_{\W} ([\nabla_{\Ub} \Phi(\X; \thetab)]_{Td}))^\top
    \end{bNiceMatrix} \, .
\end{align}
Lastly, in order to compute the Hessian with respect to $\W$, we use the last statement of Lemma \ref{lem:grad and hess singlehead appendix} and set $\cb = \unitvector{d}{i}$ and $\bb = \unitvector{d}{j}$,
\begin{align}
    \nabla_{\W} ([\nabla_{\W} \Phi(\X; \thetab)]_{ij})& = \sum_{t=1}^T X_{ti} \x_t \big( \ub_t^\top \X^\top \operatorname{diag} (\X_{:,j}) - \bm{\varphi}(\g_t)^\top \X_{:,j} \ub_t^\top \X^\top \nn \\
    &\qquad- \bm{\varphi}(\g_t)^\top \X \ub_t \X_{:,j}^\top \big) \sftd{\g_t} \X \, .
\end{align}
Thus,
\begin{align} \label{eq:hessW}
\NiceMatrixOptions{xdots/shorten=0.5em}
    \nabla_{\Wv}^2 \Phi(\X; \thetab) =
    \begin{bNiceMatrix}
        \text{vec}(\nabla_{\W} ([\nabla_{\W} \Phi(\X; \thetab)]_{11}))^\top \\ \vdots \\ \text{vec}(\nabla_{\W} ([\nabla_{\W} \Phi(\X; \thetab)]_{1d}))^\top \\ \vdots \\ \text{vec}(\nabla_{\W} ([\nabla_{\W} \Phi(\X; \thetab)]_{dd}))^\top
    \end{bNiceMatrix} \, .
\end{align}
To find the maximum eigenvalue of the Hessian we need to upper-bound
\begin{align*}
    \max_{\norm{\vb} = 1} \inp{\vb}{\nabla_{\thetab}^2 \Phi(\X; \thetab) \, \vb} \, ,
\end{align*}
where
\begin{align*}
    \NiceMatrixOptions{xdots/shorten=0.5em}
    \vb = \concat\left({
        \pb_1 , \ldots, \pb_T , \qb_1 , \ldots , \qb_d}\right)
    \in \R^{Td+d^2} \qquad \text{and} \qquad \pb_t \in \R^d \, , t\in[T] \; \, \text{and} \; \, \qb_j \in \R^d \, , j\in[d] \, .
\end{align*}
Then,
\begin{align} \label{eq: Spectral norm bound}
    &\norm{\nabla_{\thetab}^2 \Phi(\X; \thetab)} \nn \\
    &\quad\leq \underbrace{\max_{\norm{\vb} = 1} \sum_{i=1}^d \sum_{j=1}^d \sum_{t=1}^T \sum_{k=1}^d X_{ti} \, q_{ij} X_{tk} \, \Big( (\X \ub_t)^\top \operatorname{diag} (\X_{:,j}) - \sft{\g_t}^\top \X_{:,j} (\X \ub_t)^\top - \sft{\g_t}^\top \X \ub_t \X_{:,j}^\top \Big) \, \sftd{\g_t} \X \qb_k}_{\term{1}} \nn \\ &\qquad + \underbrace{2 \, \max_{\norm{\vb} = 1} \sum_{j=1}^d \sum_{t=1}^T p_{tj} \sum_{i=1}^d X_{ti} \, \X_{:,j}^\top \sftd{\g_t} \X \qb_i}_{\term{2}} \, .
\end{align}
First, we bound $\term{1}$:
\begin{align}
    &\max_{\norm{\vb} = 1} \sum_{i=1}^d \sum_{j=1}^d \sum_{t=1}^T \sum_{k=1}^d X_{ti} \, q_{ij} X_{tk} \, \Big( (\X \ub_t)^\top \operatorname{diag} (\X_{:,j}) - \sft{\g_t}^\top \X_{:,j} (\X \ub_t)^\top - \sft{\g_t}^\top \X \ub_t \X_{:,j}^\top \Big) \, \sftd{\g_t} \X \qb_k \nn \\ &\quad \leq \norm{\X}_{1, \infty}^2 \, \Bigg( \underbrace{\max_{\norm{\vb} = 1} \sum_{i=1}^d \sum_{j=1}^d \sum_{t=1}^T \sum_{k=1}^d \norm{\qb_{i}} \, \norm{\qb_{k}} \, \norm{(\X \ub_t)^\top \operatorname{diag} (\X_{:,j}) \sftd{\g_t} \X}}_{\term{I}} + \nn \\ &\qquad \underbrace{\max_{\norm{\vb} = 1} \sum_{i=1}^d \sum_{j=1}^d \sum_{t=1}^T \sum_{k=1}^d \norm{\qb_{i}} \, \norm{\qb_{k}} \, \norm{\sft{\g_t}^\top \X_{:,j} (\X \ub_t)^\top \sftd{\g_t} \X}}_{\term{II}} + \nn \\ &\qquad  \underbrace{\max_{\norm{\vb} = 1} \sum_{i=1}^d \sum_{j=1}^d \sum_{t=1}^T \sum_{k=1}^d \norm{\qb_{i}} \, \norm{\qb_{k}} \, \norm{\sft{\g_t}^\top \X \ub_t \X_{:,j}^\top \sftd{\g_t} \X}}_{\term{III}} \Bigg) \, .
\end{align}

Note that we used $\| \cdot \|_\infty \leq \| \cdot\|$. For $\term{I}$, we have:
\begin{align} \label{eq: term I in frob of hess W}
    &\max_{\norm{\vb} = 1} \sum_{i=1}^d \sum_{j=1}^d \sum_{t=1}^T \sum_{k=1}^d \norm{\qb_{i}} \, \norm{\qb_{k}} \, \norm{(\X \ub_t)^\top \operatorname{diag} (\X_{:,j}) \sftd{\g_t} \X} \nn \\
    &\qquad \leq 2 \, \max_{\norm{\vb} = 1} \sum_{i=1}^d \sum_{j=1}^d \sum_{t=1}^T \sum_{k=1}^d \norm{\qb_{i}} \, \norm{\qb_{k}} \, \norm{\X}_{2, \infty} \, \norm{\operatorname{diag} (\X_{:,j}) \X \ub_t}_\infty \nn \\ &\qquad\leq 2 \, \max_{\norm{\vb} = 1} \sum_{i=1}^d \sum_{j=1}^d \sum_{t=1}^T \sum_{k=1}^d \norm{\qb_{i}} \, \norm{\qb_{k}} \, \norm{\X}_{2, \infty} \, \norm{\X_{:,j}}_\infty \norm{\X \ub_t}_\infty \leq 2 \, d^2 \, \norm{\X}_{2, \infty} \, \norm{\X}_{1, \infty} \, \sum_{t=1}^T \norm{\X \ub_t}_\infty \, ,
\end{align}

where in the last inequality we used Cauchy-Schwarz inequality. Similarly for $\term{II}$ we can write:
\begin{align} \label{eq: term II in frob of hess W}
    &\max_{\norm{\vb} = 1} \sum_{i=1}^d \sum_{j=1}^d \sum_{t=1}^T \sum_{k=1}^d \norm{\qb_{i}} \, \norm{\qb_{k}} \, \norm{\sft{\g_t}^\top \X_{:,j} (\X \ub_t)^\top \sftd{\g_t} \X} \nn \\
     &\quad\leq 2 \, \max_{\norm{\vb} = 1} \sum_{i=1}^d \sum_{j=1}^d \sum_{t=1}^T \sum_{k=1}^d \norm{\qb_{i}} \, \norm{\qb_{k}} \, \abs{\bm{\varphi}(\g_t)^\top \X_{:,j}} \, \norm{\X}_{2, \infty} \, \norm{\X \ub_t}_\infty \leq 2 \, d^2 \, \norm{\X}_{2, \infty} \, \norm{\X}_{1, \infty} \, \sum_{t=1}^T \norm{\X \ub_t}_\infty \, ,
\end{align}

where we used Hölder's inequality in the last inequality. And at the end, for $\term{III}$ we have:
\begin{align} \label{eq: term III in frob of hess W}
    &\max_{\norm{\vb} = 1} \sum_{i=1}^d \sum_{j=1}^d \sum_{t=1}^T \sum_{k=1}^d \norm{\qb_{i}} \, \norm{\qb_{k}} \, \norm{\sft{\g_t}^\top \X \ub_t \X_{:,j}^\top \sftd{\g_t} \X} \nn \\
    &\quad \leq 2 \, \max_{\norm{\vb} = 1} \sum_{i=1}^d \sum_{j=1}^d \sum_{t=1}^T \sum_{k=1}^d \norm{\qb_{i}} \, \norm{\qb_{k}} \, \abs{\bm{\varphi}(\g_t)^\top \X \ub_t} \, \norm{\X}_{2, \infty} \, \norm{\X_{:,j}}_\infty \leq 2 \, d^2 \, \norm{\X}_{2, \infty} \, \norm{\X}_{1, \infty} \, \sum_{t=1}^T \norm{\X \ub_t}_\infty \, .
\end{align}
Then, we upper-bound $\term{2}$:
\begin{align} \label{eq: term 2 in hess}
    2 \, \max_{\norm{\vb} = 1} \sum_{j=1}^d \sum_{t=1}^T p_{tj} \sum_{i=1}^d X_{ti} \, \X_{:,j}^\top \sftd{\g_t} \X \qb_i &\leq 2 \, \max_{\norm{\vb} = 1} \sum_{j=1}^d \sum_{i=1}^d \sum_{t=1}^T \norm{\pb_{t}} \, \norm{\qb_{i}} \, \norm{\X}_{1, \infty} \, \norm{\X_{:,j}^\top \sftd{\g_t} \X} \nn \\ &\leq 4d \, \frac{\sqrt{T \, d}}{2} \, \norm{\X}_{2, \infty} \, \norm{\X}_{1, \infty}^2 = 2 \, d \, \sqrt{T \, d} \, \norm{\X}_{2, \infty} \, \norm{\X}_{1, \infty}^2 \, .
\end{align}

The last inequality comes from the calculation of the gradient concluded in Eq. \eqref{eq: useful inequality in grad}. Plugging in Eqs. \eqref{eq: term I in frob of hess W}, \eqref{eq: term II in frob of hess W}, \eqref{eq: term III in frob of hess W}, and \eqref{eq: term 2 in hess} in Eq. \eqref{eq: Spectral norm bound}, we conclude that
\begin{align} \label{eq: spect norm bound final}
    \norm{\nabla_{\thetab}^2 \Phi(\X; \thetab)} \leq 6 \, d^2 \, \norm{\X}_{2, \infty} \, \norm{\X}_{1, \infty}^3 \, \sum_{t=1}^T \norm{\X \ub_t}_\infty + 2 \, d \, \sqrt{T \, d} \, \norm{\X}_{2, \infty} \, \norm{\X}_{1, \infty}^2 \, .
\end{align}
Applying Assumption \ref{ass:features}, we have:
\begin{align} \label{eq: hess bound with ass}
    \norm{\nabla_{\thetab}^2 \Phi(\X; \thetab)} \leq 6 \, d^2 \, R^4 \, \sum_{t=1}^T \norm{\X \ub_t}_\infty + 2 \, d \, \sqrt{T \, d} \, R^3 \, ,
\end{align}
or
\begin{align} \label{eq: hess bound with ass2}
    \norm{\nabla_{\thetab}^2 \Phi(\X; \thetab)} &\leq 6 \, d^2 \, R^4 \, \sum_{t=1}^T \max_{\tau\in[T]} \x_\tau^\top \ub_t + 2 \, d \, \sqrt{T \, d} \, R^3 \nn \\ &\leq 6 \, d^2 \, R^4 \, \sum_{t=1}^T \max_{\tau\in[T]} \norm{\x_\tau} \norm{\ub_t} + 2 \, d \, \sqrt{T \, d} \, R^3 \nn \\ &\leq 6 \, \sqrt{T} \, d^2 \, R^5 \, \norm{\Ub}_F + 2 \, d \, \sqrt{T \, d} \, R^3 \, ,
\end{align}
where in the second and the last inequalities we used Cauchy-Schwartz inequality.

\end{proof}

We now derive bounds for the multi-head attention model.

\begin{lemma} \label{lem:multihead}
Recall the multihead model in Eq. \eqref{eq:SA model}.
\begin{align}
    \Phit(\X; \thetabt) = \frac{1}{\sqrt{H}} \sum_{h=1}^H \inp{\Ub_h}{\sft{\X\W_h\X^\top}\X}\,.\label{eq:SA model multi}
\end{align}
The following are true under Assumption \ref{ass:features}.

\begin{enumerate}
  \item $\begin{aligned}[t] \norm{\nabla_{\thetabt} \Phit(\X; \thetabt)} \leq \sqrt{T} \, R \, \left(2 \, R^2 \, \max_{h \in [H]} \norm{\Ub_h}_F + 1 \right) \, \end{aligned}$.
  \item $\begin{aligned}[t] \norm{\nabla_{\thetabt}^2 \Phit(\X; \thetabt)} \leq \frac{2 \, d \, \sqrt{T \, d} \, R^3}{\sqrt{H}} \, \left( 3 \, \sqrt{d} \, R^2 \, \max_{h \in [H]} \norm{\Ub_h}_F + 1 \right) \, \end{aligned}$.
\end{enumerate}

\end{lemma}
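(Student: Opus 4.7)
The plan is to leverage the key structural fact that $\Phit(\X; \thetabt) = \frac{1}{\sqrt{H}} \sum_{h=1}^H \Phi(\X; \thetab_h)$ is a sum over heads whose parameter blocks $\thetab_h = \concat(\Ub_h, \W_h)$ are \emph{disjoint}. Hence $\nabla_{\thetab_{h'}} \Phi(\X; \thetab_h) = \mathbf{0}$ for $h \neq h'$, so the gradient of $\Phit$ with respect to the full parameter vector $\thetabt$ is simply the concatenation
\[
\nabla_{\thetabt} \Phit(\X; \thetabt) = \frac{1}{\sqrt{H}} \concat\big(\nabla_{\thetab_1} \Phi(\X; \thetab_1), \ldots, \nabla_{\thetab_H} \Phi(\X; \thetab_H)\big),
\]
and the Hessian is \emph{block-diagonal} with blocks $\frac{1}{\sqrt{H}} \nabla^2_{\thetab_h} \Phi(\X; \thetab_h)$ for $h \in [H]$.

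For the gradient bound, squaring the Euclidean norm of the concatenation yields
\[
\|\nabla_{\thetabt}\Phit(\X;\thetabt)\|^2 = \frac{1}{H}\sum_{h=1}^H \|\nabla_{\thetab_h}\Phi(\X;\thetab_h)\|^2 \leq \max_{h\in[H]} \|\nabla_{\thetab_h}\Phi(\X;\thetab_h)\|^2,
\]
and each term on the right is controlled by the single-head bound (part 5) of Prop. \ref{propo:singlehead}, which under Assumption \ref{ass:features} gives $\|\nabla_{\thetab_h}\Phi(\X;\thetab_h)\| \leq \sqrt{T}\,R\,(2R^2\|\Ub_h\|_F + 1)$. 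Taking the maximum over $h$ and noting that $\max_h\|\Ub_h\|_F \leq \max_h \|\thetab_h\| = \maxnorm{\thetabt}$ (which is implicitly what the statement uses via $\|\Ub_h\|_F$) yields the claim.

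For the Hessian bound, the spectral norm of a block-diagonal matrix equals the maximum spectral norm of its blocks, so
\[
\|\nabla^2_{\thetabt}\Phit(\X;\thetabt)\| = \frac{1}{\sqrt{H}}\max_{h\in[H]}\|\nabla^2_{\thetab_h}\Phi(\X;\thetab_h)\|.
\]
Each single-head Hessian spectral norm is bounded via part 6 of Prop. \ref{propo:singlehead} by $2d\sqrt{Td}\,R^3(3\sqrt{d}\,R^2\|\Ub_h\|_F + 1)$, and taking the max over $h$ produces the advertised bound with the $1/\sqrt{H}$ factor out front. There is no real obstacle here: the proof is a direct consequence of parameter block-diagonality plus the single-head bounds already established in Prop. \ref{propo:singlehead}, and the only conceptual point worth emphasizing is that the $1/\sqrt{H}$ normalization in the MHA model is precisely what (i) keeps the gradient norm $H$-independent and (ii) causes the Hessian spectral norm to \emph{shrink} as $1/\sqrt{H}$, which is the mechanism underlying the self-bounded weak convexity  in Corollary \ref{coro:objective gradient/hessian_mainbody}(3).
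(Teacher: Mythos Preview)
Your proposal is correct and follows essentially the same approach as the paper: the gradient norm squared is written as $\frac{1}{H}\sum_h\|\nabla_{\thetab_h}\Phi(\X;\thetab_h)\|^2$ and bounded by the max of the per-head bounds from Prop.~\ref{propo:singlehead}, while for the Hessian the paper argues via the quadratic form $\inp{\vb}{\nabla^2_{\thetabt}\Phit\,\vb}$ with $\sum_h\|\vb_h\|^2=1$, which is exactly your block-diagonal spectral-norm observation in disguise.
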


\begin{proof}
From Equation \eqref{eq:Gradbound2} for single-head attention,
\begin{align}
    \norm{\nabla_{\thetabt} \Phit(\X; \thetabt)}^2 &\leq \frac{1}{H} \sum_{h=1}^H \Big(2 \, \sqrt{T} \, R^3 \, \norm{\Ub_h}_F + \sqrt{T} \, R \Big)^2 \notag \\ &\leq \Big( 2 \, \sqrt{T} \, R^3 \, \max_{h \in [H]} \norm{\Ub_h}_F + \sqrt{T} \, R \Big)^2 \, .
\end{align}
Denote $\NiceMatrixOptions{xdots/shorten=0.5em} \vb^\top = \begin{bNiceMatrix}
    \vb^\top_{1} & \cdots & \vb^\top_{H}
\end{bNiceMatrix} \in \R^{H(Td+d^2)}$, where $\norm{\vb} = 1$. Using Equation \eqref{eq: hess bound with ass2},
\begin{align}
    \inp{\vb}{\nabla_{\thetabt}^2 \Phit(\X; \thetabt) \vb} &\leq \frac{1}{\sqrt{H}} \, \Big( 6 \, \sqrt{T} \, d^2 \, R^5 \, \max_{h \in [H]} \norm{\Ub_h}_F + 2 \, d \, \sqrt{T \, d} \, R^3 \Big) \, \sum_{h=1}^H \norm{\vb_{h}}^2 \notag \\ &= \frac{1}{\sqrt{H}} \, \Big( 6 \, \sqrt{T} \, d^2 \, R^5 \, \max_{h \in [H]} \norm{\Ub_h}_F + 2 \, d \, \sqrt{T \, d} \, R^3 \Big) \, .
\end{align}

\end{proof}


\subsection{Proof of Corollary \ref{coro:objective gradient/hessian_mainbody}: Objective's Gradient/Hessian} \label{app:B2}

    Corollary \ref{coro:objective gradient/hessian_mainbody} follows immediately by the result of Lemma \ref{lem:objective gradient/hessianapp} below and using a more relaxed bound $\max_{h \in [H]} \norm{\Ub_h}_F \leq \maxnorm{\thetabt}$. 

\begin{lemma}[Tight version of Corollary \ref{coro:objective gradient/hessian_mainbody}] \label{lem:objective gradient/hessianapp}
    Let Assumption \ref{ass:features} hold and we use logistic loss function. Then, the following are true for the loss gradient and Hessian:
\begin{enumerate}
  \item $\begin{aligned}[t] \norm{\nabla \Lhat(\thetabt)} \leq \sqrt{T} \, R \, \left( 2 \, R^2 \, \max_{h \in [H]} \norm{\Ub_h}_F + 1 \right) \Lhat(\thetabt)  \, \end{aligned}$.
  \item $\begin{aligned}[t] \norm{\nabla^2 \Lhat(\thetabt)} &\leq \frac{2 \, d \, \sqrt{T \, d} \, R^3}{\sqrt{H}} \, \left( 3 \, \sqrt{d} \, R^2 \, \max_{h \in [H]} \norm{\Ub_h}_F + 1 \right)  +  \frac{T \, R^2}{4} \, \left( 2 \, R^2 \, \max_{h \in [H]} \norm{\Ub_h}_F + 1 \right)^2 \, . \end{aligned}$
  \item $\begin{aligned}[t] \lambda_{\text{min}}(\nabla^2 \Lhat(\thetabt)) \geq - \frac{2 \, d \, \sqrt{T \, d} \, R^3}{\sqrt{H}} \, \left( 3 \, \sqrt{d} \, R^2 \, \max_{h \in [H]} \norm{\Ub_h}_F + 1 \right) \, \Lhat(\thetabt) \, . \end{aligned}$
\end{enumerate}
\end{lemma}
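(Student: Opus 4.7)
\textbf{Proof proposal for Lemma \ref{lem:objective gradient/hessianapp}.} The plan is to reduce each of the three bounds to the model-level bounds already established in Lemma \ref{lem:multihead} via the chain rule applied to the logistic loss, and then exploit the self-bounded and convexity properties of $\ell$.

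First, I would write the gradient of the empirical risk as
\[
\nabla \Lhat(\thetabt) = \frac{1}{n}\sum_{i\in[n]} \ell'\bigl(y_i\Phit(\X_i;\thetabt)\bigr)\, y_i\, \nabla_{\thetabt}\Phit(\X_i;\thetabt),
\]
apply the triangle inequality, and use the self-bounded property $|\ell'(t)|\leq \ell(t)$ of the logistic loss together with $|y_i|=1$. Plugging in the model-gradient bound from part (1) of Lemma \ref{lem:multihead}, which is data-independent (since Assumption \ref{ass:features} gives $\|\x_t\|\leq R$), yields
\[
\norm{\nabla \Lhat(\thetabt)} \leq \sqrt{T}\,R\,\bigl(2R^2\max_{h\in[H]}\|\Ub_h\|_F + 1\bigr)\cdot \frac{1}{n}\sum_{i\in[n]}\ell(y_i\Phit(\X_i;\thetabt)),
\]
which is exactly statement (1).

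For statement (2), I would expand
\[
\nabla^2 \Lhat(\thetabt) = \frac{1}{n}\sum_{i\in[n]}\Bigl[\ell''(y_i\Phit)\,\nabla\Phit(\X_i;\thetabt)\,\nabla\Phit(\X_i;\thetabt)^\top + \ell'(y_i\Phit)\,y_i\,\nabla^2\Phit(\X_i;\thetabt)\Bigr],
\]
take spectral norms, use $|\ell''(t)|\leq 1/4$ and $|\ell'(t)|\leq 1$ (or $\leq \ell(t)\leq 1$ for the self-bounded version; here the constant version suffices for the upper bound claimed), and invoke parts (1) and (2) of Lemma \ref{lem:multihead} to bound $\|\nabla\Phit\|^2$ and $\|\nabla^2\Phit\|$ respectively.

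For statement (3), the key observation is that $\ell''\geq 0$ because the logistic loss is convex, so for any unit vector $\vb$,
\[
\vb^\top\nabla^2\Lhat(\thetabt)\vb \geq \frac{1}{n}\sum_{i\in[n]} \ell'(y_i\Phit)\,y_i\,\vb^\top \nabla^2\Phit(\X_i;\thetabt)\,\vb
\geq -\frac{1}{n}\sum_{i\in[n]} |\ell'(y_i\Phit)|\cdot \|\nabla^2\Phit(\X_i;\thetabt)\|.
\]
Applying the self-bounded property $|\ell'|\leq \ell$ and then factoring out the data-independent bound on $\|\nabla^2\Phit\|$ from part (2) of Lemma \ref{lem:multihead} produces the claimed lower bound on $\lambda_{\min}(\nabla^2\Lhat)$, with the crucial $1/\sqrt{H}$ factor inherited from the multi-head normalization. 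The main conceptual step here, which is what makes the weak-convexity bound useful, is the discarding of the positive $\ell''$ term; all remaining work is routine bookkeeping. I do not anticipate a serious obstacle since Lemma \ref{lem:multihead} already does the heavy lifting of bounding the model's first and second derivatives uniformly in the data under Assumption \ref{ass:features}.
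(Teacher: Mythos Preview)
Your proposal is correct and matches the paper's proof essentially line by line: the paper likewise writes out the chain-rule expressions for $\nabla\Lhat$ and $\nabla^2\Lhat$, applies the triangle inequality together with $|\ell'|\leq \ell$, $|\ell''|\leq 1/4$, and $\ell''\geq 0$, and then invokes the uniform model-level bounds of Lemma \ref{lem:multihead} to obtain each of the three inequalities.
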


\begin{proof}
    The loss gradient is derived as follows,
    \begin{align*}
        \nabla \Lhat(\thetabt) = \frac{1}{n} \sum_{i=1}^n \ell'(y_i \, \Phit(\X_i; \thetabt)) \, y_i \, \nabla_{\thetabt} \Phit(\X_i; \thetabt) \, .
    \end{align*}
    Recalling that $y_i \in \{ \pm 1 \}$, we can write
    \begin{align}
        \norm{\nabla \Lhat(\thetabt)} &= \frac{1}{n} \norm{\sum_{i=1}^n \ell'(y_i \, \Phit(\X_i; \thetabt)) \, y_i \, \nabla_{\thetabt} \Phit(\X_i; \thetabt)} \notag  \leq \frac{1}{n} \sum_{i=1}^n \abs{\ell'(y_i \, \Phit(\X_i; \thetabt))} \, \norm{\nabla_{\thetabt} \Phit(\X_i; \thetabt)} \notag \, .
    \end{align}
    Thus, using Lemma \ref{lem:multihead} to bound the norm of the model's gradient:
    \begin{align}
        \norm{\nabla \Lhat(\thetabt)} \leq \sqrt{T} \, R \, \left( 2 \, R^2 \, \max_{h \in [H]} \norm{\Ub_h}_F + 1 \right) \Lhat(\thetabt) \, .
    \end{align}
    For the Hessian of loss, note that
    \begin{align} \label{eq:Hessloss}
        \nabla^2 \Lhat(\thetabt) = \frac{1}{n} \sum_{i=1}^n \ell''(y_i \, \Phit(\X_i; \thetabt)) \, \nabla_{\thetabt} \Phit(\X_i; \thetabt) \, \nabla_{\thetabt} \Phit(\X_i; \thetabt)^\top + \ell'(y_i \, \Phit(\X_i; \thetabt)) \, y_i \, \nabla_{\thetabt}^2 \Phit(\X_i; \thetabt) \, .
    \end{align}
    It follows that
    \begin{align}
        \norm{\nabla^2 \Lhat(\thetabt)} &= \norm{\frac{1}{n} \sum_{i=1}^n \ell''(y_i \, \Phit(\X_i; \thetabt)) \, \nabla_{\thetabt} \Phit(\X_i; \thetabt) \, \nabla_{\thetabt} \Phit(\X_i; \thetabt)^\top + \ell'(y_i \, \Phi(\X_i; \thetabt)) \, y_i \, \nabla_{\thetabt}^2 \Phit(\X_i; \thetabt)} \notag \\ &\leq \frac{1}{n} \sum_{i=1}^n \abs{\ell'(y_i \, \Phit(\X_i; \thetabt))} \, \norm{\nabla_{\thetabt}^2 \Phit(\X_i; \thetabt)} + \abs{\ell''(y_i \, \Phit(\X_i; \thetabt))} \, \norm{\nabla_{\thetabt} \Phit(\X_i; \thetabt)}^2 \notag 
        \\ &\leq \frac{2 \, d \, \sqrt{T \, d} \, R^3}{\sqrt{H}} \, \left( 3 \, \sqrt{d} \, R^2 \, \max_{h \in [H]} \norm{\Ub_h}_F + 1 \right) +  \frac{T \, R^2}{4} \, \left( 2 \, R^2 \, \max_{h \in [H]} \norm{\Ub_h}_F + 1 \right)^2 \, .
    \end{align}
    To lower-bound the minimum eigenvalue of the Hessian of loss, note that $\ell(\cdot)$ is convex and thus $\ell''(\cdot) \geq 0$. Therefore, the first term in \eqref{eq:Hessloss} is positive semi-definite and the second term can be lower-bounded as follows,
    \begin{align}
        \lambda_{\text{min}}(\nabla^2 \Lhat(\thetabt)) &\geq - \frac{1}{n} \sum_{i=1}^n \norm{\ell'(y_i \, \Phit(\X_i; \thetabt)) \, y_i \, \nabla_{\thetabt}^2 \Phit(\X_i; \thetabt)} \notag \\ &\geq - \frac{1}{n}  \sum_{i=1}^n \abs{\ell'(y_i \, \Phit(\X_i; \thetabt))} \, \norm{\nabla_{\thetabt}^2 \Phit(\X_i; \thetabt)} \notag 
        \\ &\geq - \frac{2 \, d \, \sqrt{T \, d} \, R^3}{\sqrt{H}} \, \left( 3 \, \sqrt{d} \, R^2 \, \max_{h \in [H]} \norm{\Ub_h}_F + 1 \right) \, \Lhat(\thetabt) \, .\nn
    \end{align}
\end{proof}

\vspace{-10mm}
\section{Training Analysis}\label{sec:train}


\subsection{Preliminaries}\label{sec:app train prep}

 Throughout this section we drop the $\widetilde{\cdot}$ in $\thetabt$ and $\Phit(\X_i; \thetabt)$ as everything refers to the full model. Moreover, $\thetabt^{(K)}$ and $\thetabt^{(0)}$ are denoted by $\thetab_{K}$ and $\thetab_{0}$.

 The proof of both the training and generalization analysis follows the high-level steps outlined in \cite{taheri2023generalization}. However, our analysis focuses on the self-attention model, which differs from the two-layer perceptron studied in the referenced work. Notably, we train both the attention weights and the classifier head, whereas \cite{taheri2023generalization} assumes fixed outer layer weights. This introduces a new challenge as the smoothness and curvature of the objective function at point $\thetab$ become dependent on $\thetab$ itself (see Corollary \ref{coro:objective gradient/hessian_mainbody}). Consequently, we make careful adjustments in the proof to account for this challenge.

 Our analysis critically uses the following property of the loss objective from Corollary \ref{coro:objective gradient/hessian_mainbody}:
 $
\forall\thetab\,:\,\,\,\lambda_{\min}\left(\nabla^2 \Lhat(\thetab)\right)\geq - \kappa(\thetab)\cdot \Lhat(\thetab)$, $ \kappa(\thetab):=\frac{\beta_3(\thetab)}{\sqrt{H}}\,.
 $
 Note from the definition of $\beta_3(\cdot)$ that 
 $
 \forall \thetab_1,\thetab_2\,:\,\,\, \max_{\thetab\in[\thetab_1,\thetab_2]}\beta_3(\thetab) = \beta_3(\thetab_1) \maxi \beta_3(\thetab_2)\,.
 $
 Thus, the above property of the loss implies the following \emph{local self-bounded weak convexity} property on the line $[\thetab_1,\thetab_2]$:
 \begin{align}\label{eq:local weak convexity}
    \forall\thetab_1,\thetab_2\,: \,\,\, \min_{\thetab\in[\thetab_1, \thetab_2]} \lambda_{\min} \left(\nabla^2 \Lhat(\thetab)\right) \geq - \frac{\beta_3(\thetab_1) \maxi \beta_3(\thetab_2)}{\sqrt{H}}\cdot \Lhat(\thetab).
 \end{align}
 In turn, Equation \eqref{eq:local weak convexity} can be used to prove that the loss satisfies a generalized local quasi-convexity (GLQC) property as formalized in the proposition below. The proposition is only a slight modification of \cite[Prop. 8]{taheri2023generalization}. While, a direct application of their result is not possible since the self-bounded weak convexity property in \eqref{eq:local weak convexity} holds only locally on the line $[\thetab_1,\thetab_2]$, an inspection of their proof shows that this is sufficient.

\subsection{Proof of Proposition \ref{prop:GLQCapp}}
First, we restate the proposition below for the reader's convenience.

\begin{propo}[GLQC property]
Let $\thetab_1, \thetab_2$ be points sufficiently close to each other, such that
\[
{2\,\left(\beta_3(\thetab_1)\maxi \beta_3(\thetab_2)\right)}\,\|\thetab_1-\thetab_2\|^2 \leq \sqrt{H}     \,.
\]
Then, the following generalized local quasi-convexity (GLQC) property holds:
\begin{align}\nn
\max_{\thetab\in[\thetab_1,\thetab_2]} \widehat L(\thetab)\le \frac{4}{3}\,\left(\widehat L(\thetab_1)\maxi \widehat L(\thetab_2)\right).
\end{align}
\end{propo}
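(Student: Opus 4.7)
The plan is to reduce the problem to a one-dimensional argument along the segment $[\thetab_1,\thetab_2]$. Define $\Delta := \thetab_1-\thetab_2$ and the restricted loss
\[
\phi(\alpha) := \widehat L\bigl(\alpha\thetab_1+(1-\alpha)\thetab_2\bigr),\qquad \alpha\in[0,1].
\]
Then $\phi$ is twice continuously differentiable with $\phi''(\alpha)=\Delta^\top\nabla^2\widehat L(\thetab_\alpha)\Delta$, where $\thetab_\alpha=\alpha\thetab_1+(1-\alpha)\thetab_2\in[\thetab_1,\thetab_2]$. Applying the local self-bounded weak-convexity property \eqref{eq:local weak convexity} pointwise on the segment and writing $\kappa:=(\beta_3(\thetab_1)\maxi\beta_3(\thetab_2))/\sqrt{H}$, I obtain the differential inequality
\[
\phi''(\alpha)\;\ge\;-\kappa\,\|\Delta\|^2\,\phi(\alpha)\qquad\text{for all }\alpha\in[0,1].
\]

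Let $M:=\max_{\alpha\in[0,1]}\phi(\alpha)$, attained at some $\alpha^\star\in[0,1]$. If $\alpha^\star\in\{0,1\}$ then $M=\widehat L(\thetab_1)\maxi\widehat L(\thetab_2)$ and the claim is immediate. Otherwise $\alpha^\star$ is an interior maximizer, so $\phi'(\alpha^\star)=0$. The next step is to Taylor-expand $\phi$ at $\alpha^\star$ toward each endpoint; after swapping the order of integration this yields
\[
\phi(0)=M+\int_0^{\alpha^\star}\!t\,\phi''(t)\,dt,\qquad \phi(1)=M+\int_{\alpha^\star}^{1}\!(1-t)\,\phi''(t)\,dt.
\]
Using the lower bound $\phi''(t)\ge-\kappa\|\Delta\|^2\phi(t)\ge-\kappa\|\Delta\|^2 M$ and evaluating the two elementary integrals gives
\[
\phi(0)\;\ge\;M\Bigl(1-\tfrac{\kappa\|\Delta\|^2}{2}(\alpha^\star)^2\Bigr),\qquad
\phi(1)\;\ge\;M\Bigl(1-\tfrac{\kappa\|\Delta\|^2}{2}(1-\alpha^\star)^2\Bigr).
\]

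Taking the larger of the two lower bounds (equivalently, the smaller of $(\alpha^\star)^2$ and $(1-\alpha^\star)^2$, which is at most $1/4$) yields
\[
\widehat L(\thetab_1)\maxi\widehat L(\thetab_2)=\phi(0)\maxi\phi(1)\;\ge\;M\Bigl(1-\tfrac{\kappa\|\Delta\|^2}{8}\Bigr).
\]
The hypothesis $2(\beta_3(\thetab_1)\maxi\beta_3(\thetab_2))\|\Delta\|^2\le\sqrt H$ is exactly $\kappa\|\Delta\|^2\le 1/2$, so the prefactor on the right is at least $1-1/16=15/16\ge 3/4$, giving $M\le\tfrac{4}{3}(\widehat L(\thetab_1)\maxi\widehat L(\thetab_2))$ as claimed. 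The main technical point to be careful about is that the weak-convexity bound in \eqref{eq:local weak convexity} is $\thetab$-dependent through $\widehat L(\thetab)$ itself, but this is precisely why one must uniformly upper bound $\phi(t)$ by $M$ inside the integrals; this self-referential step is what makes the hypothesis $\kappa\|\Delta\|^2\le 1/2$ natural and sufficient to close the argument.
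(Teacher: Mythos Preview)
Your proof is correct. The paper's own proof does not actually carry out the argument: it simply observes that the loss satisfies a \emph{local} self-bounded weak-convexity property on the segment $[\thetab_1,\thetab_2]$ with constant $\kappa=(\beta_3(\thetab_1)\maxi\beta_3(\thetab_2))/\sqrt{H}$, and then invokes \cite[Prop.~8]{taheri2023generalization} as a black box, noting that an inspection of that external proof shows the local (rather than global) hypothesis suffices. What you have written is, in effect, a clean self-contained reconstruction of that cited argument: the reduction to the one-dimensional function $\phi$, the integral-remainder Taylor expansion about the interior maximizer, and the self-referential bound $\phi''\ge -\kappa\|\Delta\|^2 M$ are exactly the mechanism behind the result. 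Your version even makes the slack explicit (you actually obtain the sharper prefactor $15/16$ rather than $3/4$), which the citation-only proof in the paper obscures. So the two approaches are mathematically the same idea; yours is simply the unpacked, standalone form.
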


\begin{proof}
    Although the loss $\Lhat$ is not uniformly self-bounded weakly convex as assumed in \cite[Prop. 8]{taheri2023generalization}, an inspection of their proof shows that for every $\thetab_1,\thetab_2$ it suffices that the loss is locally self-bounded weakly convex. With this observation, we can apply their proposition with  $\kappa\leftarrow \kappa(\thetab_1)\maxi \kappa(\thetab_2)$, which gives the desired.
\end{proof}

\subsection{Key Lemmas}

{The proof of Theorem \ref{thm:train}
 consists of several intermediate lemma, which we state and prove in this section.}
\begin{lemma}[Descent Lemma]\label{lem:descent} Let Assumption \ref{ass:features} hold. Then, for any iteration $k \geq 0$ we have step-wise descent:
\begin{align}\label{eq: descent}
\Lhat(\thetab_{k+1}) \leq \Lhat(\thetab_k) - \frac{\eta}{2}\left\|\nabla \Lhat\left(\thetab_k\right)\right\|^2,
\end{align}
provided $\eta \leq \frac{1}{\rho_k}$ where $\rho_k$ is the objective's local smoothness parameter defined as below,
\begin{align*}
    \rho_k := \beta_2(\thetab_k) \maxi \beta_2(\thetab_{k+1}).
\end{align*}
\end{lemma}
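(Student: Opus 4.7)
The plan is to apply a second-order Taylor expansion of $\Lhat$ around $\thetab_k$, evaluate it at $\thetab_{k+1} = \thetab_k - \eta\nabla\Lhat(\thetab_k)$, and bound the quadratic remainder using the smoothness bound from Corollary \ref{coro:objective gradient/hessian_mainbody}. Concretely, by Taylor's theorem there exists $\bar\thetab\in[\thetab_k,\thetab_{k+1}]$ such that
\begin{align*}
\Lhat(\thetab_{k+1}) = \Lhat(\thetab_k) + \langle \nabla\Lhat(\thetab_k),\thetab_{k+1}-\thetab_k\rangle + \tfrac{1}{2}(\thetab_{k+1}-\thetab_k)^\top\nabla^2\Lhat(\bar\thetab)(\thetab_{k+1}-\thetab_k).
\end{align*}
Substituting $\thetab_{k+1}-\thetab_k = -\eta\nabla\Lhat(\thetab_k)$ gives
\begin{align*}
\Lhat(\thetab_{k+1}) \leq \Lhat(\thetab_k) - \eta\|\nabla\Lhat(\thetab_k)\|^2 + \tfrac{\eta^2}{2}\|\nabla^2\Lhat(\bar\thetab)\|\,\|\nabla\Lhat(\thetab_k)\|^2.
\end{align*}

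The main (mild) technical point is controlling $\|\nabla^2\Lhat(\bar\thetab)\|$ uniformly over $\bar\thetab\in[\thetab_k,\thetab_{k+1}]$ in terms of the two endpoints only. For this I will use Corollary \ref{coro:objective gradient/hessian_mainbody}, which gives $\|\nabla^2\Lhat(\thetab)\|\leq \beta_2(\thetab)$, where $\beta_2(\thetab)$ depends on $\thetab$ only through $\maxnorm{\thetab} = \max_{h\in[H]}\|\thetab_h\|$ and is nondecreasing in this quantity. Since for any $\alpha\in[0,1]$ and $\bar\thetab = \alpha\thetab_k + (1-\alpha)\thetab_{k+1}$,
\begin{align*}
\maxnorm{\bar\thetab} \leq \alpha\maxnorm{\thetab_k} + (1-\alpha)\maxnorm{\thetab_{k+1}} \leq \maxnorm{\thetab_k}\maxi\maxnorm{\thetab_{k+1}},
\end{align*}
monotonicity of $\beta_2$ yields $\beta_2(\bar\thetab)\leq \beta_2(\thetab_k)\maxi\beta_2(\thetab_{k+1}) = \rho_k$.

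Combining, $\|\nabla^2\Lhat(\bar\thetab)\|\leq \rho_k$, so
\begin{align*}
\Lhat(\thetab_{k+1})\leq \Lhat(\thetab_k) - \eta\bigl(1-\tfrac{\eta\rho_k}{2}\bigr)\|\nabla\Lhat(\thetab_k)\|^2.
\end{align*}
With $\eta\leq 1/\rho_k$, the coefficient $1-\eta\rho_k/2 \geq 1/2$, which gives the advertised bound \eqref{eq: descent}. The only step that requires any care is the uniform-in-$\bar\thetab$ Hessian bound above; everything else is a direct application of the Taylor expansion and the standard quadratic-upper-bound manipulation.
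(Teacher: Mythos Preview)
Your proof is correct and follows essentially the same approach as the paper: second-order Taylor expansion, bounding the Hessian on the segment via Corollary \ref{coro:objective gradient/hessian_mainbody}, and using monotonicity of $\beta_2$ in $\maxnorm{\cdot}$ to reduce the maximum over the segment to the endpoints. Your justification of $\max_{\bar\thetab\in[\thetab_k,\thetab_{k+1}]}\beta_2(\bar\thetab)=\rho_k$ is in fact more explicit than the paper's, which simply asserts it.
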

\begin{proof}    
 By Taylor's expansion, there exists a $\thetab' \in\left[\thetab_k, \thetab_{k+1}\right]$ such that,
\begin{align*}
\hat{L}\left(\thetab_{k+1}\right) & =\Lhat\left(\thetab_k\right)+\left\langle\nabla \Lhat\left(\thetab_k\right), \thetab_{k+1}-\thetab_k\right\rangle+\frac{1}{2}\left\langle \thetab_{k+1}-\thetab_k, \nabla^2 \Lhat(\thetab')\left(\thetab_{k+1}-\thetab_k\right)\right\rangle \\
& \leq \Lhat\left(\thetab_k\right)+\left\langle\nabla \Lhat\left(\thetab_k\right), \thetab_{k+1}-\thetab_k\right\rangle+\frac{1}{2} \max _{\thetab' \in\left[\thetab_k, \thetab_{k+1}\right]}\left\|\nabla^2 \Lhat(\thetab')\right\| \cdot\left\|\thetab_{k+1}-\thetab_k\right\|^2 \\
& \leq \Lhat\left(\thetab_k\right)-\eta\left\|\nabla \Lhat\left(\thetab_k\right)\right\|^2+\frac{\eta^2\rho_k}{2} \cdot\left\|\nabla \Lhat\left(\thetab_k\right)\right\|^2,
\end{align*}
where the last step follows from Corollary \ref{coro:objective gradient/hessian_mainbody}, and using $\max _{\thetab' \in\left[\thetab_k, \thetab_{k+1}\right]} \beta_2(\thetab') = \beta_2(\thetab) \maxi \beta_2(
\thetab_k) = \rho_k$. For $\eta \leq \frac{1}{\rho_k}$, we conclude the claim.
\end{proof}
\begin{lemma}\label{lem:train_induction}
 Assume $\eta >0$ such that step-wise descent \eqref{eq: descent} holds for all $k \in [K-1]$.
Then, for any $\thetab$, the following holds:
\begin{align} \label{eq: regret-step-1}
\frac{1}{K}\sum_{k=1}^K\Lhat(\thetab_k) \leq \Lhat(\thetab) + \frac{\norm {\thetab - \thetab_0}^2}{2\eta K} + \frac{1}{2K}\sum_{k=0}^{K-1}\tau_k \norm{\thetab - \thetab_k}^2 \, ,
\end{align}
where $\tau_{k} :=  \frac{1}{\sqrt{H}}\left(\beta_3(\thetab)\maxi\beta_3(\thetab_k)\right)\max_{\alpha \in [0,1]}\, \Lhat(\thetab_{{k_{\alpha}}}) $. 
\end{lemma}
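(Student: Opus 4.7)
The plan is to derive a per-step regret inequality and telescope. Fix an arbitrary comparison point $\thetab$ and an iterate $\thetab_k$. The starting point is a second-order Taylor expansion of $\Lhat$ along the segment $[\thetab_k, \thetab]$: there exists $\alpha\in[0,1]$ with $\thetab_{k_\alpha}:=\alpha\thetab_k+(1-\alpha)\thetab$ such that
\begin{align*}
\Lhat(\thetab) \;\geq\; \Lhat(\thetab_k) + \langle \nabla\Lhat(\thetab_k), \thetab-\thetab_k\rangle + \tfrac{1}{2}\,\min_{\alpha\in[0,1]}\lambda_{\min}\!\big(\nabla^2\Lhat(\thetab_{k_\alpha})\big)\,\|\thetab-\thetab_k\|^2.
\end{align*}
I then apply the local self-bounded weak convexity of Eq.~\eqref{eq:local weak convexity} to the segment $[\thetab_k,\thetab]$, which gives $\min_{\alpha}\lambda_{\min}(\nabla^2\Lhat(\thetab_{k_\alpha}))\geq -\tau_k$ with exactly the $\tau_k$ defined in the statement. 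Rearranging,
\begin{align*}
\langle \nabla\Lhat(\thetab_k), \thetab_k - \thetab\rangle \;\leq\; \Lhat(\thetab_k)-\Lhat(\thetab) + \tfrac{\tau_k}{2}\|\thetab-\thetab_k\|^2.
\end{align*}

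Next, I substitute the GD identity $\nabla\Lhat(\thetab_k) = (\thetab_k-\thetab_{k+1})/\eta$ and use the polarization identity $\langle \thetab_k-\thetab_{k+1}, \thetab_k-\thetab\rangle = \tfrac{1}{2}(\|\thetab_k-\thetab_{k+1}\|^2 + \|\thetab_k-\thetab\|^2 - \|\thetab_{k+1}-\thetab\|^2)$. At this point the Descent Lemma \ref{lem:descent} (which holds by assumption on $\eta$) is used to absorb the $\|\thetab_k-\thetab_{k+1}\|^2$ term: since $\tfrac{1}{2\eta}\|\thetab_{k+1}-\thetab_k\|^2 = \tfrac{\eta}{2}\|\nabla\Lhat(\thetab_k)\|^2 \leq \Lhat(\thetab_k)-\Lhat(\thetab_{k+1})$, the $\Lhat(\thetab_k)$ on both sides cancels and I obtain the per-step inequality
\begin{align*}
\Lhat(\thetab_{k+1}) + \tfrac{1}{2\eta}\|\thetab_{k+1}-\thetab\|^2 \;\leq\; \Lhat(\thetab) + \tfrac{1}{2\eta}\|\thetab_k-\thetab\|^2 + \tfrac{\tau_k}{2}\|\thetab-\thetab_k\|^2.
\end{align*}

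Finally, I telescope this inequality from $k=0$ to $k=K-1$. The distance terms collapse, leaving $\tfrac{1}{2\eta}\|\thetab_K-\thetab\|^2 - \tfrac{1}{2\eta}\|\thetab_0-\thetab\|^2$ on the left, and discarding the nonnegative $\|\thetab_K-\thetab\|^2$ term yields
\begin{align*}
\sum_{k=1}^{K}\Lhat(\thetab_k) \;\leq\; K\,\Lhat(\thetab) + \tfrac{\|\thetab-\thetab_0\|^2}{2\eta} + \tfrac{1}{2}\sum_{k=0}^{K-1}\tau_k\|\thetab-\thetab_k\|^2,
\end{align*}
which after division by $K$ is precisely \eqref{eq: regret-step-1}. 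The main conceptual subtlety (rather than obstacle) is that weak convexity only holds \emph{locally} along the segment $[\thetab_k,\thetab]$ with a curvature bound that itself depends on the loss values on that segment; this is why the quantity $\tau_k$ carries the $\max_{\alpha}\Lhat(\thetab_{k_\alpha})$ factor rather than a uniform constant, and this dependence will later have to be controlled (by the GLQC property of Prop.~\ref{prop:GLQCapp} together with an iterate-norm bound) when turning \eqref{eq: regret-step-1} into the clean statement of Thm.~\ref{thm:train}. Within the present lemma, however, no control of $\tau_k$ is needed; it just propagates through the argument.
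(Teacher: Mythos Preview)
Your approach is correct and essentially identical to the paper's: Taylor expansion with the weak-convexity lower bound on the Hessian, the Descent Lemma, and completion of squares (your polarization identity) combine to give the same telescoping per-step inequality as the paper's Eq.~\eqref{eq:taylor-min}. One small slip: the ``rearranging'' step should read $\langle \nabla\Lhat(\thetab_k), \thetab_k - \thetab\rangle \geq \Lhat(\thetab_k)-\Lhat(\thetab) - \tfrac{\tau_k}{2}\|\thetab-\thetab_k\|^2$ (a lower bound on $\lambda_{\min}$ yields a \emph{lower} bound on the inner product), and with this corrected direction your subsequent use of descent on the left and the cancellation of $\Lhat(\thetab_k)$ go through exactly as you describe.
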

\begin{proof}
    By Taylor's theorem, for $\thetab_{k_{\alpha}}:= \alpha \thetab_k + (1-\alpha)\thetab, \, \, \alpha \in [0, 1]$ we know that:
    \begin{align*}
    \Lhat(\thetab) \geq \Lhat(\thetab_k) + \langle \nabla \Lhat(\thetab_k), \thetab - \thetab_k \rangle +\frac{1}{2} \lambda_{\min} \left(\nabla^2 \Lhat(\thetab_{k_{\alpha}})\right) \norm{\thetab - \thetab_k}^2.
\end{align*}
Using \eqref{eq: descent}, we get:
\begin{align}
    \Lhat(\thetab_{k+1}) &\leq \Lhat(\thetab) + \langle \nabla \Lhat(\thetab_k), \thetab_k - \thetab \rangle  - \frac{1}{2} \lambda_{\min} \left(\nabla^2 \Lhat(\thetab_{k_{\alpha}})\right) \norm{\thetab - \thetab_k}^2 - \frac{\eta}{2} \norm{\nabla \Lhat(\thetab_{k})}^2 \nonumber \\
    &\leq \Lhat(\thetab) + \frac{\norm {\thetab - \thetab_k}^2}{2\eta} - \frac{\norm {\thetab - \thetab_{k+1}}^2}{2\eta} + \frac{1}{2} \tau_{k}\norm{\thetab - \thetab_k}^2, \label{eq:taylor-min}
\end{align}
where the last step follows by completion of squares using $\thetab_{k+1} - \thetab_k = -\eta \nabla \Lhat(\thetab_{k})$, and also uses Corollary \ref{coro:objective gradient/hessian_mainbody} with $\tau_{k} := \max_{\alpha \in [0,1]}\tau_{k_{\alpha}}$, where $\tau_{k_{\alpha}} = \frac{\beta_3(\thetab_{k_{\alpha}})}{\sqrt{H}}\Lhat(\thetab_{k_{\alpha}})$. Telescoping in \eqref{eq:taylor-min} for $k=0, ..., K-1$, we get the desired.


\end{proof}
 Next, we use the generalized local quasi-convexity property (Proposition \ref{prop:GLQCapp}) to obtain explicit regret bound from Lemma \ref{lem:train_induction}. 

\begin{lemma}\label{lem:train_induction2} 
Suppose the assumptions of Lemma \ref{lem:train_induction} hold. Moreover, assume for all $k\in[K-1]$ it holds that $\sqrt{H} \geq 2\left(\beta_3(\thetab)\maxi\beta_3(\thetab_k)\right) \, \|\thetab-\thetab_k\|^2$. Then,
    \begin{align}
    \frac{1}{K}\sum_{k=1}^K\Lhat(\thetab_k) &\leq 2\Lhat(\thetab) + \frac{3\norm {\thetab - \thetab_0}^2}{4\eta K} + \frac{\Lhat (\thetab_0)}{2K}.
\end{align}
\end{lemma}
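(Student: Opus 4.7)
The plan is to start from the regret bound established in Lemma \ref{lem:train_induction} and absorb the error term $\frac{1}{2K}\sum_{k=0}^{K-1}\tau_k \|\thetab-\thetab_k\|^2$ into the left-hand side of the inequality. The key observation is that the hypothesis $\sqrt{H} \geq 2(\beta_3(\thetab)\maxi\beta_3(\thetab_k))\|\thetab-\thetab_k\|^2$ is precisely the condition required to invoke the GLQC property (Proposition \ref{prop:GLQCapp}) on the line segment $[\thetab,\thetab_k]$.

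First, I would apply Proposition \ref{prop:GLQCapp} to the pair $(\thetab,\thetab_k)$ to get $\max_{\alpha\in[0,1]}\Lhat(\thetab_{k_\alpha}) \leq \tfrac{4}{3}(\Lhat(\thetab)\maxi \Lhat(\thetab_k))$. Plugging this into the definition of $\tau_k$ and then using the hypothesis rewritten as $(\beta_3(\thetab)\maxi\beta_3(\thetab_k))\|\thetab-\thetab_k\|^2/\sqrt{H} \leq 1/2$, one obtains
\begin{align*}
\tau_k\,\|\thetab-\thetab_k\|^2 \;\leq\; \tfrac{4}{3}\cdot\tfrac{1}{2}\,(\Lhat(\thetab)\maxi \Lhat(\thetab_k)) \;\leq\; \tfrac{2}{3}\bigl(\Lhat(\thetab) + \Lhat(\thetab_k)\bigr),
\end{align*}
where the last step uses $a\maxi b \leq a+b$ for nonnegative $a,b$.

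Next, I would substitute this into Lemma \ref{lem:train_induction} to obtain
\begin{align*}
\frac{1}{K}\sum_{k=1}^K \Lhat(\thetab_k) \;\leq\; \tfrac{4}{3}\Lhat(\thetab) + \frac{\|\thetab-\thetab_0\|^2}{2\eta K} + \frac{1}{3K}\sum_{k=0}^{K-1}\Lhat(\thetab_k).
\end{align*}
The remaining step is a bookkeeping re-indexing: write $\frac{1}{K}\sum_{k=0}^{K-1}\Lhat(\thetab_k) = \frac{\Lhat(\thetab_0)}{K} + \frac{1}{K}\sum_{k=1}^{K-1}\Lhat(\thetab_k) \leq \frac{\Lhat(\thetab_0)}{K} + \frac{1}{K}\sum_{k=1}^{K}\Lhat(\thetab_k)$, using nonnegativity of the loss. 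Denoting $S_K := \frac{1}{K}\sum_{k=1}^K\Lhat(\thetab_k)$ and moving the resulting $\tfrac{1}{3}S_K$ from the right- to the left-hand side gives $\tfrac{2}{3}S_K \leq \tfrac{4}{3}\Lhat(\thetab) + \frac{\|\thetab-\thetab_0\|^2}{2\eta K} + \frac{\Lhat(\thetab_0)}{3K}$, and multiplying through by $3/2$ yields the claimed bound.

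There is no real obstacle here beyond careful bookkeeping — the GLQC property does all the heavy lifting, and the role of the hypothesis on $H$ is precisely to convert the potentially large $\tau_k\|\thetab-\thetab_k\|^2$ term into a quantity controlled by the loss itself, so that the sum can be absorbed. The only mild subtlety is the off-by-one in the summation range ($k=1,\ldots,K$ versus $k=0,\ldots,K-1$), which contributes the $\Lhat(\thetab_0)/(2K)$ correction in the final bound.
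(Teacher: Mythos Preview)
Your proposal is correct and follows essentially the same approach as the paper: apply Proposition~\ref{prop:GLQCapp} under the hypothesis on $H$ to bound $\tau_k\|\thetab-\thetab_k\|^2 \leq \tfrac{2}{3}(\Lhat(\thetab)+\Lhat(\thetab_k))$, substitute into Lemma~\ref{lem:train_induction}, and absorb the resulting $\tfrac{1}{3}S_K$ term into the left-hand side. The only cosmetic difference is that the paper extends the sum $\sum_{k=0}^{K-1}$ to $\sum_{k=0}^{K}$ (adding a nonnegative term) rather than your explicit re-indexing, but the arithmetic is identical.
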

\begin{proof}
We have
$ \min_{\thetab' \in [\thetab, \thetab_k]} \lambda_{\text{min}}(\nabla^2 \Lhat(\thetab')) \geq - \frac{\beta_3(\thetab) \maxi \beta_3(\thetab_{k})}{\sqrt{H}} \max_{\thetab' \in [\thetab, \thetab_k]} \Lhat(\thetab')$ from Corollary \ref{coro:objective gradient/hessian_mainbody}. 
Thus, using Proposition \ref{prop:GLQCapp}, we can control $\max_{\alpha \in [0, 1]} \Lhat(\thetab_{k_{\alpha}})$. 
Specifically, by assumption we have for all $k\in[K-1]$ that $$\sqrt{H} \geq 2\left(\beta_3(\thetab)\maxi\beta_3(\thetab_k)\right) \, \|\thetab-\thetab_k\|^2 = 2\max_{\alpha \in [0,1]}\beta_3(\thetab_{k_{\alpha}}) \, \|\thetab-\thetab_k\|^2 \, .$$
Then, by Proposition \ref{prop:GLQCapp}, we have
$$\max_{\alpha \in [0, 1]} \Lhat(\thetab_{k_{\alpha}}) \le \frac{4}{3} \,\max\{\Lhat(\thetab_k), \Lhat(\thetab)\} \leq \frac{4}{3}\Lhat(\thetab_k) + \frac{4}{3}\Lhat(\thetab) \, .$$
Thus, applying this to \eqref{eq: regret-step-1} we have:
\begin{align}
 \frac{1}{K}\sum_{k=1}^K\Lhat(\thetab_k) &\leq \Lhat(\thetab) + \frac{\norm {\thetab - \thetab_0}^2}{2\eta K} + \frac{2}{3K} \sum_{k=0}^{K-1}\frac{1}{\sqrt{H}} \left(\beta_3(\thetab)\maxi\beta_3(\thetab_k)\right)(\Lhat(\thetab_{k}) +\Lhat(\thetab))\norm{\thetab - \thetab_k}^2 \nn \\
 &\leq \Lhat(\thetab) + \frac{\norm {\thetab - \thetab_0}^2}{2\eta K} + \frac{1}{3K} \sum_{k=0}^{K-1}(\Lhat(\thetab_{k}) +\Lhat(\thetab)) \nn \\
 &\le \frac{4}{3}\Lhat (\thetab)+ \frac{\norm {\thetab - \thetab_0}^2}{2\eta K} + \frac{1}{3K} \sum_{k=0}^{K}\Lhat(\thetab_{k}),
\end{align}
where we use the condition on $H$ in the second inequality. Rearranging terms above we conclude the claim of the lemma.
\end{proof}

\begin{lemma}[Iterates-norm bound] \label{lem:iteratenormbound}
   Suppose the descent property \eqref{eq: descent} holds $\forall k \in [K-1]$, and let Assumption \ref{ass:features} hold. Further,  assume that
 \begin{align}\label{eq:w requirement constant}
\|\thetab-\thetab_0\|^2\geq \max\{\eta K \Lhat(\thetab),\eta \Lhat(\thetab_0)\} \, .
\end{align}
 and
\begin{align}\label{eq:m_large_general_constant}
{\sqrt{H}} \geq 36 \, d \, \sqrt{T \, d} \, R^3 \, \left( 3 \, \sqrt{d} \, R^2 \, \bigg(3\norm{\thetab-\thetab_0} + \maxnorm{\thetab}\bigg) + 1 \right) \, \|\thetab-\thetab_0\|^2 \, ,
\end{align}
Then, for all $k\in[K]$,
    \begin{align}\label{eq:wt minus w bound no square}
    \|\thetab_k-\thetab\| \leq 3\| \thetab-\thetab_0\| \, .
    \end{align}
\end{lemma}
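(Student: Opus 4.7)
The plan is to proceed by induction on $k\in\{0,1,\ldots,K\}$. The base case $k=0$ is immediate since $\|\thetab_0-\thetab\|=\|\thetab-\thetab_0\|\leq 3\|\thetab-\thetab_0\|$. For the inductive step, I will suppose $\|\thetab_j-\thetab\|\leq 3\|\thetab-\thetab_0\|$ for all $j\leq k$ and show the same bound at index $k+1$. The first consequence of the hypothesis is an $H$-independent bound on $\maxnorm{\thetab_j}\leq \maxnorm{\thetab}+\|\thetab_j-\thetab\|\leq \maxnorm{\thetab}+3\|\thetab-\thetab_0\|$, which via the definition of $\beta_3$ in Corollary~\ref{coro:objective gradient/hessian_mainbody} yields
\[
\beta_3(\thetab)\maxi\beta_3(\thetab_j)\leq 2\,d\sqrt{Td}\,R^3\Big(3\sqrt{d}\,R^2\big(\maxnorm{\thetab}+3\|\thetab-\thetab_0\|\big)+1\Big).
\]
Combining this with $\|\thetab-\thetab_j\|^2\leq 9\|\thetab-\thetab_0\|^2$ and the hypothesis \eqref{eq:m_large_general_constant} on $H$, I obtain $\sqrt{H}\geq 2\big(\beta_3(\thetab)\maxi\beta_3(\thetab_j)\big)\|\thetab-\thetab_j\|^2$ for every $j\leq k$. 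This is exactly the prerequisite needed to invoke the generalized local quasi-convexity on every segment $[\thetab,\thetab_j]$ and it will let me control the curvature terms in what follows.

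Next, I will revisit the one-step Taylor estimate \eqref{eq:taylor-min} from the proof of Lemma~\ref{lem:train_induction}, applied for $j=0,1,\ldots,k$, and telescope the squared-distance terms. This gives
\[
\frac{\|\thetab-\thetab_{k+1}\|^2}{2\eta}\leq \frac{\|\thetab-\thetab_0\|^2}{2\eta}+\sum_{j=0}^{k}\Big(\Lhat(\thetab)-\Lhat(\thetab_{j+1})\Big)+\frac{1}{2}\sum_{j=0}^{k}\tau_j\|\thetab-\thetab_j\|^2,
\]
where $\tau_j$ is defined as in Lemma~\ref{lem:train_induction}. Using the verified quasi-convexity on $[\thetab,\thetab_j]$ together with $\max_\alpha \Lhat(\thetab_{j_\alpha})\leq \tfrac{4}{3}(\Lhat(\thetab_j)\maxi\Lhat(\thetab))$ and the condition $\sqrt{H}\geq 2\big(\beta_3(\thetab)\maxi\beta_3(\thetab_j)\big)\|\thetab-\thetab_j\|^2$, I get $\tau_j\|\thetab-\thetab_j\|^2\leq \tfrac{2}{3}\big(\Lhat(\thetab_j)+\Lhat(\thetab)\big)$. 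Plugging this in and using the descent property \eqref{eq: descent} to control $\Lhat(\thetab_j)\leq \Lhat(\thetab_0)$ (descent moreover implies the telescoped $\sum_j \Lhat(\thetab_{j+1})\geq 0$ terms can only help), the right-hand side reduces to an expression in $\|\thetab-\thetab_0\|^2/\eta$, $K\Lhat(\thetab)$, and $\Lhat(\thetab_0)$.

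Finally, I invoke the step-size and training-loss conditions \eqref{eq:w requirement constant}: both $\eta K\Lhat(\thetab)$ and $\eta\Lhat(\thetab_0)$ are bounded by $\|\thetab-\thetab_0\|^2$, which lets me absorb all remaining terms into a multiple of $\|\thetab-\thetab_0\|^2/(2\eta)$. Tracking constants carefully, the total is at most $\tfrac{9}{2\eta}\|\thetab-\thetab_0\|^2$, hence $\|\thetab-\thetab_{k+1}\|\leq 3\|\thetab-\thetab_0\|$, closing the induction. The main obstacle is exactly the circular dependence between the induction hypothesis, the applicability of Proposition~\ref{prop:GLQCapp}, and the size requirement on $H$: the bound on $\maxnorm{\thetab_j}$ needed to apply the quasi-convexity depends on already knowing $\|\thetab_j-\thetab\|\leq 3\|\thetab-\thetab_0\|$, and this is precisely why the factor $3\|\thetab-\thetab_0\|+\maxnorm{\thetab}$ appears inside \eqref{eq:m_large_general_constant}; the bookkeeping of constants (the factor $36$ coming from $2\cdot 9\cdot 2$) must be done carefully to match.
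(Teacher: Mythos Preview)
Your overall plan is the same as the paper's: induct on $k$, use the induction hypothesis to verify the $H$-condition needed for Proposition~\ref{prop:GLQCapp} on each segment $[\thetab,\thetab_j]$, plug the resulting bound on $\tau_j\|\thetab-\thetab_j\|^2$ into the telescoped Taylor estimate \eqref{eq:taylor-min}, and close with the assumptions \eqref{eq:w requirement constant}. The bookkeeping on $\beta_3$ and the factor $36=2\cdot 9\cdot 2$ is also exactly what the paper does.

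There is, however, a genuine slip in the step where you handle the $\Lhat(\thetab_j)$ terms. As you describe it, you \emph{drop} the negative contribution $-\sum_{j}\Lhat(\thetab_{j+1})$ and \emph{separately} bound each $\Lhat(\thetab_j)\leq \Lhat(\thetab_0)$ in the remaining $+\tfrac{1}{3}\sum_{j=0}^k\Lhat(\thetab_j)$. Doing both gives a term of order $\tfrac{k+1}{3}\Lhat(\thetab_0)$, i.e.\ $O(K\Lhat(\thetab_0))$, and assumption \eqref{eq:w requirement constant} only controls $\eta\Lhat(\thetab_0)$, not $\eta K\Lhat(\thetab_0)$. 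With this accounting the final bound would blow up with $K$ and you could not conclude $\|\thetab-\thetab_{k+1}\|\leq 3\|\thetab-\thetab_0\|$.

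The fix (and what the paper does) is not to drop the negative sum but to \emph{combine} it with the positive one: after re-indexing, for $1\le j\le k$ the term $\Lhat(\thetab_j)$ appears with net coefficient $\tfrac{1}{3}-1=-\tfrac{2}{3}<0$, and only $\Lhat(\thetab_0)$ survives with coefficient $+\tfrac{1}{3}$. That leaves $A_{k+1}^2\leq A_0^2+\tfrac{8}{3}\eta(k+1)\Lhat(\thetab)+\tfrac{2}{3}\eta\Lhat(\thetab_0)$, and now \eqref{eq:w requirement constant} gives $A_{k+1}^2\leq(1+\tfrac{8}{3}+\tfrac{2}{3})\|\thetab-\thetab_0\|^2\le 9\|\thetab-\thetab_0\|^2$, as claimed. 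No use of $\Lhat(\thetab_j)\leq \Lhat(\thetab_0)$ is needed.
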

\begin{proof}
Denote $A_k=\|\thetab_k-\thetab\|$.
    Start by recalling from Eq. \eqref{eq:taylor-min} that for all $k$:
    \begin{align}
        A_{k+1}^2\leq A_k^2 +  2\eta \Lhat(\thetab) - 2\eta \Lhat(\thetab_{k+1})  +\eta \,(\max_{\alpha\in[0,1]} \tau_{k_{\alpha}})\,A_k^2 \, , \label{eq:tighter bound start}
    \end{align}
where recall that $\tau_{k_{\alpha}} =  \frac{\beta_3(\thetab_{k_{\alpha}})\Lhat(\thetab_{k_{\alpha}})}{\sqrt{H}}$. We will prove the desired statement \eqref{eq:wt minus w bound no square} using induction.  
For $k=0$, $A_0=\|\thetab-\thetab_0\|$. Thus, the assumption of induction holds. Now assume \eqref{eq:wt minus w bound no square} is correct for $j\in[k-1]$, i.e. $A_j\leq 3 \|\thetab-\thetab_0\|, \forall j\in[k-1]$. We will then prove it holds for $k$.  

By induction hypothesis for all $j\in[k-1],$ and all $\alpha \in [0,1]$:
\begin{align}
{\sqrt{H}} &\geq 36 \, d \, \sqrt{T \, d} \, R^3 \, \left( 3 \, \sqrt{d} \, R^2 \, \bigg(3 \norm{\thetab-\thetab_0} + \maxnorm{\thetab}\bigg) + 1 \right) \, \|\thetab-\thetab_0\|^2 \nn \\
&\geq  4 \, d \, \sqrt{T \, d} \, R^3 \, \left( 3 \, \sqrt{d} \, R^2 \, \bigg(3\norm{\thetab-\thetab_0} + \maxnorm{\thetab}\bigg) + 1 \right) \, \|\thetab-\thetab_j\|^2 \nn \\
&\ge  4 \, d \, \sqrt{T \, d} \, R^3 \, \left( 3 \, \sqrt{d} \, R^2 \, \bigg(\maxnorm{\thetab-\thetab_j} + \maxnorm{\thetab}\bigg) + 1 \right)\, \|\thetab-\thetab_j\|^2 \nn \\
&\ge 4 \, d \, \sqrt{T \, d} \, R^3 \, \left( 3 \, \sqrt{d} \, R^2 \, \maxnorm{\thetab_{j_\alpha}} + 1 \right)\, \|\thetab-\thetab_j\|^2 = 2\beta_3(\thetab_{j_{\alpha}})\|\thetab-\thetab_j\|^2 \, \nn.
\end{align}
Thus, by Proposition \ref{prop:GLQCapp}, $\forall j\in[k-1]$ 
\begin{align} \label{eq: glqc-induction}
    \max_{\alpha\in[0,1]} \Lhat(\thetab_{j_\alpha}) &\le  \frac{4}{3} \Lhat(\thetab_j) + \frac{4}{3} \Lhat(\thetab) \, .
\end{align}
Using this in \eqref{eq:tighter bound start} we find for all $j\in[k-1]$,
\begin{align*}
A_{j+1}^2&\leq A_j^2 +  2\eta \Lhat(\thetab) - 2\eta \Lhat(\thetab_{j+1})  +\eta\, \frac{\max_{\alpha \in [0,1]}\beta_3(\thetab_{j_{\alpha}})A_j^2 }{\sqrt{H}}\left(\frac{4}{3} \Lhat(\thetab_j) + \frac{4}{3} \Lhat(\thetab)\right)
\\
&\leq A_j^2 +  2\eta \Lhat(\thetab) - 2\eta \Lhat(\thetab_{j+1})  +\eta\,\left(\frac{2}{3} \Lhat(\thetab_j) + \frac{2}{3} \Lhat(\thetab)\right) \, , 
\end{align*}
where in the second inequality we used \eqref{eq: glqc-induction}.
We proceed by telescoping the above display over $j=0,1,\ldots,k-1$ to get
\begin{align}
    A_k^2&\leq A_0^2 + \frac{8}{3}\eta k \Lhat(\thetab) + \frac{2}{3}\eta \Lhat(\thetab_0) - \frac{4}{3}\eta \sum_{j=1}^{k-1}\Lhat(\thetab_j) - 2\eta \Lhat(\thetab_k)
    \nn \\ &\leq A_0^2 + \frac{8}{3}\eta k \Lhat(\thetab) + \frac{2}{3}\eta \Lhat(\thetab_0) \nn \\ &\leq \|\thetab-\thetab_0\|^2  + \frac{8}{3} \|\thetab-\thetab_0\|^2 + \frac{2}{3} \|\thetab-\thetab_0\|^2 \nn \\ &\leq 9 \|\thetab-\thetab_0\|^2 \, ,
\end{align}
where the second line follows by non-negativity of the loss. Thus, $A_k\le 3\|\thetab-\thetab_0\|$. This completes the induction and proves the lemma.
\end{proof}

\subsection{Proof of Theorem \ref{thm:train}}

We restate the theorem here for convenience, this time also including exact constants.
\begin{theo}[Restatement of Thm. \ref{thm:train}]\label{thm:train appendix}
Fix any $\thetab$ and $H$ satisfying
$$
{\sqrt{H}} \geq 36 \, d \, \sqrt{T \, d} \, R^3 \, \left( 3 \, \sqrt{d} \, R^2 \, \bigg(3\norm{\thetab-\thetab_0} + \maxnorm{\thetab}\bigg) + 1 \right) \, \|\thetab-\thetab_0\|^2.
$$
Further, denote for convenience
\begin{align*}
    \alpha(\thetab) := 3 \, d \, \sqrt{d} \, R^2 \, \left[3 \, \sqrt{T} \, R^3 \left(3 \, \norm{\thetab - \thetab_0} + \maxnorm{\thetab} \right) +  2\, \sqrt{T} \, R \right]
\end{align*}
and 
$
    \rho(\thetab) = \left(\frac{2 \, d \, \sqrt{T \, d} \, R^3}{\sqrt{H}} +  \frac{T \, R^2}{4}\right)\, \alpha(\thetab)^2.
$
Then, for any step-size $\eta \leq 1 \mini 1/\rho(\thetab) \mini \frac{\|\thetab-\thetab_0\|^2}{K\Lhat(\thetab)} \mini \frac{\|\thetab-\thetab_0\|^2}{\Lhat(\thetab_0)}$, 
    the following bounds hold for the training loss and the weights' norm  at iteration $K$ of GD:
\begin{align}\label{eq:train_thm}
      \Lhat(\thetab_K)\leq\frac{1}{K} \sum_{k=1}^K \Lhat(\thetab_k) \leq 2\Lhat(\thetab) + \frac{5\norm {\thetab - \thetab_0}^2}{4\eta K} \qquad\text{and}\qquad \|\thetab_K-\thetab_0\|\le 4\|\thetab-\thetab_0\|.
\end{align}
\end{theo}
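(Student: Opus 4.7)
The strategy is to combine the step-wise descent of Lemma~\ref{lem:descent}, the regret bound of Lemma~\ref{lem:train_induction2}, and the iterates-norm bound of Lemma~\ref{lem:iteratenormbound}, while carefully certifying that the $k$-dependent local smoothness $\rho_k = \beta_2(\thetab_k)\maxi\beta_2(\thetab_{k+1})$ is uniformly dominated by the theorem's quantity $\rho(\thetab)$. The first task is to verify that the hypotheses of Lemma~\ref{lem:iteratenormbound} are in force. The step-size restrictions $\eta\le \|\thetab-\thetab_0\|^2/(K\Lhat(\thetab))$ and $\eta\le \|\thetab-\thetab_0\|^2/\Lhat(\thetab_0)$ directly imply \eqref{eq:w requirement constant}, and the theorem's lower bound on $\sqrt{H}$ matches \eqref{eq:m_large_general_constant}. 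It only remains to produce the descent hypothesis the lemma requires, which I do by joint induction with the iterate bound itself: assuming $\|\thetab_k-\thetab\|\le 3\|\thetab-\thetab_0\|$ so far, the triangle inequality gives $\maxnorm{\thetab_k}\le\maxnorm{\thetab}+3\|\thetab-\thetab_0\|$; plugging this into the definitions of $\beta_1,\beta_3$ in Corollary~\ref{coro:objective gradient/hessian_mainbody} and using $\beta_2 = \beta_3/\sqrt{H}+\beta_1^2/4$ yields $\beta_2(\thetab_k)\le \rho(\thetab)$, so $\rho_k\le\rho(\thetab)$, so $\eta\le 1/\rho(\thetab)\le 1/\rho_k$ and Lemma~\ref{lem:descent} fires at iteration $k$, advancing the induction.

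With bounded iterates and uniform descent in hand, I apply Lemma~\ref{lem:train_induction2}. Its side condition $\sqrt{H}\ge 2(\beta_3(\thetab)\maxi\beta_3(\thetab_k))\|\thetab-\thetab_k\|^2$ is implied by the theorem's condition on $H$ together with the iterate bound (using again $\maxnorm{\thetab_k}\le\maxnorm{\thetab}+3\|\thetab-\thetab_0\|$ to control $\beta_3(\thetab_k)$). This yields
\begin{align*}
\frac{1}{K}\sum_{k=1}^K\Lhat(\thetab_k)\le 2\Lhat(\thetab)+\frac{3\|\thetab-\thetab_0\|^2}{4\eta K}+\frac{\Lhat(\thetab_0)}{2K}.
\end{align*}
Absorbing the last term via $\eta\le\|\thetab-\thetab_0\|^2/\Lhat(\thetab_0)$ replaces $\Lhat(\thetab_0)/(2K)$ by $\|\thetab-\thetab_0\|^2/(2\eta K)$, giving exactly the $5/4$ coefficient in \eqref{eq:train_thm}. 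Since descent makes $\Lhat(\thetab_k)$ non-increasing in $k$, $\Lhat(\thetab_K)=\min_k\Lhat(\thetab_k)\le\frac{1}{K}\sum_k\Lhat(\thetab_k)$, yielding the full left-hand side of \eqref{eq:train_thm}. The norm bound is a one-line triangle inequality: $\|\thetab_K-\thetab_0\|\le\|\thetab_K-\thetab\|+\|\thetab-\thetab_0\|\le 4\|\thetab-\thetab_0\|$.

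The main obstacle is exactly the joint induction above: because the smoothness parameter $\rho_k$ depends on the running iterates (in contrast to the fixed-classifier two-layer MLP setting of \cite{taheri2023generalization} where smoothness is uniform), descent and the iterate bound cannot be proved sequentially and must be established together. The payoff of this care is that $\rho(\thetab)$ is designed in terms of $\|\thetab-\thetab_0\|$ and $\maxnorm{\thetab}$ so that the iterate-bound $\|\thetab_k-\thetab\|\le 3\|\thetab-\thetab_0\|$ feeds back into a valid uniform smoothness bound on the whole GD trajectory; verifying that the particular constants defining $\alpha(\thetab)$ and $\rho(\thetab)$ satisfy the inequalities $\beta_2(\thetab_k)\le\rho(\thetab)$ and $2\beta_3(\thetab_k)\|\thetab-\thetab_k\|^2\le\sqrt{H}$ is a routine but deliberate substitution that I would carry out last, once the structure of the argument above is in place.
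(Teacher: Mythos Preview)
Your overall strategy matches the paper's: interleave the iterate-norm bound of Lemma~\ref{lem:iteratenormbound} with the descent lemma in a joint induction, then feed both into Lemma~\ref{lem:train_induction2} and absorb the $\Lhat(\thetab_0)/(2K)$ term using the step-size restriction. The final chain $\Lhat(\thetab_K)\le\frac{1}{K}\sum_k\Lhat(\thetab_k)$ and the triangle-inequality norm bound are exactly as the paper does them.

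There is one genuine gap, however. You write ``$\beta_2(\thetab_k)\le\rho(\thetab)$, so $\rho_k\le\rho(\thetab)$,'' but $\rho_k=\beta_2(\thetab_k)\maxi\beta_2(\thetab_{k+1})$ also involves $\thetab_{k+1}$, and at this stage of the induction you have \emph{not} yet established the iterate bound for $k+1$, so you cannot control $\maxnorm{\thetab_{k+1}}$ via $\maxnorm{\thetab_{k+1}}\le\maxnorm{\thetab}+\|\thetab_{k+1}-\thetab\|$. Without a bound on $\beta_2(\thetab_{k+1})$ the descent step at iteration $k$ is unjustified and the induction does not close. The paper breaks this circularity by bounding $\maxnorm{\thetab_{k+1}}$ \emph{forward} through the gradient step rather than backward through the (not-yet-available) iterate bound:
\[
\maxnorm{\thetab_{k+1}}\le\maxnorm{\thetab_k}+\|\thetab_{k+1}-\thetab_k\|=\maxnorm{\thetab_k}+\eta\|\nabla\Lhat(\thetab_k)\|\le\maxnorm{\thetab_k}+\eta\beta_1(\thetab_k),
\]
using $|\ell'|\le 1$ for the logistic loss. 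This depends only on quantities already controlled by the induction hypothesis at step $k$. It is precisely this extra $\eta\beta_1(\thetab_k)$ term that forces the specific constants in $\alpha(\thetab)$ (the $3\sqrt{T}R^3$ and $2\sqrt{T}R$ coefficients) and that makes the separate hypothesis $\eta\le 1$ necessary---so this is not the ``routine substitution'' you defer to the end, but the device that makes the joint induction well-founded.
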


\begin{proof}
    Define
\begin{align*}
    \alpha_{\eta}(\thetab) = 3 \, d \, \sqrt{d} \, R^2 \, \left[\left(2 \, \eta \, \sqrt{T} \, R^3 + 1 \right) \left(3 \, \norm{\thetab - \thetab_0} + \maxnorm{\thetab} \right) +  \eta \, \sqrt{T} \, R + 1 \right] \, ,
\end{align*}
and
\begin{align*}
    \rho_{\eta}(\thetab):= \left(\frac{2 \, d \, \sqrt{T \, d} \, R^3}{\sqrt{H}} +  \frac{T \, R^2}{4}\right)\, \alpha_{\eta}(\thetab)^2 \, .
\end{align*}    
From Lemma \ref{lem:descent} recall that:
\begin{align*}
\rho_k:= \beta_2(\thetab_k) \maxi \beta_2(\thetab_{k+1}) \, .
\end{align*}
Now, recalling the definition of $\beta_2(\thetab)$ from Corollary \ref{coro:objective gradient/hessian_mainbody}, we see that $\rho_k$, the objective's smoothness parameter at step $k$ depends on $\maxnorm{\thetab_k}\maxi\maxnorm{\thetab_{k+1}}$, where:
\begin{align*}
        \maxnorm{\thetab_k}\maxi\maxnorm{\thetab_{k+1}} &\leq (\norm{\thetab-\thetab_k} + \maxnorm{\thetab}) \maxi (\norm{\thetab-\thetab_{k+1}} + \maxnorm{\thetab}) \nn \\
        &\leq (\norm{\thetab-\thetab_k} + \maxnorm{\thetab}) \maxi (\norm{\thetab-\thetab_{k}} + \norm{\thetab_{k+1} - \thetab_k} + \maxnorm{\thetab}) \nn \\
        & = \norm{\thetab-\thetab_{k}} + \norm{\thetab_{k+1} - \thetab_k} + \maxnorm{\thetab} \nn \\
        &\leq \eta \sqrt{T}R \left(2 R^2 \, \maxnorm{\thetab_k} + 1 \right) + \norm{\thetab - \thetab_k} + \maxnorm{\thetab} \nn \\
        &\leq  \left(2\eta \sqrt{T}R^3 +1 \right) \left(\norm{\thetab - \thetab_k} + \maxnorm{\thetab}\right) + \eta \sqrt{T} R =: \Theta_k \, ,
\end{align*}
where the second-last inequality follows from Corollary \ref{coro:objective gradient/hessian_mainbody}. For each $\rho_k$, define corresponding $\rho_{\eta}(\Theta_k): = \frac{2 \, d \, \sqrt{T \, d} \, R^3}{\sqrt{H}} \, \left( 3 \, \sqrt{d} \, R^2 \, \Theta_k  + 1 \right)  +  \frac{T \, R^2}{4} \, \left( 2 \, R^2 \, \Theta_k  + 1 \right)^2$. Consider $\rho_0$, it is easy to see that  $\rho_0 \leq \rho_{\eta}(\Theta_0) \leq \rho_{\eta}(\thetab)$. Hence, the descent property of GD holds in first iteration as per Lemma \ref{lem:descent}. Since, for $\eta \leq 1$, $\rho_{\eta}(\thetab) \leq \rho(\thetab)$. Thus, $\eta \leq 1 \maxi \frac{1}{\rho(\thetab)} \, \, \implies \, \eta \leq \frac{1}{\rho(\thetab)} \leq \frac{1}{\rho_{\eta}(\thetab)}  \leq \frac{1}{\rho_{\eta}(\Theta_0)} \leq \frac{1}{\rho_0}$.
    Moreover, note that the assumptions of Lemma \ref{lem:iteratenormbound} are satisfied. Thus, by induction over Lemmas \ref{lem:train_induction2}-\ref{lem:iteratenormbound} and noting that $\rho_k \leq \rho_{\eta}(\Theta_k) \leq \rho_{\eta}(\thetab)$ for all $k\in[K-1]$ by using a similar argument as above, we obtain for any $\eta\le \frac{1}{\rho(\thetab)}$,
    \begin{align}
         &\forall k\in[K] \;\; : \;\;\|\thetab_k-\thetab\| \leq 3\| \thetab-\thetab_0\| \, , \nn\\
         \text{and}\;\;\;\; & \frac{1}{K}\sum_{k=1}^K\Lhat(\thetab_k) \leq 2\Lhat(\thetab) + \frac{3\norm {\thetab - \thetab_0}^2}{4\eta K} + \frac{\Lhat (\thetab_0)}{2K} \, .\label{eq:train_proof}
    \end{align}
    Moreover, by assumptions of the theorem we immediately find that $\frac{1}{2K}\Lhat (\thetab_0) \le \frac{\norm {\thetab - \thetab_0}^2}{2\eta K}$. We also have $\|\thetab_k-\thetab_0\| \le \|\thetab-\thetab_k\|  + \|\thetab-\thetab_0\| \le 4\|\thetab-\thetab_0\|$. This completes the proof.  
\end{proof}

The training proof is summarized in Figure \ref{fig:train-pf-schema}.  

\begin{figure}[h]
    \centering
    \includegraphics[width=0.99\textwidth]{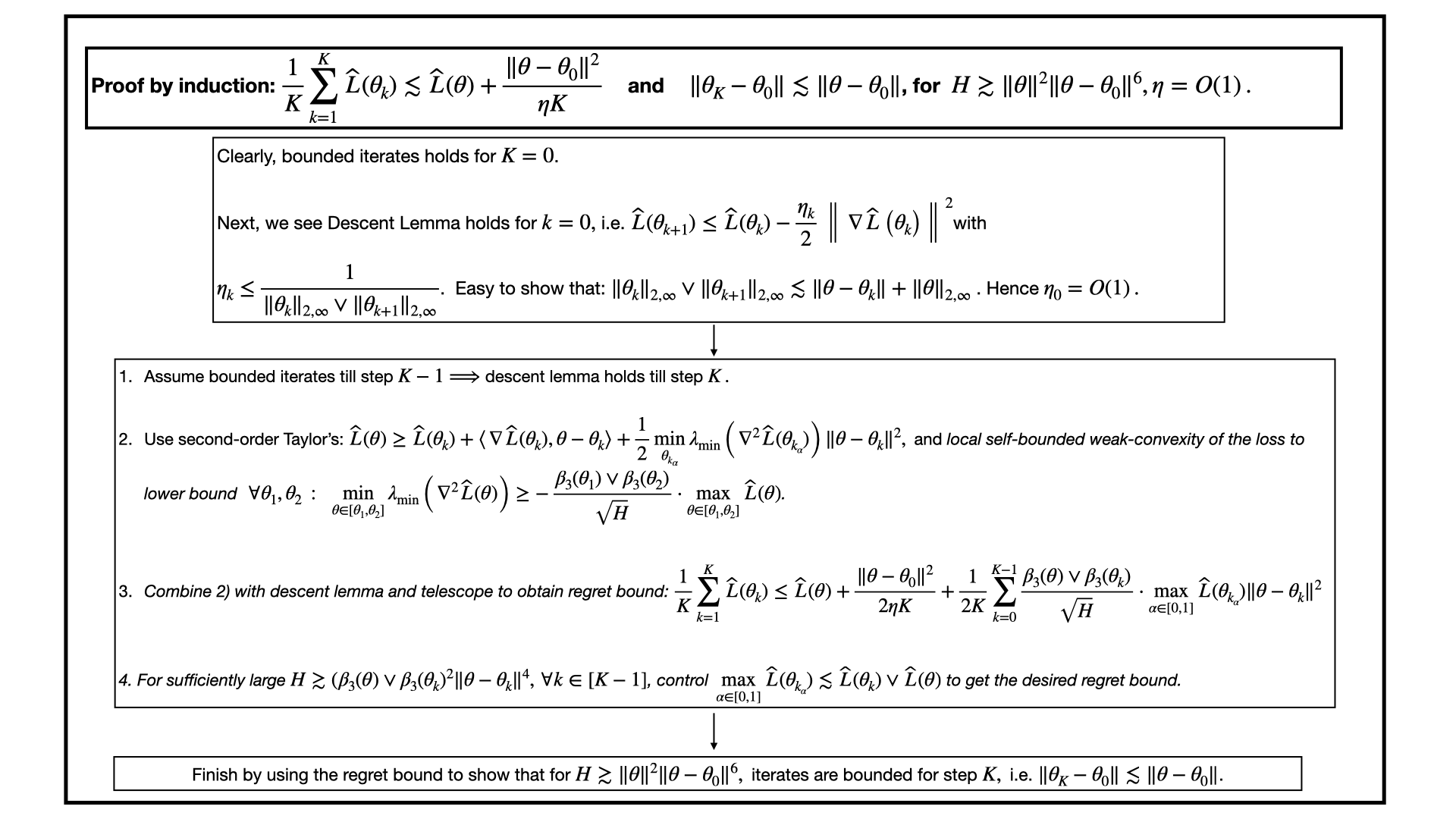}
    \caption{Training proof schema.} 
    \label{fig:train-pf-schema}
\end{figure}


\subsection{Corollary \ref{coro:train interpol}}\label{app:C2}

\begin{coro} [Training loss under realizability] \label{coro:train interpol}
Let Assumptions \ref{ass:features} and \ref{ass:NNR} hold. Fix $K \geq 1$. Assume any $H$ such that 
\begin{align} \label{eq: number of heads train}
{\sqrt{H}} \geq 36 \, d \, \sqrt{T \, d} \, R^3 \, \left( 3 \, \sqrt{d} \, R^2 \, \Big( 3g_0(\frac{1}{K}) + g(\frac{1}{K}) \Big) + 1 \right) \, g_0(\frac{1}{K})^2 \, .
\end{align}
Further, denote for convenience
\begin{align*}
    \alpha(K) := 3 \, d \, \sqrt{d} \, R^2 \, \left[3 \, \sqrt{T} \, R^3 \left(3 g_0(\frac{1}{K}) + g(\frac{1}{K}) \right) +  2\, \sqrt{T} \, R \right]
\end{align*}
and 
$
    \rho(K) = \left(\frac{2 \, d \, \sqrt{T \, d} \, R^3}{\sqrt{H}} +  \frac{T \, R^2}{4}\right)\, \alpha(K)^2.
$
Then, for any step-size $\eta\leq 1 \mini 1/\rho(K) \mini g_0(1)^2 \mini \frac{g_0(1)^2}{\Lhat(\thetab_0)}$, the following bounds hold for the weights' norm and objective at iteration $K$ of GD:
\begin{align}\label{eq:train_cor3}
      \Lhat(\thetab_{K})\leq \frac{2}{K} + \frac{5 g_0(\frac{1}{K})^2}{4\eta K} \qquad\text{and}\qquad \|\thetab_K - \thetab_0\| \le 4 g_0(\frac{1}{K}) \, .
\end{align}
\end{coro}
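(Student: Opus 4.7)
\textbf{Proof plan for Corollary \ref{coro:train interpol}.}
The strategy is to invoke Theorem \ref{thm:train} with the specific choice of target parameter $\thetab = \thetabt_{(1/K)}$ supplied by the realizability Assumption \ref{ass:NNR} for $\varepsilon = 1/K$. By construction, this choice enjoys three bounds simultaneously:
(i) $\Lhat(\thetab) \leq 1/K$,
(ii) $\|\thetab - \thetabt^{(0)}\| \leq g_0(1/K)$, and
(iii) $\maxnorm{\thetab} \leq g(1/K)$.
The plan is then to translate the abstract conditions of Theorem \ref{thm:train}, which are stated in terms of $\|\thetab-\thetab_0\|$, $\maxnorm{\thetab}$, and $\Lhat(\thetab)$, into the explicit quantities $g_0(1/K)$, $g(1/K)$, and $1/K$ appearing in Corollary \ref{coro:train interpol}.

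First, I would verify the overparameterization condition on $H$. Plugging $\|\thetab - \thetabt^{(0)}\| \leq g_0(1/K)$ and $\maxnorm{\thetab} \leq g(1/K)$ into the condition of Theorem \ref{thm:train} gives exactly Eq.~\eqref{eq: number of heads train}, since the right-hand side of Theorem \ref{thm:train}'s condition is monotone in $\|\thetab-\thetab_0\|$ and $\maxnorm{\thetab}$. An analogous monotone substitution bounds the local smoothness quantity $\rho(\thetab)$ appearing in Theorem \ref{thm:train} by $\rho(K)$ as defined in the corollary, using the fact that $\alpha(\thetab)$ is monotone in the same quantities. This justifies that the step-size condition $\eta \leq 1/\rho(K)$ stated in the corollary implies $\eta \leq 1/\rho(\thetab)$.

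Next, I would handle the remaining two step-size constraints from Theorem \ref{thm:train}, namely
\begin{align*}
\eta \leq \frac{\|\thetab - \thetabt^{(0)}\|^2}{K\,\Lhat(\thetab)} \quad \text{and} \quad \eta \leq \frac{\|\thetab - \thetabt^{(0)}\|^2}{\Lhat(\thetabt^{(0)})} \, .
\end{align*}
Using $\Lhat(\thetab) \leq 1/K$ yields $K\,\Lhat(\thetab) \leq 1$, so the first ratio is lower-bounded by $\|\thetab-\thetabt^{(0)}\|^2$. Since we may without loss of generality take the realizability witness $\thetab$ to satisfy $\|\thetab - \thetabt^{(0)}\| \geq g_0(1)$ (by monotonicity of $g_0$ and the freedom in choosing the $\varepsilon=1$ witness as a floor, or by a standard convexity argument along the segment to $\thetabt^{(0)}$), the constraint $\eta \leq g_0(1)^2$ in the corollary suffices to satisfy the first constraint, and $\eta \leq g_0(1)^2/\Lhat(\thetabt^{(0)})$ suffices for the second. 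With all hypotheses of Theorem \ref{thm:train} verified, its conclusion \eqref{eq:train_thm} applies.

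Finally, I would simply substitute the realizability bounds into the output of Theorem \ref{thm:train}. The train-loss bound becomes
\begin{align*}
\Lhat(\thetab_K) \leq 2 \Lhat(\thetab) + \frac{5 \|\thetab - \thetabt^{(0)}\|^2}{4 \eta K} \leq \frac{2}{K} + \frac{5\, g_0(1/K)^2}{4 \eta K} \, ,
\end{align*}
and the iterates-norm bound becomes $\|\thetab_K - \thetabt^{(0)}\| \leq 4 \|\thetab - \thetabt^{(0)}\| \leq 4 g_0(1/K)$, matching exactly Eq.~\eqref{eq:train_cor3}. The only genuinely delicate step is the reconciliation between the step-size condition $\eta \leq g_0(1)^2$ of the corollary and the $\thetab$-dependent step-size condition of the theorem, which relies on the monotonicity of $g_0$ and a careful (but essentially cosmetic) choice of realizability witness; the rest of the proof is purely a bookkeeping substitution.
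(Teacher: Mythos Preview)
Your proposal is correct and follows essentially the same approach as the paper: pick the realizability witness $\thetab = \thetabt_{(1/K)}$, verify that the hypotheses of Theorem \ref{thm:train} (on $H$, on $\rho$, and on $\eta$) are implied by the corollary's hypotheses via monotone substitution of $g_0(1/K)$ and $g(1/K)$, and then read off the conclusion. The one place where you are more careful than the paper is the step-size reconciliation: the paper simply writes $\|\thetabt_{(1/K)}-\thetabt^{(0)}\| = g_0(1/K)$ (equality rather than the $\leq$ stated in Assumption \ref{ass:NNR}) and then uses $g_0(1/K) \geq g_0(1)$ directly, whereas you correctly flag that only an upper bound is assumed and sketch a WLOG argument. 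In practice this is a cosmetic point—one can always take $g_0(\varepsilon)$ to be the actual distance of the witness to initialization—so your proof plan and the paper's proof are effectively identical.
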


\begin{proof}
According to Assumption \ref{ass:NNR}, for any sufficiently small $\varepsilon > 0$, there exists a $\thetab^{(\varepsilon)}$ such that $\Lhat(\thetab^{(\varepsilon)}) \leq \varepsilon$ and $\| \thetab^{(\varepsilon)} - \thetab_0 \| = g_0(\varepsilon)$. Pick $\varepsilon = \frac{1}{K}$. Let the step-size $\eta >0$, satisfy the assumption of Descent Lemma \ref{lem:descent}. Since $\Lhat(\thetab^\pare{1/K}) \leq \frac{1}{K}$, we have:
\begin{align*}
    \frac{\| \thetab^\pare{1/K} - \thetab_0 \|^2}{K \Lhat(\thetab^\pare{1/K})} \geq \| \thetab^\pare{1/K} - \thetab_0 \|^2 = g_0(\frac{1}{K})^2 \geq g_0(1)^2\, ,
\end{align*}
and
\begin{align*}
    \frac{\| \thetab^\pare{1/K} - \thetab_0 \|^2}{ \Lhat(\thetab_0)} = \frac{g_0(\frac{1}{K})^2}{\Lhat(\thetab_0)} \geq \frac{g_0(1)^2}{\Lhat(\thetab_0)} \, .
\end{align*}
Therefore, following our assumption on step-size $\eta$, we can conclude that
\begin{align}
    \eta \leq g_0(1)^2 \mini \frac{g_0(1)^2}{\Lhat(\thetab_0)} \leq \frac{\|\thetab^\pare{1/K} - \thetab_0\|^2}{K\Lhat(\thetab^\pare{1/K})} \mini \frac{\|\thetab^\pare{1/K}-\thetab_0\|^2}{\Lhat(\thetab_0)} \, .
\end{align}
where in the second inequality we used the fact that $g_0(\cdot)$ is a non-increasing function. The desired result is
obtained by Theorem \ref{thm:train}.
\end{proof}

\section{Generalization Analysis} \label{sec:generalization}
Throughout this section we drop the $\widetilde{\cdot}$ in $\thetabt$ and $\Phit(\X_i; \thetabt)$ as everything refers to the full model. Moreover, $\thetabt^{(K)}$ and $\thetabt^{(0)}$ are denoted by $\thetab_{K}$ and $\thetab_{0}$.

For the stability analysis below, recall the definition of the leave-one-out (loo) training loss for $i\in[n]$ and note that by denoting $\ell_j(\thetab) := \ell(y_j \Phi(\X_j;\thetab))$ to be the $j$-th sample loss:
$\Lhat^{\neg i}(\thetab):=\frac{1}{n}\sum_{j\neq i} \ell_j(\thetab)$. With these, define the loo model updates of GD on the loo loss for some $\eta >0$:
\[
\thetab_{k+1}^{\neg i}:=\thetab_{k}^{\neg i} -\eta \nabla \Lhat ^{\neg i}(\thetab_k^{\neg i}),~k\geq 0,\qquad \thetab_0^{\neg i}=\thetab_0.
\]
\begin{lemma}\label{lem:gen_stab}
    Assume the conditions of Theorem \ref{thm:train} hold and
\begin{align}
    \sqrt{H} \geq 256 \, d \, \sqrt{T \, d} \, R^3 \, \left(3 \, \sqrt{d} \, R^2 \, \Big(3\|\thetab-\thetab_0\| + \maxnorm{\thetab}\Big)+ 1 \right) \, \|\thetab-\thetab_0\|^2 \, .
\end{align}
    Then, the on-average model stability at iteration $K$ of GD satisfies,
\begin{align}
    \frac{1}{n} \sum_{i=1}^n \left\|\thetab_{K}-\thetab_{K}^{\neg i}\right\|\le \frac{2 \eta}{n} \, \left( \sqrt{T} \, R \, \left( 2 \, R^2 \, \left( 3\|\thetab-\thetab_0\| +\maxnorm{\thetab} \right) + 1 \right) \right) \, \left(2 \, K \Lhat (\thetab) + \frac{9\|\thetab-\thetab_0\|^2}{4\eta}\right) \, .\nn
\end{align}
\end{lemma}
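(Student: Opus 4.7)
The plan is to implement the algorithmic-stability pipeline sketched in Section~\ref{sec: proof sketch of train gen general}: bound the one-step expansion $\|\thetab_{k+1}-\thetab_{k+1}^{\neg i}\|$ in terms of $\|\thetab_k-\thetab_k^{\neg i}\|$ via gradient-descent expansiveness combined with self-boundedness of $\nabla\Lhat$, unroll into a product, sum over $i$, and close the loop with the training regret of Theorem~\ref{thm:train}.

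First, I would apply the expansiveness inequality \eqref{eq:stab1} to $(\thetab,\thetab')=(\thetab_k,\thetab_k^{\neg i})$, together with the gradient-gap estimate $\|\nabla\ell_i(\thetab_k)\|\le \beta_1(\thetab_k)\ell_i(\thetab_k)$ from Corollary~\ref{coro:objective gradient/hessian_mainbody}, to land on the raw recursion \eqref{eq:expansive at k}. Since the step-size satisfies $\eta\le 1/\rho(\thetab)$ from Theorem~\ref{thm:train} and $\rho(\thetab)$ dominates the uniform smoothness bound on the segment, the factor $\eta\beta_2(\thetab_{k_\alpha}^{\neg i})$ is at most one, so only the branch $(1+\eta\beta_3\Lhat^{\neg i}/\sqrt H)$ survives the outer $\maxi$.

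Second, to eliminate the $k$-dependence inside $\beta_1,\beta_2,\beta_3$ I would propagate the bounded-iterates property to the loo trajectory. Theorem~\ref{thm:train} already gives $\|\thetab_k-\thetab\|\le 3\|\thetab-\thetab_0\|$; a parallel induction on $\thetab_k^{\neg i}$---using that $\nabla\Lhat^{\neg i}$ differs from $\nabla\Lhat$ only by a $1/n$-scaled single-sample gradient, so Lemmas \ref{lem:descent}--\ref{lem:iteratenormbound} carry over verbatim with $\Lhat$ replaced by $\Lhat^{\neg i}$---yields $\|\thetab_k^{\neg i}-\thetab\|\le 3\|\thetab-\thetab_0\|$. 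The enlarged constant $256$ (versus $36$ in Theorem~\ref{thm:train}) in the hypothesis on $H$ is exactly the slack needed to simultaneously support (a) the two bounded-iterates inductions and (b) the closeness condition $\sqrt H\ge 2(\beta_3(\thetab_k)\maxi \beta_3(\thetab_k^{\neg i}))\|\thetab_k-\thetab_k^{\neg i}\|^2$ under which the GLQC property of Proposition~\ref{prop:GLQCapp} applies along $[\thetab_k,\thetab_k^{\neg i}]$. Along this segment, $\beta_1,\beta_2,\beta_3$ are uniformly dominated by
\begin{align*}
\tilde\beta_1(\thetab) &:= \sqrt{T}\,R\bigl(2R^2(3\|\thetab-\thetab_0\|+\maxnorm{\thetab})+1\bigr),\\
\tilde\beta_3(\thetab) &:= 2d\sqrt{Td}\,R^3\bigl(3\sqrt{d}\,R^2(3\|\thetab-\thetab_0\|+\maxnorm{\thetab})+1\bigr),
\end{align*}
and GLQC yields $\max_\alpha \Lhat^{\neg i}(\thetab_{k_\alpha}^{\neg i})\le \tfrac{4}{3}\bigl(\Lhat^{\neg i}(\thetab_k)+\Lhat^{\neg i}(\thetab_k^{\neg i})\bigr)$.

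Third, substituting the uniform bounds into \eqref{eq:expansive at k} collapses the recursion to the clean form \eqref{eq:stability simplify}, namely
\begin{align*}
\|\thetab_{k+1}-\thetab_{k+1}^{\neg i}\|\le (1+\alpha_{k,i})\|\thetab_k-\thetab_k^{\neg i}\|+\tfrac{\eta\tilde\beta_1(\thetab)}{n}\ell_i(\thetab_k),
\end{align*}
with $\alpha_{k,i}=\tfrac{4\eta\tilde\beta_3(\thetab)}{3\sqrt H}\bigl(\Lhat^{\neg i}(\thetab_k)+\Lhat^{\neg i}(\thetab_k^{\neg i})\bigr)$. Unrolling using $\prod_j(1+\alpha_{j,i})\le \exp\sum_j\alpha_{j,i}$ gives
\begin{align*}
\|\thetab_K-\thetab_K^{\neg i}\|\le \tfrac{\eta\tilde\beta_1(\thetab)}{n}\exp\!\Big(\textstyle\sum_{j=0}^{K-1}\alpha_{j,i}\Big)\sum_{k=0}^{K-1}\ell_i(\thetab_k).
\end{align*}
The exponent is $O(1)$: applying the training regret bound of Theorem~\ref{thm:train} to each $\Lhat^{\neg i}$-trajectory yields $\sum_k\Lhat^{\neg i}(\thetab_k^{\neg i})\lesssim K\Lhat^{\neg i}(\thetab)+\|\thetab-\thetab_0\|^2/\eta$, and the $1/\sqrt{H}$ prefactor paired with the hypothesis on $H$ absorbs this growth into an absolute constant, bounding the exponential by $2$. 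Averaging over $i$ converts $\tfrac{1}{n}\sum_i\ell_i(\thetab_k)$ into $\Lhat(\thetab_k)$, and a second invocation of the regret bound---with the loo slack $\Lhat^{\neg i}(\thetab)\le \tfrac{n}{n-1}\Lhat(\thetab)$ and the constant $\tfrac{5}{4}$ in Theorem~\ref{thm:train} loosened to $\tfrac{9}{4}$ after absorbing the loo discrepancy---delivers the advertised bound.

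The main obstacle will be the loo analog of Lemma~\ref{lem:iteratenormbound} and the accompanying bookkeeping: one must verify that the descent/GLQC induction still closes when the objective is $\Lhat^{\neg i}$, and keep track of the three simultaneous uses of the $H$-condition (two bounded-iterates inductions plus GLQC on $[\thetab_k,\thetab_k^{\neg i}]$), ensuring that the accumulated $\alpha_{j,i}$'s give an $O(1)$ exponential rather than a $K$-dependent blow-up. The jump in constant from $36$ to $256$ is precisely the budget required for these three invocations to coexist.
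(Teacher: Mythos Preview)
Your proposal is correct and follows essentially the same route as the paper: the expansiveness recursion \eqref{eq:expansive at k}, uniform $\tilde\beta_i$ bounds via the bounded-iterates property applied to both the full and loo trajectories, GLQC along $[\thetab_k,\thetab_k^{\neg i}]$, unrolling with $1+x\le e^x$, and closure via the training regret of Theorem~\ref{thm:train}. Two cosmetic remarks: the loo slack actually goes the easy way ($\Lhat^{\neg i}\le\Lhat$ pointwise, no $\tfrac{n}{n-1}$ factor needed), and the constant $\tfrac{9}{4}$ in the final bound comes not from a loo discrepancy but from absorbing the $k=0$ term $\Lhat(\thetab_0)\le\|\thetab-\thetab_0\|^2/\eta$ when summing $\sum_{k=0}^{K-1}\Lhat(\thetab_k)$ via \eqref{eq:train_proof}.
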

\begin{proof}
First recall from Corollary \ref{coro:objective gradient/hessian_mainbody} that gradient and hessian's norm satisfy
\begin{align*}
&\norm{\nabla \Lhat(\thetab)} \leq  \beta_1(\thetab)\,\Lhat(\thetab) \, ,\\ 
&\norm{\nabla^2 \Lhat(\thetab)} \leq \beta_2(\thetab) \,, \\ 
&\lambda_{\text{min}}(\nabla^2 \Lhat(\thetab)) \geq - \frac{\beta_3(\thetab)}{\sqrt{H}} \Lhat(\thetab) \, . 
 \end{align*}
Applying \cite[Lemma B.1.]{taheri2023generalization}, two arbitrary points $\thetab,\thetab'$ satisfy the following  GD-expansiveness inequality: 
\begin{align}\label{eq:stab1}
    \norm{(\thetab-\eta\nabla \Lhat(\thetab)) - (\thetab'-\eta\nabla \Lhat(\thetab'))} \leq \max_{\alpha\in[0,1]}\left\{\left(1+\frac{\eta \beta_3(\thetab_\alpha)}{\sqrt{H}} \Lhat (\thetab_\alpha) \right) \, \maxi \, \eta \beta_2(\thetab_\alpha)  \right\} \norm{\thetab-\thetab'} \, ,
\end{align}
where $\thetab_\alpha = \alpha \thetab+ (1-\alpha)\thetab'$ denotes a point parameterized by $\alpha\in[0,1]$ in the line segment between $\thetab$ and $\thetab'$. We aim to bound the on-average model stability in the r.h.s of the inequality in Lemma \ref{lem:on-average}. Based on Eq. \eqref{eq:stab1},
\begin{align}
\left\|\thetab_{k+1}-\thetab_{k+1}^{\neg i}\right\| &= 
\left\|\left(\thetab_{k}-\eta \nabla \Lhat^{\neg i}\left(\thetab_{k}\right)\right)-\left(\thetab_{k}^{\neg i}-\eta \nabla \Lhat^{\neg i}\left(\thetab_{k}^{\neg i}\right)\right)-\frac{\eta}{n}\nabla \ell_i\left(\thetab_{k}\right)\right\| \nn \\
& \leq\left\|\left(\thetab_{k}-\eta \nabla \Lhat^{\neg i}\left(\thetab_{k}\right)\right)-\left(\thetab_{k}^{\neg i}-\eta \nabla \Lhat^{\neg i}\left(\thetab_{k}^{\neg i}\right)\right)\right\|+\frac{\eta}{n}\left\|\nabla \ell_i\left(\thetab_{k}\right)\right\| \notag \\
& \leq\left\|\left(\thetab_{k}-\eta \nabla \Lhat^{\neg i}\left(\thetab_{k}\right)\right)-\left(\thetab_{k}^{\neg i}-\eta \nabla \Lhat^{\neg i}\left(\thetab_{k}^{\neg i}\right)\right)\right\|+\frac{\eta \beta_1(\thetab_k)}{n} \ell_i\left(\thetab_{k}\right) \notag \\
& \leq \left( \max_{\alpha\in[0,1]} \left\{ \left(1 + \frac{\eta \beta_3(\thetab_{k_\alpha}^{\neg i})}{\sqrt{H}} \Lhat^{\neg i} (\thetab_{k_\alpha}^{\neg i}) \right) \, \maxi \, \eta \beta_2(\thetab_{k_\alpha}^{\neg i}) \right\}\right) \left\|\thetab_{k}-\thetab_{k}^{\neg i}\right\|+\frac{\eta \beta_1(\thetab_k)}{n} \ell_i\left(\thetab_{k}\right). \label{eq:model_stab3}
\end{align}
In the above we denoted $\thetab_{k_\alpha}^{\neg i}: = \alpha \thetab_k + (1-\alpha)\thetab_k^{\neg i}$. We note that based on our guarantees for the weights' norm during training \eqref{eq:wt minus w bound no square} it can be deduced that for all $\alpha\in[0,1]$,
\begin{align}
\beta_3(\thetab_{k_\alpha}^{\neg i}) &= 2 \, d \, \sqrt{T \, d} \, R^3 \, \left(3 \, \sqrt{d} \, R^2 \, \maxnorm{\alpha \thetab_k + (1-\alpha)\thetab_k^{\neg i}} + 1 \right) \nn \\
&= 2 \, d \, \sqrt{T \, d} \, R^3 \, \left(3 \, \sqrt{d} \, R^2 \, \Big(\maxnorm{\thetab_k} \maxi \maxnorm{\thetab_k^{\neg i}}\Big) + 1 \right) \nn \\
&\le 2 \, d \, \sqrt{T \, d} \, R^3 \, \left(3 \, \sqrt{d} \, R^2 \, \Big((\|\thetab_k-\thetab\| + \maxnorm{\thetab}) \maxi (\|\thetab_k^{\neg i}-\thetab\| + \maxnorm{\thetab}\Big)+ 1 \right) \nn \\
&\le 2 \, d \, \sqrt{T \, d} \, R^3 \, \left(3 \, \sqrt{d} \, R^2 \, \Big(3\|\thetab-\thetab_0\| + \maxnorm{\thetab}\Big)+ 1 \right)=: \tilde\beta_3(\thetab) \, .
\end{align}
Similarly, we obtain
\begin{align}
    \beta_2(\thetab_{k_\alpha}^{\neg i}) &= \frac{2 \, d \, \sqrt{T \, d} \, R^3}{\sqrt{H}} \left( 3 \, \sqrt{d} \, R^2 \, \maxnorm{\thetab_{k_\alpha}^{\neg i}} + 1 \right) + \frac{T \, R^2}{4} \, \bigg( 2 \, R^2 \, \maxnorm{\thetab_{k_\alpha}^{\neg i}} + 1 \bigg)^2 \nn \\
    &\le \frac{2 \, d \, \sqrt{T \, d} \, R^3}{\sqrt{H}} \left( 3 \, \sqrt{d} \, R^2 \, \Big(3\|\thetab-\thetab_0\| + \maxnorm{\thetab} \Big) + 1 \right) \nn \\
    &\qquad + \, \frac{T \, R^2}{4} \, \bigg( 2 \, R^2 \, \Big(3\|\thetab-\thetab_0\| + \maxnorm{\thetab}\Big) + 1 \bigg)^2 =: \tilde\beta_2(\thetab) \, .
\end{align}

Hence, by the notation introduced above and noting that by our assumption on the step-size it holds $\eta\le 1/\tilde\beta_2(\thetab)$, we can rewrite Eq. \eqref{eq:model_stab3} as follows,
\begin{align*}
        \left\|\thetab_{k+1}-\thetab_{k+1}^{\neg i}\right\| \le \left((1+\frac{\eta\tilde\beta_3(\thetab)}{\sqrt{H}})\max_{\alpha\in[0,1]} \Lhat^{\neg i} (\thetab_{k_\alpha}^{\neg i}) \right) \left\|\thetab_{k}-\thetab_{k}^{\neg i}\right\|+\frac{\eta \beta_1(\thetab_k)}{n} \ell_i\left(\thetab_{k}\right) \, .
\end{align*}
Assume 
\begin{align*}    
\sqrt{H} &\ge 128 \, \tilde\beta_3(\thetab) \, \|\thetab-\thetab_0\|^2 \ge 4 \, \tilde\beta_3(\thetab) \, \left(\|\thetab_k-\thetab_0\|^2 + \|\thetab_k^{\neg i}-\thetab_0\|^2\right) \ge 2 \, \tilde\beta_3(\thetab) \, \|\thetab_k-\thetab_k^{\neg i}\|^2 \\
&\ge 2 \, \left(\beta_3(\thetab_k) \maxi \beta_3(\thetab_k^{\neg i}) \right) \, \|\thetab_k-\thetab_k^{\neg i}\|^2,
\end{align*} where we used Theorem \ref{thm:train} in the first inequality. We also have $$\min_{\alpha \in [0,1]}\lambda_{\text{min}}(\nabla^2 \Lhat^{\neg i}(\thetab_{k_\alpha}^{\neg i})) \geq -\frac{\beta_3(\thetab_{k})\maxi \beta_3(\thetab_{k}^{\neg i})}{\sqrt{H}}\Lhat^{\neg i}(\thetab_{k_\alpha}^{\neg i})\,. $$
Thus, by applying Proposition \ref{prop:GLQCapp} on the leave-one-out loss, it holds that for all $\alpha\in[0,1]$,  
$$\max_{\alpha\in[0,1]} \Lhat^{\neg i}(\thetab_{k_\alpha}^{\neg i}) \le \frac{4}{3} \left( \Lhat^{\neg i}(\thetab_k) + \Lhat^{\neg i} (\thetab_k^{\neg i}) \right) \,.$$
Thus,

\begin{align}
    \left\|\thetab_{k+1}-\thetab_{k+1}^{\neg i}\right\| \le \left( \left(1 + \frac{4\eta\tilde\beta_3(\thetab)}{3\sqrt{H}} \right) \cdot \left(\Lhat^{\neg i}(\thetab_k) + \Lhat^{\neg i} (\thetab_k^{\neg i})\right)\right)\left\|\thetab_{k}-\thetab_{k}^{\neg i}\right\|+\frac{\eta \beta_1(\thetab_k)}{n} \ell_i\left(\thetab_{k}\right) \, .
\end{align}
In order to remove the dependence on $k$, note that 
\begin{align}
    \beta_1 (\thetab_k) &\le \sqrt{T} \, R \, \bigg( 2 \, R^2 \, \maxnorm{\thetab_k} + 1 \bigg) \nn \\
    &\le \sqrt{T} \, R \, \bigg( 2 \, R^2 \, \Big(\maxnorm{\thetab-\thetab_k} +\maxnorm{\thetab} \Big) + 1 \bigg) \nn \\
    &\le \sqrt{T} \, R \, \bigg( 2 \, R^2 \, \Big(\|\thetab-\thetab_k\| +\maxnorm{\thetab} \Big) + 1 \bigg) \nn \\
    &\le \sqrt{T} \, R \, \bigg( 2 \, R^2 \, \Big( 3\|\thetab-\thetab_0\| +\maxnorm{\thetab} \Big) + 1 \bigg) =: \tilde\beta_1(\thetab) \, .
\end{align}
For simplicity of exposition, denote $\alpha_{k,i}:=\frac{4\eta\tilde\beta_3(\thetab)}{3\sqrt{H}} \left(\Lhat^{\neg i}(\thetab_k) + \Lhat^{\neg i} (\thetab_k^{\neg i})\right)$. Then by unrolling the iterates we have,

\begin{align}
    \left\|\thetab_{k+1}-\thetab_{k+1}^{\neg i}\right\| &\le (1+\alpha_{k,i})\left\|\thetab_{k}-\thetab_{k}^{\neg i}\right\|+\frac{\eta\tilde\beta_1(\thetab)}{n}\ell_i\left(\thetab_{k}\right)\nn
    \\
    &\le (1+\alpha_{k,i})(1+\alpha_{k-1,i}) \left\|\thetab_{k-1}-\thetab_{k-1}^{\neg i}\right\| + \frac{(1+\alpha_{k,i})\eta\tilde\beta_1(\thetab)}{n}\ell_i\left(\thetab_{k-1}\right)+\frac{\eta\tilde\beta_1(\thetab)}{n}\ell_i\left(\thetab_{k}\right)\nn
    \\
    &\le \sum_{j=1}^k\prod_{l=j}^k (1+\alpha_{l,i})\eta\tilde\beta_1(\thetab)\frac{\ell_i(\thetab_{j-1})}{n}+\eta\tilde\beta_1(\thetab)\frac{\ell_i\left(\thetab_{k}\right)}{n}\nn
    \\
    &\le \sum_{j=1}^k \exp(\sum_{l=j}^k \alpha_{l,i}) \, \eta \tilde\beta_1(\thetab)\frac{\ell_i(\thetab_{j-1})}{n}+\eta\tilde\beta_1(\thetab)\frac{\ell_i\left(\thetab_{k}\right)}{n}\nn
    \\
    &\le \sum_{j=1}^k \exp(\sum_{l=1}^k \alpha_{l,i}) \, \eta \tilde\beta_1(\thetab)\frac{\ell _i(\thetab_{j-1})}{n}+\eta\tilde\beta_1(\thetab)\frac{\ell_i\left(\thetab_{k}\right)}{n} \, , \label{eq:model_stab5}
\end{align}
where in the above we used that $\thetab_0=\thetab_0^{\neg i}$ in unrolling the iterates as well as the fact that for $x\ge 0:1+x\le e^x$. Note that by definition $\Lhat^{\neg i}(\thetab_k)\le \Lhat(\thetab_k)$. By training loss guarantees from Eq. \eqref{eq:train_thm}, we have
\begin{align*}
    \sum_{l=1}^k \alpha_{l,i} &\le \frac{4\eta\tilde\beta_3(\thetab)}{3\sqrt{H}} \sum_{l=1}^k \left(\Lhat^{\neg i}(\thetab_l) + \Lhat^{\neg i} (\thetab_l^{\neg i})\right)\\
    &\le \frac{4\eta\tilde\beta_3(\thetab)}{3\sqrt{H}} \sum_{l=1}^k \left(\Lhat(\thetab_l) + \Lhat(\thetab_l^{\neg i})\right)\\
    &\le \frac{4\eta\tilde\beta_3(\thetab)}{3\sqrt{H}} \left(5k\Lhat(\thetab) + \frac{5\|\thetab-\thetab_0\|^2}{2\eta}\right) \\
    &\le \frac{10\tilde\beta_3(\thetab)\|\thetab-\thetab_0\|^2}{ \sqrt{H}}\\
    &\le \frac{1}{10} \, ,
\end{align*}
where the last step stems from the condition on $\sqrt{H}$. Proceeding from Eq. \eqref{eq:model_stab5}, we find that for the last iterate 
\begin{align}
\left\|\thetab_{K}-\thetab_{K}^{\neg i}\right\| &\le  2 \eta \tilde\beta_1(\thetab) \sum_{k=1}^{K-1} \frac{\ell_i(\thetab_{k-1})}{n}+\eta\tilde\beta_1(\thetab)\frac{\ell_i\left(\thetab_{K-1}\right)}{n} \nn \\
&\le 2 \eta \tilde\beta_1(\thetab) \sum_{k=0}^{K-1} \frac{\ell_i(\thetab_{k})}{n} \, .
\end{align}
It follows that the on-average model stability satisfies,
\begin{align*}
    \frac{1}{n} \sum_{i=1}^n \left\|\thetab_{K}-\thetab_{K}^{\neg i}\right\| &\le \frac{2 \eta \tilde\beta_1(\thetab)}{n^2} \sum_{i=1}^n\sum_{k=0}^{K-1} \ell_i(\thetab_{k})\\
    &= \frac{2 \eta \tilde\beta_1(\thetab)}{n}\sum_{k=0}^{K-1} \Lhat (\thetab_k) \, .
\end{align*}
Applying our training loss guarantees from Eq. \eqref{eq:train_proof} to the r.h.s. above yields,
\begin{align*}
    \frac{1}{n} \sum_{i=1}^n \left\|\thetab_{K}-\thetab_{K}^{\neg i}\right\|\le \frac{2 \eta \tilde\beta_1(\thetab)}{n} \left(2 \, K \Lhat (\thetab) + \frac{9\|\thetab-\thetab_0\|^2}{4\eta}\right) \, ,
\end{align*}
where here we used the assumption that, $\Lhat(\thetab_0)\le \|\thetab-\thetab_0\|^2/\eta$ to simplify the final result. This completes the proof.
\end{proof}

\subsection{Proof of Theorem \ref{thm:gen}}
We restate the theorem here for convenience, this time also including exact constants.
\begin{theo}[Restatement of Thm. \ref{thm:gen}]\label{thm:gen appendix}
Fix any $\thetab$ and $H$ satisfying
$$
{\sqrt{H}} \geq 256 \, d \, \sqrt{T \, d} \, R^3 \, \left( 3 \, \sqrt{d} \, R^2 \, \bigg(3\norm{\thetab-\thetab_0} + \maxnorm{\thetab}\bigg) + 1 \right) \, \|\thetab-\thetab_0\|^2.
$$
Further, denote for convenience
\begin{align*}
    \alpha(\thetab) := 3 \, d \, \sqrt{d} \, R^2 \, \left[3 \, \sqrt{T} \, R^3 \left(3 \, \norm{\thetab - \thetab_0} + \maxnorm{\thetab} \right) +  2\, \sqrt{T} \, R \right]
\end{align*}
and 
$
    \rho(\thetab) = \left(\frac{2 \, d \, \sqrt{T \, d} \, R^3}{\sqrt{H}} +  \frac{T \, R^2}{4}\right)\, \alpha(\thetab)^2.
$
Then, for any step-size $\eta \leq 1 \mini 1/\rho(\thetab) \mini \frac{\|\thetab-\thetab_0\|^2}{K\Lhat(\thetab)} \mini \frac{\|\thetab-\thetab_0\|^2}{\Lhat(\thetab_0)}$,
the expected generalization gap at iteration $K$ satisfies,
\begin{align}
    \E\big[ L(\wt\thetab^{(K)})-\Lhat(\wt\thetab^{(K)})\big] \le \frac{4}{n} \, \E\big[2 \, K \, \Lhat (\wt\thetab) + \frac{9\|\wt\thetab-\wt\thetab^{(0)}\|^2}{4\eta}\big] \, .
\end{align}
\end{theo}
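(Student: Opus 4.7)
\textbf{Proof plan for Theorem \ref{thm:gen appendix}.} The strategy is to combine the algorithmic-stability template (Lemma \ref{lem:on-average}) with the on-average model-stability bound already established in Lemma \ref{lem:gen_stab}, and then simplify the resulting product using the step-size restriction inherited from Theorem \ref{thm:train appendix}. First, I would verify that under the hypotheses of the theorem, the conditions of both Lemma \ref{lem:gen_stab} (which requires the same step-size bound as Theorem \ref{thm:train appendix} and a slightly stronger head-count condition that is already assumed here) and Theorem \ref{thm:train appendix} (providing the iterate norm bound $\|\thetab_k-\thetab\|\le 3\|\thetab-\thetab_0\|$) are simultaneously met, so that both per-sample losses $\ell_i(\thetab_k)$ and the loo iterates $\thetab_k^{\neg i}$ stay in the region where our gradient/Hessian estimates are valid.

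Next, I would establish a Lipschitz constant $G$ uniformly valid along the trajectory. By Corollary \ref{coro:objective gradient/hessian_mainbody} (and the per-sample analogue via Lemma \ref{lem:multihead}), for any $\thetab'$ we have $\|\nabla \ell_i(\thetab')\|\le \sqrt{T}R(2R^2\maxnorm{\thetab'}+1)$. Using $\maxnorm{\thetab_k}\le \|\thetab_k-\thetab\|+\maxnorm{\thetab}\le 3\|\thetab-\thetab_0\|+\maxnorm{\thetab}$ from the iterate norm bound in Theorem \ref{thm:train appendix}, I obtain $G\le \tilde\beta_1(\thetab):=\sqrt{T}R(2R^2(3\|\thetab-\thetab_0\|+\maxnorm{\thetab})+1)$, which also applies to the loo trajectory by the same argument.

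With these in hand, invoking Lemma \ref{lem:on-average} yields
\begin{align*}
\E\big[L(\thetab_K)-\Lhat(\thetab_K)\big] \le 2\tilde\beta_1(\thetab)\cdot\E\Big[\tfrac{1}{n}\sum_{i=1}^n\|\thetab_K-\thetab_K^{\neg i}\|\Big],
\end{align*}
and plugging in the conclusion of Lemma \ref{lem:gen_stab} gives
\begin{align*}
\E\big[L(\thetab_K)-\Lhat(\thetab_K)\big] \le \frac{4\eta\,\tilde\beta_1(\thetab)^2}{n}\,\E\!\left[2K\Lhat(\thetab)+\frac{9\|\thetab-\thetab_0\|^2}{4\eta}\right].
\end{align*}
The final step is to absorb the prefactor $\eta\tilde\beta_1(\thetab)^2$. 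Inspecting the definitions of $\rho(\thetab)$ and $\alpha(\thetab)$, one checks $\rho(\thetab)\ge \tfrac{TR^2}{4}\alpha(\thetab)^2 \ge \tilde\beta_1(\thetab)^2$, so that the step-size restriction $\eta\le 1/\rho(\thetab)$ immediately implies $\eta\,\tilde\beta_1(\thetab)^2\le 1$, producing the claimed bound.

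\textbf{Anticipated difficulty.} The computational content is already packaged in Lemma \ref{lem:gen_stab}, so the main care needed is bookkeeping: (i) ensuring the same target $\thetab$ and step-size chosen in the training analysis can be reused for stability without violating the stricter head-count condition in Lemma \ref{lem:gen_stab}, and (ii) verifying the elementary but slightly tedious inequality $\rho(\thetab)\ge \tilde\beta_1(\thetab)^2$ that makes the Lipschitz prefactor collapse; this is the only nontrivial algebraic step and requires matching the $\sqrt{T}R$ factors in $\alpha(\thetab)$ against those in $\tilde\beta_1(\thetab)$.
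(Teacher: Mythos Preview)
Your proposal is correct and follows essentially the same approach as the paper: combine Lemma \ref{lem:on-average} with the stability bound of Lemma \ref{lem:gen_stab}, establish the uniform Lipschitz constant $\tilde\beta_1(\thetab)$ along the trajectory via the iterate-norm bound, and then absorb the prefactor using $\eta\le 1/\rho(\thetab)\le 1/\tilde\beta_1(\thetab)^2$. The paper's proof is exactly this argument, with the inequality $\rho(\thetab)\ge \tilde\beta_1(\thetab)^2$ (your anticipated difficulty) stated without elaboration.
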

\begin{proof}
Note that the assumptions of Lemma \ref{lem:gen_stab} are satisfied. Moreover, as per Corollary \ref{coro:objective gradient/hessian_mainbody} the objective is Lipschitz at all iterates with parameter $\tilde\beta_1(\thetab)$ since $\forall k\in[K] : \maxnorm{\thetab_k} \le 3\|\thetab-\thetab_0\| + \maxnorm{\thetab}$. Thus, by Lemma \ref{lem:on-average} and Lemma \ref{lem:gen_stab} we have,
    \begin{align}
    \E \Big[L(\thetab_K) - \Lhat(\thetab_K)\Big] \le \frac{4}{n} \, \E\left[ \eta \, \left(\tilde\beta_1(\thetab)\right)^2 \, \Big(2 \, K \, \Lhat(\thetab) + \frac{9\|\thetab-\thetab_0\|^2}{4\eta}\Big)\right] \, .
\end{align}
Recalling the condition on step-size $\eta\le \frac{1}{\rho(\thetab)}\le\frac{1}{ (\tilde\beta_1(\thetab))^2}$ concludes the proof.
\end{proof}

\subsection{Corollary \ref{coro:gen interpol}}\label{app:D2}

\begin{coro}[Generalization loss under realizability]\label{coro:gen interpol}
Let boundedness Assumption \ref{ass:features} hold. Also let realizability assumption \ref{ass:NNR} holds almost surely over the data distribution. Fix $K \geq 1$. Assume any $H$ such that 
\begin{align} \label{eq: number of heads gen}
\sqrt{H} \geq 256 \, d \, \sqrt{T \, d} \, R^3 \, \left(3 \, \sqrt{d} \, R^2 \, \Big(3 g_0(\frac{1}{K}) + g(\frac{1}{K}) \Big)+ 1 \right) \, g_0(\frac{1}{K})^2 \, .
\end{align}
    Let the step-size satisfy $\eta \leq 1 \mini 1 / \rho(K) \mini g_0(1)^2 \mini \frac{g_0(1)^2}{\Lhat(\thetab_0)}$ where $\rho(K)$ is as defined in Corollary \ref{coro:train interpol}.
    Then the expected generalization gap at iteration $K$ of GD satisfies,
    \begin{align}
    \E\Big[L(\thetab_K) - \Lhat(\wt\thetab_K) \Big] \le \frac{17 \, g_0(\frac{1}{K})^2}{\eta \, n} \, .
\end{align}
\end{coro}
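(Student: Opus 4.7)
The plan is to specialize Theorem \ref{thm:gen appendix} to the realizable regime by a direct analogue of the reduction from Theorem \ref{thm:train appendix} to Corollary \ref{coro:train interpol}. Concretely, for each realization of the training set I would set $\varepsilon = 1/K$ in Assumption \ref{ass:NNR} to obtain a target parameter $\thetab^{(1/K)}$ satisfying $\Lhat(\thetab^{(1/K)}) \leq 1/K$, $\|\thetab^{(1/K)} - \thetab_0\| \leq g_0(1/K)$, and $\maxnorm{\thetab^{(1/K)}} \leq g(1/K)$. Because $g_0$ and $g$ are non-increasing, the head-count hypothesis \eqref{eq: number of heads gen} exactly matches the overparameterization requirement of Theorem \ref{thm:gen appendix} evaluated at $\thetab^{(1/K)}$; in particular, it is satisfied uniformly over the random training set.

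Next I would verify the step-size conditions of Theorem \ref{thm:gen appendix} at the choice $\thetab = \thetab^{(1/K)}$. Using $\Lhat(\thetab^{(1/K)}) \leq 1/K$, the condition $\eta \leq \|\thetab^{(1/K)}-\thetab_0\|^2 / (K\Lhat(\thetab^{(1/K)}))$ reduces to $\eta \leq g_0(1/K)^2$, which follows from the hypothesis $\eta \leq g_0(1)^2$ and monotonicity of $g_0$. The condition $\eta \leq \|\thetab^{(1/K)}-\thetab_0\|^2 / \Lhat(\thetab_0)$ reduces analogously to the hypothesis $\eta \leq g_0(1)^2/\Lhat(\thetab_0)$, and the smoothness cap $\eta \leq 1/\rho(\thetab^{(1/K)})$ is implied by $\eta \leq 1/\rho(K)$ by construction of $\rho(K)$. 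The bound $\eta \leq 1$ is inherited verbatim.

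Having verified all the hypotheses of Theorem \ref{thm:gen appendix}, I would invoke it at $\thetab = \thetab^{(1/K)}$ to obtain
\begin{align*}
\E\big[L(\thetab_K) - \Lhat(\thetab_K)\big] \leq \frac{4}{n}\,\E\!\left[2K\,\Lhat(\thetab^{(1/K)}) + \frac{9\,\|\thetab^{(1/K)} - \thetab_0\|^2}{4\eta}\right] \leq \frac{8}{n} + \frac{9\, g_0(1/K)^2}{\eta\, n},
\end{align*}
where the second inequality substitutes the almost-sure realizability bounds inside the expectation. To collapse to the advertised single-term bound, I would use $\eta \leq g_0(1)^2 \leq g_0(1/K)^2$, which yields $1 \leq g_0(1/K)^2/\eta$ and hence $8/n \leq 8\,g_0(1/K)^2/(\eta n)$; combining with the second summand gives the claimed constant $17$.

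The main subtlety is not the computation but the interplay with expectation: because $\thetab^{(1/K)}$ is a function of the random training set, the realizability bounds must hold \emph{almost surely} (as the hypothesis of the corollary explicitly requires, strengthening the fixed-dataset form used in Corollary \ref{coro:train interpol}) in order to push the deterministic upper bounds $\Lhat(\thetab^{(1/K)}) \leq 1/K$ and $\|\thetab^{(1/K)}-\thetab_0\| \leq g_0(1/K)$ through $\E[\cdot]$. Once this is in place, the rest of the argument is a mechanical specialization of Theorem \ref{thm:gen appendix}.
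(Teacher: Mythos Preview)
Your proposal is correct and follows essentially the same approach as the paper: pick $\thetab = \thetab^{(1/K)}$ from Assumption~\ref{ass:NNR}, verify the step-size and head-count hypotheses of Theorem~\ref{thm:gen appendix} via monotonicity of $g_0$, and then substitute the realizability bounds inside the expectation. The only cosmetic difference is that the paper bounds the term $2K\Lhat(\thetab^{(1/K)})$ directly by $2g_0(1/K)^2/\eta$ using the step-size constraint, whereas you first bound it by $2$ and then convert $8/n$ to $8g_0(1/K)^2/(\eta n)$ via $\eta \le g_0(1/K)^2$; both routes yield the same constant $17$.
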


\begin{proof}
According to Assumption \ref{ass:NNR}, for any sufficiently small $\varepsilon > 0$, there exists a $\thetab^{(\varepsilon)}$ such that $\Lhat(\thetab^{(\varepsilon)}) \leq \varepsilon$ and $\| \thetab^{(\varepsilon)} - \thetabt_0 \| = g_0(\varepsilon)$. Pick $\varepsilon = \frac{1}{K}$. Let the step-size, $\eta >0$ satisfies the assumption of Descent Lemma \ref{lem:descent}. Since $\Lhat(\thetab^\pare{1/K}) \leq \frac{1}{K}$, we have:
\begin{align*}
    \frac{\| \thetab^\pare{1/K} - \thetab_0 \|^2}{K \Lhat(\thetab^\pare{1/K})} \geq \| \thetab^\pare{1/K} - \thetab_0 \|^2 = g_0(\frac{1}{K})^2 \geq g_0(1)^2
\end{align*}
and
\begin{align*}
    \frac{\| \thetab^\pare{1/K} - \thetab_0 \|^2}{ \Lhat(\thetab_0)} = \frac{g_0(\frac{1}{K})^2}{\Lhat(\thetab_0)} \geq \frac{g_0(1)^2}{\Lhat(\thetab_0)} \, .
\end{align*}
Therefore, we can conclude that
\begin{align}
    \eta \leq g_0(1)^2 \mini \frac{g_0(1)^2}{\Lhat(\thetab_0)} \leq \frac{\|\thetab^\pare{1/K}-\thetab_0\|^2}{K\Lhat(\thetab^\pare{1/K})} \mini \frac{\|\thetab^\pare{1/K}-\thetab_0\|^2}{\Lhat(\thetab_0)} \, .
\end{align}
where in the second inequality we used the fact that $g_0(\cdot)$ is a non-increasing function. The desired result is
obtained by Theorem \ref{thm:gen} and the fact that
\begin{align*}
    K\Lhat(\thetab^\pare{1/K}) \leq \frac{\|\thetab^\pare{1/K}-\thetab_0\|^2}{\eta} = \frac{g_0(\frac{1}{K})^2}{\eta} \, .
\end{align*}
\end{proof}

\subsection{From \good initialization to realizability}
The proposition below shows that starting from a \good initialization we can always find $\thetab^\pare{\eps}$ satisfying the realizability Assumption \ref{ass:NNR} provided the number of heads is large enough.

\begin{propo}[From \good initialization to realizability] \label{propo:init to real}
Suppose \good initialization $\thetab_0$ as per Definition \ref{def:good init}. Fix any $1 \geq \varepsilon > 0$ and let 
    \begin{align} \label{eq: realizability of NTK-separable head}
        \sqrt{H} \geq \frac{5 \, d \, \sqrt{T \, d} \, R^3 \, \Bnorm} {\Bphi} \cdot \left( 3 \sqrt{d} \, R^2 + 1 \right) \cdot \left(\frac{2\Bphi + \log(1/\varepsilon)}{\gamma} \right)^2 \cdot \left(1 \maxi \frac{2\Bphi + \log(1/\varepsilon)}{\gamma} \right) \, .
    \end{align} 
    Then, the realizability Assumption \ref{ass:NNR} holds with $g_0(\varepsilon) = \frac{1}{\gamma} \, \left( 2\Bphi + \log(1/\varepsilon) \right)$ and $g(\eps)=\Bnorm+g_0(\eps)$.
\end{propo}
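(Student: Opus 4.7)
The plan is a standard NTK-style argument. Define $\thetabt_{(\eps)} := \thetabt^{(0)} + \lambda\thetabt_\star$ with $\lambda = g_0(\eps)/\sqrt{2} = (2\Bphi + \log(1/\eps))/(\sqrt{2}\gamma)$ (up to absolute constants as needed to match the statement). Properties (ii) and (iii) follow immediately from this construction: (ii) is $\|\thetabt_{(\eps)}-\thetabt^{(0)}\| = \lambda\|\thetabt_\star\| = \lambda\sqrt{2} = g_0(\eps)$ by P3's normalization; (iii) follows by the per-head triangle inequality, since $\|\thetab_h^{(\eps)}\| \leq \|\thetab_h^{(0)}\| + \|\thetab_h^{(\eps)}-\thetab_h^{(0)}\| \leq \Bnorm + \|\thetabt_{(\eps)}-\thetabt^{(0)}\| \leq \Bnorm + g_0(\eps) = g(\eps)$, using P1.

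To verify (i), I would apply Taylor's theorem with second-order Lagrange remainder to the scalar function $\lambda \mapsto y_i\Phit(\X_i; \thetabt^{(0)} + \lambda\thetabt_\star)$, yielding for some $\wt\thetabt$ on the segment $[\thetabt^{(0)}, \thetabt_{(\eps)}]$:
\[
y_i\Phit(\X_i;\thetabt_{(\eps)}) = y_i\Phit(\X_i;\thetabt^{(0)}) + \lambda\, y_i\inp{\nabla\Phit(\X_i;\thetabt^{(0)})}{\thetabt_\star} + \tfrac{\lambda^2}{2}\, y_i\inp{\thetabt_\star}{\nabla^2\Phit(\X_i;\wt\thetabt)\thetabt_\star}.
\]
By P2 the first term is at least $-\Bphi$, by P3 the second is at least $\lambda\gamma$, and Cauchy--Schwarz combined with $\|\thetabt_\star\|^2 = 2$ bounds the third term by $\lambda^2 \|\nabla^2\Phit(\X_i;\wt\thetabt)\|$ in absolute value.

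The remaining step bounds the Hessian spectral norm via Lemma \ref{lem:multihead}: $\|\nabla^2\Phit(\X_i;\wt\thetabt)\| \leq \frac{2d\sqrt{Td}R^3}{\sqrt{H}}\,(3\sqrt{d}R^2\,\max_h\|\wt\Ub_h\|_F + 1)$. Because $\wt\thetabt$ lies on the segment, each per-head norm satisfies $\|\wt\Ub_h\|_F \leq \|\wt\thetab_h\| \leq \Bnorm + g_0(\eps)$ by P1 and the triangle inequality. Rewriting $\Bnorm + g_0(\eps)$ as $O(\Bnorm(1\maxi g_0(\eps)))$ (using $\Bnorm \geq 1$), the quadratic remainder is bounded by $O\!\left(g_0(\eps)^2 \cdot d^{3/2}R^5\,\Bnorm(1\maxi g_0(\eps))/\sqrt{H}\right)$. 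The stated lower bound \eqref{eq: realizability of NTK-separable head} on $\sqrt{H}$ is exactly what ensures this quantity is at most $\Bphi$. Combining the three contributions gives $y_i\Phit(\X_i;\thetabt_{(\eps)}) \geq \log(1/\eps)$ uniformly in $i$, whence $\ell(y_i\Phit(\X_i;\thetabt_{(\eps)})) = \log(1+e^{-y_i\Phit}) \leq e^{-y_i\Phit} \leq \eps$, and so $\Lhat(\thetabt_{(\eps)}) \leq \eps$, verifying (i).

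The main obstacle is careful bookkeeping of constants, in particular tracking the $\sqrt{2}$ factor arising from $\|\thetabt_\star\|=\sqrt{2}$ through both the distance bound (ii) and the margin contribution $\lambda\gamma$, as well as verifying that the combined factor $\Bnorm(1\maxi g_0(\eps))/\Bphi$ in the bound on $\sqrt{H}$ emerges exactly from bounding the Hessian on the segment. The rest is mechanical: an application of Taylor's theorem, the P3 margin inequality applied termwise, and direct substitution of Lemma \ref{lem:multihead}.
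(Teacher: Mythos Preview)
Your proposal is correct and mirrors the paper's proof: both pick $\thetabt_{(\eps)}=\thetabt^{(0)}+\lambda\thetabt_\star$, expand $y_i\Phit$ to second order, use P2 and P3 for the zeroth and first-order terms, and bound the remainder via the multi-head Hessian bound (Lemma~\ref{lem:multihead}) together with a per-head norm estimate on the segment. Your flagged $\sqrt{2}$ bookkeeping is exactly the only subtlety—the paper takes $\lambda=g_0(\eps)$ (so the linear term gives $2\Bphi+\log(1/\eps)$ cleanly) and absorbs the resulting $\sqrt{2}$ into $g_0$, while you normalize the distance first; either convention works up to absolute constants.
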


\begin{proof}
    By Taylor expansion there exists $\thetab' \in [\thetab, \thetab_0]$ such that,
    \begin{align} \label{eq: taylor propo}
        y_i\Phi(\X_i; \thetab) = y_i\Phi(\X_i; \thetab_0) + y_i\left\langle\nabla \Phi \left(\X_i; \thetab_0 \right), \thetab - \thetab_0 \right\rangle + \frac{1}{2}y_i\left\langle \thetab - \thetab_0, \nabla^2 \Phi(\X_i; \thetab') \left(\thetab - \thetab_0 \right)\right\rangle\,.
    \end{align}
    Pick 
    \[
    \thetab := \thetab^\pare{\varepsilon} = \thetab_0 + \frac{2\Bphi + \log(1/\varepsilon)}{\gamma }\,\thetab_\star \, .
    \]    
    Substituting this  in \eqref{eq: taylor propo} and using that $\thetab_0$ is a \good initialization, we obtain for all $i\in[n]$:
    \begin{align}\nn
        y_i\Phi(\X_i; \thetab) &\geq - \abs{y_i\Phi(\X_i; \thetab_0)} + \left(2\Bphi + \log(1/\varepsilon)\right) - \frac{1}{2} \| \nabla^2 \Phi(\X_i; \thetab') \|_2 \, \| \thetab-\thetab_0 \|^2 \\
        &\geq - \Bphi + \left(2\Bphi + \log(1/\varepsilon)\right) - \frac{1}{2} \| \nabla^2 \Phi(\X_i; \thetab') \|_2 \, \left( \frac{2\Bphi + \log(1/\varepsilon)}{\gamma} \right)^2 \, . \label{eq:taylor almost}
    \end{align}
    To continue, we show in Lemma \ref{lem:multihead} in the appendix that $\| \nabla^2 \Phi(\X_i; \thetab') \|_2\leq \beta_3(\thetab')/\sqrt{H}$ where recall 
    $\beta_3(\thetab) := 2 \, d \, \sqrt{T \, d} \, R^3 \, \left(3 \, \sqrt{d} \, R^2 \, \maxnorm{\thetab} + 1 \right) \,$
    defined in Corollary \ref{coro:objective gradient/hessian_mainbody}. Now, note that
    \begin{align*}
         \beta_3(\thetab') &\leq \beta_3(\thetab^\pare{\varepsilon}) + \beta_3(\thetab_{0}) \leq \frac{2\Bphi + \log(1/\varepsilon)}{\gamma} \cdot \beta_3(\thetab_\star) + 2\beta_3(\thetab_{0})   \\
         &\leq 2\sqrt{2} \, d \, \sqrt{T \, d} \, R^3 \, \Bnorm \left( 3 \, \sqrt{d} \, R^2 + 1 \right) \cdot \left(2 + \frac{2\Bphi + \log(1/\varepsilon)}{\gamma }\right) \\
         &\leq 10 \, d \, \sqrt{T \, d} \, R^3 \, \Bnorm \left( 3 \, \sqrt{d} \, R^2 + 1 \right) \cdot \left(1 \maxi \frac{2\Bphi + \log(1/\varepsilon)}{\gamma} \right) \, ,
    \end{align*}
    where the first two inequalities follow by triangle inequality and the inequality after those follows because $\maxnorm{\thetab_\star}\leq\|\thetab_\star\|_2\leq \sqrt{2}$, $1\leq B_2$ and also $\maxnorm{\thetab_{0}}\leq\Bnorm$ by \good initialization assumption. 
    
    Plugging in this bound in \eqref{eq:taylor almost} and using the assumption on $H$ yields that $y_i\Phi(\X_i; \thetab)\geq \log(1/\eps)$ for all $i\in[n].$
    This in turn implies that $\Lhat_i(\thetab) := \ell(y_i\Phi(\X_i; \thetab)) \leq \log(1+\varepsilon) \leq \varepsilon$, and thus $\Lhat(\thetab) \leq \varepsilon$ as desired. Furthermore, note by definition of $\thetab^\pare{\eps}$ that $g_0(\eps)=\frac{2\Bphi + \log(1/\varepsilon)}{\gamma }$ and $g(\eps)=\Bnorm+g_0(\eps)\,.$ For the latter, we used triangle inequality and the rough bound $\maxnorm{\thetab_\star}\leq\|\thetab_\star\|_2\leq \sqrt{2}$\,. This completes the proof.
\end{proof}

\subsection{Proof of Corollary \ref{cor:general with good}}
We restate the corollary here for convenience, this time also including exact constants.
\begin{coro}[Restatement of Cor. \ref{cor:general with good}] \label{cor:general with good appendix}
Suppose \good initialization $\thetab_0$ and let 
\begin{align*}
    \sqrt{H} \geq 256 \, d \, \sqrt{T \, d} \, R^3 \, \Bnorm \, \left( 3 \, \sqrt{d} \, R^2 \, \bigg(4 \, g_0(\frac{1}{K}) +  \, \Bnorm \bigg) + 1 \right) \, g_0(\frac{1}{K})^2 \, ,
\end{align*}
where $g_0(\frac{1}{K}) = \frac{2\Bphi + \log(K)}{\gamma}$. Further, denote for convenience
\begin{align*}
    \alpha(K) := 3 \, d \, \sqrt{d} \, R^2 \, \left[3 \, \sqrt{T} \, R^3 \left(4 g_0(\frac{1}{K}) + \Bnorm \right) +  2\, \sqrt{T} \, R \right]
\end{align*}
and 
$
    \rho(K) = \left(\frac{2 \, d \, \sqrt{T \, d} \, R^3}{\sqrt{H}} +  \frac{T \, R^2}{4}\right)\, \alpha(K)^2.
$
Then, for any fixed step-size $$\eta \leq 1 \mini 1 / \rho(K) \mini \frac{4\Bphi^2}{\gamma^2} \cdot \frac{1}{\log(1+e^{\Bphi})}\, ,$$
the following bounds hold:
\begin{align}
    &\Lhat(\thetab_K)\leq \frac{2}{K} + \frac{5 \left( 2\Bphi + \log(K) \right)^2}{4 \gamma^2 \, \eta \, K} \, \quad\text{and}\quad
    \E\big[L(\thetab_K)-\Lhat(\thetab_K)\big] \le \frac{17 \left( 2\Bphi + \log(K) \right)^2}{\gamma^2 \, \eta \, n} \,.
\end{align}
\end{coro}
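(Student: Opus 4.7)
The plan is to invoke Theorems \ref{thm:train appendix} and \ref{thm:gen appendix} with the target parameter $\thetab := \thetab^{(1/K)}$ produced by Proposition \ref{propo:init to real} applied at $\varepsilon = 1/K$. First, I would note that the hypothesis on $\sqrt{H}$ in the corollary dominates the one in \eqref{eq: realizability of NTK-separable head} (using $B_2 \geq 1$ to absorb constants), so Proposition \ref{propo:init to real} yields $\thetab^{(1/K)}$ satisfying $\Lhat(\thetab^{(1/K)}) \leq 1/K$, $\|\thetab^{(1/K)}-\thetab_0\| \leq g_0(1/K) = (2B_\Phi+\log K)/\gamma$, and $\maxnorm{\thetab^{(1/K)}} \leq B_2 + g_0(1/K)$.

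Next, I would verify the premises of the two theorems for this choice. For the heads requirement in Theorem \ref{thm:gen appendix} (which subsumes that of Theorem \ref{thm:train appendix}), I substitute the bounds on $\|\thetab-\thetab_0\|$ and $\maxnorm{\thetab}$ above into $256 \, d \sqrt{Td} R^3 (3\sqrt{d}R^2(3\|\thetab-\thetab_0\| + \maxnorm{\thetab}) + 1) \|\thetab-\thetab_0\|^2$, yielding a bound of $256 \, d \sqrt{Td} R^3 (3\sqrt{d}R^2(4 g_0(1/K) + B_2) + 1) g_0(1/K)^2$, which is dominated by the corollary's hypothesis on $\sqrt{H}$ (the extra factor $B_2$ gives slack). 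For the step-size conditions $\eta \leq 1 \mini 1/\rho(\thetab) \mini \|\thetab-\thetab_0\|^2/(K\Lhat(\thetab)) \mini \|\thetab-\thetab_0\|^2/\Lhat(\thetab_0)$, observe that $\rho(\thetab) \leq \rho(K)$ by definition, $\|\thetab-\thetab_0\|^2/(K\Lhat(\thetab)) \geq g_0(1/K)^2 \geq 1$ (so the $\eta\leq 1$ clause subsumes it), and since $|\Phi(\X_i;\thetab_0)|\leq B_\Phi$ we get $\Lhat(\thetab_0)\leq \log(1+e^{B_\Phi})$, giving $\|\thetab-\thetab_0\|^2/\Lhat(\thetab_0) \geq g_0(1)^2/\log(1+e^{B_\Phi}) = 4B_\Phi^2/(\gamma^2\log(1+e^{B_\Phi}))$. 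Thus the corollary's hypothesis on $\eta$ suffices.

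Applying Theorem \ref{thm:train appendix} with $\thetab = \thetab^{(1/K)}$ then gives
\[
\Lhat(\thetab_K) \leq 2\Lhat(\thetab^{(1/K)}) + \frac{5\|\thetab^{(1/K)}-\thetab_0\|^2}{4\eta K} \leq \frac{2}{K} + \frac{5\,(2B_\Phi+\log K)^2}{4\gamma^2\eta K},
\]
which is the advertised training bound. Similarly, applying Theorem \ref{thm:gen appendix} yields
\[
\E\big[L(\thetab_K) - \Lhat(\thetab_K)\big] \leq \frac{4}{n}\Big(2K\Lhat(\thetab^{(1/K)}) + \frac{9\|\thetab^{(1/K)}-\thetab_0\|^2}{4\eta}\Big) \leq \frac{8}{n} + \frac{9\,g_0(1/K)^2}{\eta\, n}.
\]
To combine the two terms, I would use that $\eta \leq 4B_\Phi^2/(\gamma^2\log(1+e^{B_\Phi})) \leq g_0(1)^2 \leq g_0(1/K)^2$, because $\log(1+e^{B_\Phi}) \geq 1$ when $B_\Phi \geq 1$. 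This implies $\frac{8}{n} \leq \frac{8 g_0(1/K)^2}{\eta n}$, and summing gives the stated bound $\tfrac{17(2B_\Phi+\log K)^2}{\gamma^2\eta n}$.

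The proof is essentially bookkeeping: most of the work lies in carefully tracking the substitutions $\|\thetab-\thetab_0\| \leq g_0(1/K)$ and $\maxnorm{\thetab} \leq B_2+g_0(1/K)$ through the bounds of Theorems \ref{thm:train appendix} and \ref{thm:gen appendix}, and verifying that the step-size clauses involving $K\Lhat(\thetab)$ and $\Lhat(\thetab_0)$ are discharged by the concrete bounds $\Lhat(\thetab^{(1/K)})\leq 1/K$ and $\Lhat(\thetab_0)\leq \log(1+e^{B_\Phi})$. The only mildly nontrivial step is the final consolidation of the generalization bound, which hinges on the observation that the step-size hypothesis forces $\eta \leq g_0(1/K)^2$, allowing the constant $\tfrac{8}{n}$ term to be absorbed.
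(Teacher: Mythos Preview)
Your approach is essentially identical to the paper's: apply Proposition \ref{propo:init to real} at $\varepsilon=1/K$ to produce the target $\thetab^{(1/K)}$, then feed it into Theorems \ref{thm:train appendix} and \ref{thm:gen appendix}. The paper routes this through Corollaries \ref{coro:train interpol} and \ref{coro:gen interpol}, but that is only a packaging difference.

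There is, however, a genuine slip in your verification of the step-size clauses. You write $\|\thetab-\thetab_0\|^2/(K\Lhat(\thetab)) \geq g_0(1/K)^2 \geq 1$ and claim the $\eta\leq 1$ clause subsumes it. The inequality $g_0(1/K)\geq 1$ is not guaranteed: nothing in Definition \ref{def:good init} forces $\gamma \leq 2B_\Phi+\log K$. The paper instead (and you do too, later in the proof) uses the chain $\eta \leq \tfrac{4B_\Phi^2}{\gamma^2\log(1+e^{B_\Phi})}\leq \tfrac{4B_\Phi^2}{\gamma^2}=g_0(1)^2\leq g_0(1/K)^2$, which directly discharges the clause without assuming $g_0(1/K)\geq 1$. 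A second, related point: to get $\|\thetab-\thetab_0\|^2/(K\Lhat(\thetab))\geq g_0(1/K)^2$ and $\|\thetab-\thetab_0\|^2/\Lhat(\thetab_0)\geq g_0(1)^2/\log(1+e^{B_\Phi})$ you need a \emph{lower} bound on $\|\thetab^{(1/K)}-\thetab_0\|$, whereas you only cite the upper bound $\leq g_0(1/K)$ from the realizability statement. The paper (see the proof of Corollary \ref{coro:train interpol}) uses the explicit construction $\thetab^{(1/K)}=\thetab_0+g_0(1/K)\,\thetabt_\star$ from Proposition \ref{propo:init to real}, which gives $\|\thetab^{(1/K)}-\thetab_0\|$ exactly (up to the $\|\thetabt_\star\|=\sqrt{2}$ factor, which only helps). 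Once you invoke that construction explicitly and replace the $g_0(1/K)^2\geq 1$ claim by the chain above, your argument goes through and matches the paper's.
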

\begin{proof}
First, we prove that the given assumption on $H$ satisfies the conditions of Proposition \ref{propo:init to real} for $\varepsilon = \frac{1}{K}$.
\begin{align*}
    \sqrt{H} &\geq 256 \, d \, \sqrt{T \, d} \, R^3 \, \Bnorm \, \left( 3 \, \sqrt{d} \, R^2 \, \bigg(4 \, g_0(\frac{1}{K}) +  \, \Bnorm \bigg) + 1 \right) \, g_0(\frac{1}{K})^2 \\ &= 256 \, d \, \sqrt{T \, d} \, R^3 \, \Bnorm \, \left( 3 \, \sqrt{d} \, R^2 \, \bigg(3 \, g_0(\frac{1}{K}) +  \, g(\frac{1}{K}) \bigg) + 1 \right) \, g_0(\frac{1}{K})^2 \, .
\end{align*}
If $1 > g_0(\frac{1}{K})$,
\begin{align*}
    \sqrt{H} &\geq 256 \, d \, \sqrt{T \, d} \, R^3 \, \Bnorm \, \left( 3 \, \sqrt{d} \, R^2 \, \bigg(4 \, g_0(\frac{1}{K}) + \, \Bnorm \bigg) + 1 \right) \, g_0(\frac{1}{K})^2 \\ &\geq 5 \, d \, \sqrt{T \, d} \, R^3 \, \Bnorm \, \left( 3 \, \sqrt{d} \, R^2 + 1 \right) \, g_0(\frac{1}{K})^2 \, .
\end{align*}
It means that the condition of Proposition \ref{propo:init to real} on $\sqrt{H}$ is satisfied. Moreover, if $1 \leq g_0(\frac{1}{K})$,
\begin{align*}
    \sqrt{H} &\geq 256 \, d \, \sqrt{T \, d} \, R^3 \, \Bnorm \, \left( 3 \, \sqrt{d} \, R^2 \, \bigg(4 \, g_0(\frac{1}{K}) +  \, \Bnorm \bigg) + 1 \right) \, g_0(\frac{1}{K})^2 \\ &\geq 256 \, d \, \sqrt{T \, d} \, R^3 \, \Bnorm \, \left(12 \, \sqrt{d} \, R^2 \, g_0(\frac{1}{K}) \right) \, g_0(\frac{1}{K})^2 \\ &\geq 5 \, d \, \sqrt{T \, d} \, R^3 \, \Bnorm \, \left( 3 \, \sqrt{d} \, R^2 + 1 \right) \, g_0(\frac{1}{K})^3 \, ,
\end{align*}
and again the condition of Proposition \ref{propo:init to real} on $\sqrt{H}$ is satisfied. Then, we can apply the results of Corollaries \ref{coro:train interpol} and \ref{coro:gen interpol} for a fixed $K$ which satisfies $K \geq 1$. Note that
\begin{align*}
    g_0(1) = \frac{2\Bphi}{\gamma} \, , \quad \text{and} \quad \Lhat(\thetab_0) \leq \log(1 + e^{\Bphi}) \, .
\end{align*}
Thus, the condition on step-size simplifies to $\eta \leq 1 \mini 1 / \rho(K) \mini \frac{4\Bphi^2}{\gamma^2} \cdot \frac{1}{\log(1+e^{\Bphi})}\,$. This completes the proof. 
\end{proof}

\section{Proofs for Section \ref{sec:main dm1}}

\subsection{Useful facts}
\begin{fact}\label{propo:subgn}
    Let $\x\in\R^d$ be subgaussian vector with $\tsub{\x}\leq K$. Then, for any $\delta\in(0,1)$ and absolute constant $C>0$ it holds with probability at least $1-\delta$ that 
    $\|\x\|_2 \leq CK\big(\sqrt{d}+\sqrt{\log(1/\delta)}\big)$\,.
\end{fact}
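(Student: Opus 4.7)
The plan is to reduce the high-dimensional tail bound to a one-dimensional subgaussian tail bound via a covering argument on the unit sphere. Recall that $\|\x\|_2 = \sup_{\vb \in S^{d-1}} \langle \vb, \x\rangle$, and by the definition of the subgaussian norm of a vector, $\langle \vb, \x\rangle$ is a scalar subgaussian random variable with $\tsub{\langle \vb, \x\rangle} \leq K$ for every unit vector $\vb$. Thus for each fixed $\vb$, a standard scalar subgaussian tail bound gives $\Pro(\langle \vb, \x\rangle \geq t) \leq 2\exp(-c t^2 / K^2)$ for a universal constant $c>0$.

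First, I would fix a $1/2$-net $\Nc$ of the unit sphere $S^{d-1}$ with $|\Nc| \leq 9^d$ (Vershynin's standard covering estimate). A volumetric argument then shows that for any $\vb \in S^{d-1}$, writing $\vb = \vb_0 + (\vb - \vb_0)$ with $\vb_0 \in \Nc$ closest to $\vb$, one has $\|\x\|_2 \leq 2 \max_{\vb_0 \in \Nc} \langle \vb_0, \x\rangle$. This reduces bounding $\|\x\|_2$ to bounding a maximum of at most $9^d$ scalar subgaussians.

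Next, I would apply the scalar subgaussian tail bound together with a union bound over $\Nc$: with probability at least $1 - \delta$,
\begin{align*}
\max_{\vb_0 \in \Nc} \langle \vb_0, \x\rangle \;\leq\; \frac{K}{\sqrt{c}} \sqrt{\log(2|\Nc|/\delta)} \;\leq\; \frac{K}{\sqrt{c}} \sqrt{d \log 9 + \log(2/\delta)}\,.
\end{align*}
Combining with the net approximation and using the elementary inequality $\sqrt{a+b} \leq \sqrt{a} + \sqrt{b}$ to separate the two terms yields
\begin{align*}
\|\x\|_2 \;\leq\; 2 \, \frac{K}{\sqrt{c}} \left( \sqrt{d\log 9} + \sqrt{\log(2/\delta)} \right) \;\leq\; C K \left( \sqrt{d} + \sqrt{\log(1/\delta)} \right),
\end{align*}
after absorbing the universal factors $\sqrt{\log 9}$, $\sqrt{c}$, and $\sqrt{\log 2}$ into a single absolute constant $C>0$.

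There is no real obstacle here; the result is textbook (e.g., Vershynin's \emph{High-Dimensional Probability}, Thm.~3.1.1). The only minor bookkeeping step is handling the $\sqrt{\log 2}$ term coming from the union-bound prefactor and verifying that all constants can indeed be absorbed into a single $C$ independent of $d, K, \delta$. If one preferred to avoid the covering argument entirely, an alternative route would be to observe that $\|\x\|_2^2 - \E\|\x\|_2^2$ is subexponential with parameter $\order{K^2}$ and apply Bernstein's inequality together with $\E\|\x\|_2^2 \lesssim K^2 d$; this yields the same bound but requires slightly more machinery, so the net argument is preferable.
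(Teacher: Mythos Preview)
The paper states this as a \emph{Fact} without proof; it is simply cited as a standard concentration result for subgaussian vectors. Your $\varepsilon$-net argument is correct and is precisely the textbook proof (Vershynin, \emph{High-Dimensional Probability}, Thm.~3.1.1), so there is nothing to compare against.
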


\begin{fact}\label{propo:Hoeffding}
    Suppose $X_h, h\in[H]$ are \iid realizations of random variable $X$ for which $E[X]=\mu$ and $|X|\leq B$ almost surely. Then, for any $\delta\in(0,1)$, with probability at least $1-\delta$ it holds that
    \[
    \abs{\frac{1}{H}\sum_{h\in[H]}{X_h}-\mu}\leq 2B\sqrt{\frac{\log(1/\delta)}{2H}}\,.
    \]
\end{fact}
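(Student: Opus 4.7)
The plan is to invoke the standard Chernoff--Hoeffding argument for sums of bounded \iid random variables. The high-level strategy is: first control the moment generating function (MGF) of each centered summand; then use independence to tensorize; apply an exponential Markov inequality; optimize over the Chernoff dummy parameter; and finally invert the resulting tail to read off the claimed deviation.

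In more detail, I would first center the summands by setting $Y_h := X_h - \mu$, which are \iid, zero-mean, and, since $|X_h|\leq B$ almost surely, lie in an interval of length at most $2B$. The key technical input is Hoeffding's lemma, which yields $\E[\exp(s Y_h)] \leq \exp(s^2 B^2 / 2)$ uniformly in $s\in\R$. By independence the MGF of the sum factorizes, so $\E[\exp(s\sum_{h\in[H]} Y_h)] \leq \exp(H s^2 B^2/2)$. An application of Markov's inequality to this MGF, followed by optimization of the dummy parameter $s$ (the optimizer being $s = t/B^2$), produces the sub-Gaussian tail $\Pro\!\big(\tfrac{1}{H}\sum_h Y_h \geq t\big) \leq \exp(-H t^2/(2B^2))$. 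A symmetric argument handles the lower tail, and a union bound delivers the two-sided concentration statement.

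Finally, I would invert the tail bound by setting the right-hand side equal to $\delta$ and solving for $t$, which yields a deviation of order $B\sqrt{\log(1/\delta)/H}$ up to absolute constants, matching the advertised bound (where the precise constant in the stated inequality absorbs the factor of $2$ incurred by the union bound).

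The ``main obstacle'' here is really only one of bookkeeping: Hoeffding's lemma is the sole nontrivial ingredient, and it is entirely standard (proved via convexity of $e^{sy}$ on the bounding interval, or via a second-order Taylor expansion of the log-MGF around $s=0$). Nothing specific to the attention-model setting of this paper enters, which is exactly why the authors quote the result as a black box --- it is invoked downstream to concentrate the empirical multi-head NTK margin in Proposition~\ref{propo:good init dm1} over the \iid random signs $\alpha_h\sim\Unif{\pm1}$.
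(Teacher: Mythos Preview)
Your proposal is correct and is exactly the standard Chernoff--Hoeffding argument; the paper itself does not prove this statement at all, simply quoting it as a known ``Fact'' and using it as a black box. Your sketch (Hoeffding's lemma for the MGF of each centered bounded summand, tensorize by independence, exponential Markov, optimize over $s$, union bound for the two-sided tail, invert) is the canonical proof, so there is nothing to compare.
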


\subsection{Proof of Proposition \ref{propo:good init dm1}}
Recall 
\begin{align*}
    \Phit(\X; \thetabt) = \frac{1}{\sqrt{H}} \sum_{h\in[H]} \Phi_h(\X; \W_h, \Ub_h) =  \frac{1}{\sqrt{H}}\sum_{h\in[H]}\inp{\Ub_h}{\sft{\X\W_h\X^\top}\X}\,, 
\end{align*}
where $\thetabt =  \concat({\thetab_1, \thetab_2, . . ., \thetab_H})$ denotes the trainable parameters. After completing the first phase of training, the initialization for the second phase is as follows for all $h\in[H]$: 
\begin{align} \label{eq:init}
    \thetab_h^\pare{1} = \concat(\Ub_{h}^\pare{1} , \W_{h}^\pare{1}) \quad : \quad \W_{h}^{\pare{1}} = \mathbf{0}, \quad  \Ub_{h}^{\pare{1}} = \alpha_h \Bigg(\frac{\zeta}{2} \ones_T \ub_\star^{\top} + \ones_T\pb^{\top} \Bigg) \, , 
\end{align}
where recall that $\alpha_h\sim\Unif{\pm1}$ and from Lemma \ref{lem:first phase} it holds with probability $1-\delta$ that $\|\pb\|\leq P$ where the parameter $P$ is defined in \eqref{eq:p bound}. Onwards, we condition on this good event.

We prove the three properties $\textbf{P1}, \textbf{P2}$, and $\textbf{P3}$ in the order stated below.


\subsubsection{Proof of \textbf{P1}: Bounded norm per head}

This is straightforward by noting that for all $h\in[H]$:
\begin{align*}
    \|\thetab_h^\pare{1}\|_2 = \|\Ub_h^\pare{1}\|_F \leq \frac{\zeta}{2}\sqrt{T}\|\ub_\star\|_2+\sqrt{T}\|\pb\|_2 \leq \frac{\zeta}{2}\sqrt{T}\sqrt{2}S+\sqrt{T}P\,.
\end{align*}

\subsubsection{Proof of \textbf{P2}: Bounded initialization}
\begin{lemma}[Initialization bound]
\label{lem:init bound}
Let any $\X$ sampled from \ref{model1} and satisfying Assumption \ref{ass:data}. Given the initialization in \eqref{eq:init}, for any  $\delta \in (0,1)$ it holds with probability at least $1-\delta$ that
\end{lemma}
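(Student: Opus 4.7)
The plan is as follows. Since $\W_h^{(1)} = \mathbf{0}$ by \eqref{eq:init}, the argument of every softmax is the zero matrix, so $\sft{\X \W_h^{(1)} \X^\top} = \tfrac{1}{T}\ones_T\ones_T^\top$ is uniform, and hence $\sft{\X \W_h^{(1)} \X^\top}\X = \ones_T \bar{\x}^\top$ with $\bar{\x} := \tfrac{1}{T}\X^\top\ones_T$ denoting the token mean. Plugging in the form $\Ub_h^{(1)} = \alpha_h \ones_T\big(\tfrac{\zeta}{2}\ub_\star + \pb\big)^\top$ from \eqref{eq:init} and simplifying the trace (using $\ones_T^\top\ones_T = T$ and cyclicity), a direct calculation yields
\[
\Phi_h(\X; \thetab_h^{(1)}) = \inp{\Ub_h^{(1)}}{\ones_T \bar{\x}^\top} = \alpha_h \Big(\tfrac{\zeta}{2}\ub_\star + \pb\Big)^{\!\top} \sum_{t\in[T]} \x_t =: \alpha_h\, q(\X),
\]
where crucially $q(\X)$ is deterministic in $\alpha_h$ and identical across heads.

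Next, I would bound $|q(\X)|$ deterministically using the data assumptions. By the orthogonality and equal-energy conditions in Assumption \ref{ass:data}, $\|\ub_\star\| = \|\mubp - \mubn\| = \sqrt{2}S$, and hence $\big\|\tfrac{\zeta}{2}\ub_\star + \pb\big\| \le \tfrac{\zeta}{\sqrt{2}}S + P \le S + P$ (since $\zeta\in(0,1)$). Combined with $\big\|\sum_{t}\x_t\big\| \le TR$ from Assumption \ref{ass:features} and Cauchy--Schwarz, this gives $|q(\X)| \le TR(S+P)$ almost surely, for any realization of the data conditioned on the event $\|\pb\|\le P$ of Lemma \ref{lem:first phase}.

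Finally, summing over heads yields $\Phit(\X;\thetabt^{(1)}) = q(\X)\cdot \tfrac{1}{\sqrt{H}}\sum_{h\in[H]}\alpha_h$, so the only remaining randomness is in the \iid Rademacher scalars $\alpha_h$, which are independent of the data. Applying Hoeffding's inequality (Fact \ref{propo:Hoeffding}) to the $\alpha_h$ gives $\big|\tfrac{1}{\sqrt{H}}\sum_{h\in[H]}\alpha_h\big| \le \sqrt{2\log(2/\delta')}$ with probability at least $1-\delta'$. Because this scalar sum does not depend on the sample index $i$, a single high-probability event simultaneously controls $|\Phit(\X_i;\thetabt^{(1)})|$ for every $i\in[n]$; taking $\delta'=\delta/n$ (harmless, since it only inflates the logarithm and matches the statement in Proposition \ref{propo:good init dm1}) and combining the two bounds yields
\[
\max_{i\in[n]}\big|\Phit(\X_i;\thetabt^{(1)})\big| \le TR(S+P)\sqrt{2\log(n/\delta)},
\]
as required. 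The main obstacle is essentially absent here: the vanishing $\W_h^{(1)}$ collapses the softmax to uniform, so no intricate attention-matrix analysis is needed, and the proof reduces to a single Hoeffding bound on a scalar Rademacher sum.
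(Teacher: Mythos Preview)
Your proposal is correct and takes essentially the same approach as the paper: both observe that $\W_h^{(1)}=\mathbf{0}$ collapses the softmax to the uniform map, reduce each head's output to $\alpha_h$ times a data-dependent scalar bounded by $TR(S+P)$, and then apply Hoeffding over the \iid Rademacher variables $\{\alpha_h\}$. The only deviation is your closing union bound over the $n$ samples, which the lemma (stated for a single $\X$ with $\sqrt{2\log(1/\delta)}$) does not ask for---and which, as you yourself note, is unnecessary since the scalar $\tfrac{1}{\sqrt{H}}\sum_h\alpha_h$ is sample-independent; for the lemma itself simply take $\delta'=\delta$.
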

\[
|\Phit(\X; \thetabt^\pare{1})| \leq T\,R (S + P)\sqrt{2\log(1/\delta)}\,.
\]
\begin{proof}
\begin{align*}
    \Phit(\X; \thetabt^\pare{1}) = \frac{1}{\sqrt{H}} \sum_{h\in[H]} \Phi_h(\X; \thetab_h^\pare{1}) = \frac{1}{\sqrt{H}} \sum_{h\in[H]} \inp{\Ub_h^\pare{1}}{\sft{\X\W_{h}^\pare{1}\X^\top}\X}.
\end{align*}
Using the initialization in \eqref{eq:init} and recalling $\sft{\mathbf{0}}=\ones_T \ones_T^\top / T$, we have  
\begin{align}\label{eq:init bound hoeffding}
\frac{1}{\sqrt{H}} \sum_{h\in[H]} \inp{\Ub_h^\pare{1}}{\sft{\X\W_{h}^\pare{1}\X^\top}\X} &= \frac{1}{\sqrt{H}}\sum_{h\in[H]} \frac{\alpha_h}{T} \inpb{\Ub_h^\pare{1}}{\ones_T \ones_T^\top \X} =: \frac{1}{\sqrt{H}}\sum_{h\in[H]}X_h \,.
\end{align}
Note for each $h\in[H]$ that $X_h$ in \eqref{eq:init bound hoeffding} depends only on the random variable $\alpha_h$. Recall that $\alpha_h, h\in[H]$ are \iid $\Unif{\pm1}$. Thus, $\{X_h\}_{h\in[H]}$ are \iid with $0$ mean as $\E \alpha = 0$. Further, note that
\begin{align*}
    \abs{X_h} &= \abs{\frac{\alpha}{T} \inpb{\Ub^\pare{1}}{\ones_T\ones_T^\top \X}} = \abs{\frac{\alpha}{T} \inpb{\frac{\zeta}{2}\ones_T \ub_\star^\top + \ones_T\pb^\top}{\ones_T\ones_T^\top \X}} \\ &\leq \frac{1}{T}\bigg( \frac{\zeta}{2}\sqrt{T}\norm{\ub_\star}+\sqrt{T}\norm{\pb}\bigg)\sqrt{T}\norm{\sum_{t \in [T]}\x_t} \leq \frac{1}{T}\bigg(\frac{\zeta}{2}\sqrt{T}\norm{\ub_\star}+\sqrt{T}\norm{\pb}\bigg)\sqrt{T}RT\leq TR(S+P).
\end{align*} 
Thus, using Hoeffding's inequality (see Fact \ref{propo:Hoeffding}) we have for some absolute constant $c>0$, with probability at least $1-\delta$:
\[
|\Phit(\X; \thetabt^\pare{1})| \leq TR (S + P)\sqrt{2\log(1/\delta)}.
\]
\end{proof}

\subsubsection{Proof of \textbf{P3}: NTK separability}\label{sec:ntk separability}

We prove property \textbf{P3} in two steps each stated in a separate lemma below

\begin{lemma}\label{lem: singlehead ntk}
Let 
\begin{align}\label{eq:W not normalized}
\Wstar&=\mub_+\mub_+^{\top} + \mub_{-}\mub_{-}^{\top} + \sum_{\ell\in[M]}\nub_\ell(\mub_++\mub_{-})^{\top}, \\
\Ub_\star &= \ones_T\ub_\star^{\top}.
\end{align}
Given the initialization $\thetab^\pare{1}$ in \eqref{eq:init} and $\thetab_\star := (\overline{\Ub}_\star, \, \sign{\alpha}\overline{\W}_\star)$ we have for any $(\X,y)$ sampled from \ref{model1} and satisfying Assumption \ref{ass:data}:
    \[
    \E_{\thetab^\pare{1}}\,y\inp{\nabla_{\thetab}\Phi(\X; \thetab^\pare{1})}{\thetab_\star} \geq \gamma_\star,
    \]    
where the expectation is taken over $\alpha \sim \Unif{\pm1}$ and
\begin{align*} 
\gamma_\star &:= \frac{T(1-\zeta)\zeta}{4\sqrt{2(M+1)}}\bigg(\zeta S^4-7\zbar S^2-12\zbar^2 -16\frac{\zbar^3}{S^2}\bigg) -PT^{5/2}(S+Z)^3 +  \frac{S\,\sqrt{T}}{\sqrt{2}}\, \Big(\zeta- 2(1-\zeta) \frac{\zmu}{S^2}\Big), \\
& \text{where} \, \, \zbar = \zmu \maxi \znu, \, \, \text{and} \, \,  \overline{\Ub}_\star, \overline{\W}_\star \, \,  \text{denote normalized} \, \, \, \Ub_\star, \W_\star, \, \text{respectively}. 
\end{align*}    
\end{lemma}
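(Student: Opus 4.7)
I decompose the target inner product along the two coordinates of $\thetab_\star=(\overline{\Ub}_\star,\,\sign{\alpha}\,\overline{\W}_\star)$ and evaluate each at the initialization \eqref{eq:init}. Two simplifications drive the calculation: (i) at $\W^{(1)}=\mathbf{0}$ one has $\sft{\mathbf{0}}=\tfrac{1}{T}\ones_T\ones_T^\top$ and $\sftd{\mathbf{0}}=\tfrac{1}{T}\Iden_T-\tfrac{1}{T^2}\ones_T\ones_T^\top$; (ii) all rows of $\Ub^{(1)}$ coincide and equal $\alpha(\tfrac{\zeta}{2}\ub_\star+\pb)$, so the $\W$-gradient factors as $\alpha$ times a deterministic matrix. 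Contracting this against $\sign{\alpha}\,\overline{\W}_\star$ produces a factor $|\alpha|=1$, making both resulting inner products deterministic in $\alpha$.

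\textbf{Step 1: $\Ub_\star$-piece.} From Lemma \ref{lem:grad and hess singlehead}, $\nabla_{\Ub}\Phi(\X;\thetab^{(1)})=\tfrac{1}{T}\ones_T\ones_T^\top\X$, so
\begin{align*}
y\,\inp{\nabla_{\Ub}\Phi(\X;\thetab^{(1)})}{\overline{\Ub}_\star}=\frac{y}{\sqrt{2T}\,S}\sum_{t\in[T]}\ub_\star^\top\x_t.
\end{align*}
Partitioning this sum by $\Rc,\Rcc$ and using Assumption \ref{ass:data} (specifically $\ub_\star^\top\mub_y=yS^2$, $\ub_\star^\top\nub_\ell=0$, and $|\ub_\star^\top\z_t|\le 2Z_\mu$) gives the third summand $\tfrac{S\sqrt{T}}{\sqrt{2}}\bigl(\zeta-2(1-\zeta)Z_\mu/S^2\bigr)$ of $\gamma_\star$.

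\textbf{Step 2: $\W_\star$-piece, signal part.} From Lemma \ref{lem:grad and hess singlehead} and the form of $\Ub^{(1)}$,
\begin{align*}
\nabla_{\W}\Phi(\X;\thetab^{(1)})=\alpha\Big(\sum_{t}\x_t\Big)\Big(\tfrac{\zeta}{2}\ub_\star+\pb\Big)^\top\X^\top\sftd{\mathbf{0}}\X,
\end{align*}
so after manipulating the trace
\begin{align*}
\inp{\nabla_{\W}\Phi}{\sign{\alpha}\,\overline{\W}_\star}=\frac{1}{\|\W_\star\|_F}\Big(\tfrac{\zeta}{2}\ub_\star+\pb\Big)^\top\X^\top\sftd{\mathbf{0}}\X\W_\star^\top\sum_{t}\x_t.
\end{align*}
I split this into a \emph{signal} part (the $\tfrac{\zeta}{2}\ub_\star$ piece) and a \emph{perturbation} part (the $\pb$ piece). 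For the signal, I first evaluate the softmax-logit entries $[\rb_t]_{t'}=\x_{t'}^\top\W_\star^\top\x_t$ using the identities $\W_\star^\top\mub_y=S^2\mub_y$ and $\W_\star^\top\nub_\ell=S^2(\mub_++\mub_-)$ combined with the noise bounds, obtaining $[\rb_t]_{t'}=S^4+O(S^2\bar{Z})$ for $t'\in\Rc$ and $[\rb_t]_{t'}=O(S^2\bar{Z})$ for $t'\in\Rcc$, consistent with \eqref{eq:rttmain}. I then expand $\sftd{\mathbf{0}}=\tfrac{1}{T}\Iden-\tfrac{1}{T^2}\ones\ones^\top$ and tabulate the four $(t,t')\in(\Rc,\Rcc)^{\times 2}$ cells using $y\ub_\star^\top\x_{t'}=S^2$ on $\Rc$ and $|y\ub_\star^\top\x_{t'}|\le 2Z_\mu$ on $\Rcc$. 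The key observation is that the pure $\Rc\times\Rc$ contribution of size $\zeta^2T^2S^6$ cancels between the two summands of $\sftd{\mathbf{0}}$, leaving a net signal of order $T^2\zeta(1-\zeta)S^6$. Combining with the prefactor $\zeta/2$ and dividing by $\|\W_\star\|_F=S^2\sqrt{2(M+1)}$ produces the leading term $\tfrac{T(1-\zeta)\zeta^2S^4}{4\sqrt{2(M+1)}}$; the same bookkeeping collects the $\bar{Z}$-cross-terms to form the $(-7\bar{Z}S^2-12\bar{Z}^2-16\bar{Z}^3/S^2)$ correction.

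\textbf{Step 3: perturbation bound and main obstacle.} The $\pb$-piece is bounded by Cauchy--Schwarz together with $\|\X\|\le\sqrt{T}R$, $\|\sftd{\mathbf{0}}\|\le 1/T$, $\|\sum_t\x_t\|\le TR$, $\|\W_\star\|_F\le S^2\sqrt{2(M+1)}$, $\|\pb\|\le P$, and $R\lesssim S+Z$, yielding the middle summand $-PT^{5/2}(S+Z)^3$ of $\gamma_\star$. The main obstacle is the bookkeeping in Step 2: tracking the exact cancellation between the identity and rank-one pieces of $\sftd{\mathbf{0}}$ (which produces the combinatorial factor $(1-\zeta)\zeta$) while simultaneously collecting every $\bar{Z}$-cross-term arising from the four-cell $(\Rc,\Rcc)^{\times 2}$ partition crossed with the three-way expansion $\W_\star^\top\x_t$ into $\mub_\pm$-, $\nub_\ell$-, and $\z_t$-contributions, so as to produce the sharp constants $\{7,12,16\}$ stated in $\gamma_\star$. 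All remaining ingredients are routine expansions and direct applications of the orthogonality and noise bounds of Assumption \ref{ass:data}.
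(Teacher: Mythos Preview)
The proposal is correct and takes essentially the same approach as the paper: split along $\overline{\Ub}_\star$ and $\sign{\alpha}\overline{\W}_\star$, evaluate the $\Ub$-gradient at $\W^{(1)}=\mathbf{0}$ via $\sft{\mathbf{0}}=\tfrac{1}{T}\ones\ones^\top$, evaluate the $\W$-gradient via $\sftd{\mathbf{0}}=\tfrac{1}{T}\Iden_T-\tfrac{1}{T^2}\ones_T\ones_T^\top$ with the equal-row structure of $\Ub^{(1)}$, separate the $\tfrac{\zeta}{2}\ub_\star$ signal piece from the $\pb$ perturbation piece, compute the softmax logits $[\rb_t]_{t'}$ under Assumption \ref{ass:data}, and bound the $\pb$-term by norms. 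Your factorization $\nabla_{\W}\Phi(\X;\thetab^{(1)})=\alpha\bigl(\sum_t\x_t\bigr)\bigl(\tfrac{\zeta}{2}\ub_\star+\pb\bigr)^\top\X^\top\sftd{\mathbf{0}}\X$ is a clean way to write what the paper sums over $t$ head-on, and your observation that $\alpha\cdot\sign{\alpha}=1$ renders the expectation trivial is exactly how the paper proceeds; the remaining four-cell bookkeeping you flag as the obstacle is precisely the content of the paper's displays \eqref{eq:gamma-term11}--\eqref{eq:gamma-term12}.
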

\begin{proof}
\begin{align*}
\inp{\nabla_{\thetab}\Phi(\X; \thetab^\pare{1})}{\thetab_\star} = \sign{\alpha}\inp{\nabla_{\W}\Phi(\X; \thetab^\pare{1})}{\overline{\W}_\star} + \inp{\nabla_{\Ub}\Phi(\X; \thetab^\pare{1})}{\overline{\Ub}_\star} \, .
\end{align*}
Using $\thetab^\pare{1}=\concat(\Ub^\pare{1},0)$ and $\overline{\Ub}_\star = \frac{\Ub_\star}{\norm{\Ub_\star}_F} = \frac{1}{S\sqrt{2T}}\ones_T \ub_\star^{\top}$, we have:
\begin{align}\label{eq: margin-U}
    y\,\inp{\nabla_{\Ub}\Phi(\X; \thetab^\pare{1})} {\overline{\Ub}_\star} &= y\,\inp{\sft{\X \W^{\pare{1}} \X^\top}\X}{\overline{\Ub}_\star} = \frac{y}{TS\sqrt{2T}}\inp{\ones_T\ones_T^{\top}\X}{\ones_T \ub_\star^{\top}} \nn \\ 
    &=\frac{y}{S\sqrt{2T}}\ub_\star^{\top}\left(\sum_{t}\x_t\right) \geq \frac{S\,\sqrt{T}}{\sqrt{2}}\, \Big(\zeta- 2(1-\zeta) \frac{\zmu}{S^2}\Big) \, .
\end{align}
The gradient with respect to $\W$ evaluated at $\thetab^\pare{1}=\concat(\Ub^\pare{1},0)$ is 
\begin{align}
\nabla_{\W}\Phi(\X; \thetab^\pare{1}) = \frac{\alpha\zeta}{2}\sum_{t \in [T]}\x_t\ub_\star^\top\X^{\top}\Rb_{t}\X + \alpha \sum_{t \in [T]}\x_t\pb^\top\X^{\top}\Rb_{t}\X\,,\nn
\end{align}
where 
\[\Rb_t =  \Rb_0:= \frac{1}{T} \cdot \Iden_T - \frac{1}{T^2} \cdot \ones_T \ones_{T}^{\top}, \,\, \forall t \in [T].
\]
Thus,
\begin{align} \label{eq: grad w ntk}
    &y\inp{\nabla_{\W} \Phi\left(\X;\thetab^\pare{1} \right)}{ \sign{\alpha}\overline{\W}_\star} \nn \\
    &\qquad \qquad = \frac{\abs{\alpha}}{\norm{\Wstar}_F}\bigg(\frac{\zeta}{2} \underbrace{y\sum_{t \in [T]}\ub_\star^{\top}\X^{\top}\Rb_0 \rb_t(\X;\Wstar)}_{\term{I}} +  \underbrace{y\inpb{\sum_{t \in [T]}\x_t\pb^\top\X^{\top}\Rb_0\X}{\Wstar}}_{\term{II}}\bigg),
\end{align}
where we set for convenience:
\[
\rb_t(\X;\Wstar) = \X{\Wstar}^\top\x_t\in\R^T\,.
\]
To compute $\rb_t(\X;\Wstar)$, we consider two cases where row corresponds to a signal relevant of noisy token. We denote the $t'\in[T]$ entry of $\rb_t$ as $[\rb_t]_{t'}\in\R.$

\noindent\underline{Case 1. Relevance scores of signal tokens:}
Consider signal token $t\in\Rc$ so that $\x_t=\mub_y$. Using orthogonality in Assumption \ref{ass:data} we can compute for all $t'\in[T]$
\begin{align} \label{eq:relevance W signal exact}   t \in \Rc: [\rb_t]_{t'} =
\begin{cases}
    S^4 & \text{,} \, t' \in \Rc\,, \\
    S^2(\mub_y^{\top}\z_{t'}) & \text{,} \, t' \in \Rc^c\,. \\
\end{cases}
\end{align}
Therefore, again using Assumption \ref{ass:data},
\begin{align} \label{eq:relevance W signal}   
t \in \Rc: [\rb_t]_{t'} 
\begin{cases}
    =S^4 & \text{,} \, t' \in \Rc\,, \\
    \leq S^2\zmu & \text{,} \, t' \in \Rc^c\,. \\
\end{cases}
\end{align}

\noindent\underline{Case 2. Relevance scores of noisy tokens:} Similar to the calculations above, using Assumption \ref{ass:data} for parameters $\W^\star$ as in \eqref{eq:W not normalized} it holds for noisy tokens $t\in\Rcc$ that
\begin{align}\label{eq:relevance W noise exact}
t \in \Rc^c: [\rb_t]_{t'} =
\begin{cases}
    S^4 + S^2(\mub_y^{\top}\z_{t}) + S^2(\sum_\ell \nub_\ell^{\top}\z_t) & \text{,} \, t' \in \Rc \\
    (\mub_{+}^{\top}\z_{t})(\mub_{+}^{\top}\z_{t'}) +(\mub_{-}^{\top}\z_{t})(\mub_{-}^{\top}\z_{t'}) & \text{,} \, t' \in \Rc^c \\ + \sum_\ell (\nub_\ell^{\top}\z_t)(\mub_{+}^{\top}\z_{t'}) 
    + \sum_\ell (\nub_\ell^{\top}\z_t)(\mub_{-}^{\top}\z_{t'}) + S^2(\mub_{+}+\mub_{-})^{\top}\z_{t'}\\
\end{cases}
\end{align}
Now, we can start to bound each of the two terms in \eqref{eq: grad w ntk} separately below.

\noindent\underline{Bounding $\term{I}$:}
Recall the data matrix complies with Assumption \ref{ass:data}, hence.
\begin{align}    
[\X \ub_\star]_{t'} =
\begin{cases}
    y S^2 & \text{,} \, t' \in \Rc \\
    \z_{t'}^{\top} (\mubp - \mubn) & \text{,} \, t' \in \Rc^c \\
\end{cases} \, .
\end{align}
Using this, we can compute:
\begin{align*}
    \frac{y}{T} \sum_{t \in [T]}\ub_\star^{\top}\X^{\top}\rb_t(\X;\W^*) &= \frac{y}{T}\Biggl\{\sum_{t \in \Rc}\ub_\star^{\top}\X^{\top} \rb_t(\X;\Wstar) + \sum_{t \in \Rc^c}\ub_\star^{\top}\X^{\top}\rb_t(\X;\Wstar) \Biggr\} \\
    &= \frac{y}{T}\Biggl\{\sum_{t \in \Rc} \left(\sum_{t' \in \Rc}[\X\ub_\star]_{t'} \, \rb_t(\X;\Wstar)_{t'} + \sum_{t' \in \Rc^c}[\X\ub_\star]_{t'} \, \rb_t(\X;\Wstar)_{t'}\right) \\
    &\quad \;\; + \sum_{t \in \Rc^c}\left(\sum_{t' \in \Rc}[\X\ub_\star]_{t'} \, \rb_t(\X;\Wstar)_{t'} + \sum_{t' \in \Rc^c}[\X\ub_\star]_{t'} \, \rb_t(\X;\Wstar)_{t'}\right)\Biggr\} \\
    &= \frac{y}{T}\Biggl\{\zeta T \left(\zeta T \, yS^2 \, S^4 + \sum_{t' \in \Rc^c}\z_{t'}^{\top}(\mubp - \mubn) \, S^2(\mub_y^{\top}\z_{t'})\right) \\ 
    &\quad \;\;+ \sum_{t \in \Rc^c}\bigg[\zeta TyS^2 \cdot \left(S^4 + S^2(\mub_y^{\top}\z_{t}) + S^2(\sum_{j_t} \nub_{j_t}^{\top}\z_t)\right) \\
    &\quad \;\;+ \sum_{t' \in \Rc^c} \z_{t'}^{\top}(\mubp - \mubn) \cdot \bigg( (\mubp^{\top}\z_{t}) (\mubp^{\top}\z_{t'}) +(\mubn^{\top}\z_{t})(\mubn^{\top}\z_{t'}) \\
    &\quad \;\;+ \; \, \sum_{j_t}  (\nub_{j_t}^{\top}\z_t)(\mubp^{\top}\z_{t'}) + \sum_{j_t} (\nub_{j_t}^{\top}\z_t)(\mubn^{\top}\z_{t'}) + S^2(\mubp+\mubn)^{\top}\z_{t'}\bigg)\bigg]\Biggr\}.
\end{align*}
Further using {the noise bounds in} Assumption \ref{ass:data}, and $\rb_t(\X; \Wstar)$ from \eqref{eq:relevance W signal}, \eqref{eq:relevance W noise exact} we have:
\begin{align}\nn
    &\frac{y}{T} \sum_{t \in [T]} \ub_\star^{\top} \X^{\top} \rb_t(\X;\Wstar) \\ \nn
    &\geq \zeta T\, \bigg[ \zeta S^6 - 2 (1-\zeta) \zmu^2 S^2 \bigg] + (1-\zeta)T \, \bigg[\zeta S^6-\zeta (\zmu + \znu) S^4 - 2 \zmu (1-\zeta) (2\zmu^2 + 2\zmu \znu + 2 \zmu S^2) \bigg] \, 
\\
\label{eq:gamma-term11}
    &\geq {T}  \bigg[ \zeta \, S^6 - \, \zeta \, (1-\zeta) \, (\zmu+\znu) \, S^4 - 2(1-\zeta) \,   (\zeta + 2(1-\zeta)) \, \zmu^2\, S^2 - 4 \, (1-\zeta)^2 \, \zmu^2 \, (\zmu + \znu) \bigg] \, .
\end{align}
For the second part of $\term{I}$:
\begin{align*}
    - &\frac{y}{T^2} \sum_{t \in [T]} \ub_\star^{\top} \X^{\top} \ones_T \ones_T^{\top} \rb_t(\X;\Wstar) = - \frac{y}{T^2} \sum_{t \in [T]} \ones_T^\top \X \ub_\star \, \ones_T^\top \rb_t(\X;\Wstar) \\ & \quad= - \frac{1}{T^2} \left\{ \sum_{t' \in \Rc} y^2 S^2 + \sum_{t' \in \Rc^c} y \z_{t'}^{\top} (\mubp - \mubn) \right\} \cdot \Bigg\{ \sum_{t \in \Rc} \left[ \sum_{t' \in \Rc} S^4 + \sum_{t' \in \Rc^c} S^2 \z_{t'}^\top \mub_y \right] \\ &\qquad \;\;\; + \sum_{t \in \Rc^c} \bigg[ \sum_{t' \in \Rc} \left(S^4 + S^2 \z_{t}^\top \mub_y + S^2 \sum_{j_t} \nub_{j_t}^\top \z_t \right) + \sum_{t' \in \Rc^c} \bigg( (\z_{t}^{\top}\mubp)(\z_{t'}^{\top}\mubp) + (\z_{t}^{\top}\mubn)\mubn) \\ &\qquad \;\;\;  + (\sum_{j_t}\nub_{j_t}^{\top}\z_t) (\z_{t'}^{\top} \mubp) + (\sum_{j_t} \nub_{j_t}^{\top}\z_t) (\z_{t'}^{\top} \mubn) + S^2 \z_{t'}^\top (\mubp+\mubn) \bigg) \bigg] \Bigg\} \, .
\end{align*}
Using Assumption \ref{ass:data} to simplify the second term:
\begin{align*}
    &- \frac{y}{T^2} \sum_{t \in [T]} \ub_\star^{\top} \X^{\top} \ones_T \ones_T^{\top} \rb_t(\X;\Wstar) \\ &\qquad \geq T \bigg[ -\zeta^3 S^6 - \zeta^2 (1-\zeta) \zmu S^4 - \zeta^2 (1-\zeta) S^6 - \zeta^2 (1-\zeta) (\zmu + \znu) S^4 - 2 \zeta (1-\zeta)^2 \zmu S^4 \\ & \qquad \qquad \, \, - 2 \zeta (1-\zeta)^2 \zmu (\zmu + \znu)S^2 - 2 \zeta^2 (1-\zeta) \zmu S^4 - 2 \zeta (1-\zeta)^2 \zmu^2 S^2 - 2 \zeta (1-\zeta)^2 \zmu S^4\\ & \qquad \qquad \, \,   - 2 \zeta (1-\zeta)^2 \zmu (\zmu + \znu) S^2 - 4 (1-\zeta)^3 \zmu^2 S^2 - 4 (1-\zeta)^3 \zmu^2 (\zmu + \znu)  \bigg] \, .
\end{align*}
Therefore,
\begin{align}\label{eq:gamma-term12}
     -\frac{y}{T^2} \sum_{t \in [T]} \ub_\star^{\top} \X^{\top} \ones_T \ones_T^{\top} \rb_t(\X;\Wstar) &\geq T\bigg[-\zeta^2S^6 -\zeta(1-\zeta)(4\zmu+\zeta\znu)S^4 - 2(1-\zeta)^2((2+\zeta)\zmu\nn \\ &\qquad+2\zeta\znu)\zmu S^2 -4(1-\zeta)^3\zmu^2(\zmu+\znu)\bigg] \, .
\end{align}
\noindent\underline{Bounding $\term{II}$:}
\begin{align}\label{eq:gamma-p}
    y\inp{\sum_{t \in [T]}\x_t\pb^\top\X^{\top}\Rb_0\X}{\Wstar} &\geq -\sum_{t \in [T]}\norm{\x_t}\norm{\pb}\norm{\X^{\top}\Rb_0\X}_F\norm{\Wstar}_F \nn \\
    &\geq-P(\zeta TS+(1-\zeta)T(S+Z))\norm{\Rb_0}_F\norm{\X}_F^2S^2\sqrt{2(M+1)}\nn \\
    &\geq-\sqrt{2(M+1)}S^2PT^{5/2}(\zeta S+(1-\zeta)(S+Z))(\zeta S^2 + (1-\zeta)(S+Z)^2) \, .
\end{align}
Combining \eqref{eq:gamma-term11}, \eqref{eq:gamma-term12}, \eqref{eq:gamma-p} in \eqref{eq: grad w ntk} we get: 

\begin{align} \label{eq:gamma-attn}
&y\inp{\nabla_{\W}\Phi(\X; \thetab^\pare{1})}{\Wstar} \nn \\ 
&\geq \alpha\Bigg\{
\frac{T\zeta}{2} \bigg[ \zeta \, S^6 - \, \zeta \, (1-\zeta) \, (\zmu+\znu) \, S^4 - 2(1-\zeta) \,   (\zeta + 2(1-\zeta)) \, \zmu^2\, S^2 - 4 \, (1-\zeta)^2 \, \zmu^2 \, (\zmu + \znu) \bigg] \nn \\ &\qquad - \frac{T\zeta}{2}\bigg[\zeta^2S^6 +\zeta(1-\zeta)(4\zmu+\zeta\znu)S^4 + 2(1-\zeta)^2((2+\zeta)\zmu+2\zeta\znu)\zmu S^2 +4(1-\zeta)^3\zmu^2(\zmu+\znu)\bigg] \nn \\ &\qquad-\sqrt{2(M+1)}S^2PT^{5/2}(\zeta S+(1-\zeta)(S+Z))(\zeta S^2 + (1-\zeta)(S+Z)^2) \Bigg\} \nn \\
&= \alpha \Bigg\{\frac{T(1-\zeta)\zeta}{2}\bigg[\zeta S^6 -\zeta (5\zmu+(1+\zeta)\znu)S^4-2((4-\zeta^2)\zmu + 2\zeta(1-\zeta)\znu)\zmu S^2 \nn \\&\qquad-4(1-\zeta)(2-\zeta)(\zmu+\znu)\zmu^2\bigg]-\sqrt{2(M+1)}S^2PT^{5/2}(S+(1-\zeta)Z)(\zeta S^2 + (1-\zeta)(S+Z)^2)\Bigg\} \nn \\
&\geq \alpha \Bigg\{\frac{T(1-\zeta)\zeta}{2}\bigg[\zeta S^6-(5\zmu+2\znu)S^4-4(2\zmu+\znu)\zmu S^2-8(\zmu+\znu)\zmu^2\bigg] \nn \\ &\qquad -\sqrt{2(M+1)}S^2PT^{5/2}(S+Z)(S^2+(S+Z)^2)\Bigg\} \nn \\
&\geq \alpha \bigg\{\frac{T(1-\zeta)\zeta}{2}\bigg(\zeta S^6-7\zbar S^4-12\zbar^2 S^2-16\zbar^3\bigg) -2\sqrt{2(M+1)}PT^{5/2}S^2(S+Z)^3\bigg\},
\end{align}
where $\zbar = \zmu \, \maxi \, \znu$. Using this, we get
\begin{align} 
\mu:&= \E_{\alpha \sim \Unif{\pm1}}y\inpb{\nabla_{\W}\Phi(\X; \thetab^\pare{1})}{\frac{\sign{\alpha}\Wstar}{\norm{\Wstar}_F}} \nn \\
&\geq \E_{\alpha \sim \Unif{\pm1}}\frac{\abs{\alpha}}{\norm{\Wstar}} \bigg\{\frac{T(1-\zeta)\zeta}{2}\bigg(\zeta S^6-7\zbar S^4-12\zbar^2 S^2-16\zbar^3\bigg) -2\sqrt{2(M+1)}PT^{5/2}S^2(S+Z)^3\bigg\} \nn \\
&\geq \frac{T(1-\zeta)\zeta}{4\sqrt{2(M+1)}}\bigg(\zeta S^4-7\zbar S^2-12\zbar^2 -16\frac{\zbar^3}{S^2}\bigg) -PT^{5/2}(S+Z)^3  \, . \label{eq: margin-W}
\end{align}
Combining \eqref{eq: margin-U} and \eqref{eq: margin-W} concludes the proof.
\end{proof}

\begin{lemma}[NTK separability]\label{lem:empirical margin general} 
Assume initialization $\thetabt^\pare{1}=\concat\big(\thetab_1^\pare{1},\ldots,\thetab_1^\pare{H}\big)$ as in \eqref{eq:init} and \iid $\alpha_h\sim\Unif{\pm1}$ for all $h\in[H]$. Recall $\gammastar$ and $\thetab_\star(\cdot)$ from Lemma \ref{lem: singlehead ntk} above. Set \[\thetabt_\star=\frac{1}{\sqrt{H}}\concat\left(\thetab_\star\big(\thetab_1^\pare{1}\big),\ldots,\thetab_\star\big(\thetab_H^\pare{1}\big)\right).\] Let any $(\X,y)$ from \ref{model1}. Then, with probability at least $1-\delta$ over the randomness of $\alpha_h, h\in[H]$ it holds
\[
y\left\langle\nabla \Phit\left( \X; \thetabt^\pare{1} \right), \thetabt_\star\right\rangle \geq \gammastar - 2\left(2R^3T(S+P)+\sqrt{T}R\right)\sqrt{\frac{2\log(1/\delta)}{H}} \, .
\]
\end{lemma}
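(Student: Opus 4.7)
\textbf{Proof proposal for Lemma \ref{lem:empirical margin general}.} The plan is to reduce the multi-head NTK margin to an average of i.i.d.\ single-head contributions and then apply Hoeffding's inequality (Fact \ref{propo:Hoeffding}) with the single-head expectation lower bound from Lemma \ref{lem: singlehead ntk}.

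First I would use the decomposition $\Phit(\X;\thetabt) = \frac{1}{\sqrt{H}}\sum_{h\in[H]} \Phi(\X;\thetab_h)$, which gives $\nabla_{\thetab_h}\Phit(\X;\thetabt^\pare{1}) = \frac{1}{\sqrt{H}} \nabla_{\thetab_h}\Phi(\X;\thetab_h^\pare{1})$. Combined with the choice $\thetabt_\star = \frac{1}{\sqrt{H}}\concat(\thetab_\star(\thetab_1^\pare{1}),\ldots,\thetab_\star(\thetab_H^\pare{1}))$, this yields
\begin{align*}
y\inp{\nabla\Phit(\X;\thetabt^\pare{1})}{\thetabt_\star} = \frac{1}{H}\sum_{h\in[H]} X_h, \qquad X_h := y\inp{\nabla \Phi(\X;\thetab_h^\pare{1})}{\thetab_\star(\thetab_h^\pare{1})}.
\end{align*}
Since each $\thetab_h^\pare{1}$ depends only on $\alpha_h$ (with $\Wb_h^\pare{1}=\zeros$ and $\Ub_h^\pare{1}$ linear in $\alpha_h$), and since the $\alpha_h$ are i.i.d.\ $\Unif{\pm 1}$, the random variables $\{X_h\}_{h\in[H]}$ are i.i.d. By Lemma \ref{lem: singlehead ntk}, $\E[X_h] \geq \gammastar$.

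The next step is to produce a uniform almost-sure bound on $|X_h|$ so as to invoke Hoeffding. By Cauchy--Schwarz and $\|\thetab_\star(\thetab_h^\pare{1})\|=\sqrt{2}$, we have $|X_h|\leq \sqrt{2}\,\|\nabla\Phi(\X;\thetab_h^\pare{1})\|$. Proposition \ref{propo:model bounds general} gives $\|\nabla\Phi(\X;\thetab_h^\pare{1})\|\leq \sqrt{T}R(2R^2\|\Ub_h^\pare{1}\|_F+1)$, and Lemma \ref{lem:first phase} (conditioned on the event $\|\pb\|\leq P$) yields $\|\Ub_h^\pare{1}\|_F \leq \sqrt{T}(S+P)$. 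Combining these bounds gives $|X_h| \leq \sqrt{2}\bigl(2R^3 T(S+P) + \sqrt{T}R\bigr) =: B$.

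Finally, applying Hoeffding's inequality (Fact \ref{propo:Hoeffding}) to the i.i.d.\ sequence $\{X_h\}$ with almost-sure bound $B$ and mean $\mu\geq\gammastar$, we conclude that with probability at least $1-\delta$,
\begin{align*}
\frac{1}{H}\sum_{h\in[H]} X_h \geq \mu - 2B\sqrt{\frac{\log(1/\delta)}{2H}} \geq \gammastar - 2\bigl(2R^3 T(S+P) + \sqrt{T}R\bigr)\sqrt{\frac{2\log(1/\delta)}{H}},
\end{align*}
which is the advertised bound (after absorbing the $\sqrt{2}$ into the constant). I do not anticipate any real obstacle here: the argument is essentially a concentration step on top of the already-established single-head expectation bound of Lemma \ref{lem: singlehead ntk}; the only thing to be careful about is conditioning on the $\|\pb\|\leq P$ event from the first phase, so that the per-head norm bound on $\Ub_h^\pare{1}$ holds deterministically in the randomness over $\{\alpha_h\}_{h\in[H]}$ used for concentration.
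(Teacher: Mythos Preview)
Your proposal is correct and follows essentially the same approach as the paper: decompose the multi-head NTK inner product as $\frac{1}{H}\sum_h X_h$, observe that the $X_h$ are i.i.d.\ (each depending only on $\alpha_h$), bound $|X_h|$ via Cauchy--Schwarz together with the single-head gradient bound of Proposition~\ref{propo:model bounds general} and $\|\Ub_h^\pare{1}\|_F\leq\sqrt{T}(S+P)$, then apply Hoeffding using $\E[X_h]\geq\gammastar$ from Lemma~\ref{lem: singlehead ntk}. The paper relaxes $\sqrt{2}$ to $2$ in the bound on $|X_h|$ to match the stated constant exactly, whereas you carry $\sqrt{2}$ through and obtain a slightly tighter deviation; either way the stated inequality follows.
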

\begin{proof}
We start by expanding the empirical margin:
\begin{align}\label{eq:expand empirical margin}
y\left\langle \nabla \Phit\left( \X; \thetabt^\pare{1} \right), \thetabt_\star\right\rangle
&=  \frac{1}{H} \sum_{h \in [H]} y \inp{\nabla_{\thetab} \Phi\left(\X; \thetab_{h}^\pare{1} \right)}{\thetab_\star(\thetab_h^\pare{1})} =:
\frac{1}{H}\sum_{h \in [H]} X_h \,.
\end{align}
Note that each summand $X_h$ defined above depends only on $\alpha_h, h\in[H]$. Thus,  $\{X_h\}_{h\in[H]}$ are \iid random variables because $\{\alpha_h\}_{h\in[H]}$ are \iid random variables. Moreover, $X_h$ is almost-surely bounded satisfying
\begin{align}
|X_h|\leq \|\nabla_{\thetab} \Phi\left(\X; \thetab_{h}^\pare{1} \right)\| \|\thetab_\star(\thetab_h^\pare{1})\| \leq \sqrt{2T} \, R \, \left(2 \, R^2 \, \norm{\thetab_h^\pare{1}}_F + 1 \right) \leq 2 \, (2R^3T(S+P)+\sqrt{T}R) \, .    
\end{align}

where the second inequality follows by Proposition \ref{propo:model bounds general} and by the assumption $\|\thetab_\star(\thetab_h^\pare{1})\|_2=\sqrt{2}$ for all $h\in[H]$, and the last inequality follows because $\|\thetab_{h}^\pare{1}\|\leq \frac{\zeta}{2}\sqrt{T}\|\ub_\star\|_2+\sqrt{T}\|\pb\|_2\leq \sqrt{T}(S+P)$.

Finally, note that $\E[X_h]\geq \gammastar$ from Lemma \ref{lem: singlehead ntk}. 
Given these the desired claim follows by applying Hoeffding's inequality (see Fact \ref{propo:Hoeffding}) to \eqref{eq:expand empirical margin}.

\end{proof}

\begin{lemma}[Margin] \label{lem: margin whp} 
Define $\zbar := \zmu \maxi \znu$ and 
\begin{align}
    \gammastar := \frac{T(1-\zeta)\zeta}{4\sqrt{2(M+1)}}\big(\zeta S^4-7\zbar S^2-12\zbar^2 -16\frac{\zbar^3}{S^2}\big) -PT^{5/2}(S+Z)^3 + \frac{S\,\sqrt{T}}{\sqrt{2}}\, \Big(\zeta- 2(1-\zeta) \frac{\zmu}{S^2}\Big)\,.
\end{align}
Suppose 
\[
    \sqrt{H} \geq 4 \cdot \frac{2R^3T(S+P)+\sqrt{T}R}{\gammastar}\cdot\sqrt{2\log(n/\delta)}\,.
\]
Then, with probability $1-\delta\in(0,1)$,
\textbf{P3} holds with $\gamma=\gammastar/2$.
\end{lemma}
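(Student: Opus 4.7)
The plan is to derive Lemma \ref{lem: margin whp} as a straightforward union bound consequence of Lemma \ref{lem:empirical margin general}, which already provides the per-example NTK margin guarantee in expectation and with high probability over the $\alpha_h$'s. The key observation is that property \textbf{P3} of Definition \ref{def:good init} requires a simultaneous margin bound over all $n$ training samples, so I need to upgrade the single-sample high-probability bound of Lemma \ref{lem:empirical margin general} to a uniform-in-$i\in[n]$ bound, and then choose $H$ large enough that the fluctuation term is at most $\gammastar/2$.

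First, I would fix $\thetabt_\star$ as in Lemma \ref{lem:empirical margin general} and verify the normalization: since $\|\thetab_\star(\thetab_h^\pare{1})\|=\sqrt{2}$ for every $h$ by construction, $\|\thetabt_\star\|^2 = \tfrac{1}{H}\sum_{h\in[H]}\|\thetab_\star(\thetab_h^\pare{1})\|^2 = 2$, so $\|\thetabt_\star\|=\sqrt{2}$ as required by \textbf{P3}. Next, for each fixed sample $(\X_i,y_i)$ with $i\in[n]$, apply Lemma \ref{lem:empirical margin general} with failure probability $\delta/n$ in place of $\delta$, yielding that with probability at least $1-\delta/n$ (over the $\alpha_h$'s),
\begin{align*}
y_i\left\langle\nabla\Phit(\X_i;\thetabt^\pare{1}),\thetabt_\star\right\rangle \;\geq\; \gammastar - 2\bigl(2R^3T(S+P)+\sqrt{T}R\bigr)\sqrt{\tfrac{2\log(n/\delta)}{H}}.
\end{align*}
A union bound over $i\in[n]$ then gives that this inequality holds simultaneously for all $n$ samples with probability at least $1-\delta$.

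Finally, I would impose that the deviation term is bounded by $\gammastar/2$, i.e.
\begin{align*}
2\bigl(2R^3T(S+P)+\sqrt{T}R\bigr)\sqrt{\tfrac{2\log(n/\delta)}{H}} \;\leq\; \tfrac{\gammastar}{2},
\end{align*}
which upon rearrangement is exactly the stated hypothesis $\sqrt{H}\geq 4\cdot\tfrac{2R^3T(S+P)+\sqrt{T}R}{\gammastar}\cdot\sqrt{2\log(n/\delta)}$. Under this condition we obtain $y_i\inp{\nabla\Phit(\X_i;\thetabt^\pare{1})}{\thetabt_\star}\geq \gammastar/2$ for all $i\in[n]$, establishing \textbf{P3} with $\gamma = \gammastar/2$.

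There is essentially no technical obstacle here since all the heavy lifting (the per-head lower bound via the expected NTK margin in Lemma \ref{lem: singlehead ntk}, and the Hoeffding concentration across heads in Lemma \ref{lem:empirical margin general}) has already been done. The only step requiring any care is the union bound, and making sure that the $\log(1/\delta)\to\log(n/\delta)$ substitution matches the constant $4$ on the right-hand side of the assumed lower bound on $\sqrt{H}$; this is just arithmetic. One minor thing worth double-checking is that Lemma \ref{lem:empirical margin general} conditions on the first-phase good event $\{\|\pb\|\leq P\}$, so the final statement inherits this conditioning (which is absorbed into the probability budget $\delta$ in the final proposition statement of \ref{propo:good init dm1}).
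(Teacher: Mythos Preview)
Your proposal is correct and follows essentially the same approach as the paper, which states the proof in one line: ``The proof is straightforward by using union bound and plugging the condition of $H$ in the result of Lemma \ref{lem:empirical margin general}.'' Your expansion of this into the explicit per-sample application with failure probability $\delta/n$, the union bound, and the rearrangement of the deviation condition into the stated requirement on $\sqrt{H}$ is exactly what the paper intends.
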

\begin{proof}
    The proof is straightforward by using union bound and plugging the condition of $H$ in the result of Lemma \ref{lem:empirical margin general}.
\end{proof}

\subsection{Proof of Lemma \ref{lem:first phase}}

The proof of the lemma follows directly by      combining the two lemmas below and using $\zeta\leq 1$.

\begin{lemma}\label{lem:W1 U1 formulas} Fix any $h\in[H]$. Suppose zero initialization $\thetab_{h}^\pare{0}=0$ and consider first gradient step as in \eqref{eq:init}. 
It then holds that $\W_h^\pare{1} = 0$ and $\Ub_h^\pare{1}\in\R^{T\times d}$ has identical rows all equal to 
\begin{align}\label{eq:U1 formula}
\frac{\zeta\alpha_h}{2}\ub_\star+\frac{\zeta\alpha_h}{2}\underbrace{\Big(\frac{1}{n}\sum_{i\in[n_1]}y_i\mub_{y_i} - \ub_\star\Big)}_{\pb_1} + \frac{\alpha_h(1-\zeta)}{2}\underbrace{\frac{1}{n}\sum_{i\in[n_1]}y_i\frac{1}{(1-\zeta)T}\sum_{t\in\Rcc_i}(\nub_{j_t}+\z_{i,t})}_{\pb_2} \, .
\end{align}
where recall that $\ub_\star=\mub_+-\mub_-$. 
\end{lemma}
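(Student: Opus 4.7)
My plan is to verify the claim by a direct, line-by-line calculation of the one-step gradient update at $\thetab_h^{(0)} = \mathbf{0}$, using the gradient formulas from Lemma~\ref{lem:grad and hess singlehead appendix} together with the fact that at zero initialization $\Phit(\X_i;\thetabt^{(0)}) = 0$ and hence $\ell'(y_i\Phit(\X_i;\thetabt^{(0)})) = -1/2$ for the logistic loss. Everything reduces to substituting the data model \ref{model1} into the closed-form gradient; there is no real obstacle beyond bookkeeping.

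First I would handle $\W_h^{(1)}$. By Lemma~\ref{lem:grad and hess singlehead appendix}, the gradient of a single-head model with respect to the key-query parameter reads
\[
\nabla_{\W}\Phi(\X_i;\thetab_h^{(0)}) \;=\; \sum_{t=1}^T \x_{i,t}\,(\ub_{h,t}^{(0)})^\top \X_i^\top\,\sftd{\X_i (\W_h^{(0)})^\top \x_{i,t}}\,\X_i \,,
\]
and since every row $\ub_{h,t}^{(0)}$ of $\Ub_h^{(0)}=\mathbf{0}$ vanishes, every summand is zero. Consequently $\nabla_{\W_h}\Lhat_{n_1}(\thetabt^{(0)}) = \mathbf{0}$, and the update rule gives $\W_h^{(1)} = \mathbf{0}$, as claimed.

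Next I would compute $\Ub_h^{(1)}$. Using Lemma~\ref{lem:grad and hess singlehead appendix} and the identity $\sft{\mathbf{0}} = \tfrac{1}{T}\ones_T\ones_T^\top$, we have
\[
\nabla_{\Ub}\Phi(\X_i;\thetab_h^{(0)}) \;=\; \sft{\mathbf{0}}\,\X_i \;=\; \tfrac{1}{T}\ones_T\ones_T^\top \X_i\,,
\]
a matrix whose rows are all equal to $\tfrac{1}{T}\sum_{t=1}^T\x_{i,t}^\top$. Combining this with the chain-rule expression
\[
\nabla_{\thetab_h}\Lhat_{n_1}(\thetabt^{(0)})\;=\;\frac{1}{\sqrt{H}\,n_1}\sum_{i\in[n_1]}\ell'\!\left(y_i\Phit(\X_i;\thetabt^{(0)})\right)\,y_i\,\nabla_{\thetab_h}\Phi(\X_i;\thetab_h^{(0)})
\]
together with $\ell'(0)=-1/2$, the update rule $\Ub_h^{(1)} = -\alpha_h\sqrt{H}\,\nabla_{\Ub_h}\Lhat_{n_1}(\thetabt^{(0)})$ shows that $\Ub_h^{(1)}$ has identical rows, each equal to
\[
\frac{\alpha_h}{2\,n_1\,T}\sum_{i\in[n_1]} y_i \sum_{t=1}^T \x_{i,t} \,.
\]

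Finally, I would split the inner sum into relevant and irrelevant tokens using \ref{model1} and Assumption~\ref{ass:data}. For $t\in\Rc_i$ we have $\x_{i,t}=\mub_{y_i}$, contributing $\zeta T\,\mub_{y_i}$, while for $t\in\Rcc_i$ the contribution is $\sum_{t\in\Rcc_i}(\nub_{j_t}+\z_{i,t})$. Hence each row equals
\[
\frac{\alpha_h\zeta}{2\,n_1}\sum_{i\in[n_1]} y_i\,\mub_{y_i} \;+\; \frac{\alpha_h}{2\,n_1 T}\sum_{i\in[n_1]} y_i \sum_{t\in\Rcc_i}(\nub_{j_t}+\z_{i,t}).
\]
Adding and subtracting $\tfrac{\alpha_h\zeta}{2}\ub_\star$ in the first term and rewriting the second term as $\tfrac{\alpha_h(1-\zeta)}{2}\cdot\tfrac{1}{n_1}\sum_i y_i \tfrac{1}{(1-\zeta)T}\sum_{t\in\Rcc_i}(\nub_{j_t}+\z_{i,t})$ yields exactly the decomposition \eqref{eq:U1 formula} with $\pb_1$ and $\pb_2$ as defined in the lemma statement, completing the proof.
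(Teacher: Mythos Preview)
Your proof is correct and follows essentially the same approach as the paper: both compute the gradient at zero initialization via Lemma~\ref{lem:grad and hess singlehead appendix}, observe that $\nabla_\W\Phi$ vanishes since $\Ub_h^{(0)}=\mathbf{0}$, evaluate $\nabla_\Ub\Phi=\tfrac{1}{T}\ones_T\ones_T^\top\X$, use $\ell'(0)=-1/2$, and then split the token sum into relevant/irrelevant parts to obtain the decomposition~\eqref{eq:U1 formula}. Your handling of the sign of $\ell'(0)$ and the explicit add-and-subtract of $\tfrac{\alpha_h\zeta}{2}\ub_\star$ are slightly more careful than the paper's exposition, but the arguments are the same.
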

\begin{proof} We start by computing
    \begin{align*}
    n\,\nabla_{\thetab_h}\Lhat(\thetab_{h}^\pare{0})
    &= \sum_{i\in[n_1]}\nabla_{\thetab_h}\ell(y_i\Phit(\X;\thetab_{h}^\pare{0}))
    = \sum_{i\in[n_1]}y_i\ellp(y_i\Phit(\X;\thetab_{h}^\pare{0}))\nabla_{\thetab_h}\Phit(\X;\thetab_{h}^\pare{0})
    \\
    &=  \ellp(0)\,\sum_{i\in[n_1]}y_i\nabla_{\thetab_h}\Phit(\X;\thetab_{h}^\pare{0})
    = \frac{\ellp(0)}{\sqrt{H}}\,\sum_{i\in[n_1]}y_i\nabla_{\thetab_h}\Phi_h(\X;\thetab_{h}^\pare{0}) \\
    &= \frac{1}{2\sqrt{H}}\,\sum_{i\in[n_1]}y_i\nabla_{\thetab_h}\Phi_h(\X;\thetab_{h}^\pare{0})
    \end{align*}
where we used that $\Phi_h(\X;\thetab_{h}^\pare{0})=\Phit_h(\X;\thetab_{h}^\pare{0})=0$ because $\Ub_h^{(0)}=\mathbf{0}$ and also $\ellp(0)=1/2$ for the logistic loss.  Now, recall that $\Phi_h(\X;\theta_h)=\inp{\Ub_h}{\attn_h(\X;\W_h)}$. Hence, $\nabla_{\W_h}\Phi_h(\X;\thetab_{h}^\pare{0})=\mathbf{0}$, which gives us $\W_h^\pare{1}=\W_h^\pare{0}$. Also, 
\[
\nabla_{\Ub_h}\Phi_h(\X;\thetab_{h}^\pare{0}) = \attn(\X;\W_{h}^\pare{0}) = \sft{\mathbf{0}}\X = \frac{1}{T}\ones_T \ones_T^\top \X \,.
\]
Hence, the $\tau$-th column of $\Ub_h^\pare{1}$ becomes for all $\tau\in[T]$:
\begin{align*}
[\Ub_h^\pare{1}]_{:,\tau} &= \frac{1}{2nT}{\alpha_h}\sum_{i\in[n_1]}y_i\sum_{t\in[T]}\x_{i,t}
\\
&= \frac{\zeta}{2n}{\alpha_h} \sum_{i\in[n_1]}y_i \mub_{y_i} + \frac{1}{2nT}{\alpha_h} \sum_{i\in[n_1]} y_i \sum_{t\in\Rcc_i} (\nub_{j_t} + \z_{i,t}) \, .
\end{align*}
The claim of the lemma follows by rearranging the above.
\end{proof}

\begin{lemma}
    Suppose labels are \iid and equal probable, i.e. $y_i\sim\Rad{\pm1}$. Then for the two terms $\pb_1, \pb_2$ in \eqref{eq:U1 formula} it holds with probability at least $1-2\delta$ and absolute constant $C>0$ over the randomness of labels that
    \begin{align}\label{eq:p1 bound}
        \|\pb_1\|\leq CS\Big(\sqrt{\frac{d}{n_1}} + \sqrt{\frac{\log(1/\delta)}{n_1}}\Big)\,,
    \end{align}
    and 
    \begin{align}\label{eq:p2 bound}
        \|\pb_2\|\leq C(S+Z)\Big(\sqrt{\frac{d}{n_1}} + \sqrt{\frac{\log(1/\delta)}{n_1}}\Big)\,.
    \end{align}
\end{lemma}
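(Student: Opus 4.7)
The approach is to recognize $\pb_1$ and $\pb_2$ as empirical averages of $n_1$ independent, almost-surely bounded random vectors in $\R^d$, then apply sub-Gaussian vector concentration (Fact \ref{propo:subgn}) to each and take a union bound. The key structural observation is that the label randomness $y_i\sim\Unif\{\pm1\}$ is independent of the token-index $j_t$ and the noise $\z_{i,t}$, which makes the centered sums mean-zero without any bias terms involving the discriminative directions.

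For $\pb_2$, set $\vb_i := \frac{1}{(1-\zeta)T}\sum_{t\in\Rcc_i}(\nub_{j_t}+\z_{i,t})$ so that $\pb_2 = \frac{1}{n_1}\sum_{i\in[n_1]} y_i\vb_i$. Because $\vb_i$ is a measurable function only of $\{j_t,\z_{i,t}\}$ and is independent of $y_i$, and $\E[y_i]=0$, the products $y_i\vb_i$ are i.i.d.\ zero-mean. Assumption \ref{ass:data} gives $\|\nub_\ell\|=S$ and $\|\z_{i,t}\|\leq Z$, so by the triangle inequality $\|y_i\vb_i\|\leq S+Z$ almost surely. Consequently, for any unit $\u\in\R^d$, the scalar $\langle\u,y_i\vb_i\rangle$ is $(S+Z)$-bounded and hence $(S+Z)$-sub-Gaussian, so $\langle\u,\pb_2\rangle$ is $(S+Z)/\sqrt{n_1}$-sub-Gaussian. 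Thus $\pb_2$ is a sub-Gaussian vector with $\psi_2$-norm $\lesssim (S+Z)/\sqrt{n_1}$, and Fact \ref{propo:subgn} yields the claimed bound for $\|\pb_2\|$ with probability $1-\delta$.

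For $\pb_1$, introduce $\chi_i := \mathds{1}[y_i=+1]$ so that $y_i\mub_{y_i}=\chi_i\mub_+-(1-\chi_i)\mub_-$ and hence
\[
\tfrac{1}{n_1}\sum_{i\in[n_1]} y_i\mub_{y_i} = \bar\chi\,\mub_+ - (1-\bar\chi)\mub_- = \tfrac{1}{2}\ub_\star + \bigl(\bar\chi - \tfrac{1}{2}\bigr)(\mub_++\mub_-),\qquad \bar\chi := \tfrac{1}{n_1}\sum_i \chi_i.
\]
The only stochastic term is $(\bar\chi-\tfrac{1}{2})(\mub_++\mub_-)$, and scalar Hoeffding bounds $|\bar\chi-\tfrac{1}{2}|\lesssim\sqrt{\log(1/\delta)/n_1}$ with probability $1-\delta$. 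By orthogonality in Assumption \ref{ass:data}, $\|\mub_++\mub_-\|=\sqrt{2}\,S$, giving $\|\pb_1\|\leq CS\sqrt{\log(1/\delta)/n_1}$, which is dominated by the stated bound (the $\sqrt{d/n_1}$ term being a harmless over-estimate included for uniformity with the bound on $\pb_2$). A union bound over the two $\delta$-events yields the final probability $1-2\delta$.

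The main technical point is the vector-valued concentration for $\pb_2$: one must pass from the uniform-in-direction sub-Gaussian estimate on $\langle\u,\pb_2\rangle$ to a bound on $\|\pb_2\|=\sup_{\|\u\|=1}\langle\u,\pb_2\rangle$ that picks up the ambient dimension $\sqrt{d}$ (rather than the $\sqrt{\log d}$ one would obtain from a coordinate-wise union bound). This is handled exactly by Fact \ref{propo:subgn} once the $\psi_2$-norm of $\pb_2$ is identified, or equivalently by a standard $\epsilon$-net argument on the unit sphere; it is what produces the $\sqrt{d/n_1}$ scaling in the claim.
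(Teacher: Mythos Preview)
Your treatment of $\pb_2$ is essentially identical to the paper's: show the summands $y_i\vb_i$ are i.i.d., mean-zero, and almost-surely bounded by $S+Z$, conclude the average is a sub-Gaussian vector with $\psi_2$-norm $\lesssim (S+Z)/\sqrt{n_1}$, and invoke Fact~\ref{propo:subgn}. Nothing to add there.

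For $\pb_1$ you take a genuinely different and more elementary route than the paper. The paper treats $\pb_1$ exactly as it treats $\pb_2$: it fixes a unit direction $\vb$, argues $\langle\vb,y_i\mub_{y_i}\rangle$ is bounded by $S$ hence $S$-sub-Gaussian, averages to get $\psi_2$-norm $CS/\sqrt{n_1}$, centers, and applies Fact~\ref{propo:subgn}, thereby incurring the $\sqrt{d/n_1}$ term. You instead exploit the special structure that $y_i\mub_{y_i}$ takes only two values, so the whole sum collapses to a \emph{scalar} random variable $\bar\chi$ times fixed vectors. This is cleaner and in principle gives a strictly better bound (no $\sqrt{d/n_1}$ needed).

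There is, however, a real gap in your $\pb_1$ argument. Your own decomposition yields $\tfrac{1}{n_1}\sum_i y_i\mub_{y_i}=\tfrac{1}{2}\ub_\star+(\bar\chi-\tfrac12)(\mub_++\mub_-)$, so with $\pb_1$ as defined in \eqref{eq:U1 formula} (subtracting $\ub_\star$, not $\tfrac12\ub_\star$) you get
\[
\pb_1=-\tfrac12\ub_\star+(\bar\chi-\tfrac12)(\mub_++\mub_-).
\]
The deterministic piece $-\tfrac12\ub_\star$ has norm $S/\sqrt{2}$ and does not vanish with $n_1$; you silently dropped it when concluding $\|\pb_1\|\le CS\sqrt{\log(1/\delta)/n_1}$. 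In fact your explicit calculation exposes what appears to be a miscentering in the paper's definition of $\pb_1$: the paper's proof asserts $\E[X_v]=\langle\vb,\ub_\star\rangle$, whereas $\E[y_i\mub_{y_i}]=\tfrac12\ub_\star$, so the paper makes the same slip at the same point. With the intended centering (subtracting $\tfrac12\ub_\star$), both the paper's sub-Gaussian argument and your scalar-Hoeffding argument go through; as written, neither does.
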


\begin{proof}
    For arbitrary $\|\vb\|=1$, let $X_v=\inp{\vb}{\frac{1}{n}\sum_{i\in[n_1]}y_i\mub_{y_i}}$.
 Then,
    \begin{align*}
        \tsub{X_v} = \frac{1}{n}\tsub{\sum_{i\in[n_1]}y_i\inp{\vb}{\mub_{y_i}}} \leq \frac{C}{n}\sqrt{\sum_{i\in[n_1]}\tsub{y_i\inp{\vb}{\mub_{y_i}}}^2} \leq \frac{C\,S}{\sqrt{n_1}} \, .
    \end{align*}
    where in the second inequality we used approximate rotation invariance of subgaussians and in the last step we used that for all $i\in[n_1]$, $|y_i\inp{\vb}{\mub_{y_i}}|\leq S$, thus they are  subgaussians with parameter $CS$. Further note that Note that $\E[X_v]=\inp{\vb}{\ub_\star}$. Thus, by centering property of subgaussians $\inp{\vb}{\pb_1}$ is also subgaussian with same constant $CS/\sqrt{n_1}$. Since this holds for all $\vb$ on the sphere, we conclude that $\pb_1$ is $CS/\sqrt{n_1}$-subgaussian.
 From this, we can directly apply Fact \ref{propo:subgn} for concentration of Euclidean norm of random vectors to arrive at \eqref{eq:p1 bound}.

    We can follow exactly same steps to prove \eqref{eq:p2 bound} for $\pb_2$. The only difference is noting that for all $i\in[n_1]$ and unit norm $\vb$:
    $$
    \abs{y_i\frac{1}{(1-\zeta)T}\sum_{t\in\R_i^c}\inp{\nub_{j_t}+\z_{i,t}}{\vb}} \leq S+Z\,.
    $$
\end{proof}

\section{Optimal Model} \label{app:F}

We first restate the optimal parameters $\thetab_\text{opt} = (\Ub_\text{opt}, \W_\text{opt})$:
\begin{align}
        \Ub_\text{opt} &:= \frac{1}{S\sqrt{T}}\,\ones_d(\mubp-\mubn)^\top\,,\label{eq:Ustar}
        \\
        \W_\text{opt} &:= \frac{1}{S^2\sqrt{2(M+1)}}\Big(\mub_+\mub_+^{\top} + \mub_{-}\mub_{-}^{\top} + \sum_{\ell\in[M]}\nub_\ell(\mub_++\mub_{-})^{\top}\Big)\,,\label{eq:Wstar}
    \end{align}
    normalized so that $\|\thetab_\text{opt}\|_F = \sqrt{2}$. 

The following lemma about saturation in softmax scores is used to prove Proposition \ref{propo:single head good theta}.
\begin{lemma}[Softmax saturation]
\label{lem:softmax saturation}
    Let relevance-scores vector $\rb=[r_1,\ldots,r_T]\in\R^T$ be such that for some $L\in[T]$ and $A,B\in\R$: 
    \[
    r_1\geq r_2 \geq \ldots\geq r_L \geq A
    \qquad\text{and}\qquad 
    B\geq \max\{ r_i\,|\, i=L+1,\ldots,T \}.
    \]
    Further assume $A>2B$. Fix any $\eps>0$ and 
    \[
    \Gamma\geq \frac{2}{A}\log\Big(\frac{T/L-1}{\eps}\Big)\,.
    \]
    Then, for the attention weights $\ab=[a_1,\ldots,a_T]:=\sft{\Gamma\rb}\in\R^T$ 
    it holds that
    \begin{align}\label{eq:attn eps sandwich}
    0\leq 1-\sum_{i\in[L]} a_i  = \sum_{i=L+1}^{T} a_i\leq \eps\,.
    \end{align}
\end{lemma}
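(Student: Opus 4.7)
}

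The plan is to directly bound the ratio of ``tail'' softmax mass to ``top-$L$'' softmax mass using the gap condition $A>2B$, which is exactly what the hypothesis is designed to exploit. First, since softmax outputs form a probability distribution, the equality $1-\sum_{i\in[L]}a_i=\sum_{i=L+1}^T a_i$ is automatic, and both quantities are nonnegative; so it suffices to upper bound the tail sum by $\eps$.

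Next I would write out the softmax explicitly: letting $Z=\sum_{i\in[T]} e^{\Gamma r_i}$, we have
\begin{align*}
\sum_{i=L+1}^T a_i \;=\; \frac{\sum_{i=L+1}^T e^{\Gamma r_i}}{Z} \;\leq\; \frac{\sum_{i=L+1}^T e^{\Gamma r_i}}{\sum_{i\in[L]} e^{\Gamma r_i}}.
\end{align*}
The hypotheses on $\rb$ give the termwise bounds $e^{\Gamma r_i}\le e^{\Gamma B}$ for $i>L$ and $e^{\Gamma r_i}\ge e^{\Gamma A}$ for $i\le L$, so the right-hand side is at most $\frac{(T-L)e^{\Gamma B}}{L e^{\Gamma A}}=\big(\tfrac{T}{L}-1\big)e^{-\Gamma(A-B)}$. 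All that remains is to show the chosen $\Gamma$ makes this $\leq \eps$, i.e.\ $\Gamma(A-B)\ge \log\!\big(\tfrac{T/L-1}{\eps}\big)$.

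Here the condition $A>2B$ comes in: since $A>2B$ implies $A-B> A/2>0$, we have $\frac{1}{A-B}\le \frac{2}{A}$, so the assumed lower bound $\Gamma\ge \frac{2}{A}\log\!\big(\tfrac{T/L-1}{\eps}\big)$ implies $\Gamma\ge \frac{1}{A-B}\log\!\big(\tfrac{T/L-1}{\eps}\big)$, which is exactly what is needed. Combining these inequalities gives $\sum_{i>L}a_i\le\eps$, completing the proof.

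I do not expect any real obstacle: the argument is a two-line ratio bound plus a trivial manipulation of the gap hypothesis. The only subtlety is checking that $A-B>0$ (and in fact $\ge A/2$) before inverting, which is precisely where the assumption $A>2B$ is used; care is needed if one wants to allow $B<0$, but the same chain of inequalities goes through without modification in that case.
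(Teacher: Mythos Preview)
Your proposal is correct and follows essentially the same route as the paper: both bound the partition function from below by $L e^{\Gamma A}$, bound each tail exponential by $e^{\Gamma B}$, and then use $A>2B\Rightarrow A-B>A/2$ together with the hypothesis on $\Gamma$ to conclude. The only cosmetic difference is that the paper bounds each individual $a_i\le \eps/(T-L)$ before summing, whereas you bound the tail sum directly via the ratio $\frac{\sum_{i>L}e^{\Gamma r_i}}{\sum_{i\le L}e^{\Gamma r_i}}$; the arithmetic is identical.
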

\begin{proof}
For convenience denote $D:=\sum_{j\in[T]}e^{\Gamma r_j}$. Note that  $D \geq L e^{\Gamma A}.$
    Consider any $i>L$. Then,
    \begin{align}\nn
        a_i = \frac{e^{\Gamma r_i}}{D} \leq \frac{e^{\Gamma B}}{D} 
        \leq \frac{e^{\Gamma B}}{Le^{\Gamma A}} = \frac{1}{Le^{\Gamma (A-B)}}
        \leq \frac{1}{Le^{\Gamma A/2}}\,.
    \end{align}
    Suppose $\Gamma\geq \frac{2}{A}\log\big(\frac{C}{\eps}\big)$, which ensures that 
    \[
     e^{\Gamma A/2} \geq {C}/{\eps} 
     \]
    Setting $C=\frac{T-L}{L}=\frac{T}{L}-1$, and combining the above two displays yields the desired:
    \begin{align*}
        a_i &\leq \frac{\eps}{T-L},\qquad i>L\,.
    \end{align*}
    Thus, $\sum_{i>L}a_i \leq \eps $. The proof is complete by recalling 
     that $\sum_{i\in[T]}a_i=1$, hence $\sum_{i\in[L]}a_i\geq 1-\eps$.
\end{proof}

\subsection{Proof of Proposition \ref{propo:single head good theta}}\label{sec:proof of proposition single head good weights}
First, we  compute the attention matrix when the parameter $\W$ is set to the value below:
\begin{align}\label{eq:W not normalized}
\W=\mub_+\mub_+^{\top} + \mub_{-}\mub_{-}^{\top} + \sum_{\ell\in[M]}\nub_\ell(\mub_++\mub_{-})^{\top}\,.
\end{align}
For convenience, use the notation
$\rb_{t}^\top := \x_t^{\top}\W\X^{\top} \in \R^T$ for the $t$-th row of matrix used to find attention scores.  Similar to the proof of Lemma \ref{lem: singlehead ntk} we consider two cases where row corresponds to a signal relevant of noisy token. We denote the $t'\in[T]$ entry of $\rb_t$ as $[\rb_t]_{t'}\in\R.$

\noindent\underline{Case 1. Relevance scores of signal tokens:}
Consider signal token $t\in\Rc$ so that $\x_t=\mub_y$. Then for weights $\W$ in \eqref{eq:W not normalized}, and using orthogonality in Assumption \ref{ass:data} we can compute for all $t'\in[T]$
\begin{align*}    
t \in \Rc: [\rb_t]_{t'} =
\begin{cases}
    S^4 & \text{,} \, t' \in \Rc\,, \\
    S^2(\mub_y^{\top}\z_{t'}) & \text{,} \, t' \in \Rc^c\,. \\
\end{cases}
\end{align*}
Therefore, again using Assumption \ref{ass:data},
\begin{align} \label{eq:relevance W signal}   
t \in \Rc: [\rb_t]_{t'} 
\begin{cases}
    =S^4 & \text{,} \, t' \in \Rc\,, \\
    \leq S^2\zmu & \text{,} \, t' \in \Rc^c\,. \\
\end{cases}
\end{align}

\noindent\underline{Case 2. Relevance scores of noisy tokens:} Similar to the calculations above, using Assumption \ref{ass:data} for parameters $\W$ as in \eqref{eq:W not normalized} it holds for noisy tokens $t\in\Rcc$ that
\begin{align*}
t \in \Rc^c: [\rb_t]_{t'} =
\begin{cases}
    S^4 + S^2(\mub_y^{\top}\z_{t}) + S^2(\sum_\ell \nub_\ell^{\top}\z_t) & \text{,} \, t' \in \Rc \\
    (\mub_{+}^{\top}\z_{t})(\mub_{+}^{\top}\z_{t'}) +(\mub_{-}^{\top}\z_{t})(\mub_{-}^{\top}\z_{t'}) & \text{,} \, t' \in \Rc^c \\ + \sum_\ell (\nub_\ell^{\top}\z_t)(\mub_{+}^{\top}\z_{t'}) 
    + \sum_\ell (\nub_\ell^{\top}\z_t)(\mub_{-}^{\top}\z_{t'}) + S^2(\mub_{+}+\mub_{-})^{\top}\z_{t'}\\
\end{cases}
\end{align*}
Therefore, using the noise bound assumptions, we have
\begin{align} \label{eq:relevance W noise}
t \in \Rc^c: [\rb_t]_{t'} 
\begin{cases}
    \geq S^4 - S^2(\zmu+\znu) & \text{,} \, t' \in \Rc \\
    \leq 2\zmu^2+2\zmu\znu+2S^2\zmu
     & \text{,} \, t' \in \Rc^c \,.
\end{cases}
\end{align}

Combining the above two cases, specifically Equations \eqref{eq:relevance W signal} and \eqref{eq:relevance W noise}, we find that for all $t\in[T]$ the relevance-score vectors $\rb_t$ are such that
\begin{align}
t \in [T]: [\rb_t]_{t'} 
\begin{cases}
    \geq S^4 - S^2(\zmu+\znu):=A & \text{,} \, t' \in \Rc \\
    \leq 2(\zmu^2+\zmu\znu+S^2\zmu):=B
     & \text{,} \, t' \in \Rc^c \,,
\end{cases}    
\end{align}
where we defined the parameters $A$ and $B$ for convenience. Note from assumption that 
\[\zmu=\znu\leq \frac{S^2}{8} 
\implies A \geq \frac{3}{4} S^4 > \frac{1.25}{4} S^4 \geq 2B\,.\]

Thus, the conditions of Lemma \ref{lem:softmax saturation} hold for $L=|\Rc|=\zeta T$. 
Applying the lemma we can immediately conclude that for
\[
\Gamma_* \geq \frac{8}{3S^4}\log\left(\frac{\zeta^{-1}-1}{\eps}\right) \geq \frac{2}{A}\log\left(\frac{\zeta^{-1}-1}{\eps}\right) \, .
\]
it holds:
\begin{align}\label{eq:attn bound for W}
    \forall t\in[T]\,:\, 0\leq 1-\sum_{t'\in\Rc} \left[\sft{\Gamma_*\rb_t}\right]_{t'} =\sum_{t'\in\Rcc} \left[\sft{\Gamma_*\rb_t}\right]_{t'} \leq \eps \, .
\end{align}

Now, recall that \[
\Wstar=\widetilde\Gamma\W\qquad\text{for}\quad \widetilde\Gamma=\Gamma/(S^2\sqrt{2(M+1)}) \, .
\]
Thus, it holds for all $t\in[T]$ that
$
\sft{\x_t^\top\Wstar\X^T} = \sft{\widetilde\Gamma\rb_t}\,.
$
Combining this with \eqref{eq:attn bound for W} and the proposition's assumption on $\Gamma$ (satisfying $\widetilde{\Gamma}\geq \Gamma_*$), we have found that
\begin{align}\label{eq:attn bound for Wstar}
    \forall t\in[T]\,:\, 0\leq 1-\sum_{t'\in\Rc} \left[\sft{\x_t^\top\Wstar\X^T}\right]_{t'} =\sum_{t'\in\Rcc} \left[\sft{\x_t^\top\Wstar\X^T}\right]_{t'} \leq \eps\,.
\end{align}

In the rest of the proof, we use \eqref{eq:attn bound for Wstar} to lower-bound the margin:
\begin{align*}
y\Phi(\X;\thetabstar) &= y\inp{\Ubstar}{\attn(\X;\Wstar)} = \frac{y}{S\sqrt{2T}}\sum_{t\in[T]}\sum_{t'\in[T]}\left[\phi(\x_t^\top\Wstar\X^T)\right]_{t'}(\mubp-\mubn)^\top\x_{t'} =:\frac{y}{S\sqrt{2T}}\sum_{t\in[T]} \psi_t \, ,
\end{align*}
where we defined $\psi_t, t\in[T]$ for convenience.
For any $t\in[T]$, we have
\begin{align*}
    \psi_t &= \sum_{t'\in\Rc}\left[\phi(\x_t^\top\Wstar\X^T)\right]_{t'}yS^2
+\sum_{t'\in\Rcc}\left[\phi(\x_t^\top\Wstar\X^T)\right]_{t'}(\mubp-\mubn)^\top\z_{t'} 
\\&\geq yS^2\sum_{t'\in\Rc}\left[\phi(\x_t^\top\Wstar\X^T)\right]_{t'}
-2\zmu\sum_{t'\in\Rcc}\left[\phi(\x_t^\top\Wstar\X^T)\right]_{t'}
\\&\geq yS^2(1-\eps)
-2\eps\zmu \, .
\end{align*}
Putting the last two displays together and using $y^2=1$ completes the proof of the proposition.


\section{Linear Model} \label{app:linear}
To gain additional insights into the classification of the data model \ref{model1} and also contrast our results to a simplified model, we  examine here a linear classifier:
\begin{align*}
    \Philin(\X;\Ub) =  \inp{\Ub}{\X}\,.
\end{align*}
For this linear model, consider the oracle classifier 
\[
\Ubstar = \frac{1}{S\sqrt{2T}}\,\ones_T\ub_\star^\top,\quad \text{with}\quad \ub_\star=\mubp-\mubn\,, 
\]
and normalization such that $\|\Ubstar\|_F=1$\,.
By using Assumption \ref{ass:data},  almost surely  for all examples $(\X,y)$ the margin of the oracle classifier is lower bounded by:
\begin{align}
    y\Philin(\X;\Ubstar) &= \frac{1}{S\sqrt{2T}}\Big(|\Rc|\cdot S^2 + y\,\sum_{t\in\Rc^c} \inp{\mubp-\mubn}{\z_t}\Big) \notag \\ &\geq \frac{1}{S\sqrt{2T}}\Big(|\Rc|\cdot S^2 - 2 |\Rc^c| \cdot \zmu\Big)
    =\frac{S\,\sqrt{T}}{\sqrt{2}}\, \Big(\zeta- 2(1-\zeta) \frac{\zmu}{S^2}\Big)=:\gammalin \,. \label{eq:gammalin}
\end{align}

\section{Experiments}
In this section we provide some experiments discussing the role of number of heads $H$ in the training dynamics on synthetic data models. 

\noindent \textbf{Data Model \ref{model1}} We set the number of tokens $T=10$ and sparsity level $\zeta = 0.1$. We set $\{\mubp,\mubn,\nub_1, \nub_2, ..., \nub_M\}$ as the canonical basis vectors in $\R^d$, with $d=4, M=2$ and signal strength $S=2$. Noisy tokens $\z$ are sampled from a Gaussian $\Nc(0, \sigma^2\Iden_d)$, with $\sigma = 0.1$. We use $n=100$ training samples in each experiment and evaluate on a test set of size $300$ (total 5 trials). All models are initialized as $\thetabt^\pare{0} = \mathbf{0}$. 


 Figure \ref{fig:context-GD} shows the effect of increasing the number of heads when running GD with constant step-size $\eta=1.0$ and data generated from data model \ref{model1}. Notice that rate of train loss decay reduces as we increase $H$, highlighting a potential downside of overparameterization. A similar observation has been recently noted by \cite{simondu-gd-slow} when optimizing with GD to learn a single neuron with ReLU activation under Gaussian input. We also observe that at least for smaller $H$, GD indeed achieves margin $\gamma_\text{attn}$. It is worth noting that these observations do not contradict our theoretical findings in Theorems \ref{thm:train} and \ref{thm:gen} which guarantee training convergence and generalization decay given sufficiently large number of heads $H$, without making an explicit connection between the rates of convergence as we increase $H$. We also test how these rates change if we scale step-size as $\eta = \order{\sqrt{H}}$ for GD (Figure \ref{fig:context-GD-OH-Adam}, left) or optimize with Adam (Figure \ref{fig:context-GD-OH-Adam}, right) using constant step-size $\eta = 0.06$. It is interesting to observe that in both these cases, the convergence speeds up for larger $H$, especially when optimizing with Adam, but somewhat strangely the margin attained by GD for larger $H$ continues to fall away from $\gamma_\text{attn}$. Further, note that our theory only covers step-size $\eta = \order{1}$ and the trends observed in Figure \ref{fig:context-GD-OH-Adam} with $\eta = \order{\sqrt{H}}$ for GD fall outside this regime. In essence, we believe that it would be interesting future work to see how well these observations generalize to different datasets and develop theory that explains the relation of overparameterization to rates of convergence.


\begin{figure}[h]
    \centering
   \begin{overpic}[width=0.55\textwidth]{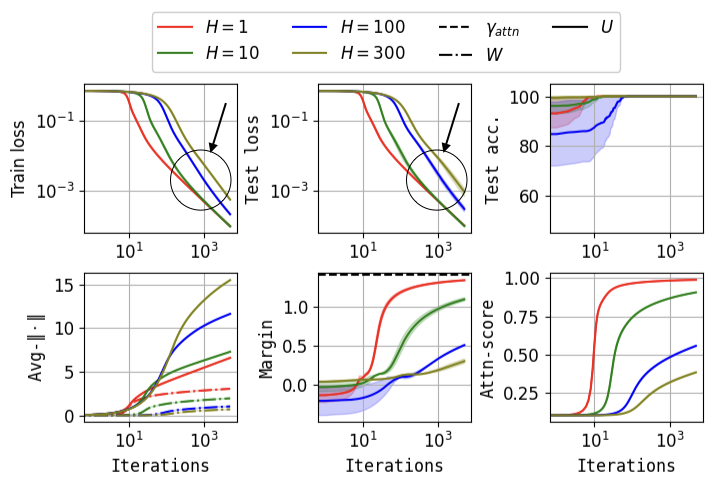}
        \put(33,68){GD with $\eta = \order{1}$}
    \end{overpic}
    \caption{\textbf{For data model \ref{model1}}. Effect of number of heads $H$ on convergence rates when trained with GD for constant step-size $\eta = \order{1}$. The average $\norm{\cdot}$ illustrates $1/H$ and $1/\sqrt{H}$ average for $\W$ and $\Ub$ across heads, respectively. Attn-score denotes the softmax scores for the relevant tokens averaged across all train samples and heads. The average $\norm{\W}$ indicates the saturation of softmax scores and consequently the token-selection (attn-score), and the average $\norm{\Ub}$ controls the loss behaviour. Results demonstrate that overparameterization slows down GD with constant step-size. The circled area shows a $\Oc(1/t)$ trend similar to what our training and generalization bounds predict.}
    \label{fig:context-GD}
\end{figure}
\begin{figure}[h]
    \centering
    \begin{overpic}[width=0.48\textwidth]{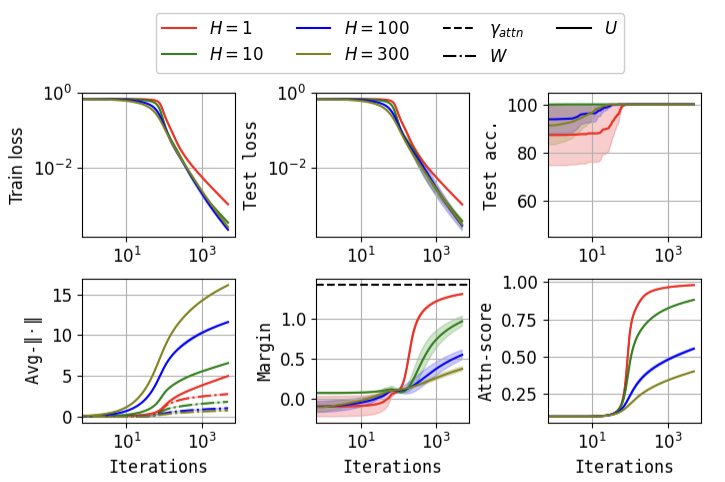}
        \put(29,68){GD with $\eta = \order{\sqrt{H}}$}
    \end{overpic}
    \begin{overpic}[width=0.48\textwidth]{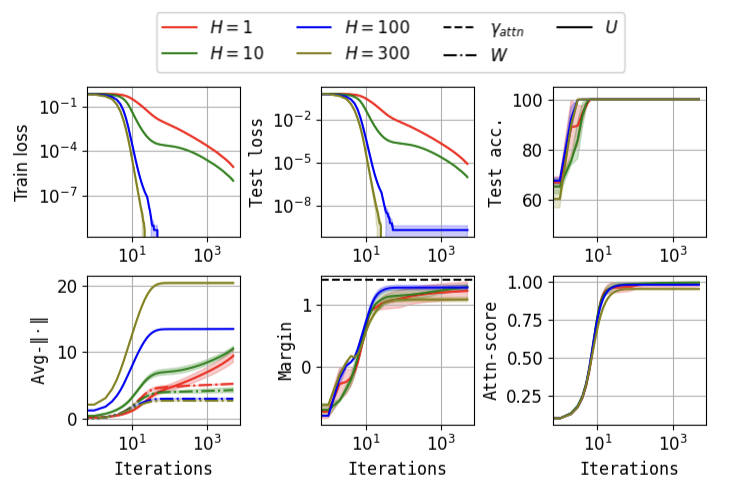}
        \put(32,68){Adam with $\eta = \order{1}$}
    \end{overpic}
    \caption{\textbf{For data model \ref{model1}}. Effect of number of heads $H$ on convergence rates when (left) trained with GD when scaling step-size as $\eta = \order{\sqrt{H}}$; (right) trained with Adam with constant step-size  $\eta = \order{1}$. 
    Quantities plotted are same as in Figure \ref{fig:context-GD}. Results demonstrate that overparameterization speeds-up with train and test loss convergence in both the scenarios.}
    \label{fig:context-GD-OH-Adam}
\end{figure}


\noindent \textbf{Planted data model} Fix some $\W^* \in \R^{d \times d} , \, \Ub^* \in \R^{T \times d}$. Entries within $\X$ are sampled \iid $X_{ij} \sim \Nc(0, 1), \,\, \forall i \in [T], \, j \in [d]$. Given such an $\X \in \R^{T \times d}$, generate the label $y$ using $\W^*$ as the attention matrix and $\Ub^*$ as the projection classifier:
\begin{align} \label{model planted}\tag{DM2}
    y = \sign{\Phi(\X; \W^*, \Ub^*)} = \sign{\inp{\Ub^*}{\sft{\X\W^*\X^{\top}}\X}} \, .
\end{align}

Data generated using model \ref{model planted} is used to train a multi-head self-attention model as given in equation \eqref{eq:SA model}. Such a \emph{teacher-student} setting (train the student network to learn the ground truth parent) has been well explored in the past in the context of neural networks \citep{student-teacher-local, student-teacher-shamir}. We set $d=5,\, T=10$. The train set contains $n=1000$ samples in each experiment and we evaluate on a test set of size 3000. Each result is averaged over 5 trials. For numerical ease, while generating (example, label) pairs we drop the samples for which $\abs{\text{output logit}} \leq \gamma_\text{attn}$, where we call $\gamma_\text{attn} > 0$ to be margin for the data model. We set $\gamma_\text{attn} = 0.2$ in all the experiments. All models are initialized as $\thetabt^\pare{0} = \mathbf{0}$. From Figure \ref{fig:planted-GD} we observe that overparameterization somewhat improves convergence speeds for GD with step-size $\order{\sqrt{H}}$, similar to tokenized mixture model (Figure \ref{fig:context-GD-OH-Adam}, left). Addition of momentum significantly helps speeding up convergence (see Figure \ref{fig:planted-GD-mom-Adam}, left) and so does optimizing with Adam (Figure \ref{fig:planted-GD-mom-Adam}, right). Interesting to note that all the models reach the expected margin $\gamma_\text{attn}$ which was not the case for large $H$ for the tokenized mixture model. Further, we can observe that initializing at $\thetabt^\pare{0} = \mathbf{0}$, all the optimizers find the planted model. 

\noindent \textbf{SST2 dataset} We conduct an additional experiment on a simple real-world dataset. The SST2 dataset \citep{sst2} consists of sentences, with each sentence having a associated binary label to classify the sentiment. We fine-tune RoBERTa based models with varying number of heads using AdamW \citep{adamW} optimizer with a learning rate of $5e-6$. We train all the models for $5$ epochs, with the batch-size set to $32$. We use the Hugging Face \tsc{pytorch-transformers} implementation of the \tsc{roberta-base} model, with pretrained weights \citep{roberta}. In Figure \ref{fig:roberta} we see that increasing the number of heads speeds up the optimization and generalization. This behaviour is similarly observed for GD with momentum and Adam in Figure \ref{fig:planted-GD-mom-Adam}.

\begin{figure}[h]
    \centering
   \begin{overpic}[width=0.55\textwidth]{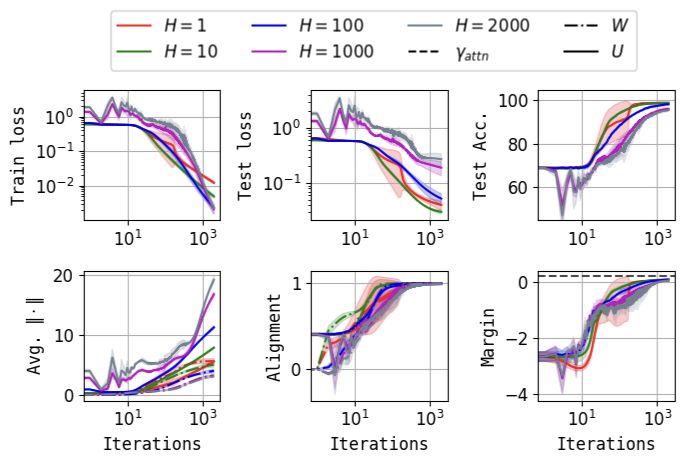}
        \put(33,68){GD with $\eta = \order{\sqrt{H}}$}
    \end{overpic}
    \caption{\textbf{For data model \ref{model planted}}. Effect of number of heads $H$ on convergence rates when trained with GD when scaling step-size as $\eta = \order{\sqrt{H}}$. See Figure \ref{fig:context-GD} caption for to get more context on average $\norm{\cdot}$. Alignment of $\W$ with the planted-head $\Wt^\star$ at any iteration $k$ is given by $\frac{\inp{\Wt_k}{\Wt^\star}}{\norm{\Wt_k}\norm{\Wt^\star}}$, where $\Wt^\star:=\concat\big(\{\W^\star\}_{h\in[H]}\big)$ contains  $\W^\star$ repeated $H$ times. Alignment between $\Ubt$ and $\Ubt^\star$ is computed similarly. }
    \label{fig:planted-GD}
\end{figure}

\begin{figure}[h]
    \centering
    \begin{overpic}
    [width=0.48\textwidth]{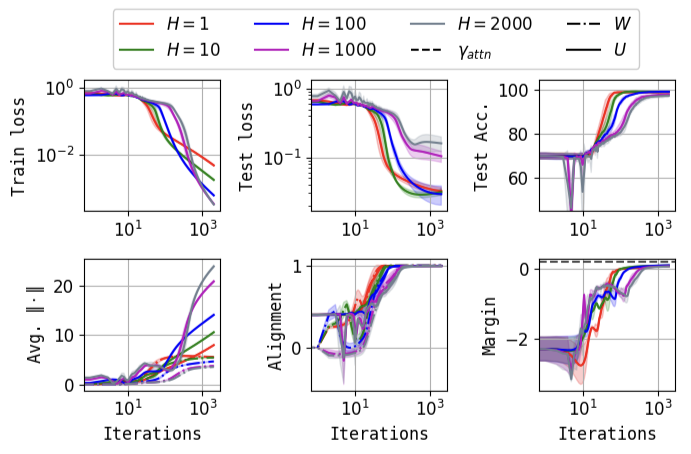}
        \put(20,68){GD with $\eta = \order{\sqrt{H}}$ + momentum}
    \end{overpic}
    \begin{overpic}[width=0.48\textwidth]{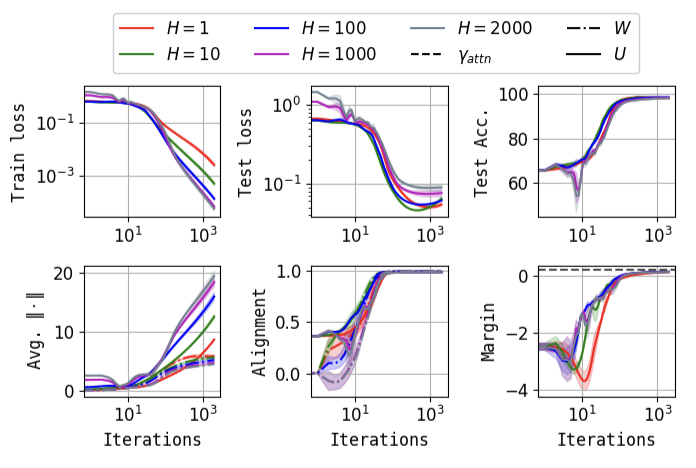}
        \put(33,68){Adam with $\eta = \order{1}$}
    \end{overpic}
    \caption{\textbf{For data model \ref{model planted}}. Effect of number of heads $H$ on convergence rates when trained with (left) GD + momentum where step-size scales as $\eta = \order{\sqrt{H}}$; (right) Adam with constant step-size $\eta = \order{1}$. Quantities plotted are same as in Figure \ref{fig:planted-GD}. Results demonstrate that overparameterization speeds up convergence in both scenarios.}
    \label{fig:planted-GD-mom-Adam}

\end{figure}

\begin{figure}[h]
    \centering
    \includegraphics[width=0.55\textwidth]{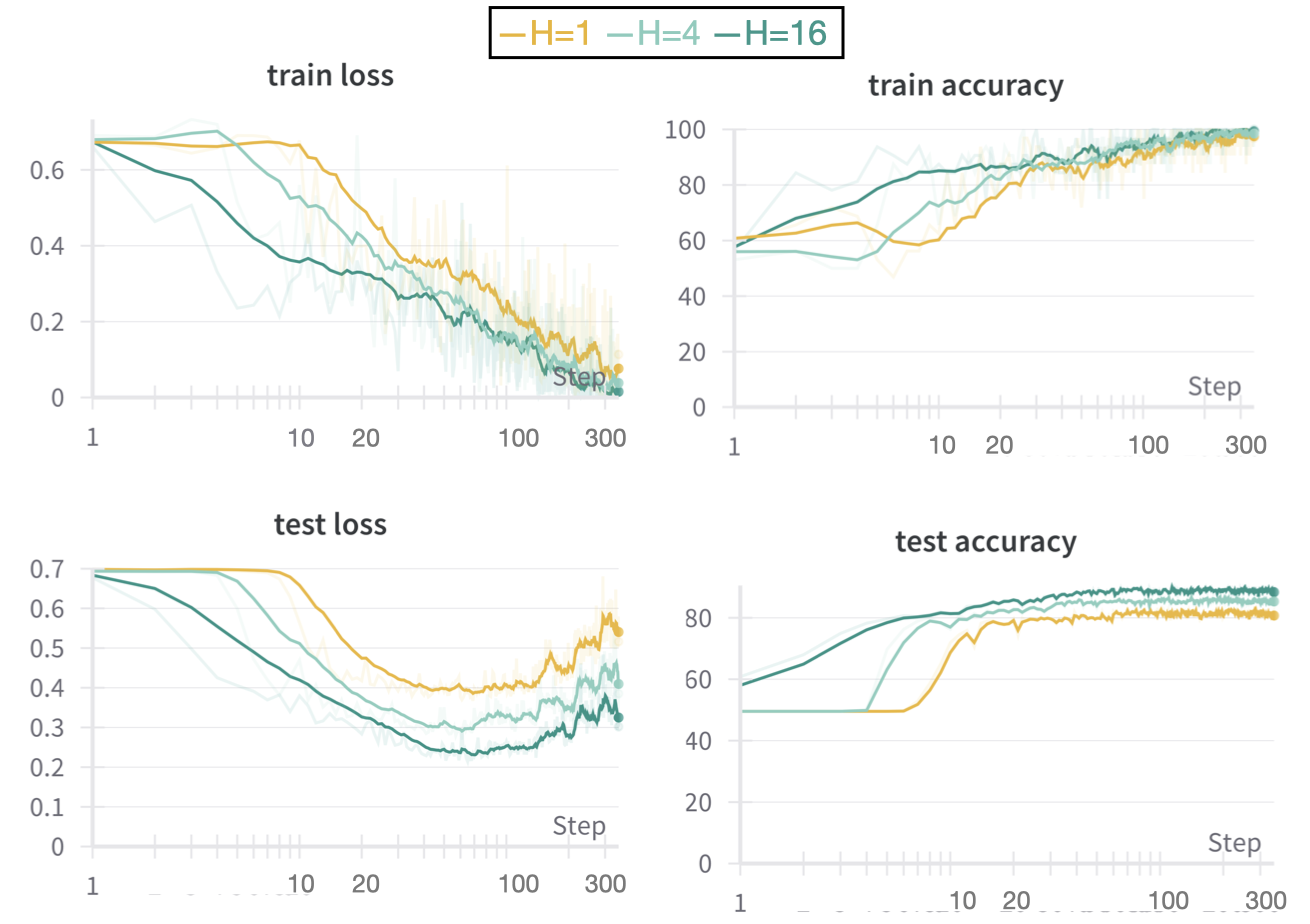}
    \caption{\textbf{For SST-2 dataset \citep{sst2}}. Effect of number of heads $H$ on convergence rates when trained with AdamW. 
    Results demonstrate that increasing the number of heads speeds up the training and generalization dynamics.}
    \label{fig:roberta}
\end{figure}
\section{Related work}\label{sec:rel}

This section elaborates on the paragraph on related work in Section \ref{sec:intro}.

\paragraph{Transformers and Self-attention.}
The landscape of NLP and machine translation was profoundly reshaped by the advent of Transformers, as pioneered by \cite{Vaswani2017AttentionIA} building upon earlier investigations into self-attention as explored in the works of \cite{cheng-etal-2016-long, lin2017structured, parikh-etal-2016-decomposable}. More recent developments include the transformative success of large language models like, LLaMA \citep{touvron2023llama}, ChatGPT \citep{chatgpt}, and GPT4 \citep{gpt4}. Despite this, the learning dynamics of the self-attention mechanism  remain largely unknown. Some recent works have focused on understanding the expressive power \citep{baldi2022quarks,dong2021attention,yun2020transformers, yun2020on,  sanford2023representational, bietti2023birth} and memory capacity of the attention mechanism \citep{baldi2022quarks,dong2021attention,yun2020transformers,yun2020on,mahdavi2023memorization}.  Other aspects which are explored include obtaining convex reformulations for the training problem \citep{sahiner2022unraveling,ergen2022convexifying}, studying sparse function representations in self-attention mechanism \citep{edelman2021inductive, likhosherstov2021expressive} and investigating the inductive bias of masked self-attention models. Additionally, a sub-area gaining increasing popularity is theoretical investigation of in-context learning, e.g.  \citep{Oswald2022TransformersLI, zhang2023trained, akyrek2023what, li2023transformers}. 

Here, we discuss works that aim to understand the optimization and generalization dynamics of self-Attention or its variants. 
\cite{prompt-attention} diverges from traditional self-Attention by focusing on a variant called prompt-Attention, aiming to gain understanding of prompt-tuning. \cite{diff-eqn-attn} show that for  a bag of words model, an attention model optimized with gradient flow for a topic classification task discovers the “topic” word as training proceeds. However, they don’t provide finite-time optimization-generalization rates.
\cite{jelassi2022vision} shed light on how ViTs learn spatially localized patterns, even though this spatial structure is no longer explicitly represented after the image is split into patches. Specifically, they show that for binary classification using gradient-based methods from random initialization, transformers implicitly prefer the solution that learns the spatial structure of the dataset. \cite{li2023theoretical} provided theoretical results on training three-layer ViTs for classification tasks for a similar data model as ours (tokenized-mixture data). They provide sample complexity for achieving zero generalization error, and also examined the degree of sparsity in attention maps when trained using SGD. Contemporaneous works include \citep{tian2023scan, tarzanagh2023transformers}: The former presents SGD-dynamics of single-layer transformer for the task of next-token prediction by re-parameterizing the original problem in terms of the softmax and classification logit matrices, and analyzing their training dynamics instead. The latter studies the implicit bias of training the softmax weights $\W$ with a fixed decoder $\Ub$ via a regularization path analysis. 
All these works focus on a single attention head. Instead, we leverage the use of multiple heads to establish connections to the literature on GD training of overparameterized neural networks. Conceptually similar connections have also been studied by \cite{hron2020infinite} who connect multi-head attention to a limiting Gaussian process when the number of heads increase to infinity. In contrast, we study performance in the more practical regime of finite number of heads and obtain
and obtain \emph{finite-time} optimization and generalization bounds. 



\paragraph{Overparameterized MLPs.}
There has been an abundance of literature that discusses NN training convergence and generalization dynamics via an NTK type approach, e.g. \citep{allen2019convergence,oymak2020toward,arora2019fine, nguyen2020global,banerjee2022restricted,nguyensmallest21,zhu2023benign}. However, most of these works focus on GD dynamics on regression problems using square loss and relate the training convergence and generalization to the minimum eigenvalue of the Hessian of the NTK. On the other hand, relatively fewer works focus on classification with logistic loss under an NTK separable data assumption, and \citep{nitanda2019gradient,Ji2020Polylogarithmic,cao2019generalization,chen2020much, telgarsky2013margins,taheri2023generalization} are most relevant works that share overlapping ideas with our work. We  refer the reader to these for a more thorough overview of the NTK-regime analysis for NNs.

Other than these, \cite{richards2021stability, richards2021learning, taheri2023generalization, leistabilitynn} use algorithmic-stability based tools to understand the training dynamics and generalization of GD in shallow NNs. \cite{leistabilitynn, richards2021stability} establish generalization bounds with polynomial width $\Tilde{\Omega}(\text{poly}(n))$ requirement while minimizing square loss. Here, we make use of the tools developed by \cite{taheri2023generalization} who work with self-bounded Lipschitz loss functions, like logistic loss, similar to our analysis. The algorithmic stability arguments in the analysis of the above referenced papers are rooted on a technique to bound generalization gap by directly relating it to train loss based on the notion of average model stability introduced by \cite{lei2020fine}. This technique has also been leveraged by \cite{pmlr-v178-schliserman22a} to study linear logistic regression and \cite{nikolakakis2022beyond} who establish  generalization-risk bounds for Lipschitz function optimization with bounded optimal set.
 





\end{document}